\documentclass[twoside]{article}
\ifdefined\pdfminorversion\pdfminorversion=7\fi
\usepackage[preprint]{aistats2027}
\usepackage[T1]{fontenc}
\usepackage{lmodern}
\usepackage[utf8]{inputenc}
\usepackage{amsmath,amssymb,amsthm,booktabs,graphicx,array}
\usepackage[round]{natbib}

\usepackage[hidelinks]{hyperref}
\usepackage{alphalph}
\usepackage{enumitem}
\setlist{topsep=2pt, partopsep=0pt, itemsep=1pt, parsep=0pt}
\setlist[itemize,1]{leftmargin=1.1em, labelsep=0.4em}
\usepackage{caption}
\newcommand{\ci}[2]{{\scriptsize\,[#1,\,#2]}}
\usepackage{tikz}
\usetikzlibrary{arrows.meta}
\definecolor{figwash}{HTML}{EEF4FB}
\definecolor{figaverage}{HTML}{EAF4F1}
\definecolor{figgray}{HTML}{F4F4F2}
\definecolor{figaxis}{HTML}{8A8984}
\definecolor{figref}{HTML}{5C5B56}
\pdftrailerid{}
\newtheorem{theorem}{Theorem}
\newtheorem{proposition}[theorem]{Proposition}
\newtheorem{corollary}[theorem]{Corollary}
\newcommand{\E}{\mathbb E}
\newcommand{\Pp}{\mathbb P}

\newcommand{\norm}[1]{\lVert#1\rVert}

\hypersetup{pdfauthor={Haiyi Li},pdftitle={Sampling and Curvature in First-Exit Comparisons for Momentum SGD}}
\begin{document}
\runningtitle{Sampling and Curvature in First-Exit Comparisons}
\runningauthor{Haiyi Li}
\twocolumn[%
\aistatstitle{Sampling and Curvature in First-Exit Comparisons\\
for Momentum SGD}%
\aistatsauthor{Haiyi Li}%
\aistatsaddress{Harvard University}%
]
\begin{abstract}
First-exit prediction for momentum SGD depends on sampling information
that pointwise gradient laws omit. Strongly convex finite pools with
identical pointwise laws and matched iid momentum dynamics can have
different reshuffled exits; even analytic pools with uniformly close
pointwise laws can have separated exit probabilities. For persistent
sampling in one dimension, we establish an explicit $O(h)$ mean-exit
comparison with a finite-rate switching process under nonvanishing
component velocities and step-size conditions; iid sampling instead
averages to an ODE. Retaining realized minibatches and sampled curvature
yields finite-window exit brackets beyond this regime. In 32 trained
ridge-logistic runs, 29 meet prespecified targets with rounding-certified
brackets; an enclosure control explains most of the width advantage.
Exact-real certificates establish finite means at the original tested
radii for three ResNet-18 checkpoints and give local finite-mean
conditions for tanh and ViT networks; their upper bounds remain loose,
and original-radius ViT mean guarantees remain unresolved despite
128 fresh continuation paths.
A two-network replication gives 32/32 successful seeds in each of four
finite-path persistence comparisons. Independent equal-length controls
retain slower exits after refresh resets than without resets, but the
refresh-versus-random ordering reverses with update length at both networks.
These interventions do not causally decompose raw momentum SGD. Neural
step-size scaling does not validate the switching limit.
\end{abstract}

\section{Introduction}
First-exit times measure how an optimizer leaves a prescribed region;
escape from sharp minima is a standard example
\citep{zhu2019anisotropic,xie2021diffusion}. For momentum SGD they are
usually predicted with continuum models, such as stochastic differential
equations or colored-noise and telegraph approximations, that are
calibrated from the one-step law of the minibatch gradient. Practical samplers have memory: random reshuffling (RR) visits every
example once per epoch, while data echoing repeats the active batch.
We ask what exit prediction must retain, and when its continuum
reference is a limit of the discrete algorithm.

The sampler matters beyond the one-step gradient law. In trained
networks with matched marginal gradient laws, reusing each
minibatch four times on average gave observed capped exit means
31--60\% smaller, while reusing it fifty times increased them by
factors of 1.4--3.4 (Section~\ref{sec:networks}); no model
calibrated from one-step noise statistics can express this. Four results
support the claim from complementary directions:
\begin{itemize}
\item \textbf{Information.} Pointwise gradient laws do not determine
reshuffled exits, not even approximately (Section~\ref{sec:information}).
\item \textbf{Continuum limits.} Persistent sampling gives heavy ball an
explicit $O(h)$ mean-exit bracket against a finite-rate switching
process; iid sampling averages to an ODE (Section~\ref{sec:bridge}).
\item \textbf{Computable comparisons.} Retaining the realized sampler and
sampled curvature gives certified exit brackets; 29 of 32 trained
ridge-logistic pools meet declared targets (Section~\ref{sec:curvature}).
\item \textbf{Trained networks.} Prospective studies on five datasets
document faster exits at intermediate persistence and the limits of this
effect (Section~\ref{sec:networks}).
\end{itemize}

\section{Setting}\label{sec:process}
A finite pool of $N$ examples defines minibatch gradients
$g_B(w)=|B|^{-1}\sum_{i\in B}\nabla f_i(w)$, including any ridge term.
Let $x_n$ be the displacement from a center $w_*$ and $u_n$ the momentum
in gradient units:
\[
u_n=\beta u_{n-1}-g_{B_n}(w_*+x_{n-1}),\qquad x_n=x_{n-1}+hu_n .
\]
The sampler generates the minibatches $B_1,B_2,\dots$. We compare three
samplers with the same one-step law, in which each $B_n$ is a uniformly
random $b$-subset of the pool marginally (with a uniform initial batch):
\begin{itemize}
\item \emph{iid}: draws a fresh subset at every update;
\item \emph{random reshuffling}: splits a fresh permutation of the pool
into consecutive batches in every epoch;
\item \emph{persistent}: keeps $B_{n-1}$ with probability $1-p$ and
otherwise draws a fresh subset. This is data echoing
\citep{choi2019faster} or minibatch persistency
\citep{fischetti2018faster} with geometric echo counts.
\end{itemize}
These equal batch marginals do not imply equal conditional gradient laws
at the evolving iterate: the iterate and the remaining pool can be dependent.
The event of interest is the integer-observed exit
$\tau=\inf\{n\ge1:x_n\notin\mathcal D\}$ from an open region
$\mathcal D\ni x_0$, for example a ball or, for a projection, an interval.
Over a window of $J$ updates we summarize it by its distribution function
and its restricted mean $\E\min(\tau,J)$ (RMST). A noise model that
retains only the one-step law assigns the same exit law to all three
samplers.

\section{Pointwise gradient laws do not determine exits}\label{sec:information}
\paragraph{Reshuffling separates identical pointwise laws.}
A pointwise oracle returns the gradient law
$\nu_x=N^{-1}\sum_i\delta_{\nabla f_i(x)}$ at each query $x$, but not
which sample produced which gradient. Under iid sampling this suffices:
it specifies the transition kernel of the full position--velocity state,
hence the path law. Under reshuffling the state also contains the unused
labels. We construct two globally strongly convex finite pools that agree
on every $\nu_x$ (Appendix~\ref{sec:pointwise-law-rr}). Iid sampling gives
identical full momentum-state path laws, but reshuffling gives two-step
exit probabilities $1/2$ and $0$ on a nonempty interval of radii. A
predictor that returns the same number $q$ for both pools errs by
$\max\{|q-1/2|,|q|\}\ge1/4$ on one of them, even when given the entire
pointwise law. The bound does not apply to methods that receive labeled
component functions or the remaining pool, which are the inputs of our
positive comparisons.

\paragraph{Analytic pools: identification without stability.}
For analytic component losses, equality of pointwise laws forces a fixed
global relabeling. Approximate equality, however, gives no stability. We
construct analytic pools with fixed strong-convexity and smoothness
constants whose pointwise $W_\infty$ distance tends to zero, while their
two-step RR exit CDFs remain $1/2$ apart
(Appendix~\ref{sec:analytic-identification}).

The separation holds on an interval of radii and an explicit
neighborhood of the learning rate, momentum and initial state. A
stability assumption must therefore control label correspondence or
function complexity, not merely local derivatives; fixed polynomial
degree restores stability through Remez bounds
(Appendix~\ref{sec:polynomial-matching}), but unrestricted networks are
not covered. The question is linked to phase-retrieval stability
\citep{brudnyi2013norming,bandeira2014saving}.

\paragraph{Covariance and a constructive counterpart.}
Mean and covariance functions can be insufficient even under iid
sampling (Appendix~\ref{sec:full-covariance-pairing}). Conversely, a
gradient table that keeps sample labels on nine grid nodes (18 values
per pool) yields disjoint exact two-step exit brackets on the constructed pair
(Appendix~\ref{sec:labelled-grid}). Retained sample identity is what
makes the exit law computable here.

\section{Sampler persistence and the telegraph limit}\label{sec:bridge}
Finite-rate telegraph and run-and-tumble processes have closed-form mean
exit times with a ballistic boundary correction
\citep{angelani2014first}. Linear Gaussian momentum recursions cannot
converge to them (Appendix~\ref{app:obstructions}), so persistence must
come from somewhere other than the momentum filter.

Consider sampler states $s\in\mathcal S$ (samples or minibatches) with
component gradients $g_s$; the active state $S_n$ replaces the minibatch
$B_{n+1}$ of Section~\ref{sec:process}. A persistent sampler has
transition matrix $I+hQ$; lazy resampling, for example, has
$Q=(p/h)(\Pi-I)$ with $\Pi_{ss'}=1/|\mathcal S|$. In one dimension, with
$\mathcal D$ an open interval $\mathcal I$, let $\mathsf T_s(x)$ be the
mean exit time from $\mathcal I$ of the switching process
$dY_t=v_{S(t)}(Y_t)\,dt$ started at $(x,s)$, where $v_s=-g_s/(1-\beta)$
and the state jumps with generator $Q$. For two states, lazy resampling
gives flip rate $p/(2h)$, not the rate $p/h$ of an explicit flip with
probability $p$.

\begin{theorem}[Persistent-sampling bridge]\label{thm:telegraph-bridge}
Fix $0\le\beta<1$, a finite irreducible generator $Q$, and
$C^1$ component velocities. Suppose every $v_s$ keeps a constant sign,
with $|v_s|\ge c_0>0$, near
$\bar{\mathcal I}$. Let $U=\max(|u_0|,\sup|v_s|)$ and
$\epsilon=h\beta U/(1-\beta)$. For every $h$ below the explicit threshold
given below, $h\E\tau$ lies between the mean exit times from $\mathcal I$ shrunk and
enlarged by $\epsilon$, up to explicit $O(h)$ terms. For a shifted start inside the shrunk interval,
$|h\E\tau-\mathsf T_{S_0}(x_0)|\le C_Ah$, with $C_A$ computed from the
backward equation of the switching process (Theorem~\ref{br:thm:A}).
\end{theorem}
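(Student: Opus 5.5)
The plan is to compare the discrete recursion with the switching process through the backward equation of the latter, in two stages. In the first stage I remove the momentum lag. In the second I run a martingale (Dynkin) argument for the resulting Euler-type chain.

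\emph{Stage 1 (momentum lag).} Unrolling $u_n=\beta u_{n-1}-g_{S_{n-1}}(x_{n-1})$ gives $u_n=\beta^n u_0-\sum_{k<n}\beta^{n-1-k}g_{S_k}(x_k)$. Define the filtered position $z_n=x_n+h\beta u_n/(1-\beta)$. A direct computation gives
\[
z_n-z_{n-1}=h\bigl(u_n-\beta u_{n-1}\bigr)/(1-\beta)=h\,v_{S_{n-1}}(x_{n-1}).
\]
So $z$ is an Euler step of the switching velocity, evaluated at the lagged point $x_{n-1}$. By induction we have $|u_n|\le\max(|u_0|,\sup|g_s|)\le(1-\beta)U$ in gradient units; after normalising so that the stated $U$ bounds $|u_n|/(1-\beta)$, this gives $|z_n-x_n|\le\epsilon$. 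Hence $x_n\notin\mathcal I$ forces $z_n$ outside the $\epsilon$-shrunk interval, and $z_n$ outside the $\epsilon$-enlarged interval forces $x_n\notin\mathcal I$. The exit time $\tau$ is therefore sandwiched between the exit times of $z$ from $\mathcal I_{+\epsilon}$ and $\mathcal I_{-\epsilon}$, up to one step. We also have $|v_s(x_{n-1})-v_s(z_{n-1})|\le L\epsilon$ with $L=\max\|v_s'\|$, so $z$ is a perturbed Euler scheme with drift error $O(h)$. Since the sign of each $v_s$ is constant, this perturbation can be absorbed by shrinking or enlarging the interval by a further $O(h)$ margin.

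\emph{Stage 2 (Dynkin comparison).} Let $\mathsf T^{J}_s$ denote the mean exit time of the switching process from an interval $J$. It solves the backward system
\[
v_s\,\mathsf T_s'+\sum_{s'}Q_{ss'}\mathsf T_{s'}=-1,
\]
with the inflow boundary conditions $\mathsf T_s=0$ at the endpoint toward which $v_s$ points. Because $|v_s|\ge c_0$ and the signs are constant, this is a linear ODE system with two-point boundary data. Irreducibility of $Q$ gives a unique $C^2$ solution, with $\|\mathsf T\|_{C^2}$ bounded explicitly in terms of $c_0$, $\|v\|_{C^1}$, $\|Q\|$ and $|\mathcal I|$. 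I would extend $\mathsf T$ in a $C^2$ way past the boundary by continuing the ODE. Then set $M_n=\mathsf T_{S_n}(z_n)+hn$. A Taylor expansion, using $\Pp(S_n=s'\mid S_{n-1}=s)=\delta_{ss'}+hQ_{ss'}$, gives
\[
\E[M_n-M_{n-1}\mid\mathcal F_{n-1}]=h\bigl(v\mathsf T'+Q\mathsf T+1\bigr)+R_n=R_n,
\]
with $|R_n|\le C h^2$. The constant $C$ combines $\|\mathsf T''\|\sup|v|^2$, the cross term $\|Q\|\,\|\mathsf T'\|\sup|v|$, and the lag term $L\epsilon\|\mathsf T'\|$. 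Optional stopping at $\sigma$, the exit time of $z$, gives $h\E\sigma=\mathsf T_{S_0}(z_0)-\E\mathsf T_{S_\sigma}(z_\sigma)+O(h^2\E\sigma)$. Three facts close the estimate. The overshoot satisfies $|z_\sigma-\partial|\le h\sup|v|$. At the overshoot endpoint, $\mathsf T_{S_\sigma}$ is $O(h)$ for states pointing outward. For inward-pointing states exit is impossible, since no step moves outward. This leaves $|h\E\sigma-\mathsf T_{S_0}(z_0)|\le C'h$, provided $\E\sigma<\infty$ uniformly, i.e.\ $h\E\sigma=O(1)$.

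That uniform bound is what I expect to be the main obstacle: both the explicit step-size threshold and the a priori finiteness of $h\E\tau$. I would get it from the same martingale with a supersolution: take $\Phi$ solving $v\Phi'+Q\Phi=-2$, so the drift is $\le -h$ once $Ch^2\le h$. This is exactly where the threshold $h<h_0(c_0,\|v\|_{C^1},\|Q\|,|\mathcal I|,\beta)$ enters, together with the requirement that $h\|Q\|\le1$ so that $I+hQ$ is stochastic. Applying this to both comparison intervals $\mathcal I_{\pm\epsilon}$ gives the bracket. For the shifted start, $x_0$ lies in the shrunk interval, so $z_0\in\mathcal I$ with $|z_0-x_0|\le\epsilon=O(h)$. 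Lipschitz continuity of $\mathsf T$ in $x$, together with $|\mathsf T^{\mathcal I_{\pm\epsilon}}-\mathsf T^{\mathcal I}|=O(\epsilon)$ (smooth dependence of the boundary-value problem on its endpoints), then converts the bracket into $|h\E\tau-\mathsf T_{S_0}(x_0)|\le C_Ah$, with $C_A$ assembled from the constants above.
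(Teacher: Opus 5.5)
Your plan is the paper's proof: the averaged iterate with $z_n-z_{n-1}=hv_{S_{n-1}}(x_{n-1})$, and the sandwich between exits of $z$ from $\mathcal I_{\pm\epsilon}$. It then uses the discrete Dynkin argument for $\mathsf T_{S_n}(z_n)+hn$, with an $O(h^2)$ remainder from Taylor expansion, switching and lag, and bounded-time optional stopping for integrability. Your supersolution solving $v\Phi'+Q\Phi=-2$ is just $2\mathsf T$; the paper uses $\mathsf T$ itself with the factor $1-Kh$. Finally it compares $\mathsf T^\pm$ with $\mathsf T^{\mathcal I}$.

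One step is wrong as written: the exit value. You argue that $\mathsf T_{S_\sigma}(z_\sigma)=O(h)$ because an inward-pointing state cannot produce an outward step. The step into $z_\sigma$ is driven by $S_{\sigma-1}$, but your martingale evaluates $\mathsf T$ at the newly drawn state $S_\sigma$. On the exit-causing update the sampler can switch to a state that is not outgoing at the crossed endpoint. There $\mathsf T_{S_\sigma}(z_\sigma)$ is of order one (roughly the mean exit time of a state heading back into the interval), not $O(h)$.

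The repair is the paper's. Since $z_n$ is a function of $S_0,\dots,S_{n-1}$, the event $\{\sigma=n\}$ is $\mathcal F_{n-1}$-measurable, and the switch has conditional probability at most $h\bar q$. Hence $\E|\mathsf T_{S_\sigma}(z_\sigma)|\le hV\mathsf T_1+h\bar q\,\mathsf T_0$, where $\mathsf T_0,\mathsf T_1$ bound $|\mathsf T|$ and $|\mathsf T'|$. This is the term $\bar q\mathsf T_0^\pm$ in $B_\pm$, the ``separate $O(h)$ boundary term'' of the proof idea. The order survives, but your constant misses it.

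There are also smaller points.
\begin{itemize}
\item The velocity induction gives $|u_n|\le\max(|u_0|,\sup|g_s|/(1-\beta))=U$, not $\max(|u_0|,\sup|g_s|)$. Then $|x_n-z_n|\le\epsilon$ follows with no renormalisation. The induction also needs $x_n$ to stay where $\sup|v_s|$ is taken, which holds before the enlarged exit under $\max(2\epsilon,\epsilon+hV)\le\rho$.
\item Well-posedness of the outgoing-boundary problem does not follow from irreducibility by inspection, since this is a two-point boundary problem. The paper proves uniqueness through the PDMP representation: exit occurs a.s.\ in an outgoing state, so a homogeneous solution is an average of its outgoing boundary data. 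The same maximum principle gives the explicit bounds $0\le\mathsf T^+-\mathsf T^{\mathcal I}\le\epsilon\mathsf T^+_1$ on $\mathcal I$ and $0\le\mathsf T^{\mathcal I}-\mathsf T^-\le\epsilon\mathsf T^{\mathcal I}_1$ on $\mathcal I_{-\epsilon}$, which your appeal to smooth dependence on the endpoints leaves implicit.
\item The rate statement assumes $z_0\in\mathcal I_{-\epsilon}$, not $x_0\in\mathcal I_{-\epsilon}$. The lower bracket needs $z$ to cross $\partial\mathcal I_{-\epsilon}$ from inside. Otherwise the exit-value bound is unavailable and the lower bound degenerates to zero.
\end{itemize}
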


\emph{Proof idea.} The averaged iterate
$z_n=x_n+h\beta u_n/(1-\beta)$ satisfies $z_{n+1}=z_n+hv_{S_n}(x_n)$
exactly \citep{sebbouh2021almost} and stays within $\epsilon$ of $x_n$,
so exits of $x$ are sandwiched between exits of $z$ from the shrunk and
enlarged intervals. Along $z$, the backward solution
$\mathsf T_{S_n}(z_n)$ decreases by $h$ per step up to $O(h^2)$, and
optional stopping, first at bounded times to establish integrability,
turns this into the bracket; a sampler jump on the exit-causing update
adds a separate $O(h)$ boundary term. Uniqueness and a maximum
principle for the backward system come from its PDMP representation
\citep{davis1993markov}.

The switching state is the active sample, not a momentum sign. The sign
condition concerns each component, not the mean: the mean gradient may
vanish inside $\mathcal I$, as in the well, saddle and logistic pools of
Figure~\ref{fig:bridge}, but no single sampler state may have a
stationary point there. In this limit momentum enters only through the speed factor
$1/(1-\beta)$ and the lag $\epsilon$, the distance traveled on residual
momentum after a switch. A continuous analogue,
$\ddot X=-\mu\dot X-g_S(X)$ with friction $\mu$ and switching slowed by
$1/\mu$, has the same limit in time $t/\mu$, with error $O(\mu^{-2})$
for initial velocity $O(1/\mu)$
(Theorem~\ref{br:thm:B}). A Brownian (Kac) limit requires the switching
rate to diverge (Proposition~\ref{br:prop:kac}).

Under iid sampling the switching probability does not vanish, so the
rate diverges. Heavy ball then averages to an ODE on the time scale
$t=nh$ (Proposition~\ref{br:prop:iid}), and a finite-rate telegraph is not
a limit on that time scale. Reshuffling also refreshes the batch at every
update, so Theorem~\ref{thm:telegraph-bridge} does not cover it either. The three regimes $p/h\to\infty$,
$p/h\to\lambda$ and $p/h\to0$ give the averaged ODE, the switching limit
and the frozen sample, as for Markov-modulated stochastic approximation
\citep{yin2004regime}. Figure~\ref{fig:limits} summarizes how these
limits fit together.

\begin{figure}[t]
\centering
\includegraphics[width=\columnwidth]{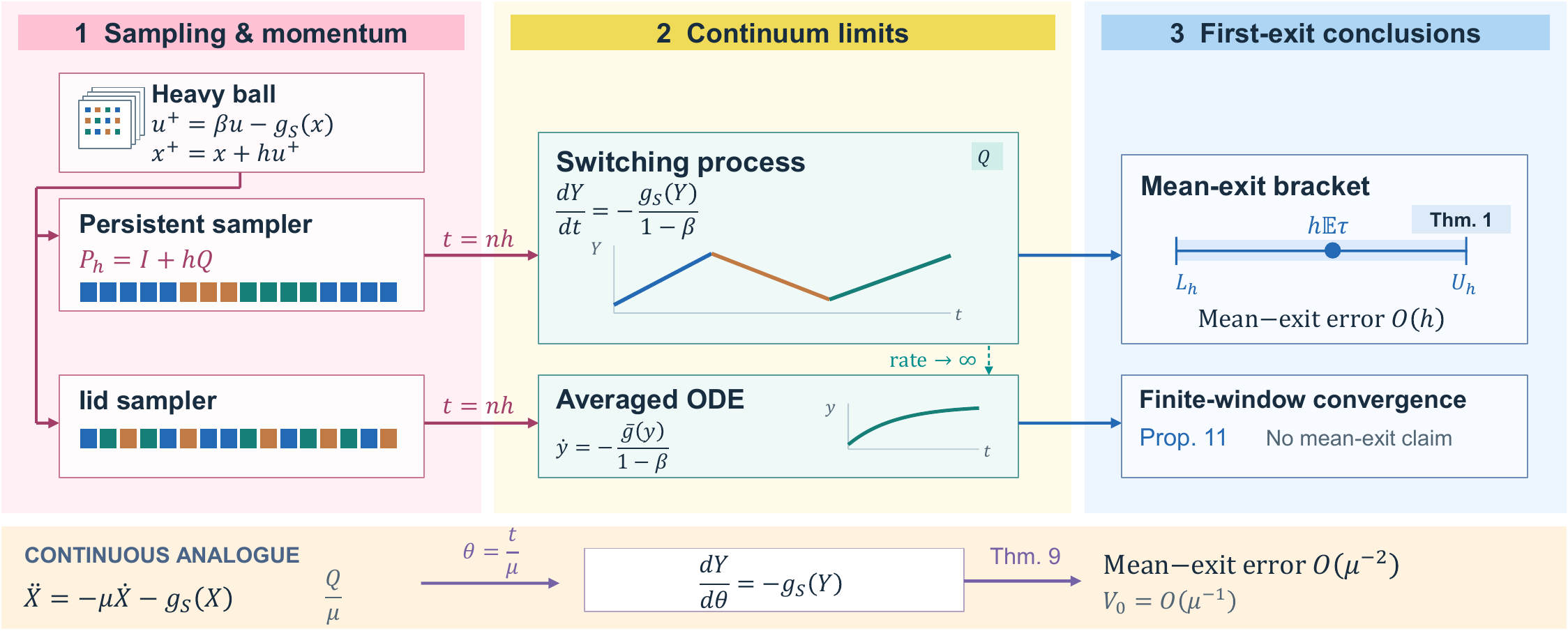}
\caption{Sampling determines the limit. Discrete models use $t=nh$;
second-order dynamics use $\theta=t/\mu$. Paths and intervals are schematic;
$L_h,U_h$ denote the full mean-exit bounds of Theorem~\ref{thm:telegraph-bridge}.
All conclusions require the cited assumptions; iid trajectory convergence
alone does not control complete mean exit times.}
\label{fig:limits}
\end{figure}

Of 120 prespecified two-sample cells, 68 satisfy the declared-margin
conditions; a later minimum-margin analysis admits 82. Both analyses
contain every applicable 99.9\% Monte Carlo interval in its bracket.
Errors are consistent with $O(h)$, also with eight sampler states
(Figure~\ref{fig:bridge}; Appendix~\ref{app:telegraph-bridge}). The
explicit constant is conservative, by a median factor of 8.2.

\begin{figure}[t]
\centering
\includegraphics[width=\columnwidth]{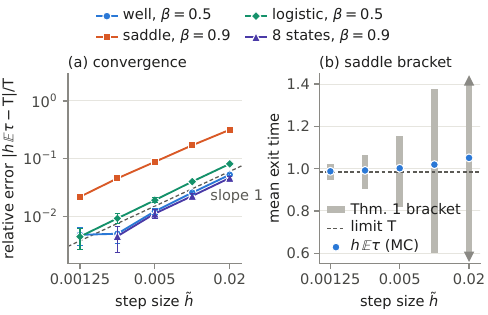}
\caption{Persistent-sampling bridge. (a) Relative error of $h\E\tau$
against the switching-process mean $\mathsf T$ for three two-sample
pools and an eight-sample pool; steps are $h=\tilde h(1-\beta)$ and bars
are 95\% Monte Carlo intervals. (b) Bracket of
Theorem~\ref{thm:telegraph-bridge} at each step (gray; arrows mark ends
beyond the axis), Monte Carlo means and the limit $\mathsf T$ for the
saddle pool with $\beta=0.5$.}
\label{fig:bridge}
\end{figure}

\paragraph{Explicit constants.}
The reference solves the backward system
$v_s\mathsf T_s'+\sum_{s'}Q_{ss'}(\mathsf T_{s'}-\mathsf T_s)=-1$, with
zero boundary values only for outgoing states (right-moving states at
the right endpoint, left-moving at the left); the $\epsilon$-enlarged and
$\epsilon$-shrunk intervals need their own solutions $\mathsf T^+$ and
$\mathsf T^-$. Let
$V=\max_s\sup|v_s|$, $L=\max_s\sup|v_s'|$,
$\bar q=\max_s\sum_{s'\ne s}Q_{ss'}$, and let $M_k^\pm$ bound the $k$th
derivative of $\mathsf T^\pm$ ($k=0,1,2$). With
$z_0=x_0+h\beta u_0/(1-\beta)$,
\begin{align*}
 K_\pm&=\tfrac{L\beta U M_1^\pm}{1-\beta}
       +\tfrac{V^2M_2^\pm}{2}+2\bar qVM_1^\pm,\\
 B_\pm&=VM_1^\pm+\bar q M_0^\pm,\quad
 C_\pm=B_\pm+
 \tfrac{K_\pm(\mathsf T^\pm_{S_0}(z_0)+hB_\pm)}{1-K_\pm h},
\end{align*}
the bracket is
$\mathsf T^-_{S_0}(z_0)-hC_-\le h\E\tau\le\mathsf T^+_{S_0}(z_0)+hC_+$
for a nonempty shrunk interval when $K_\pm h<1$ and
$\max(2\epsilon,\epsilon+hV)$ is at most the
verified extension radius (the lower bound is zero if $z_0$ lies outside
the shrunk interval). The three terms of $K_\pm$ control momentum
lag, Taylor error and switching during an update, so a small learning
rate alone does not guarantee applicability; the 38 study cells that
fail these conditions are reported as failures. For the flat pool
$g_\pm=\mp q$ on $(-R,R)$ with flip rate $\lambda$, $c=q/(1-\beta)$ and
$D=c^2/(2\lambda)$, the reference mean from $0$ is
$R^2/(2D)+R/c$, so a matched diffusion misses the finite-speed
correction $R/c$.

\section{Computable comparisons that retain sampled curvature}\label{sec:curvature}
The bridge needs one dimension, sign-definite velocities and small
steps. Without these assumptions, we compare the actual process with a
reference that retains the realized minibatches and their curvature, and
bound the difference pathwise. For the realized batches $B_n$, the affine
reference is
\[
v_n=\beta v_{n-1}-c_{B_n}-H_{B_n}y_{n-1},\qquad y_n=y_{n-1}+hv_n,
\]
with $c_B=g_B(w_*)$ and $H_B=\nabla g_B(w_*)$. Its initial position and
velocity agree with the actual process, and $\|x_0\|_2<R$.

Suppose known nondecreasing functions $\eta_B$ bound the regional
remainder, $\|g_B(w_*+z)-c_B-H_Bz\|_2\le\eta_B(r)$ for $\|z\|_2\le r\le R$.
With the transition matrices $A_n$ and injection $D$ of
Appendix~\ref{sec:heterogeneous}, let $K_{n,j}=PA_n\cdots A_{j+1}D$,
$E_0=0$ and
\[
E_n=\sum_{j=1}^n\|K_{n,j}\|_2\,\eta_{B_j}\bigl(\min\{R,\|y_{j-1}\|_2+E_{j-1}\}\bigr).
\]
$E_n$ bounds the position error through the exit-causing update. The
reference exits $T_\pm$, defined by the tests $\|y_n\|_2\mp E_n\ge R$,
therefore satisfy $T_-\le\tau\le T_+$ pathwise. This yields finite-window
CDF and restricted-mean brackets under any sampler, with no independence
assumption.

The sampling schedule also constrains reachability. For a stable scalar
common-curvature model, Theorem~\ref{thm:reach} computes the exact
reachable supremum under epoch permutations. Boundaries strictly above
it are never reached under RR, while iid sampling exits with finite mean.

\subsection{Constructing the tube and reading its precision}
The matrices needed for the comparison are explicit:
\[
 A_n=\begin{pmatrix}I-hH_{B_n}&h\beta I\\-H_{B_n}&\beta I\end{pmatrix},
 \quad D=\binom{hI}{I}.
\]
Set $P=\begin{pmatrix}I&0\end{pmatrix}$ and $r_B(z)=g_B(w_*+z)-c_B-H_Bz$. Subtracting the two momentum
recursions and iterating gives
\[
 x_n-y_n=-\sum_{j=1}^n K_{n,j}r_{B_j}(x_{j-1}).
\]
For $n\le\tau$, every input $x_{j-1}$ lies inside the region.
Induction gives its admissible radius
$\min\{R,\|y_{j-1}\|+E_{j-1}\}$ and hence the displayed tube.
This includes $n=\tau$: it is the input to the exit-causing update,
not its output, that must remain inside the region. Exit before $T_-$,
or $T_+$ before exit, would contradict the triangle inequality, which
establishes the bracket.
The propagators depend on later sampled batches, so they are not used
as predictable martingale weights; any verified upper bound on
$\|K_{n,j}\|$ can replace its norm. For history-dependent samplers the
comparison is coupled along the realized batches; an autonomous
reference that generates its own histories need not reproduce it.

\paragraph{A nonempty computable region.}
For ridge-logistic gradients
$g_i(w)=(\sigma(a_i^\top w)-b_i)a_i+\lambda w$, the global bound
$\sup|\sigma''|=1/(6\sqrt3)$ gives
\[
 \eta_B(r)=\frac{r^2}{12\sqrt3|B|}\sum_{i\in B}\|a_i\|_2^3.
\]
This bound is quadratic in the radius. For a common Hessian $\bar H$,
whose curvature error is discussed by \citet[Appendix C.1]{domingo2022},
replace this by $\eta_B(r)+\|H_B-\bar H\|_2r$: sampled curvature removes
the linear heterogeneity term, which explains a small-radius advantage but not
universal width dominance, since the propagators also differ. The
constants cover every point of the ball and every batch of the pool,
which residuals measured on saved trajectories do not.

Whole-parameter tanh bounds fail the overall numerical validation gate
(Appendix~\ref{app:fullparam-followup}). Tail certificates cover tanh,
local ViT regions and original ResNet radii
(Appendix~\ref{app:complete-head-cones}), with loose bounds.
For original-radius ViT, a 128-path continuation study measures the
omitted finite-window tail but leaves uniform tail control open
(Appendix~\ref{app:vit-adaptive-energy}).

\paragraph{Boundary sensitivity.}
With $\underline T=T_-\wedge J$, $\overline T=T_+\wedge J$ and the
reference's capped exit $\hat\tau=\tau_y\wedge J$, both $\hat\tau$ and
$\tau\wedge J$ lie in $[\underline T,\overline T]$, so
$|\tau\wedge J-\hat\tau|\le\max\{\hat\tau-\underline T,\overline T-\hat\tau\}$. A small position tube can
give a wide exit bracket when the reference runs near the boundary,
and a nonzero tube can fix the integer exit update when the
margins are larger; we therefore report exit brackets, not trajectory
distances (radius--time map in Appendix~\ref{sec:radius-window-map}).

\subsection{Separating finite-window and complete means}
The pathwise bracket bounds the CDF and RMST:
\begin{align*}
 \Pp(T_+\le n)&\le\Pp(\tau\le n)\le\Pp(T_-\le n),\\
 \E(T_-\wedge J)&\le\E(\tau\wedge J)\le\E(T_+\wedge J).
\end{align*}
For a probabilistic tube holding through $J$ with failure probability
at most $\delta$, a bounded-time comparison pays at most
$(J-1)\delta$. A complete mean additionally needs a uniform bound $U_0$
on the expected remaining exit time at every surviving history,
including its velocity and, under reshuffling, remaining sample pool.
Under this extra assumption,
\begin{align*}
 \E\tau\le{}&\E(T_+\wedge J)+(J-1)\delta\\
 &+U_0\min\{1,\Pp(T_+>J)+\delta\}.
\end{align*}
This follows by conditioning the tail
$\E[(\tau-J)\mathbf1_{\{\tau>J\}}]$ on the history at $J$; a small
tube-failure probability alone cannot control it. If calibration is
random, the comparison is conditional on it.
Appendix~\ref{app:probtransfer} gives the iid construction. The network
experiments below therefore report RMST, not complete mean exit times.

\subsection{Trained-pool evidence and enclosure controls}
In a prespecified study, 32 ridge-logistic runs each draw a 2,048-example
pool, with eight runs per MNIST/Fashion-MNIST and $\beta\in\{0.5,0.9\}$
stratum. Each run trains for two reshuffled epochs and is then continued
from its full trained state under iid and reshuffled sampling. The
primary endpoint asks for a sample-curvature bracket at most 20 updates
wide that is at least five updates narrower than the common-curvature
bracket.

\begin{figure}[t]
\centering
\includegraphics[width=\columnwidth]{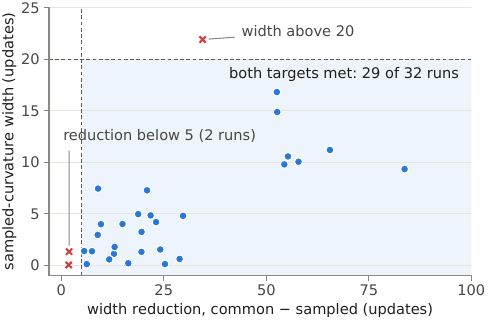}
\caption{Run-level applicability. Width reduction is common minus sampled
width. Shading marks both prespecified targets; red crosses identify
two gain failures and one width failure. All 32 runs are shown without jitter.}
\label{fig:curvature-targets}
\end{figure}

Of 32 runs, 29 succeed (Figure~\ref{fig:curvature-targets}), with
one-sided 95\% lower bound 0.690; recomputing all 6,144 brackets with
rigorous rounding-error bounds changed none (Appendix~\ref{sec:fresh-pool}).
Table~\ref{tab:trained-comparison} separates enclosure width from point
prediction error: both point predictors are accurate, while mean bracket
widths fall from 30.86 updates with the mean Hessian to 5.11 with
sampled curvature.

\begin{table}[t]
\centering
\caption{Fresh-pool comparison: run-averaged bracket widths and paired
capped-time errors (updates) with common and sampled curvature. All
methods match the complete initial state.}
\label{tab:trained-comparison}
\small
\setlength{\tabcolsep}{0pt}
\begin{tabular*}{\columnwidth}{@{\extracolsep{\fill}}lrrrr@{}}
\toprule
 & \multicolumn{2}{c}{Bracket width} & \multicolumn{2}{c}{Capped-time MAE}\\
\cmidrule(lr){2-3}\cmidrule(l){4-5}
Start & Common & Sampled & Common & Sampled\\
\midrule
Trained & 30.86 & 5.11 & 0.544 & 0.094\\
Stationary & 27.73 & 3.79 & 0.263 & 0.122\\
\bottomrule
\end{tabular*}
\end{table}

Features are fixed 17-dimensional pooled images with an intercept ($h=0.2$, $b=64$, ridge 0.1); each run has eight paths per
sampler and initial state, three radii and a 200-update horizon
(Appendix~\ref{sec:fresh-pool}). Two of the three unsuccessful runs
already have narrow common-curvature brackets and miss the improvement
target; the third misses the width target. The stationary start changes
position and velocity together, so its contrast with the trained start
does not isolate the initial velocity.

\begin{figure}[t]
\centering
\includegraphics[width=\columnwidth]{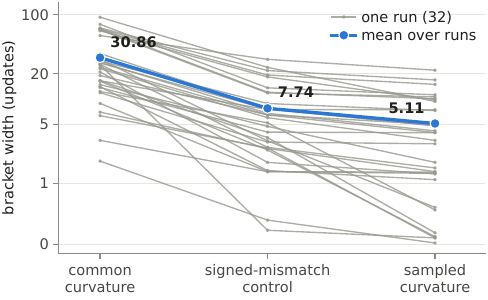}
\caption{Paired enclosure widths for all 32 runs (gray); equal-run means
are blue. The vertical scale is logarithmic above one update and linear
below it. This post-hoc, two-reference control closes 89.8\% of the mean
width gap; lines are not confidence intervals.}
\label{fig:enclosure-control}
\end{figure}

\paragraph{A control for how the mismatch is bounded.}
Around the common-curvature reference $z_n$, the envelope
$F_n=\|y_n-z_n\|+E_n$ is valid by the triangle inequality; it keeps the
propagated mismatch as a vector before taking a norm. Its mean
trained-state width is 7.74 updates
(Figure~\ref{fig:enclosure-control}), closing 89.8\% of the gap between
30.86 and 5.11, so much of the original gap comes from how the error
is enclosed. This post-hoc control still uses sampled curvature and both
reference paths, and its bracket contains the sampled-reference bracket
by construction.

\paragraph{Cost.}
Direct simulation costs 0.005 seconds per path against 0.18 for the
bound. Dense propagation costs $O(J^2d^3)$, confining it to low dimensions;
an explicit short-memory bound gives a median $2.5\times$ speedup
(Appendix~\ref{sec:shortmemory}). The bounds are not faster estimators.

\section{Sampler dependence in trained networks}\label{sec:networks}
Sections~\ref{sec:information} and~\ref{sec:bridge} imply that, with the
one-step law fixed, the sampler alone can change exit times. We tested
this prospectively in four trained networks: an MLP on MNIST, a CNN on
Fashion-MNIST, and ResNet-18 \citep{he2016deep} and a ViT on CIFAR-10
\citep{krizhevsky2009learning}.

\paragraph{Design.} Each network continues from a common full state
$(\theta_0,u_0)$, obtained by 100 iid momentum steps on a fixed pool of
4,096 training examples ($b=64$, $\beta=0.9$, BatchNorm statistics
frozen). From it, 32 paths per sampler run for 1,000 updates; all
samplers share the marginal law over 64-subsets, hence the one-step
gradient law at a fixed parameter. We record exits of $\|\theta_n-\theta_0\|$ from a ball
and of the projections on the top eigenvector $e_{\rm noise}$ of the
per-example gradient covariance and on a random direction. Learning
rates and radii were fixed on a separate pilot seed, and hypotheses and
analyses were recorded before each study
(Appendix~\ref{app:nn-sampler}). The primary endpoint is the RMST of the
ball exit relative to iid. With only 3--4 independent training seeds,
the archived 95\% bootstrap ranges are descriptive, not calibrated
population confidence intervals or significance tests
(Appendix~\ref{app:nn-seedlevel}).

\paragraph{Study 1: the declared comparisons.} Four seeds per network
were continued under iid, RR and persistent sampling with $p=0.25$ and
$p=0.05$ (mean echo counts 4 and 20). We declared three directional
hypotheses: RR exits later than iid, $p=0.05$ exits earlier than iid,
and the RMST decreases from iid to $p=0.25$ to $p=0.05$. Moderate
persistence gave ball-RMST
ratios of 0.61 (MLP), 0.43 (CNN), 0.91 (ResNet-18) and 0.41 (ViT),
relative to iid (Figure~\ref{fig:nn-study1}a); the ResNet-18
interval includes one. Stronger persistence gave ratios of 1.61, 0.86,
1.21 and 0.72, and RR ratios ranged from 0.81 to 1.07. No network showed the predicted monotone ordering: $p=0.05$ had a larger point RMST than $p=0.25$ in all four.
Moderate persistence shifts the whole survival curve to earlier exits in
every network (Figure~\ref{fig:nn-study1}d--g); the random projection
behaves like the ball, and the $e_{\rm noise}$ projection is noisier
(Figure~\ref{fig:nn-study1}b,c; Table~\ref{tab:nn-all}). One ResNet-18
burn-in diverged before any sampler ran, leaving three seeds.

\begin{figure*}[t]
\centering
\includegraphics[width=\textwidth]{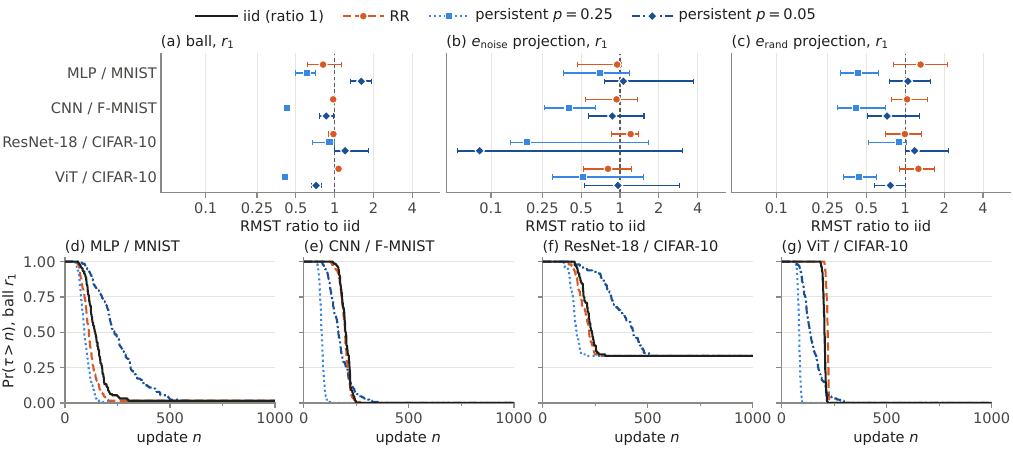}
\caption{Study 1 (declared comparisons). (a--c) RMST ratios to iid at
the first radius with descriptive bootstrap ranges over seeds
and paths; ratios below one mean earlier exits. (d--g) Survival of the
ball exit at $r_1$, pooled over seeds and paths (128 paths per sampler;
96 for ResNet-18, which uses three seeds). All samplers share the
marginal batch law.}
\label{fig:nn-study1}
\end{figure*}

\begin{figure*}[t]
\centering
\includegraphics[width=\textwidth]{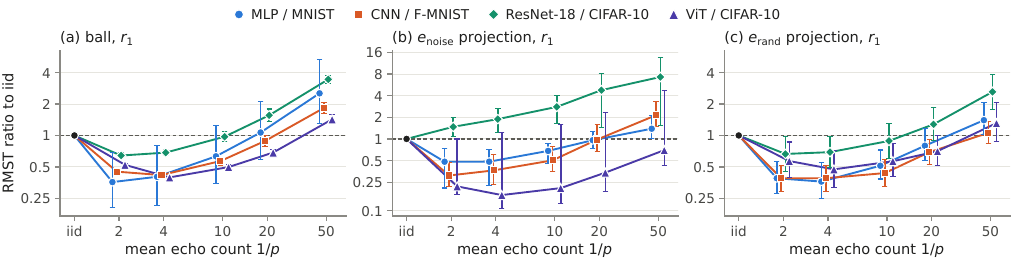}
\caption{Study 2 (new seeds): RMST ratios to iid against the mean echo
count $1/p$ at the first radius for (a) the ball, (b) the $e_{\rm noise}$
projection and (c) a random projection, with descriptive bootstrap
ranges; networks are offset horizontally for legibility. All samplers
share the marginal batch law. ResNet-18 uses three seeds.}
\label{fig:nn-dose}
\end{figure*}

More persistence thus does not always mean faster escape, which a single
persistence level would conceal. These are local displacement events, not basin or generalization metrics.

\paragraph{Study 2: a sampled interior minimum.} The regime
trichotomy of Section~\ref{sec:bridge} suggests a finite-window reading
of this failure. Frequent refreshes average the forcing, as iid sampling
does; rare refreshes approach the frozen-sample regime and leave few
switches within the window. Before a second study on four new seeds per
network, we declared that $p=0.25$ exits earlier than both iid
and $p=0.02$, with $p\in\{1,0.5,0.25,0.1,0.05,0.02\}$ and RR. ResNet-18
again lost one seed to a diverging burn-in. The point ordering appeared in all
four networks (Figure~\ref{fig:nn-dose}a): $p=0.25$ gave ball-RMST ratios
of 0.40--0.69 to iid and 0.16--0.28 to $p=0.02$, with every interval,
and every per-seed ratio, below one. The sampled minimum occurs at mean echo counts 2--4; echo counts
of 50 give ratios of 1.4--3.4 to iid. The random projection also has
an interior minimum.

\paragraph{Study 3: not the switching limit.} The trichotomy reading has
a sharper consequence. If the networks were in the switching regime of
Theorem~\ref{thm:telegraph-bridge}, exit times measured in time $t=nh$
would depend on persistence only through the rate $p/h$, while iid
sampling would average out as $h$ decreases. A third declared study
reran the Study 2 starts of the MLP, CNN and ViT at $h_0/2$ and $h_0/4$,
with $p$ scaled by $h/h_0$ and the window fixed at $1000h_0$
(Figure~\ref{fig:nn-scaling}). Iid exits slowed as predicted: between
$h_0$ and $h_0/4$ the time-unit RMST grew by factors of 1.5--4.8 (all
intervals above one), and no MLP or CNN iid path left the ball at the
smaller steps. Persistent exits slowed too, however: the declared
invariance between $h_0/2$ and $h_0/4$ at matched rate 0.25 failed for
the CNN (RMST ratio 3.20) and the ViT (1.95), and was uninformative for the MLP
because both arms were censored. The fastest rate stayed at 0.25--0.5.
The persistence effects of Studies 1--2 are therefore tied to practical
step sizes; they are not the switching limit. Consistently, probed
minibatch curvatures give $h_0\kappa_B/(1-\beta)$ of 6.9--12.6, far from
the small-step regime (Appendix~\ref{app:nn-sampler}).

\paragraph{Studies 4--7: generality and a limit.} Four further declared
studies tested the interior minimum outside the original setting
(Figure~\ref{fig:nn-robust}; Appendix~\ref{app:nn-studies456}). The point ordering recurred
on new data, for the ViT on CIFAR-100 and the CNN on SVHN (ResNet-18 on
CIFAR-100 failed the pilot and was not run), in all eight cells with pool
sizes 1024 and 16384 and batch sizes 32 and 128, and after a tenfold
longer burn-in; in several cells iid paths never left the ball while
persistent ones did. It did not hold without
heavy-ball momentum. At $\beta=0.5$ and $\beta=0$, with the same raw step,
the ViT's exits slowed monotonically with persistence (iid fastest,
$p=0.02$ slower by factors of 3.1--3.4), and the CNN never reached the
ball. The speed-up thus belongs to momentum SGD at practical steps; the
sampler dependence persists without momentum. Archived paths suggest a role for accumulated momentum
(Appendix~\ref{app:nn-mechanism}). Raw buffer resets removed the speed-up
(Study 8, Appendix~\ref{app:nn-study8}), but random resets harmed at
least as much as resets at refreshes, contrary to prediction.

\begin{figure*}[t]
\centering
\includegraphics[width=\textwidth]{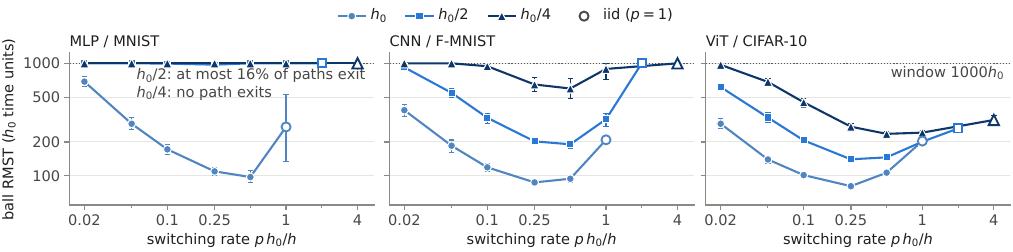}
\caption{Study 3: ball RMST in time ($h_0$ updates) against the
switching rate $p\,h_0/h$ at three learning rates (descriptive bootstrap
ranges; open markers: iid; dotted: the window). Under the
switching limit of Theorem~\ref{thm:telegraph-bridge} the curves would
coincide.}
\label{fig:nn-scaling}
\end{figure*}

\paragraph{Fresh training-seed replication.}
Both networks give 32/32 successes in each persistence comparison
(Appendix~\ref{app:replication-followup}). At calibrated normalized lengths,
refresh-reset RMST exceeds no-reset RMST in 32/32 seeds at both networks,
and random-reset RMST in 32/32 MLP but 0/32 CNN seeds
(Appendix~\ref{app:mechanism-followup}). With 16 new starts per network,
slowing across lengths $0.5,1,2$ times calibration holds in 16/16 seeds
versus no reset but 0/16 versus random reset: the latter ordering reverses
with length. These corrected joint stability/direction tests do not concern
population means or causally decompose raw SGD.

\paragraph{Surrogate errors and their attribution.} One-dimensional surrogates
along $e_{\rm noise}$, built at $\theta_0$ from the pool's projected
gradients and curvature, show the gap. A Gaussian surrogate matched to
the one-step moments assigns one exit law to every sampler; feeding the
pool values through the actual sampler reduced the error in 20 of 24
cells, as declared, and ordered 85\% of clearly separated Study 2
sampler pairs correctly. Neither is quantitatively adequate (RMST errors
above 100\% in every Study 1 cell). These comparisons do not isolate
projection from local approximation. A small-network factorial control
separates these interventions but does not attribute deep-network error
(Appendix~\ref{app:priority-controls}).

\raggedbottom
\section{Related work}
\paragraph{Continuum models of SGD.}
Stochastic modified equations \citep{li2017stochastic}, escape analyses
\citep{zhu2019anisotropic,xie2021diffusion}, heavy-tailed first-exit
comparisons \citep{nguyen2019first}, Hessian-aware SDEs
\citep{li2025hasme} and models of momentum
\citep{liu2021finite,simsekli2020fractional} are calibrated from one-step
noise statistics; by Section~\ref{sec:information}, no calibration from
the pointwise law, Gaussian or heavy-tailed, determines reshuffled exits.

\paragraph{Switching and Markovian sampling.}
Switching models represent subsampling by a Markov index, for SGD
\citep{latz2021analysis} and for momentum \citep{jin2025momentum} with
finite-horizon distance bounds; Markov-chain SGD is analyzed through
mixing times \citep{sun2018markov,even2023stochastic}; data echoing and
minibatch persistency reuse batches for speed
\citep{choi2019faster,fischetti2018faster}; telegraph first-passage
formulas are classical \citep{angelani2014first,kus1993}. We add a
mean-exit bracket in the switched regime and measure how reuse changes
exits.

\paragraph{Reshuffling and identifiability.}
Reshuffling lower bounds \citep{safran2020shuffling} and variance analyses
\citep{mishchenko2020reshuffling} show that order matters for
optimization error; our target is an exit event. Unlabeled recovery links
our information results to homomorphic sensing and phase retrieval
\citep{peng2021homomorphic,alaifari2017stable}.

\paragraph{Uncertainty-guided acquisition.}
OUGS \citep{li2026ougs} propagates Gaussian-primitive parameter covariance
through rendering Jacobians to guide object-aware view selection.
This connects local uncertainty propagation to data acquisition;
our exit comparisons additionally track sampler history and boundary
crossings. We do not evaluate active view selection here.

\section{Conclusion}
Pointwise laws do not identify reshuffled exits. The one-dimensional
persistent regime admits a mean-exit bracket; certified brackets meet
targets in 29/32 logistic pools. Neural direction findings replicate under
the specified finite-path protocols, while reset rankings depend on step
length and window. Original-radius ViT mean guarantees and a general
causal interpretation remain open.

\paragraph{Limitations.}
The bridge assumes sign-definite one-dimensional velocities, absent from
our networks. Only fresh-pool exit brackets are rounding-certified;
regional smoothness does not cover general ReLU networks. Most
architectures retain 3--4 seeds with descriptive intervals. Pilot radii,
frozen ResNet-18 BatchNorm and reset/step-length confounding restrict
generalization. An earlier telegraph diagnostic failed prospective validation (Appendix~\ref{app:extended-validation}).

\clearpage
\section*{AI Use Statement}
Generative AI tools assisted with mathematical derivations, experimental
design, implementation and audits, and manuscript revision. The authors are responsible
for verifying the arguments, results and final text.

\bibliographystyle{plainnat}
\bibliography{references}
\section*{Checklist}
\begin{enumerate}

  \item For all models and algorithms presented, check if you include:
  \begin{enumerate}
    \item A clear description of the mathematical setting, assumptions, algorithm, and/or model. [Yes. Sections 2--6 and the mathematical appendices.]
    \item An analysis of the properties and complexity (time, space, sample size) of any algorithm. [No. Dense propagation time complexity is analyzed, but a complete time/space/sample-complexity treatment of every algorithm is not supplied.]
    \item (Optional) Anonymized source code, with specification of all dependencies, including external libraries. [No. The manuscript source, figure assets and executable exact-arithmetic head/cone/two-stage certificate audits with rational inputs, and the 64-path finite-tail statistical replay are provided. A single package reproducing every reported experiment is not supplied.]
  \end{enumerate}

  \item For any theoretical claim, check if you include:
  \begin{enumerate}
    \item Statements of the full set of assumptions of all theoretical results. [Yes. Hypotheses accompany the theorem statements.]
    \item Complete proofs of all theoretical results. [Yes. Complete proofs are given in the appendices.]
    \item Clear explanations of any assumptions. [Yes. Initialization, clocks, conditional moments and sampling restrictions are explicit.]
  \end{enumerate}

  \item For all figures and tables that present empirical results, check if you include:
  \begin{enumerate}
    \item The code, data, and instructions needed to reproduce the main experimental results (either in the supplemental material or as a URL). [No. The source package provides displayed results and exact certificate inputs with executable audits. Complete saved trajectories, training data and full network checkpoints are omitted; the package alone does not reproduce all experiments.]
    \item All the training details (e.g., data splits, hyperparameters, how they were chosen). [Yes. The prospective-design appendices record the settings, how learning rates and radii were chosen on separate pilot seeds, and all amendments.]
    \item A clear definition of the specific measure or statistics and error bars (e.g., with respect to the random seed after running experiments multiple times). [Yes. Run and seed resampling (hierarchical bootstrap), RMST, censoring and sensitivity intervals are distinguished.]
    \item A description of the computing infrastructure used. (e.g., type of GPUs, internal cluster, or cloud provider). [Yes. The historical neural experiments used one NVIDIA GeForce RTX 3060 Laptop GPU (6 GB; CUDA 12.8, PyTorch 2.10). The eight new network studies used NVIDIA GeForce RTX 3090 GPUs (24 GB) of a shared server with PyTorch 2.10.0+cu128; their continuations recorded 6.3, 10.9, 17.0, 4.7, 19.5, 12.7, 1.6 and 1.0 device-hours, the Study 4 pilot 0.4 and the Study 7 start states 0.3 device-hours, the precision check 2.5 device-hours, and training the 51 networks took 3.1 GPU-hours. The ridge-logistic study ran sequentially on CPU (18.98 minutes of run computation), its certified recomputation took 69 CPU-minutes, and the telegraph-bridge simulations used 20 CPU threads. The follow-up 64-training-run replication reserved one RTX 3090 for 1.016 GPU wall-hours; smooth-network validation, its 401.7-second longer-window follow-up, and the 3.21-second integer interval check ran on CPU. The completed six-arm mechanism phase used less than 78 minutes between calibration completion and its own completion, an upper bound on reserved wall time including launch delay; the completed independent robustness follow-up reserved a further 0.642 GPU wall-hours. The initial replication, calibration, mechanism and robustness GPU stages fit within one 3.069-hour allocation window (including idle gaps), below their frozen four-hour cap. Later ViT continuations are accounted for separately in Appendices~\ref{app:vit-fresh-tail} and~\ref{app:vit-defensive-tail}. Reserved wall time is distinct from measured GPU utilization.]
  \end{enumerate}

  \item If you are using existing assets (e.g., code, data, models) or curating/releasing new assets, check if you include:
  \begin{enumerate}
    \item Citations of the creator If your work uses existing assets. [Yes. MNIST \citep{lecun1998}, Fashion-MNIST \citep{xiao2017}, CIFAR-10 and CIFAR-100 \citep{krizhevsky2009learning} and SVHN \citep{netzer2011reading} are cited, as are the ResNet \citep{he2016deep} and ViT \citep{dosovitskiy2021image} architectures.]
    \item The license information of the assets, if applicable. [Yes. MNIST is distributed under CC BY-SA 3.0 and Fashion-MNIST under the MIT license. The CIFAR web page states no license for CIFAR-10 or CIFAR-100; the UCI Machine Learning Repository lists CIFAR-10 under CC BY 4.0. The SVHN web page marks the cropped-digit files for non-commercial use only; this work is non-commercial research.]
    \item New assets either in the supplemental material or as a URL, if applicable. [No. The manuscript source, displayed tables, figure assets, exact head/cone/two-stage certificate materials and the 64-path finite-tail statistical replay are provided. Full trajectories, large-network checkpoints and complete end-to-end retraining are not included; no public release is claimed.]
    \item Information about consent from data providers/curators. [Not Applicable. Only public benchmark datasets are used.]
    \item Discussion of sensible content if applicable, e.g., personally identifiable information or offensive content. [Not Applicable. No new human-subject or sensitive-content collection is used.]
  \end{enumerate}

  \item If you used crowdsourcing or conducted research with human subjects, check if you include:
  \begin{enumerate}
    \item The full text of instructions given to participants and screenshots. [Not Applicable. No crowdsourcing or human-subject study was conducted.]
    \item Descriptions of potential participant risks, with links to Institutional Review Board (IRB) approvals if applicable. [Not Applicable. No crowdsourcing or human-subject study was conducted.]
    \item The estimated hourly wage paid to participants and the total amount spent on participant compensation. [Not Applicable. No crowdsourcing or human-subject study was conducted.]
  \end{enumerate}

\end{enumerate}

\clearpage
\flushbottom
\onecolumn
\appendix
\renewcommand{\thesection}{\AlphAlph{\value{section}}}
\paragraph{Guide to the appendices.} The appendices follow the main
text. Appendix~\ref{app:sampling-information} proves the information
results of Section~\ref{sec:information}.
Appendices~\ref{app:obstructions} and~\ref{app:telegraph-bridge} contain
the obstructions and the persistent-sampling bridge of
Section~\ref{sec:bridge}.
Appendices~\ref{sec:heterogeneous}--\ref{app:reshuffling} support the
computable comparisons of Section~\ref{sec:curvature}; among them,
Appendix~\ref{sec:fresh-pool} gives the fresh-pool protocol and its
results, including a recomputation of every bracket with rigorous
rounding-error bounds.
Appendix~\ref{app:nn-sampler} documents the network studies of
Section~\ref{sec:networks}: training, pilot, declarations and full
results of Studies 1--7, the post-hoc mechanism analysis and the
intervention that tests it (Study 8), a float64 precision check, and
censoring and seed-level checks.
Appendix~\ref{app:replication-followup} reports the fresh 32-seed replication;
Appendix~\ref{app:mechanism-followup} reports equal-length reset controls;
Appendix~\ref{app:fullparam-followup} gives whole-parameter smooth-network
bounds, interval-verified tail conditions, and the failed feasibility
criterion and horizon follow-up. Appendices~\ref{app:large-head-tail}--\ref{app:two-stage-head}
give the classification-head certificates and their scope;
Appendix~\ref{app:vit-adaptive-energy} reports the ViT continuation
study and states what is still needed for complete means. Appendices~\ref{app:extended-validation}
and~\ref{sec:bounded-precision} report supplementary computational
benchmarks and an earlier neural diagnostic that the main results do not
rely on.

\section{Sampling information and exit laws}\label{app:sampling-information}
\subsection{Pointwise gradient laws identify iid dynamics but not reshuffling}
\label{sec:pointwise-law-rr}
Let $q=1/4$, $d(t)=t^3/[4(1+t^2)]$, and define two component-gradient
pools, indexed by $s\in\{-1,+1\}$, by
\[
 g_s^A(x)=x-1+s d(x-q),\qquad
 g_s^B(x)=x-1+s|d(x-q)|.
\]
Define each component loss as $\ell_s^M(x)=\int_0^x g_s^M(z)\,dz$,
where $M\in\{A,B\}$. Both pools consist of globally strongly convex
$C^3$ losses, with strong-convexity constant $23/32$ and
gradient-Lipschitz constant $41/32$. Their mean objective is exactly
$f(x)=x^2/2-x$. At every fixed $x$, the entire fresh-uniform gradient
law is identical between the two pools:
\[
 \tfrac12\delta_{g_{-1}^A(x)}+\tfrac12\delta_{g_{+1}^A(x)}
 =\tfrac12\delta_{g_{-1}^B(x)}+\tfrac12\delta_{g_{+1}^B(x)}.
\]
Consequently iid uniform sampling produces the same full momentum-state
path law in both pools, for any common deterministic initialization and
common algorithm parameters. This conclusion follows from equality of
the transition kernels and does not require a small-step limit.

In contrast, consider independent uniform permutations of the two labels
at successive epochs, initialize $x_0=u_0=0$, and set
\[
 u_n=\tfrac12u_{n-1}-g_{s_n}^M(x_{n-1}),\qquad
 x_n=x_{n-1}+\tfrac12u_n,\qquad
 \tau=\inf\{n\geq1:|x_n|\geq R\}.
\]
For every $1\leq R\leq513/512$, the first-epoch exit probabilities are
$\Pr_A(\tau\leq2)=1/2$ and $\Pr_B(\tau\leq2)=0$.
Thus any deterministic scalar predictor based only on the complete
family of pointwise fresh-uniform gradient laws, the common mean
objective, algorithm, initialization, radius and specified reshuffling
rule has worst-case absolute error at least $1/4$ for this CDF value.
The restriction excludes access to the cross-position identity of the
component functions or to their history-conditioned remaining-pool laws.

\paragraph{Proof.}
Writing $r=t^2\geq0$ gives
$d'(t)=r(3+r)/[4(1+r)^2]$. Since
$9(1+r)^2-8r(3+r)=(r-3)^2$, we have $0\leq d'(t)\leq9/32$.
The derivative of $|d|$ has the same absolute bound, including its
continuous value at zero. Both $d$ and $|d|$ are $C^2$; their first
two derivatives vanish at zero. Hence all component Hessians lie in
$[23/32,41/32]$, proving the regularity and convexity claims (and
boundedness below). The unordered pair of gradient values is unchanged
by replacing $d$ by $|d|$, proving equality of the iid kernels.

For reshuffling the first two labels are either $(-1,+1)$ or
$(+1,-1)$, each with probability $1/2$. Direct rational substitution
gives the following integer-time positions:
\[
\begin{array}{c|c|c|c}
 M&(s_1,s_2)&x_1&x_2\\\hline
 A&(-1,+1)&271/544&1362255009/1367228672\\
 A&(+1,-1)&273/544&1374685153/1369596160\\
 B&(-1,+1)&273/544&1369542447/1369596160\\
 B&(+1,-1)&271/544&1367175759/1367228672
\end{array}
\]
All first positions and both second positions in B are below $1$;
the second A position in the first row is below $1$, while that in
the second row exceeds $513/512$. All are positive. This proves the
exit counts, including boundary equality. The predictor returns the
same value on both pools, so its maximum error is at least half their
CDF separation.

Here the omitted information is the association of sample labels across
positions, not higher one-point gradient moments: the full pointwise
laws already agree. Under reshuffling the remaining label depends on
history, so a fresh-uniform kernel does not specify the next update.
This is an information counterexample for a finite-window event, not
a claim of differing asymptotic convergence or a classification of
complete exit means. It concerns a ball around the initialization,
not a validated basin boundary. The construction and radii were chosen
exploratorily. Exact two- and four-step enumerations are retained;
at step four both pools have exited for all tested paths and radii,
so the separation does not persist at every observation horizon.

The example also limits a local affine reference. For $x<q$,
$g_s^B(x)=g_{-s}^A(x)$, so the complete local component information at
the initial point agrees up to a fixed relabeling. In particular, the
joint gradient--Hessian pairs are identical as empirical distributions:
in A they are $(-1-s/272,\,1+49s/1156)$, with the signs reversed in B.
Thus the corresponding local affine references also have identical
reshuffling laws. Retaining local sample curvature alone does not
resolve this example; a valid nonlinear residual bound must account
for the discrepancy. Epoch-level or augmented-state descriptions of
reshuffling already occur in the literature, including the perturbed
algorithm studied in \cite{emmanouilidis2026shuffle}. Our example concerns an
explicit finite-step information restriction, not a new Markov-state
augmentation principle.

\subsection{Matching the remaining pools preserves the sampler}
\label{sec:remaining-matching}
The distinction above can be expressed as a standard adapted coupling.
Fix two pools of $N$ gradients $g_i,\widetilde g_j:\mathbb R^d\to
\mathbb R^d$ and common parameters $h>0$, $0\leq\beta<1$.
Assume every $g_i$ is globally $L$-Lipschitz. Use batch size one,
matched deterministic full initial states, and uniform random
reshuffling with synchronized epoch boundaries in both pools.
At step $n$, let $A_n,B_n$ be the two remaining index sets before
the draw. Relative to the joint past, choose a measurable bijection
$\sigma_n:A_n\to B_n$ before drawing $I_n$ uniformly from $A_n$,
and put $J_n=\sigma_n(I_n)$. Delete both selected indices; at each
epoch boundary reset both sets to all $N$ indices.

This generates the prescribed reshuffling law in each marginal.
Indeed, conditional on the joint past, each remaining index in either
pool has probability $1/|A_n|$. Conditioning further only on that
pool's own label-augmented history preserves this probability, because
its remaining set is determined by that history; positions alone need
not reveal the used labels. The same argument over a complete
epoch gives a uniform permutation independent of its prior history.
The bijection may change at each step; a single fixed cross-pool
permutation is sufficient but is not necessary.

Run momentum updates with $g_{I_n}$ for $(x,u)$ and
$\widetilde g_{J_n}$ for $(y,v)$. Define the predictable residual bound
\[
 \delta_n=\max_{i\in A_n}
 \|g_i(y_{n-1})-\widetilde g_{\sigma_n(i)}(y_{n-1})\|_2,
 \quad Q_n=\beta Q_{n-1}+LE_{n-1}+\delta_n,
 \quad E_n=E_{n-1}+hQ_n,
\]
with $E_0=Q_0=0$. Splitting the gradient difference at $y_{n-1}$
and applying induction proves
$\|u_n-v_n\|_2\leq Q_n$ and $\|x_n-y_n\|_2\leq E_n$.
For a common open ball containing $x_0=y_0$ and integer exit time
$\tau=\inf\{n\geq1:\|x_n\|_2\geq R\}$, the usual times
$T_- =\inf\{n\geq1:\|y_n\|_2+E_n\geq R\}$ and
$T_+ =\inf\{n\geq1:\|y_n\|_2-E_n\geq R\}$ satisfy
$T_-\leq\tau\leq T_+$. This gives finite-window CDF and restricted-mean
comparisons, without implying a complete-mean bound.

For iid sampling one instead keeps both full index sets at every
step, so location-dependent bijections also preserve the iid marginals.
For reshuffling, resetting the matching domain to all indices at every
step is generally invalid. On the pair in
Appendix~\ref{sec:pointwise-law-rr}, zero-residual pointwise matching
first reverses the labels and then preserves them. It therefore maps
the two legitimate A permutations to repeated B labels $(+1,+1)$ and
$(-1,-1)$, not to B permutations. It returns exit probability $1/2$
for a process whose intended B reshuffling probability is zero.
Respecting the remaining pools restores the correct marginal law,
but the norm-based comparison gives only $[0,1]$ for A's two-step
CDF at $R=513/512$ in this example. Correct sampling does not by
itself guarantee an informative bound.

Choosing a bijection minimizing $\delta_n$ is a finite bottleneck
assignment problem, not a claim of optimality for future exit error.
It requires gradient values at the reference state or validated
surrogate bounds for those values. This is an application of the
classical conditional-marginal coupling principle; causal transport
and its dynamic programming formulations predate this work
\cite{backhoff2016causal,moulos2020bicausal}. It is not counted as a
new coupling theorem, and no computational advantage is inferred.

\subsection{Analytic identification does not imply stable reshuffled exits}
\label{sec:analytic-identification}
Analyticity changes the exact identification question. Suppose two
equally sized finite pools of real-analytic vector fields $g_i,h_j$ on
a connected open set $U\subseteq\mathbb R^d$ have the same empirical
distribution of values at every $x\in U$. Then there is one fixed
permutation $\pi$ with $g_i=h_{\pi(i)}$ throughout $U$.
To see this, fix $g_1$. The analytic scalar function
$\prod_{j=1}^N\|g_1-h_j\|_2^2$ vanishes throughout $U$.
Real-analytic functions on a connected open set have no zero divisors:
if a product vanishes identically and one factor is nonzero somewhere,
the other vanishes on an open neighborhood and hence everywhere by
the identity theorem. Iteration shows that one factor above vanishes
identically. Cancel the matched components, including their
multiplicities, and repeat. A fixed relabeling preserves uniform
reshuffling, so the corresponding exit laws from any ball contained
in $U$ agree for matched full initial states whose position starts
inside the ball. This is an exact
information statement, not a stable reconstruction algorithm.
In particular, the exact label-switching example above cannot be
realized by two distinct analytic finite pools such as ridge-logistic
gradient pools on all of $\mathbb R^d$.

Nevertheless, pointwise-law closeness provides no uniform modulus of
continuity for the reshuffled exit CDF, even within globally strongly
convex analytic two-component losses with fixed smoothness constants.
Keep pool A from Appendix~\ref{sec:pointwise-law-rr} and replace B by
\[
 g_s^{B,\varepsilon}(x)=x-1+s\sqrt{d(x-1/4)^2+\varepsilon^2},
 \qquad 0<\varepsilon\leq1/2048.
\]
Define losses by integration from zero. They are real analytic and
their Hessians lie in $[23/32,41/32]$, since the derivative of the
square-root term has magnitude at most $|d'|\leq9/32$.
The mean objective remains $x^2/2-x$. Let $K_M(x)$ denote the
fresh-uniform gradient law at $x$. For the usual $W_\infty$ transport
distance on scalar gradient values,
\[
 \sup_{x\in\mathbb R}W_\infty(K_A(x),K_{B,\varepsilon}(x))
 =\varepsilon.
\]
Indeed, sorting the two atoms couples them at distance
$\sqrt{d^2+\varepsilon^2}-|d|\leq\varepsilon$, with equality at $x=1/4$.
This concerns transport distance, not total variation.

Use the same $h=\beta=1/2$, $x_0=u_0=0$, integer clock and
$R=513/512$ as before. The A reshuffling CDF at time two is $1/2$.
For B and $B,\varepsilon$, the same-label gradient discrepancy is at
most $\varepsilon$ globally. The elementary Lipschitz error recurrence
with $L=41/32$ yields
\[
 E_1=\varepsilon/2,\qquad E_2=201\varepsilon/128.
\]
Both second-step B positions have absolute value below $1$, and its
first-step positions are at most $273/544$. Since
$201/(128\cdot2048)<1/512$, neither step of either perturbed B
permutation reaches $R$. Consequently
\[
 \Pr_A(\tau\leq2)=1/2,\qquad
 \Pr_{B,\varepsilon}(\tau\leq2)=0
 \quad(0<\varepsilon\leq1/2048).
\]
Thus no bound on their CDF difference by
$\omega(\sup_x W_\infty(K_A(x),K_{B,\varepsilon}(x)))$, with
$\omega(r)\to0$ as $r\downarrow0$, holds uniformly over this analytic
class. The radius, initialization, momentum, step size, pool size,
strong convexity and gradient smoothness remain fixed.

\paragraph{The information obstruction persists away from the boundary.}
The preceding discrepancy is not caused solely by placing the boundary
on a path atom. Write $R_0=513/512$ and $\gamma=1/1024$.
For every $0<\varepsilon\leq1/2048$ and every radius
$R\in[R_0-\gamma,R_0+\gamma]$, the same two CDFs remain $1/2$ and zero.
Indeed, the larger A second position exceeds $R_0$ by
$4828001/2739192320>\gamma$, while its other second position is below
$1<R_0-\gamma$. Both first positions are below $R_0-\gamma$.
The perturbed B positions at time two are bounded above in absolute
value by $1+201/262144<R_0-\gamma$; their first positions are smaller
than $273/544+1/4096<R_0-\gamma$. These inequalities follow from the
same-label comparison to B and hold throughout the epsilon interval.
At the central radius $R_0$, every first- and second-step position in
both pools is at distance greater than $\gamma$ from the boundary.
Thus the failure of a uniform CDF modulus for the pointwise-law metric
persists with a fixed positive boundary margin in this example.
It is distinct from the boundary-atom obstruction that can occur even
when component labels are matched consistently.

\paragraph{A quantitative neighborhood of algorithm parameters.}
The same information separation holds throughout a parameter box,
not only at a single learning rate and momentum. Let
\[
 \delta=1/16384,\qquad \rho=1/4096,\qquad R_0=513/512.
\]
For every common choice of parameters satisfying
\[
 |h-1/2|,\ |\beta-1/2|,\ |x_0|,\ |u_0|\leq\delta,
 \qquad |R-R_0|\leq\rho,
\]
and every $0<\varepsilon\leq1/2048$, the A and analytic
$B,\varepsilon$ two-step RR exit probabilities remain $1/2$ and zero.
The full initial states agree between the pools, and the exit interval
remains centered at zero; we perturb the initial position, not its center.

To verify this, compare each path with the same-pool, same-label path
at $h=\beta=1/2$, $x_0=u_0=0$. The central first and second velocities
have magnitude less than two, uniformly in epsilon. For A and B this
follows directly from their rational positions; for the perturbed B
velocities the additional errors are at most $\varepsilon$ and
$137\varepsilon/64$, respectively. The central positions have distance
strictly greater than $m=311/262144$ from the boundary $R_0$:
the B second-step lower margin is $1/512-201/262144=m$,
and the explicit A and first-step margins exceed it.

Every component has Lipschitz constant $41/32<3/2$, while
$h,\beta<3/4$ in the box. For velocity and position errors $Q_n,E_n$
relative to the central path, start from $Q_0,E_0\leq\delta$.
Using the central zero initial velocity gives
\[
 Q_1\leq\tfrac94\delta,\quad E_1\leq\tfrac{75}{16}\delta,
 \quad Q_2\leq\tfrac{343}{32}\delta,\quad
 E_2\leq\tfrac{1885}{128}\delta.
\]
Indeed, $Q_n\leq\tfrac34Q_{n-1}+\tfrac32E_{n-1}
+\delta|\bar u_{n-1}|$ and
$E_n\leq E_{n-1}+\tfrac34Q_n+\delta|\bar u_n|$,
where bars denote central velocities. Since
\[
 m-\rho-\tfrac{1885}{128}\delta=\frac{91}{2097152}>0,
\]
no first- or second-step exit classification can change anywhere in
the box. This is a conservative explicit neighborhood, not a broad
hyperparameter-robustness claim or a new perturbation theorem.

The identification argument is a direct application of the analytic
identity theorem. The stability obstruction concerns the information
metric for fresh-uniform oracles, which does not preserve cross-position
sample identities. It does not contradict perturbation estimates that
control matched component functions, nor prove instability for a fixed
parametric logistic family. No common quantitative analytic-extension
bound is imposed on this family, and no complete-mean conclusion is
drawn from the displayed CDF separation.

\subsection{A sufficient condition for consistent approximate labels}
\label{sec:separated-matching}
Let $D$ be a connected subset of Euclidean space and let
$g_i,h_i:D\to\mathbb R^d$, $1\leq i\leq N$, be continuous.
Suppose $\inf_{x\in D}\min_{i\ne j}\|g_i(x)-g_j(x)\|>2\epsilon$
and, at every $x$, a permutation matches the two pools with maximum
distance at most $\epsilon$. Then one fixed permutation achieves this
bound throughout $D$.
Indeed, each $h_j(x)$ has exactly one possible matching $g_i(x)$:
two candidates would violate separation by the triangle inequality.
At any point, all incorrect candidate distances exceed $\epsilon$
strictly. Continuity and finiteness preserve these strict inequalities
on a relative neighborhood. Existence of a matching at every point
then forces the same assignment on that neighborhood. The assignment
is locally constant and hence constant on connected $D$.
For equal empirical masses, the matching hypothesis is equivalent to
pointwise $W_\infty\leq\epsilon$: a feasible transport matrix on the
allowed edges has a perfect matching (the finite matching theorem).
This is a standard separation argument, not a new matching principle.

For a ball of radius $r$ about $a$, a computable sufficient separation
test is $\Delta_a-2Lr>2\epsilon$, where
$\Delta_a=\min_{i\ne j}\|g_i(a)-g_j(a)\|$ and each component is
$L$-Lipschitz on the ball. This test alone does not establish the
uniform pointwise transport bound; that needs a separate analytic or
validated numerical argument. Observed finite-grid distances alone
are insufficient without interpolation control.

Assume additionally that each $g_i$ is $L$-Lipschitz on $D$.
Couple uniform reshuffling by this fixed permutation, with identical
epoch boundaries, batch size one, full initial state, $h>0$, and
$0\leq\beta<1$. For $u_n=\beta u_{n-1}-g_{I_n}(x_{n-1})$,
$x_n=x_{n-1}+hu_n$ and the corresponding reference $(y,v)$,
put $Q_0=E_0=0$ and
\[
 Q_n=\beta Q_{n-1}+LE_{n-1}+\epsilon,\qquad
 E_n=E_{n-1}+hQ_n.
\]
Assume the actual exit ball is contained in $D$, its initial position
is inside, and every reference input $y_0,\ldots,y_{J-1}$ belongs to
$D$ for every coupled sequence. Then $\|x_n-y_n\|\leq E_n$ through
the actual exit-causing update, up to $J$. The proof splits the
gradient difference at $y_{n-1}$ and uses induction; it requires the
stated Lipschitz inequality between these two inputs in $D$.
Let $T_-$ and $T_+$ be the first integer times up to $J$ with
$\|y_n\|+E_n\geq R$ and $\|y_n\|-E_n\geq R$, respectively,
using infinity if no such time exists. For $n\leq J$,
\[
 \Pr(T_+\leq n)\leq\Pr(\tau_x\leq n)
 \leq\Pr(T_-\leq n).
\]
The same bracket contains the reference exit CDF. Their difference
is therefore at most $\Pr(T_-\leq n<T_+)$. This separates label
consistency, propagated position error, and boundary ambiguity.
Summing survival brackets gives restricted means, not complete MFPT.
No claim is made that the minimum separation remains useful for large
data pools or that this norm recurrence exploits matrix cancellations.

For a finite exact example, take $g_1(x)=x-1$, $g_2(x)=2x+2$,
$h_1=g_1+\epsilon$, $h_2=g_2-\epsilon$, $D=[-3/5,3/5]$,
$h=1/4$, $\beta=1/2$, zero initial state, and $J=2$.
The center separation test gives $3/5>2\epsilon$.
At $\epsilon=1/100$ and $R=3/8$, exact enumeration yields the
nontrivial CDF bracket $[1/2,1/2]$. At $R=1/2$, the actual and
reference CDFs are $1/2$ and zero: the actual path has an atom on
the boundary. Label stability therefore does not remove boundary
sensitivity.

This sufficient condition does not explain the existing training-pool
results. A post-hoc individual-example gradient audit of the 32 saved
training pools, both initializations and three radii, found
0 of 192 nested settings passing the center test even at
$\epsilon=0$. The implied radius limits were
0.00305--0.01872, below the smallest tested radius
0.05771. These are floating-point checks of a conservative
sufficient test, not impossibility results or validated numerical
certificates. This individual-example audit does not establish a
batch-size-64 coupling theorem. We do not count this standard separation
argument as a primary contribution.

\subsection{An event-aware matching oracle as a diagnostic}
\label{sec:remaining-oracle}
A small-pool calculation tests whether rematching itself improves an
exit comparison. With full label histories $p,q$ of equal length $k$,
let $A(p),B(q)$ be the remaining indices. For a fixed horizon $J$ and
radius $R$, define
\[
 V_J(p,q)=|\mathbf1_{\{\tau_x(p)\leq J\}}
                -\mathbf1_{\{\tau_y(q)\leq J\}}|,
\]
and, recursively for $k<J\leq N$,
\[
 V_k(p,q)=\min_{\sigma:A(p)\simeq B(q)}
 \frac1{N-k}\sum_{i\in A(p)}V_{k+1}(pi,q\sigma(i)).
\]
Histories determine full states and whether an earlier exit occurred;
the recursion does not replace exit by final-position exceedance.
Each bijection is selected before the next uniform draw. Thus both
marginals retain uniform reshuffling. Conversely, a conditional coupling
with uniform remaining-pool marginals is a scaled doubly stochastic
matrix; minimizing a linear continuation cost admits a permutation
extreme point. Backward induction therefore gives the smallest terminal
exit-indicator disagreement within this joint-history conditional-marginal
coupling class. It need not equal the unrestricted Bernoulli coupling
minimum $|F_x(J)-F_y(J)|$, although it does in the cases below.
This is a finite adapted-transport dynamic program, not a new optimal
transport method; such conditional coupling formulations are classical
\cite{moulos2020bicausal,backhoff2016causal}.
Its policy depends on the complete processes, $R$,
and $J$; it is an oracle diagnostic rather than a cheap surrogate.

We compare it with the original fixed labels and a predictable greedy
bijection minimizing the maximum gradient discrepancy at the current
reference position, breaking ties lexicographically. The diagnostic
uses the four-sample heterogeneous-curvature pools of
Appendix~\ref{sec:full-covariance-pairing}, with common quartic coefficient
$\nu=1/4$, $h=1/4$, $\beta=1/2$, and zero full initial state. We compare
nonlinear A against its own affine reference and against nonlinear B,
for $J\in\{2,4\}$ and $R\in\{5/8,3/4,7/8,1\}$.
All first-epoch permutations and rational states are retained. The
endpoint is disagreement of exit indicators at the specified horizon,
not complete exit-time agreement, a path-law metric, or MFPT.

In all 8 nonlinear-A versus affine-A settings, fixed labels
already attain the lower bound $|F_x(J)-F_y(J)|$: rematching cannot
improve this endpoint. For the nonlinear A/B comparison, the oracle
improves on fixed labels in 5 of 8 settings. At $J=2,R=3/4$,
fixed and greedy labels give disagreement $1/6$, whereas the oracle
gives zero; both exit CDFs are equal. Across all 16 settings the
greedy policy has exactly the same terminal disagreement as fixed
labels. Thus a local gradient-matching objective does not automatically
capture the available event-coupling improvement. These finite examples
support neither a universal optimality claim for fixed labels nor a
scalable improvement from the oracle.

The implementation retains all policy decisions and mapped paths.
At $J=4$ the combined cache for all three policies contains 1443
history-pair/mode entries, compared with 24 possible marginal paths.
Direct marginal enumeration already gives the exact CDFs in these
small pools; the dynamic program is used to diagnose coupling slack,
not to accelerate that computation. No additional neural training or
GPU computation is involved.

\subsection{Fixed polynomial complexity permits stable matching}
\label{sec:polynomial-matching}
There is a classical sufficient condition that allows component
crossings. Let $g_1,g_2,h_1,h_2$ be scalar polynomials of degree at
most $d$ on a compact interval $D$ of positive length. Suppose their
equally weighted two-atom gradient laws satisfy
$\sup_{x\in D}W_\infty(K_g(x),K_h(x))\leq\epsilon$.
Then some fixed permutation $\pi$ satisfies
\[
 \max_{i=1,2}\sup_{x\in D}|g_i(x)-h_{\pi(i)}(x)|
 \leq T_d(3)\epsilon,
\]
where $T_d$ is the Chebyshev polynomial of the first kind.
No lower bound on component separation is required.

For a proof, let $S_\pi$ be the closed set where permutation $\pi$
matches both atoms to within $\epsilon$. The two sets cover $D$,
so at least one has Lebesgue measure at least $|D|/2$.
On that same set both polynomial differences are bounded by
$\epsilon$. Affinely rescale $D$ to $[-1,1]$ and apply the classical
Remez inequality to each difference: a subset of measure at least one
gives the factor $T_d(3)$. See Brudnyi and Yomdin,
Theorem 1.1~\cite{brudnyi2013norming}. This is a direct application of
Remez, not a new stability theorem. Writing the pair as its center
plus/minus a difference also connects the problem to real phase
retrieval, whose finite-dimensional stability is established in
\cite{bandeira2014saving}.

For a finite observation variant, take any $2d+1$ distinct nodes in
$D$. Pointwise matching error at most $\epsilon$ at those nodes
alone implies a fixed permutation with uniform error at most
$\Lambda\epsilon$, where
\[
 \Lambda=\max_{|S|=d+1}\sup_{x\in D}
       \sum_{j\in S}|\ell_{j,S}(x)|.
\]
Here $\ell_{j,S}$ are the Lagrange basis polynomials for that node
subset. One of the two permutations works on at least $d+1$ nodes;
interpolation bounds both differences for this permutation. The
constant is finite and calculable from the nodes but can be large.
For $d=1$ and endpoints plus midpoint it equals three. This statement
requires the polynomial degree restriction; it does not turn arbitrary
finite gradient probes into whole-region validation.

With the matched-state RR coupling and domain conditions from
Section~\ref{sec:separated-matching}, use
$\delta=T_d(3)\epsilon$ (or $\Lambda\epsilon$) in
$Q_n=\beta Q_{n-1}+LE_{n-1}+\delta$,
$E_n=E_{n-1}+hQ_n$. Each $g_i$ must be $L$-Lipschitz on $D$,
the actual exit interval must lie in $D$, and all reference inputs
through the horizon must lie in $D$. The previous stopped argument
then yields CDF and RMST brackets. Boundary sensitivity is still
required to turn small position error into a small CDF gap.
The degree dependence $T_d(3)$ grows exponentially; this is a
sufficient constant, not a claim of optimal dependence. This result
explains a regularity boundary of the analytic example, without
establishing a new general-purpose SGD comparison method.

\paragraph{The information obstruction is not specific to momentum.}
The same pools also separate first-exit probabilities with $\beta=0$.
Keep $h=1/2$, zero initial position, and change the radius to $385/512$.
The two second-step A positions are
$1021704485/1367228672$ and $1031027293/1369596160$;
the B positions are $1025884587/1369596160$ and
$1026625235/1367228672$. Hence the two-step RR CDFs are again
$1/2$ and zero. For the analytic $B,\varepsilon$ family, the
same-label comparison now gives $E_1=\varepsilon/2$ and
$E_2=169\varepsilon/128$. For all $0<\varepsilon\leq1/2048$,
the perturbed B paths still remain inside: even the larger B second
position plus $169/(128\cdot2048)$ is less than $385/512-1/4096$.
The A positions and all first positions have boundary margin greater
than $1/4096$ as well. The iid identity for A and unperturbed B is
unchanged. Thus the obstruction concerns cross-position sample identity
under reshuffling; momentum is part of the comparison framework but
is not necessary for this mechanism. Since the boundary differs from
the positive-momentum example, this is not a fixed-event ablation or
a causal estimate of the effect of momentum.

\subsection{Full gradient covariance does not identify the exit law}
\label{sec:full-covariance-pairing}
The information restriction can be strengthened beyond center-based
summaries. Let $c=(-2,-1,1,2)$ and consider
\[
 \ell_i(x)=c_i x+\tfrac12 H_i x^2+\tfrac\nu4 x^4,
 \qquad H^A=(1,2,1/2,3),\quad H^B=(2,1/2,1,3).
\]
Fix $0\leq\nu<1/2$. All components are strongly convex, and both
pools have the same entire mean objective
$f(x)=13x^2/16+\nu x^4/4$. For a fresh uniform index $I$, both also have
the same entire gradient variance function
\[
 \operatorname{Var}(\ell'_I(x))=\tfrac52+\tfrac54x+\tfrac{59}{64}x^2.
\]
Their empirical laws of $c_i$ and $H_i$ agree as well. Nevertheless,
with $u_0=x_0=0$, $u_n=u_{n-1}/2-\ell'_{i_n}(x_{n-1})$,
$x_n=x_{n-1}+u_n/4$, and
$\tau=\inf\{n\geq1:|x_n|\geq55/64\}$, their two-step exit
probabilities are respectively $3/16$ and $1/4$ under iid uniform
sampling, and $1/12$ and $1/6$ under a uniformly shuffled first epoch.
Thus a deterministic scalar predictor determined only by these shared functions and
marginals, together with the stated algorithm, initialization, boundary
and sampling law, has worst-case absolute CDF error at least $1/32$
for iid sampling and $1/24$ for shuffling.

\paragraph{Proof.}
In both pools $\mathbb E c_I=0$, $\mathbb E H_I=13/8$,
$\mathbb E c_I^2=5/2$, $\mathbb E(c_I H_I)=5/8$ and
$\operatorname{Var}(H_I)=59/64$. The common cubic gradient cancels
from the variance, proving the function identities. At $\nu=0$,
$x_1=-c_i/4$ and
$x_2=-[(3/2-H_j/4)c_i+c_j]/4$.
Using indices $1,2,3,4$, the exiting ordered pairs are
$\{(1,1),(4,3),(4,4)\}$ in A and
$\{(1,1),(1,2),(4,3),(4,4)\}$ in B.
Deleting repeated indices gives the reshuffling counts.
Every first-step position has absolute value at most $1/2$.
Among nonexiting second-step positions in either pool the largest
absolute value is $13/16$; every exiting one has absolute value
at least $7/8$. Adding the quartic term changes $x_2$ by
$-\nu x_1^3/4$, of magnitude at most $\nu/32<1/64$.
Hence every exit indicator at radius $55/64$ is unchanged.
The error lower bounds follow from half the separation of the two
CDF values, since the restricted predictor returns the same value.

For $\nu=0$ the same counts hold throughout $13/16<R\leq7/8$;
more generally they hold on the nonempty interval
$13/16+\nu/32<R<7/8-\nu/32$ for the stated range of $\nu$.
This is an exploratory finite-step information example, not a claim
that covariance models fail for all stepsizes or noise classes.
The variance identity concerns fresh uniform sampling, not equality
of history-conditioned remaining-pool kernels under shuffling.
It neither implies unequal complete mean exit times nor contradicts
Gaussian-noise approximation theorems under their own assumptions.
All 24 quadratic assignments and five inspected radii are retained
in the accompanying exact enumeration, together with two- and
four-step results for $\nu=0$ and $\nu=1/4$; these are not independent
confirmation experiments.

\subsection{A finite table retaining cross-position sample identity}
\label{sec:labelled-grid}
The preceding information example admits a constructive comparison
once sample labels are retained across positions. This construction
uses standard interpolation and perturbation bounds; it supplies a
positive counterpart to the example, not a new interpolation theorem.

For each component, store its gradient values on a grid in $[-2,2]$
and linearly interpolate consecutive values with the same sample label,
obtaining $\widetilde g_s$. The gradients in
Appendix~\ref{sec:pointwise-law-rr} have Lipschitz constant $L=41/32$
and satisfy $|g_s''|\leq M=3/4$. Indeed,
\[
 d''(t)=\frac{t(3-t^2)}{2(1+t^2)^3},\qquad
 |d''(t)|\leq
 \frac{|t|}{1+t^2}\frac{3+t^2}{2(1+t^2)^2}\leq\frac34,
\]
and the same absolute second-derivative bound holds for $|d|$.
On a grid cell $[a,b]$, the interpolation residual at $y$ obeys
$|g_s(y)-\widetilde g_s(y)|\leq\epsilon(y):=M(y-a)(b-y)/2$.

Drive the reference by the same labels, with matched full initial state:
$v_n=\beta v_{n-1}-\widetilde g_{s_n}(y_{n-1})$ and
$y_n=y_{n-1}+hv_n$. Starting at $E_0=Q_0=0$, define
\[
 Q_n=\beta Q_{n-1}+LE_{n-1}+\epsilon(y_{n-1}),\qquad
 E_n=E_{n-1}+hQ_n.
\]
As long as the evaluated reference positions lie on the grid domain,
induction gives $|u_n-v_n|\leq Q_n$ and $|x_n-y_n|\leq E_n$:
split the gradient difference into
$g_{s_n}(x_{n-1})-g_{s_n}(y_{n-1})$ and the interpolation residual,
then use the global Lipschitz bound. Thus the same exit comparisons
based on $|y_n|\pm E_n$ apply, including the exit-causing update.
For the two-step construction all evaluated reference positions lie
in $[-2,2]$. No actual-path residual enters the reference or its bound.

With mesh width $1/2$, there are nine nodes and 18 offline component
gradient evaluations per pool. At $R=513/512$, exact rational evaluation
of the two equally likely first-epoch permutations gives exit-CDF
brackets $[1/2,1/2]$ for A and $[0,0]$ for B. The online reference
uses table lookups and interpolation, with no further true-gradient
queries. In contrast, sorting the two gradient values independently
at every node produces identical tables for A and B and discards the
information responsible for this distinction.

The complete exploratory mesh grid $1,1/2,1/4,1/8,1/16,1/32,1/64$
and radii $1,513/512$ are retained. Mesh width $1$ yields uninformative
$[0,1]$ brackets for both pools at both radii. At mesh $1/2$, the B
bracket at $R=1$ remains $[0,1]$, despite being exact at $513/512$;
boundary proximity still matters. All arithmetic for this finite
example is rational. The two-sample scalar enumeration is inexpensive
by direct simulation as well, so it establishes an information and
error-control connection, not a computational advantage or a scalable
neural-network method. A tensor grid in high dimension would be
prohibitively large without additional structure.

\section{Proofs of the bridge obstructions}\label{app:obstructions}
\subsection{A direct discrete counterexample at fixed diffusivity}
Fix $D,R>0$. For $0<\beta<1$, set
\[
 a_\beta=\sqrt{\frac{2D(1-\beta)}{1+\beta}},\qquad
 V_{n+1}=\beta V_n+\sqrt{1-\beta^2}\,Z_{n+1}.
\]
Here $V_0\sim N(0,1)$ and the $Z_n$ are iid standard normal independent
of $V_0$. Independently define a stationary symmetric sign chain $J_n$,
flipping with probability $(1-\beta)/2$ per update. Let
\[
 X_n=a_\beta\sum_{k=1}^n V_k,\quad
 Y_n=a_\beta\sum_{k=1}^n J_k,\quad
 \tau_G=\inf\{n\ge1:|X_n|\ge R\},\quad
 \tau_S=\inf\{n\ge1:|Y_n|\ge R\}.
\]
Both exits use integer observation times, position zero, and the same
interval. Only the initial velocity \emph{moments}, not its full law, match.

\begin{theorem}[Failure of a uniform mean bridge]\label{obs:thm:discrete}
Both complete means are finite for each $\beta$. All two-time position
covariances agree, and both diffusivities per update equal $D$. However,
as $\beta\uparrow1$,
\[
 \frac{a_\beta}{R}\E\tau_S\longrightarrow1,\qquad
 \frac{a_\beta}{R}\E\tau_G\longrightarrow\infty,\qquad
 \frac{\E\tau_G}{\E\tau_S}\longrightarrow\infty.
\]
\end{theorem}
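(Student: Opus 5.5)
We would prove the three parts in order: covariance matching, the switching exit, and then the Gaussian exit, which is the hard part.

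\emph{Covariance matching.} This is a direct computation. The AR(1) chain $V$ is stationary with unit variance, and $\operatorname{Cov}(V_j,V_k)=\beta^{|j-k|}$. The sign chain $J$ flips with probability $(1-\beta)/2$ and is stationary and symmetric, so
\[
\E J_jJ_k=(1-2\cdot\tfrac{1-\beta}{2})^{|j-k|}=\beta^{|j-k|}.
\]
Hence $X$ and $Y$, both being $a_\beta$ times partial sums, have identical two-time covariances. The long-run diffusivity per update is
\[
\tfrac12 a_\beta^2\sum_{k\in\mathbb Z}\beta^{|k|}=\tfrac12 a_\beta^2\,\tfrac{1+\beta}{1-\beta}=D,
\]
which fixes the normalization of $a_\beta$.

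\emph{Finiteness of the means.} For $\tau_G$, write $X_n$ as a Gaussian sum. Blocking time into stretches of length $m$ and using nondegenerate Gaussian increments conditional on the past, each block exits with probability at least $c>0$. This gives geometric tails, hence $\E\tau_G<\infty$. For $\tau_S$, a run of $\lceil R/a_\beta\rceil$ steps without a flip forces exit. Such a run occurs in each block with positive probability, so the same geometric argument applies.

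\emph{Limit for $\tau_S$.} Let $m_\beta=\lceil R/a_\beta\rceil\sim R/a_\beta\to\infty$. The chain exits at step $m_\beta$ unless it flips before then. The number of steps before the first flip is geometric with mean $2/(1-\beta)$. Since
\[
m_\beta(1-\beta)\sim R\sqrt{(1-\beta)(1+\beta)/(2D)}\to0,
\]
the probability of any flip in the first $m_\beta$ steps is $O(m_\beta(1-\beta))\to0$. On the event of no flip, $\tau_S=m_\beta$, so $a_\beta\tau_S/R\to1$ there.

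To control $\E[\tau_S;\text{flip}]$, use the Markov property at the first flip time $\sigma\le m_\beta$. After the flip, $|Y|\le R$, and a ballistic run of at most $2m_\beta$ steps forces exit. A renewal bound then gives a conditional remaining mean of at most $Cm_\beta$, uniformly. The flip contribution is therefore $O(m_\beta)\cdot O(m_\beta(1-\beta))=o(m_\beta)$. This yields $\frac{a_\beta}{R}\E\tau_S\to1$.

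\emph{Divergence for $\tau_G$ (main obstacle).} Here we need a lower bound showing $\E\tau_G\gg R/a_\beta$. The key fact is that $V_0\sim N(0,1)$ can be small. On the time scale $n\sim R/a_\beta$, which is $o(1/(1-\beta))$, the velocity is nearly frozen: $V_k\approx V_0$, with fluctuation of order $\sqrt{k(1-\beta)}\to0$. So $X_n\approx a_\beta nV_0$, and the path takes roughly $R/(a_\beta|V_0|)$ steps to exit. Conditioning on $|V_0|\le\delta$ gives a lower bound of order $R/(a_\beta\delta)$ on that event. The bound holds as long as $R/(a_\beta\delta)$ is still $\ll 1/(1-\beta)$, so that the velocity remains frozen.

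Formally, we would show
\[
\Pr\bigl(\tau_G> tR/a_\beta\,\big|\,V_0=v\bigr)\to\mathbf 1\{t|v|<1\}
\]
for fixed $t$, using Kolmogorov's maximal inequality on the AR innovations, which have size $\sqrt{1-\beta^2}\to0$. Then Fatou gives
\[
\liminf\frac{a_\beta}{R}\E\tau_G\ge\E\frac{1}{|V_0|}=\infty,
\]
because $1/|V_0|$ is not integrable for a Gaussian (the density is positive at zero). The ratio statement follows from the first two limits.

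The delicate step is the uniform-in-$v$ control of the AR fluctuations needed before invoking Fatou. We would truncate $|v|\ge\delta$ first, pass to the limit, and then let $\delta\to0$.
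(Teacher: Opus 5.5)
Your proposal is correct and follows the paper's proof in every part: covariances $\beta^{|j-k|}$ with the Green--Kubo sum, geometric block tails for finiteness, a flip-free ballistic run plus a Markov restart for $\tau_S$, and for $\tau_G$ the frozen-velocity limit $(a_\beta/R)\tau_G\to1/|V_0|$ combined with Fatou and $\E|V_0|^{-1}=\infty$. The only differences are technical. The paper gets that limit almost surely from an exact coupling with a stationary OU velocity, while your maximal-inequality argument gives convergence of $\Pr((a_\beta/R)\tau_G>t)$ and then applies Fatou to the survival integral; this works just as well, and pointwise convergence in $v$ plus bounded convergence suffices, so no uniform-in-$v$ truncation is needed. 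There is also one small slip: from an arbitrary interior state a flip-free run of $\lceil R/a_\beta\rceil$ steps need not force exit, so the finiteness argument for $\tau_S$ should use $\lceil 2R/a_\beta\rceil$, as you in fact do later.
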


\paragraph{Matching and finiteness.}
Both stationary velocities have covariance $\beta^{|j-k|}$.
Summing over the same position increments proves the covariance identity.
The Green--Kubo sum gives
$a_\beta^2(1+\beta)/(2(1-\beta))=D$.
Conditioned on any Gaussian-model state $(x,v)$, the next position has
variance $a_\beta^2(1-\beta^2)>0$. For a fixed variance, a normal law's
probability of leaving $(-R,R)$ is minimized at mean zero. Thus a positive
state-uniform one-step exit probability gives a geometric tail.
For the sign model, write $\delta=a_\beta/R$, $r=(1+\beta)/2$ and
$M=\lceil2/\delta\rceil$. No sign changes over $M$ updates force exit from
any interior position. Consequently
\begin{equation}\label{obs:eq:block}
 \sup_{\text{interior states}}\E\tau_S\le M/r^M<\infty .
\end{equation}
These bounds establish integrability, not numerical sharpness.

\subsection{Proof of the divergent ratio}
\paragraph{A coupling that preserves the Gaussian driver.}
Put $\epsilon=-\log(\beta)/\delta$. Both $\delta$ and $\epsilon$ tend to
zero. On a common probability space take $Z\sim N(0,1)$ independent of
Brownian motion $W$ and define stationary OU velocity
\begin{equation}\label{obs:eq:ou}
 U_\epsilon(t)=e^{-\epsilon t}Z+
 \sqrt{2\epsilon}\left[W_t-\epsilon\int_0^t
 e^{-\epsilon(t-s)}W_s\,ds\right].
\end{equation}
Its samples at $k\delta$ have exactly the stationary AR(1) law above.
For every fixed $H<\infty$,
\[
 \sup_{t\le H}|U_\epsilon(t)-Z|
 \le (1-e^{-\epsilon H})|Z|
 +\sqrt{2\epsilon}(1+\epsilon H)\sup_{t\le H}|W_t|
 \longrightarrow0\quad\text{a.s.}
\]
The linearly interpolated sampled positions
$\delta\sum_{k=1}^nU_\epsilon(k\delta)$ therefore converge uniformly on
compact intervals to $tZ$. For $Z\ne0$, this limiting line crosses the
boundary strictly at $1/|Z|$. Uniform convergence prevents an exit before
$1/|Z|-\eta$ and forces one before $1/|Z|+\eta$, for every $\eta>0$.
Within a linearly interpolated segment there is no boundary crossing if
both endpoints are inside; the discrete exit is within $\delta$ of the
interpolated exit. Hence
\[
 \delta\tau_G\longrightarrow |Z|^{-1}\quad\text{a.s.}
\]
The density of $Z$ is positive at zero, so $\E|Z|^{-1}=\infty$.
Fatou's lemma along any sequence $\beta\uparrow1$ proves
$\delta\E\tau_G\to\infty$. We do \emph{not} exchange a path limit and
expectation; failure of uniform integrability is central to this argument.

\paragraph{The sign model needs an explicit tail argument.}
Let $L=\lceil1/\delta\rceil$. Finite speed gives
$\delta\tau_S\ge L\delta$. With probability $r^L$ no sign changes occur
in the first $L$ updates and exit occurs by then. Applying
\eqref{obs:eq:block} conditionally at update $L$ yields
\begin{equation}\label{obs:eq:sign}
 L\delta\le\delta\E\tau_S
 \le L\delta+(1-r^L)\frac{M\delta}{r^M}.
\end{equation}
Now $L\delta\to1$, $M\delta\to2$ and
$(1-\beta)/\delta\to0$, whence $r^L,r^M\to1$.
The two bounds converge to one. This proves Theorem~\ref{obs:thm:discrete}.

\paragraph{What the theorem excludes.}
No finite constant can uniformly bound $\E\tau_G/\E\tau_S$ over this
family using their common second-order statistics as a justification.
This is a parameter-uniform counterexample, not a claim that every
fixed-parameter mean comparison fails. The sign chain is the sampled
velocity of a telegraph process; its position here sums endpoint
velocities. It is not the exact continuous integral of that telegraph.
The next section treats continuous observation separately.

\paragraph{What initialization contributes.}
The Gaussian velocity admits arbitrarily slow initial motion with
positive density near zero; the sign velocity has fixed magnitude.
Both are stationary with variance one. These facts explain how matched
correlations conceal the ballistic exit mechanism. Neither this stationary
construction nor its proof may be applied unchanged to the original
zero-velocity neural continuations.

\subsection{Continuous means and an explicit lower bound}
Let $Q_\epsilon(t)=\int_0^tU_\epsilon(s)\,ds$ with
\eqref{obs:eq:ou}, and let $P_\epsilon$ integrate a stationary symmetric
sign switching at rate $\epsilon/2$. Both velocity covariances are
$e^{-\epsilon|t-s|}$. Denote continuously observed exits from $(-1,1)$
by $\sigma_G,\sigma_T$.

\begin{proposition}\label{obs:prop:continuous}
For every $\epsilon>0$, both means are finite and
$\E\sigma_T=1+\epsilon/2$. As $\epsilon\downarrow0$,
$\E\sigma_G\to\infty$. Moreover, for $0<\epsilon\le1/128$,
\begin{equation}\label{obs:eq:log}
 \E\sigma_G\ge C\log\frac1{128\epsilon},\qquad
 C=\frac{\phi(1/2)[1-4\Phi(-2)]}{3}>0,
\end{equation}
where $\phi,\Phi$ are the standard normal density and CDF.
The same lower bound holds for $\delta\E\tau_G$ in
Theorem~\ref{obs:thm:discrete}, using its $\epsilon=-\log(\beta)/\delta$.
\end{proposition}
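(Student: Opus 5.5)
The plan is to treat the two processes separately: an exact backward-equation computation for the telegraph, and a Gaussian-coupling argument for the integrated OU process that reuses \eqref{obs:eq:ou} and its uniform deviation bound.

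\textbf{Telegraph mean.} I would solve the two-state backward system on $(-1,1)$ for speed one and flip rate $\lambda=\epsilon/2$. Write $T_\pm$ for the mean exit time from position $x$ moving right or left. The system is
\[
\pm T_\pm'+\lambda(T_\mp-T_\pm)=-1,
\]
with $T_+(1)=T_-(-1)=0$, the outgoing-state boundary conditions of Section~\ref{sec:bridge}. The sum $S=T_++T_-$ and difference $\Delta=T_+-T_-$ satisfy $\Delta'=-2$ and $S'=2\lambda\Delta$. Hence $\Delta$ is linear and $S$ is quadratic, and by symmetry $S$ is even. The boundary conditions then fix $T_\pm(x)=\tfrac{\lambda}{2}(1-x^2)+(1\mp x)$. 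At $x=0$ this gives $\tfrac{\lambda}{2}+1$ for either sign, so averaging over the stationary sign gives $\E\sigma_T=1+\lambda/2$.

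This should be the flat-pool formula $R^2/(2D)+R/c$ with $c=1$, $D=1/(2\lambda)$, $R=1$. I will reconcile the stated $1+\epsilon/2$ with this calculation, paying attention to the rate-versus-covariance convention; it is the one place I would double-check. Finiteness is immediate from the PDMP representation, or from the block bound \eqref{obs:eq:block} applied with continuous observation.

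\textbf{Gaussian finiteness.} Conditioned on $(Q_\epsilon(t),U_\epsilon(t))$, the increment $Q_\epsilon(t+1)-Q_\epsilon(t)$ is Gaussian. Its conditional variance does not depend on the state and is positive. Since a normal law with fixed variance leaves $(-1,1)$ least often when centred at zero, each unit block exits with probability bounded below uniformly in the state. This gives a geometric tail and $\E\sigma_G<\infty$, exactly as in the discrete proof.

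\textbf{Logarithmic lower bound.} I would use the coupling \eqref{obs:eq:ou}, with $Z$ independent of $W$. Fix $z\in(0,1/2]$ and the horizon $H=1/z$. Let $G_H=\{\sup_{t\le H}|W_t|\le 2\sqrt H\}$; by the reflection principle $\Pp(G_H)\ge1-4\Phi(-2)$. On $\{|Z|=z\}\cap G_H$ the deviation bound gives
\[
\sup_{t\le H}|U_\epsilon-Z|\le \epsilon H z+2\sqrt{2\epsilon H}(1+\epsilon H).
\]
This is at most $2z$ provided $z^3\gtrsim\epsilon$. I plan to simplify the condition, possibly by enlarging the factor $2$ to $3$, to something like $z\ge z_0:=128\epsilon$ or its cube-root analogue. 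Then $|U_\epsilon|\le3z$ on $[0,H]$, so $|Q_\epsilon(t)|<1$ for $t<1/(3z)\le H$, and therefore $\sigma_G\ge1/(3|Z|)$.

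Integrating with $\phi(z)\ge\phi(1/2)$ on $|z|\le1/2$ and using independence gives
\[
\E\sigma_G\ge 2[1-4\Phi(-2)]\,\phi(1/2)\int_{z_0}^{1/2}\frac{dz}{3z}=C\,\log\frac{1}{2z_0}\cdot(\cdots).
\]
Tuning the constants in $z_0$ should produce exactly \eqref{obs:eq:log}, and the divergence $\E\sigma_G\to\infty$ follows.

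\textbf{Discrete analogue and main obstacle.} For $\delta\E\tau_G$ I would use the same event. The sampled positions $\delta\sum_k U_\epsilon(k\delta)$ are bounded by $3|Z|\,n\delta$ on that event, so $\delta\tau_G\ge1/(3|Z|)-\delta$; the $-\delta$ is absorbed when $\epsilon\le1/128$. The main obstacle is this bookkeeping step: choosing $H$ and the threshold $z_0$ so that the drift error $\epsilon Hz$ and the Brownian error $\sqrt{\epsilon H}$ stay below a fixed multiple of $z$ precisely on the range $z\ge z_0$ where $z_0$ is linear in $\epsilon$ with constant $128$. If the cube-root scaling is unavoidable, I would accept a logarithm of $1/\epsilon$ with a modified constant, since the argument still gives logarithmic growth.
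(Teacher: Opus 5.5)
\emph{Overall.} Your architecture matches the paper's: an outgoing-boundary backward system for the telegraph, a state-uniform Gaussian block for finiteness of $\E\sigma_G$, and the coupling \eqref{obs:eq:ou} with a small-$|Z|$ event and a reflection-principle bound on $\sup|W|$. Integrating over $|Z|$ rather than over $t$ is just Fubini. As written, however, two quantitative steps fail, and the obstacle you name is not the real one.

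\emph{Telegraph.} Your $T_\pm=\frac{\lambda}{2}(1-x^2)+(1\mp x)$ does not solve your system, since $T_+'+\lambda(T_--T_+)=\lambda x-1$. From $\Delta=-2x$ and $S'=2\lambda\Delta$, the boundary conditions give $S=2(1+\lambda)-2\lambda x^2$. Hence $T_\pm=\lambda(1-x^2)+(1\mp x)$, and the mean from $0$ is $1+\lambda$. This agrees with your own check $R^2/(2D)+R/c=\lambda+1$. There is no convention to reconcile. A sign flipping at rate $\lambda$ has covariance $e^{-2\lambda|t-s|}$, so matching $e^{-\epsilon|t-s|}$ forces $\lambda=\epsilon/2$ and gives exactly $1+\epsilon/2$. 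You still need Dynkin's formula stopped at $\sigma_T\wedge H$, or Proposition~\ref{br:lem:bvp}, to identify the ODE solution with the mean.

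\emph{Logarithmic bound.} A threshold linear in $\epsilon$ is neither available nor needed. Your own estimate shows that the Brownian term forces $z^3\gtrsim\epsilon$, and the statement already encodes this. The paper takes $T=(128\epsilon)^{-1/3}$, and $\log T=\frac13\log\frac1{128\epsilon}$ is the source of the $1/3$ in $C$.

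What actually costs you the constant is the factor $3$. With $\sigma_G\ge1/(3|Z|)$ and $z_0=(k\epsilon)^{1/3}$ you get $2C\log\frac1{2z_0}=\frac{2C}{3}\log\frac1{8k\epsilon}$. This is strictly weaker than \eqref{obs:eq:log} for small $\epsilon$, and no choice of $z_0$ repairs the leading coefficient. Settling for ``a modified constant'' would prove a different statement.

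The fix is to aim for $|U_\epsilon|\le2z$. Take $H=1/(2z)$. On $G_H$ the deviation is then at most $\epsilon/2+2\sqrt{\epsilon/z}\,(1+\epsilon/(2z))$. This is below $z$ whenever $z\ge z_0:=(16\epsilon)^{1/3}$, because there $2\sqrt{\epsilon/z}\le z/2$ and the other terms are $O(z^3)$. Consequently $|Q_\epsilon(t)|\le 2zt<1$ for $t<1/(2z)$, so $\sigma_G\ge1/(2|Z|)$, and
\[
 \E\sigma_G\ge2[1-4\Phi(-2)]\phi(1/2)\int_{z_0}^{1/2}\frac{dz}{2z}
 =3C\log\frac1{2z_0}=C\log\frac1{128\epsilon}.
\]
Here $z_0\le1/2$ holds exactly when $\epsilon\le1/128$.

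This is the paper's proof after Fubini. The paper fixes $t\in[1,T]$ and imposes $|Z|<1/(2t)$ and $\sup_{s\le t}|W_s|<[2t\sqrt{2\epsilon}(1+\epsilon t)]^{-1}$. It obtains $\Pp(\sigma_G>t)\ge3C/t$ and integrates. Divergence as $\epsilon\downarrow0$ then follows from the bound, as you say.

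\emph{Discrete analogue.} The $-\delta$ is unnecessary, and your reason for absorbing it is invalid. The ratio $\delta/\epsilon$ depends on $D/R^2$, so $\epsilon\le1/128$ does not make $\delta$ small. On the same event, every sampled position with $n\delta<1/(2|Z|)$ satisfies $|\delta\sum_{k\le n}U_\epsilon(k\delta)|\le2|Z|n\delta<1$. Hence $\delta\tau_G\ge1/(2|Z|)$ directly, and the identical bound follows for $\delta\E\tau_G$ with no loss.
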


\begin{proof}
At any block start $(q,u)$, the unrestricted Gaussian endpoint at time
$M>0$ has variance
\[
 q_\epsilon(M)=\frac2\epsilon\int_0^M(1-e^{-\epsilon s})^2\,ds>0 .
\]
Thus the exit probability is at least
$2\Phi(-1/\sqrt{q_\epsilon(M)})>0$, uniformly in $(q,u)$,
and a geometric block tail proves finite mean. For telegraph, no switch
in the next two time units forces exit and has probability $e^{-\epsilon}$.
The bounded functions
$m_\pm(x)=1+\epsilon(1-x^2)/2\mp x$ satisfy
$\mathcal Lm=-1$ and outgoing conditions $m_+(1)=m_-(-1)=0$.
Dynkin's formula stopped at $\sigma_T\wedge H$, followed by bounded and
monotone convergence, gives the stated mean at zero.
The compact convergence in \eqref{obs:eq:ou} and Fatou prove divergence
of the Gaussian mean by the same coupling and Fatou argument.

For a quantitative bound, take $t\ge1$ and impose the independent events
\[
 |Z|<\frac1{2t},\qquad
 \sup_{s\le t}|W_s|<
 \frac1{2t\sqrt{2\epsilon}(1+\epsilon t)}.
\]
They imply $\sup_{s\le t}|U_\epsilon(s)|<1/t$ and hence survival to $t$.
This also keeps every sampled position with $n\delta\le t$ inside.
Set $T=(128\epsilon)^{-1/3}$. For $1\le t\le T$, $\epsilon t\le1$
and the Brownian threshold divided by $\sqrt t$ is at least 2.
The reflection principle and a union bound lower-bound its probability
by $1-4\Phi(-2)$. Also
$\Pp(|Z|<1/(2t))\ge\phi(1/2)/t$.
Multiplying and integrating survival over $[1,T]$ proves \eqref{obs:eq:log}.
\end{proof}

\paragraph{Keeping physical diffusivity fixed.}
For fixed $D,R>0$, choose speed $c=D\epsilon/R$, OU rate
$\gamma=D\epsilon^2/R^2$, and telegraph rate $\lambda=\gamma/2$.
Both physical diffusivities are $c^2/\gamma=D$.
Position and time units are $R$ and $R/c$; the physical mean ratio is
unchanged. Within each pair, all position covariances agree. Across pairs
the correlation time varies. The logarithmic inequality is a conservative
lower bound, not an asymptotic equivalent or an observed mean.

\subsection{Process and exit-distribution obstructions}
\begin{proposition}[Gaussian closure]
A deterministic linear momentum recursion with jointly Gaussian inputs
and initial state cannot converge, under deterministic affine spatial
rescalings, deterministic time changes, and deterministic linear
interpolation, to a nondegenerate finite-rate telegraph position.
\end{proposition}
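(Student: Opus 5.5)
The plan is to use the fact that Gaussianity is preserved by every allowed operation, while a telegraph position is not Gaussian. A deterministic linear recursion $\xi_{n+1}=A_n\xi_n+B_n\eta_{n+1}$ with jointly Gaussian $(\xi_0,\eta_1,\eta_2,\dots)$ produces a jointly Gaussian state sequence, since each $\xi_n$ is a deterministic linear combination of the inputs. A deterministic affine rescaling $a_k\,x_{m}+b_k$, a deterministic time change $t\mapsto m_k(t)$, and deterministic linear interpolation between integer observation times are likewise deterministic affine maps of the Gaussian family. Hence for each $k$ the rescaled interpolated position process $X^{(k)}(t)$ is a Gaussian process. In particular, for any fixed $t$, the law of $X^{(k)}(t)$ is Gaussian, possibly degenerate.

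Next I would use closedness of Gaussian laws under weak convergence. Suppose $X^{(k)}\Rightarrow P$, where $P$ is a nondegenerate finite-rate telegraph position $P(t)=P(0)+c\int_0^t J(s)\,ds$. Then every finite-dimensional marginal is a weak limit of Gaussians, and this covers convergence in law of the processes, which implies it for finite-dimensional distributions at continuity points. Characteristic functions $\exp(i\mu_k\theta-\sigma_k^2\theta^2/2)$ converge pointwise only if $\mu_k\to\mu$ and $\sigma_k^2\to\sigma^2$, so the limit is $N(\mu,\sigma^2)$ or a point mass. Tightness forces the parameters to stay bounded, and they converge along subsequences; this is the standard argument.

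It remains to show that a single telegraph marginal $P(t)$, for some $t>0$, is neither Gaussian nor a point mass. Fix $t>0$. By finite speed, $|P(t)-P(0)|\le ct$. If $P(0)$ is deterministic, or has any law independent of $J$, then $P(t)$ has bounded support relative to $P(0)$. Nondegeneracy enters here. With $c>0$ and $\lambda<\infty$, the event of no switch on $[0,t]$ has probability $e^{-\lambda t}>0$ or more, which puts an atom at $P(0)\pm ct$. The event of at least one switch also has positive probability, so there is mass strictly inside. A law with an atom is not Gaussian with positive variance, and a nondegenerate law is not a point mass.

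If $P(0)$ is itself random, I would instead use the increment $P(t)-P(0)$. This is a linear image of the two-dimensional marginal, so it is Gaussian in the limit as well, and the same atom argument applies to it. A bounded nondegenerate law cannot be Gaussian. This yields a contradiction.

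The main obstacle I anticipate is pinning down what ``converge'' means: convergence in law on path space versus finite-dimensional distributions. I also need to ensure the time change does not make the observation times random. Because the proposition stipulates deterministic rescalings and time changes, the Gaussian-process property holds for each $k$. I would state the result for finite-dimensional convergence, which is weaker and is implied by path-space convergence at fixed $t$, since telegraph paths are continuous.
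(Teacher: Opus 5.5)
Your proposal is correct and follows essentially the same route as the paper. Both arguments use three ingredients: every fixed-time marginal is Gaussian under the allowed deterministic affine operations; the Gaussian family, including point masses, is closed under weak limits, by tightness and characteristic functions; and a telegraph marginal at $t>0$ carries a no-switch atom of mass $e^{-\lambda t}$ together with interior mass, so it is neither a nondegenerate Gaussian nor a point mass. Passing to the increment $P(t)-P(0)$ to handle a random initial position is a small refinement of the paper's argument, which implicitly starts the telegraph at a fixed point.
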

\begin{proof}
Every fixed-time marginal is Gaussian. A weakly convergent sequence of
Gaussian laws is tight. Its standard deviations must be bounded, since
unbounded standard deviation sends the probability of every bounded
interval to zero along a subsequence. With bounded deviations the means
must also be bounded. Convergent parameter subsequences and characteristic
functions show that the limit is Gaussian, possibly a point mass.
For telegraph speed $c>0$ and $0<\lambda<\infty$, the no-switch event
at $t>0$ creates mass $e^{-\lambda t}$ at $ct$ or $-ct$ (or both for
a random initial sign), while switches create nonconstant interior
positions. This is neither a nondegenerate Gaussian nor a point mass.
Thus even marginal convergence is impossible. Continuous-path or
Skorokhod $J_1$ convergence would imply these marginals, since the
telegraph position is continuous.
\end{proof}
This includes deterministic zero initialization and parameter sequences
with $\beta\to1$. It does not cover nonlinear transformations, random
clocks, arbitrary nonlinear SGD, or a prescribed persistent sign driver.
A fast-switching Brownian limit is compatible with the result.
Appendix~\ref{app:telegraph-bridge} treats the persistent-sampler case
and proves a finite-rate telegraph bridge there.

\begin{proposition}[A ballistic exit atom]
Let telegraph start at zero in $(-R,R)$ with either initial sign and
write $t_*=R/c$. For a continuous comparison process starting inside,
assume $\Pp(|X(t_*)|=R)=0$. Its exit law and the telegraph exit law satisfy
\[
 d_K\ge\tfrac12e^{-\lambda R/c},\qquad
 d_{\rm TV}\ge e^{-\lambda R/c},
\]
where $d_K$ is the supremum CDF distance and
$d_{\rm TV}=\sup_A|P(A)-Q(A)|$.
\end{proposition}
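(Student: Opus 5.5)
The proof rests on the no-switch event. Telegraph starts at zero with speed $c$. With probability at least $e^{-\lambda t_*}$ (exactly this for a fixed initial sign, and after averaging for a random sign), no switch occurs on $[0,t_*]$. On that event the path is $\pm ct$, so it exits exactly at $t_*=R/c$. Hence the telegraph exit law $Q$ has an atom $Q(\{t_*\})\ge e^{-\lambda R/c}$.

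For the continuous comparison process $X$ starting inside $(-R,R)$, the exit time $\sigma=\inf\{t:|X(t)|\ge R\}$ satisfies $|X(\sigma)|=R$ on $\{\sigma<\infty\}$ by path continuity. Therefore $\{\sigma=t_*\}\subseteq\{|X(t_*)|=R\}$, and the hypothesis gives $P(\{t_*\})=0$. Taking $A=\{t_*\}$ yields $d_{\rm TV}\ge|Q(A)-P(A)|\ge e^{-\lambda R/c}$.

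For the Kolmogorov bound, write $F_P,F_Q$ for the two CDFs. Since $F_P$ is continuous at $t_*$,
\[
 F_Q(t_*)-F_Q(t_*^-) = \bigl[F_Q(t_*)-F_P(t_*)\bigr]+\bigl[F_P(t_*^-)-F_Q(t_*^-)\bigr].
\]
Each bracket is at most $d_K$ in absolute value. The left-hand bracket is controlled directly by the supremum. The jump term $F_P(t_*^-)-F_Q(t_*^-)$ is a limit of values $F_P(s)-F_Q(s)$ with $s\uparrow t_*$, so it is also bounded by $d_K$. The jump $F_Q(t_*)-F_Q(t_*^-)$ is at least $e^{-\lambda R/c}$, so $2d_K\ge e^{-\lambda R/c}$.

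The only delicate point is identifying the atom when the initial sign is random. For either sign, the no-switch path hits $\pm R$ at time $t_*$, and it cannot exit earlier because the speed bound gives $|Y(t)|\le ct<R$ for $t<t_*$. So the atom bound holds conditionally on each sign, and hence unconditionally. No further estimates are needed.
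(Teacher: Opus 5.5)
Your proof is correct and follows the paper's own argument essentially step for step. The no-switch event gives the telegraph exit law an atom of mass at least $e^{-\lambda R/c}$ at $t_*$. Path continuity gives $\{\sigma=t_*\}\subseteq\{|X(t_*)|=R\}$, so the comparison law has no atom there. The singleton $\{t_*\}$ then gives the total-variation bound, and comparing the CDFs at the jump and at its left limit gives the Kolmogorov bound.

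The paper states that the atom is exactly $e^{-\lambda R/c}$, since any switch before $t_*$ keeps $|Y(t_*)|<R$. Your lower bound on the atom is all the argument needs.
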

\begin{proof}
Telegraph cannot exit before $t_*$ and exits exactly then if and only
if it never switches, up to null events. Its exit CDF has a jump
$p=e^{-\lambda R/c}$. Continuity of $X$ implies
$\{\tau_X=t_*\}\subseteq\{|X(t_*)|=R\}$, so the other CDF is continuous
there. Comparison at the jump and its left limit gives $p/2$; the
singleton $\{t_*\}$ gives the total-variation bound.
\end{proof}
No complete-mean assumption is needed, and a possible mass at infinity
is allowed. These inequalities alone do not imply mean separation or
rule out weak convergence to an atomic law.

\paragraph{Matching all covariance, not just diffusivity.}
Stationary OU velocity with variance $c^2$ and rate $\gamma=2\lambda$
and stationary telegraph velocity have covariance
$c^2e^{-\gamma|t-s|}$. Their integrated positions share every two-time
covariance, with variance
$2c^2[t/\gamma-(1-e^{-\gamma t})/\gamma^2]$.
For $\chi=c/(\lambda R)$, the atom bound is $e^{-1/\chi}/2$.
At $t=\theta R/c$, $0<\theta<1$, the Gaussian endpoint also gives
\[
 d_K\ge2\Phi\!\left(
 -[\chi\theta-\tfrac12\chi^2(1-e^{-2\theta/\chi})]^{-1/2}\right).
\]
This is a probability lower bound, not the actual Gaussian exit CDF.

\section{A telegraph bridge under persistent sampling}\label{app:telegraph-bridge}
\providecommand{\Tt}{\mathsf T}
\providecommand{\Ii}{\mathcal I}
This appendix states and proves the bridge summarized in
Theorem~\ref{thm:telegraph-bridge}. The component tools are standard
and cited as such: the averaged-iterate form of heavy ball, backward
equations of piecewise-deterministic Markov processes (PDMPs), and
optional stopping. The contribution is their assembly into an explicit
mean-exit bracket between discrete momentum SGD and a finite-rate
switching limit, together with the identification of the sampler, not the
momentum, as the source of persistence.

\subsection{Setting}
Let $\Ii=(\ell,r)$, $\rho>0$ and $\Ii_\rho=(\ell-\rho,r+\rho)$. For
$\Ii'=(\ell',r')$ write $\Ii'_{\pm e}=(\ell'\mp e,r'\pm e)$.

\paragraph{Sampler states and velocities.}
A finite set $\mathcal S$ of sampler states (single samples or
minibatches) carries component-gradient fields
$g_s\in C^1(\bar\Ii_\rho)$, written in the displacement coordinate of
Section~\ref{sec:process}. They define velocities
$v_s=-g_s/(1-\beta)$, with
\[
V=\max_s\sup_{\bar\Ii_\rho}|v_s|,\qquad
L=\max_s\sup_{\bar\Ii_\rho}|v_s'|.
\]
We assume \emph{non-degeneracy}: every $v_s$ keeps a constant sign on
$\bar\Ii_\rho$ with $|v_s|\ge c_0>0$. Equivalently, no component gradient
vanishes on the region. Write $\mathcal S_+$ and $\mathcal S_-$ for the
right- and left-moving states.

\paragraph{Persistent sampler.}
A \emph{persistent sampler} is a Markov chain $S_n$ on $\mathcal S$,
independent of the iterates given its past, with transition matrix
$P=I+hQ$ for an irreducible generator $Q$ (a stochastic matrix when
$h\max_s|Q_{ss}|\le1$, which we assume throughout). Put
$\bar q=\max_s\sum_{s'\ne s}Q_{ss'}$.
\begin{itemize}
\item Lazy resampling keeps the active state with probability $1-p$ and
otherwise draws uniformly. Then $Q=(p/h)(\Pi-I)$, where every entry of
$\Pi$ is $1/|\mathcal S|$, and $\bar q=(p/h)(1-1/|\mathcal S|)$. Data
echoing with geometric echo counts is an example.
\item For two states $\mathcal S=\{\pm1\}$ with flip probability $p$, the
limit is the scalar telegraph with rate $\lambda=p/h=\bar q$, drift
$(v_{+1}+v_{-1})/2$ and speed $c=(v_{+1}-v_{-1})/2$.
\end{itemize}

\paragraph{Heavy ball and the backward problem.}
Heavy ball uses the active state:
\[
u_{n+1}=\beta u_n-g_{S_n}(x_n),\qquad x_{n+1}=x_n+hu_{n+1},\qquad
\tau=\inf\{n\ge1:x_n\notin\Ii\}.
\]
This is the recursion of Section~\ref{sec:process}, with the active state
$S_n$ in place of the minibatch $B_{n+1}$. The reference is the PDMP
$dY_t=v_{S(t)}(Y_t)\,dt$, where $S$ is the continuous-time chain with
generator $Q$.

For $[\ell',r']\subset\bar\Ii_\rho$, let
$\Tt^{\Ii'}=(\Tt_s)_{s\in\mathcal S}$ solve, on $\bar\Ii_\rho$,
\begin{equation}\label{br:bvp}
v_s\Tt_s'+\sum_{s'}Q_{ss'}(\Tt_{s'}-\Tt_s)=-1,\qquad
\Tt_s(r')=0\ (s\in\mathcal S_+),\quad \Tt_s(\ell')=0\ (s\in\mathcal S_-).
\end{equation}
The boundary conditions are imposed on outgoing states only. Let
$\Tt_0,\Tt_1,\Tt_2$ be the suprema over $\bar\Ii_\rho$ and $s$ of
$|\Tt_s|$, $|\Tt_s'|$ and $|\Tt_s''|$. Put
\[
U=\max(|u_0|,V),\qquad \epsilon=\frac{h\beta U}{1-\beta},\qquad
z_0=x_0+\frac{h\beta u_0}{1-\beta}.
\]

\subsection{Backward problem and one-step consistency}
\begin{proposition}[Backward problem]\label{br:lem:bvp}
Problem \eqref{br:bvp} has a unique solution. It is $C^2$ on
$\bar\Ii_\rho$, and on $\Ii'$ it equals the PDMP mean exit time
from $\Ii'$; endpoint values are one-sided interior traces. In
particular it is nonnegative on $[\ell',r']$. If $W$ solves the
homogeneous system on $[a,b]$, then every $W_s(y)$ lies between the
minimum and the maximum of the outgoing boundary data $\{W_s(b)\}_{s\in\mathcal
S_+}\cup\{W_s(a)\}_{s\in\mathcal S_-}$. Moreover
\[
\Tt_s'=-\frac{1+\sum_{s'}Q_{ss'}(\Tt_{s'}-\Tt_s)}{v_s},\qquad
\Tt_s''=-\frac{\sum_{s'}Q_{ss'}(\Tt_{s'}'-\Tt_s')+\Tt_s'v_s'}{v_s}.
\]
\end{proposition}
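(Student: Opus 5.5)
The plan is to treat \eqref{br:bvp} as a linear first-order ODE system and to identify its solution with the PDMP mean exit time. Dividing by $v_s$, which is legitimate because $|v_s|\ge c_0$, puts the system in the form $\Tt'=A(y)\Tt+b(y)$. The coefficients $A_{ss'}=-(Q_{ss'}-\delta_{ss'}\sum_{k}Q_{sk})/v_s$ and $b_s=-1/v_s$ are $C^1$ on $\bar\Ii_\rho$. Solutions are therefore determined by a vector $\Tt(a)\in\mathbb R^{|\mathcal S|}$, form an affine space of dimension $|\mathcal S|$, and are $C^1$. Differentiating the identity $\Tt_s'=-(1+\sum_{s'}Q_{ss'}(\Tt_{s'}-\Tt_s))/v_s$ once more, using $v_s\in C^1$, gives $C^2$ together with the displayed formula for $\Tt_s''$. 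This regularity and the two derivative formulas are routine.

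The boundary conditions impose $|\mathcal S_+|+|\mathcal S_-|=|\mathcal S|$ linear constraints. Existence and uniqueness thus reduce to the Fredholm alternative: it suffices to show that the homogeneous problem (right side $0$, zero outgoing data) has only the trivial solution. That follows from the maximum principle, which I will prove directly.

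For the maximum principle, let $W$ solve the homogeneous system on $[a,b]$. Let $m=\max_{s,y\in[a,b]}W_s(y)$, attained at some $(s^*,y^*)$, and suppose $m$ exceeds the maximum $\bar m$ of the outgoing boundary data. Say $s^*\in\mathcal S_+$; the case $s^*\in\mathcal S_-$ is symmetric.
\begin{itemize}
\item If $y^*<b$, then $W_{s^*}'(y^*)\le0$ because $y^*$ is a maximum and $v_{s^*}>0$ means we are not at an outgoing endpoint in the problematic direction. Then $\sum_{s'}Q_{ss'}(W_{s'}-W_{s^*})\le0$ at the maximum, while $v_{s^*}W_{s^*}'=-\sum_{s'}Q_{ss'}(W_{s'}-W_{s^*})\ge0$. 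Hence both sides vanish, so $W_{s'}(y^*)=m$ for every $s'$ with $Q_{s^*s'}>0$, and $W_{s^*}'(y^*)=0$.
\item If $y^*=b$, then $m=W_{s^*}(b)\le\bar m$, a contradiction.
\end{itemize}
Next I propagate the maximum. Irreducibility spreads the value $m$ to all states at $y^*$. Along the right-moving state, the set $\{y: W_{s}(y)=m\ \forall s\}$ is closed. It is also open to the right: on it all $W_s\equiv m$ is a local solution, and ODE uniqueness forces $W\equiv m$ on the whole interval. The value at $b$ then gives $m\le\bar m$, a contradiction. The minimum is treated the same way. This gives the maximum principle, hence uniqueness, hence existence.

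Nonnegativity and the probabilistic identification go together. Let $\sigma$ be the PDMP exit time from $\Ii'$. For $y\in\Ii'$, Dynkin's formula applied to the $C^1$ function $\Tt_{S}(Y)$ with generator $v_s\partial_y+Q$ gives $\E\,\Tt_{S(\sigma\wedge t)}(Y_{\sigma\wedge t})=\Tt_s(y)-\E(\sigma\wedge t)$. Finiteness of $\E\sigma$ follows from a block argument as in \eqref{obs:eq:block}: within a time $|\Ii'|/c_0$ without switches of a state, exit is forced, and this has uniformly positive probability. Because $\Tt$ is bounded, we may let $t\to\infty$. At exit the state is outgoing; continuity of the path means $Y$ exits only through $r'$ in a state of $\mathcal S_+$ or through $\ell'$ in a state of $\mathcal S_-$, and the boundary values vanish there. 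This gives $\Tt_s(y)=\E\sigma\ge0$. Endpoint values on $[\ell',r']$ follow by continuity as one-sided traces.

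The main obstacle is the maximum-principle propagation step. The ODE is degenerate in the sense that incoming states carry no boundary data, so care is needed to ensure that the maximum cannot sit at an incoming endpoint: for example, $s\in\mathcal S_-$ at $b$, where $W_s'$ has the wrong sign. My plan there is to use the ODE at the endpoint: with $v_s<0$ and a maximum at $b$, we get $W_s'(b)\ge0$, which again forces the coupling sum to vanish, so the same propagation applies.
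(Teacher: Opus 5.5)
Your proposal is correct. It differs from the paper's proof mainly in how the maximum principle, and hence uniqueness, is obtained.

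\textbf{The paper's route.} The paper argues probabilistically. It first shows that the PDMP exit time $\sigma$ from $(a,b)$ has a geometric tail. It then uses Dynkin's formula to represent any homogeneous solution as $W_s(y)=\E_{y,s}[W_{S_\sigma}(Y_\sigma)]$. Since exit through $b$ (resp.\ $a$) occurs in a state of $\mathcal S_+$ (resp.\ $\mathcal S_-$), the bounds follow as an average of outgoing data. Existence then comes from the same dimension count you use.

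\textbf{Your route.} You give an analytic strong maximum principle instead. At a maximizing point that is not an outgoing boundary point, the one-sided derivative sign and the coupling-term sign force both to vanish. Irreducibility then spreads the maximum to all states at $y^*$; if an outgoing state is reached at an endpoint, the contradiction is immediate. Finally, uniqueness for the linear ODE from the constant vector $m\mathbf 1$ gives $W\equiv m$. Your open/closed propagation step is unnecessary: uniqueness from the single point $y^*$ already suffices.

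\textbf{Comparison.} Your argument is self-contained and needs no PDMP theory for existence, uniqueness or the bounds; stochastic analysis enters only when identifying $\Tt$ with $\E\sigma$. The paper's route yields the stronger statement that $W$ is a harmonic-measure average of its outgoing data. It also reuses one Dynkin argument for both the bounds and the mean-exit identification.

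\textbf{Finiteness of $\E\sigma$.} Your argument is simpler than the paper's. Any state that does not switch for time $|\Ii'|/c_0$ forces exit, which has probability at least $e^{-\bar q|\Ii'|/c_0}$. The paper instead first waits for a right-moving state via irreducibility, which sign-definiteness of every $v_s$ makes unnecessary.

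\textbf{One detail to add.} In the identification step, path continuity only shows that $S_{\sigma-}$ is outgoing. To conclude $\Tt_{S_\sigma}(Y_\sigma)=0$ you also need that a switch exactly at the exit instant has probability zero, as the paper notes.
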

\begin{proof}
\emph{Exit is fast.} Let $(Y,S)$ be the PDMP started in $(a,b)$ and
$\sigma$ its exit time. Suppose $\mathcal S_+\neq\emptyset$ (otherwise use
$\mathcal S_-$). By irreducibility, the chain enters a right-moving state
within unit time with probability at least $q_1>0$. It then stays there
for time $(b-a)/c_0$, which forces exit, with probability at least
$e^{-\bar q(b-a)/c_0}$. These bounds are uniform in the starting point,
so $\sigma$ has a geometric tail and $\E\sigma<\infty$.

\emph{Uniqueness and the bounds.} Let $W$ be a $C^1$ solution of the
homogeneous system, extended in $C^1$ beyond $[a,b]$. Dynkin's formula for
PDMPs \citep{davis1993markov} makes $W_{S_t}(Y_t)$, stopped at $\sigma$, a
bounded martingale. Hence $W_s(y)=\E_{y,s}[W_{S_\sigma}(Y_\sigma)]$.

At an exit through $b$ the last motion is rightward, so
$S_\sigma\in\mathcal S_+$ almost surely (a jump at the exit instant has
probability zero). Symmetrically, exit through $a$ occurs in
$\mathcal S_-$. So $W_s(y)$ is an average of outgoing boundary data, which
gives the bounds. Continuity extends these bounds to the endpoints.
With zero data it gives uniqueness.

\emph{Existence.} Solutions of the inhomogeneous system form an affine
family of dimension $|\mathcal S|$. There are exactly $|\mathcal S|$
boundary conditions, and the affine map from initial data at $\ell'$ to
the boundary values is injective by uniqueness, hence onto. Regularity
follows from $v_s\in C^1$ and $v_s\neq0$.

\emph{Mean exit time.} The same Dynkin argument applied to the solution
gives $\Tt_s(y)=\E_{y,s}\sigma'$, the mean exit time from $\Ii'$. The
derivative identities follow from the equation.
\end{proof}

\begin{proposition}[Averaged iterate and one-step consistency]\label{br:lem:step}
Let $z_n=x_n+h\beta u_n/(1-\beta)$. Then, exactly,
$z_{n+1}=z_n+hv_{S_n}(x_n)$. If $|u_n|\le U$ then $|x_n-z_n|\le\epsilon$,
and if in addition $x_n\in\bar\Ii_\rho$ then $|u_{n+1}|\le U$.

Let $\Phi_n=\Tt_{S_n}(z_n)$ with $\Tt=\Tt^{\Ii'}$, and suppose
$z_n\in\Ii'$, $x_n\in\Ii_{+2\epsilon}$, $|u_n|\le U$ and
$[z_n,z_{n+1}]\subset\bar\Ii_\rho$. Then
\[
\E[\Phi_{n+1}\mid S_0,\dots,S_n]=\Phi_n-h+r_n,\qquad |r_n|\le Kh^2,
\qquad
K=\frac{L\beta U\Tt_1}{1-\beta}+\frac{V^2\Tt_2}{2}+2\bar qV\Tt_1.
\]
\end{proposition}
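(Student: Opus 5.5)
The plan is to verify the algebraic identity first and then Taylor-expand the backward solution along one step of the averaged iterate.

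\emph{Exact recursion.} From $u_{n+1}=\beta u_n-g_{S_n}(x_n)$ and $x_{n+1}=x_n+hu_{n+1}$ we get
\[
z_{n+1}=x_n+hu_{n+1}+\tfrac{h\beta}{1-\beta}u_{n+1}=x_n+\tfrac{h}{1-\beta}u_{n+1}.
\]
Substituting $u_{n+1}$ gives $x_n+\tfrac{h\beta}{1-\beta}u_n-\tfrac{h}{1-\beta}g_{S_n}(x_n)=z_n+hv_{S_n}(x_n)$. The bound $|x_n-z_n|=h\beta|u_n|/(1-\beta)\le\epsilon$ is immediate. For the velocity bound, $u_{n+1}=\beta u_n+(1-\beta)v_{S_n}(x_n)$ is a convex combination of numbers bounded by $U$, since $|v_s|\le V\le U$ on $\bar\Ii_\rho$.

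\emph{One-step drift.} Condition on $\mathcal F_n=\sigma(S_0,\dots,S_n)$; this $\sigma$-field determines $x_n,u_n,z_n$ and $z_{n+1}$. Split
\[
\Phi_{n+1}-\Phi_n=\bigl[\Tt_{S_{n+1}}(z_{n+1})-\Tt_{S_n}(z_{n+1})\bigr]+\bigl[\Tt_{S_n}(z_{n+1})-\Tt_{S_n}(z_n)\bigr].
\]

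For the second bracket, Taylor expansion on $[z_n,z_{n+1}]\subset\bar\Ii_\rho$ (where $\Tt$ is $C^2$ by Proposition~\ref{br:lem:bvp}) gives
\[
h\,v_{S_n}(x_n)\Tt'_{S_n}(z_n)+R_2,\qquad |R_2|\le\tfrac{V^2h^2\Tt_2}{2}.
\]
Replace $v_{S_n}(x_n)$ by $v_{S_n}(z_n)$. This costs at most $hL|x_n-z_n|\Tt_1\le h L\epsilon\Tt_1=h^2L\beta U\Tt_1/(1-\beta)$. The replacement needs $x_n$ and $z_n$ in a region where $v$ is $C^1$ with Lipschitz constant $L$, which is what $x_n\in\Ii_{+2\epsilon}$ and the extension radius guarantee.

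For the first bracket, use $P=I+hQ$. Its conditional expectation is
\[
h\sum_{s'}Q_{S_ns'}\bigl(\Tt_{s'}(z_{n+1})-\Tt_{S_n}(z_{n+1})\bigr).
\]
Evaluate this at $z_n$ instead of $z_{n+1}$. Each difference changes by at most $2\Tt_1|z_{n+1}-z_n|\le2\Tt_1Vh$. Summing over $s'\ne s$ with total rate $\le\bar q$ gives an error of at most $2\bar qV\Tt_1h^2$.

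Collecting the leading terms at $z_n$ yields
\[
h\bigl[v_{S_n}\Tt'_{S_n}+\textstyle\sum_{s'}Q_{S_ns'}(\Tt_{s'}-\Tt_{S_n})\bigr](z_n)=-h,
\]
by \eqref{br:bvp}, since $z_n\in\Ii'$. The three error terms sum to $Kh^2$.

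The only delicate point is domain bookkeeping: the equation must hold at $z_n$, and the Lipschitz bound must be applied between $x_n$ and $z_n$. Both are handled by the hypotheses, because \eqref{br:bvp} holds on all of $\bar\Ii_\rho$ and $v_s\in C^1(\bar\Ii_\rho)$.
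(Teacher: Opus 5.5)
Your proof is correct and follows the paper's argument essentially step by step. It uses the same averaged-iterate identity and the same convex-combination velocity bound. After subtracting the backward equation at $z_n$, it isolates the same three remainders: momentum lag $hL\epsilon\mathsf T_1$, Taylor term $h^2V^2\mathsf T_2/2$, and the switching term moved from $z_{n+1}$ to $z_n$ at cost $2\bar qV\mathsf T_1h^2$. You also note that $x_n\in\mathcal I_{+2\epsilon}\subset\bar{\mathcal I}_\rho$ requires the extension-radius condition $2\epsilon\le\rho$; the paper uses this implicitly and imposes it in Theorem~\ref{br:thm:A}.
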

\begin{proof}
The identity is the iterate-moving-average form of stochastic heavy ball
\citep{sebbouh2021almost}:
$z_{n+1}=x_n+hu_{n+1}/(1-\beta)=z_n-hg_{S_n}(x_n)/(1-\beta)$. The velocity
bound uses $|g_s|\le(1-\beta)V$.

Write $s=S_n$. Since $z_{n+1}$ is determined by $S_0,\dots,S_n$ and
$P=I+hQ$,
\[
\E[\Phi_{n+1}\mid\cdot]=\Tt_s(z_{n+1})+h\sum_{s'}Q_{ss'}(\Tt_{s'}-\Tt_s)(z_{n+1}).
\]
Expand $\Tt_s(z_{n+1})$ by Taylor's formula, with increment
$hv_s(x_n)$, and subtract $h[v_s\Tt_s'+\sum_{s'}Q_{ss'}(\Tt_{s'}-\Tt_s)+1](z_n)=0$.
Three remainders are left:
\begin{itemize}
\item $h(v_s(x_n)-v_s(z_n))\Tt_s'(z_n)$, at most $hL\epsilon\Tt_1$;
\item the Taylor remainder, at most $h^2V^2\Tt_2/2$;
\item $h\sum_{s'}Q_{ss'}[(\Tt_{s'}-\Tt_s)(z_{n+1})-(\Tt_{s'}-\Tt_s)(z_n)]$,
at most $h\bar q\cdot hV\cdot2\Tt_1$.
\end{itemize}
\end{proof}

\subsection{The bridge}
\begin{theorem}[Momentum SGD to finite-rate switching limit]\label{br:thm:A}
Assume non-degeneracy, $x_0\in\Ii$, $2\epsilon<r-\ell$, and
$\max(2\epsilon,\epsilon+hV)\le\rho$. Let
$\Tt^{\pm}=\Tt^{\Ii_{\pm\epsilon}}$, and let $K_\pm$ and
$B_\pm=V\Tt^\pm_1+\bar q\,\Tt^\pm_0$ be computed from $\Tt^\pm$. Assume
$K_\pm h<1$ and define
\[
E_\pm=B_\pm+\frac{K_\pm\bigl(\Tt^\pm_{S_0}(z_0)+hB_\pm\bigr)}{1-K_\pm h}.
\]
Then $\tau<\infty$ almost surely and
\[
\Tt^{-}_{S_0}(z_0)-hE_-\;\le\;h\,\E\tau\;\le\;\Tt^{+}_{S_0}(z_0)+hE_+,
\]
where the lower bound is replaced by $0$ if $z_0\notin\Ii_{-\epsilon}$.
If moreover $z_0\in\Ii_{-\epsilon}$, then
\[
|h\E\tau-\Tt^{\Ii}_{S_0}(x_0)|\le C_Ah,\qquad
C_A=\max_\pm E_\pm+\frac{2\beta U\max(\Tt^{\Ii}_1,\Tt^{+}_1)}{1-\beta}.
\]
\end{theorem}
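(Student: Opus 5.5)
The proof works along the averaged iterate $z_n=x_n+h\beta u_n/(1-\beta)$ and relies on Propositions~\ref{br:lem:bvp} and~\ref{br:lem:step}. First, an induction on $n<\tau$ uses the second part of Proposition~\ref{br:lem:step}. From $|u_0|\le U$ and $x_n\in\Ii\subset\bar\Ii_\rho$ we get $|u_{n+1}|\le U$, hence $|x_n-z_n|\le\epsilon$ for every $n$ up to and including $\tau$. Consequently, while $x_n\in\Ii$ we have $z_n\in\Ii_{+\epsilon}$. Conversely, if $z_n\in\Ii_{-\epsilon}$ then $x_n\in\Ii$. Writing $\sigma_\pm$ for the first $n$ with $z_n\notin\Ii_{\pm\epsilon}$, this gives the pathwise sandwich $\sigma_-\le\tau\le\sigma_+$.

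It then suffices to bound $h\E\sigma_+$ from above and $h\E\sigma_-$ from below, using $\Phi_n=\Tt^\pm_{S_n}(z_n)$. For $n<\sigma_\pm$ the hypotheses of Proposition~\ref{br:lem:step} hold:
\begin{itemize}
\item $z_n\in\Ii_{\pm\epsilon}$;
\item $x_n\in\Ii_{+2\epsilon}$;
\item $[z_n,z_{n+1}]\subset\bar\Ii_\rho$, which follows from $|z_{n+1}-z_n|\le hV$ together with $\max(2\epsilon,\epsilon+hV)\le\rho$.
\end{itemize}
Hence $M_n=\Phi_{n\wedge\sigma}+h(n\wedge\sigma)-\sum_{k<n\wedge\sigma}r_k$ is a martingale with $|r_k|\le K_\pm h^2$.

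Optional stopping is applied first at the bounded times $\sigma\wedge N$, which gives
\[
h\E(\sigma\wedge N)\le \Phi_0-\E\Phi_{\sigma\wedge N}+K h^2\E(\sigma\wedge N).
\]
To control $\E\Phi_{\sigma\wedge N}$ from below, note that the PDMP solution is nonnegative on the closed interval (Proposition~\ref{br:lem:bvp}). It can be negative only outside, and the overshoot past the interval endpoint is at most $hV$, so $\Phi_{\sigma\wedge N}\ge -hB_+$. Here $B_+=V\Tt_1+\bar q\Tt_0$ also absorbs the possibility that the sampler state at exit is incoming. That state then has no zero boundary value, but $\Tt_s$ differs from an outgoing value by at most $\bar q\Tt_0$-type terms via the equation. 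I expect this to need care; the natural move is to compare $\Tt_s$ at the boundary with $\Tt_{S_{\sigma-1}}$ using the one-step jump probability $h\bar q$, which yields an $O(h)$ term bounded by $hB$.

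Rearranging gives $h\E(\sigma\wedge N)(1-K_+h)\le \Tt^+_{S_0}(z_0)+hB_+$. This bound is uniform in $N$, so monotone convergence yields $\E\sigma_+<\infty$, and hence $\tau<\infty$ a.s. Reinserting the bound into the remainder sum gives $h\E\sigma_+\le\Tt^+_{S_0}(z_0)+hE_+$, exactly the stated form of $E_+$.

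The lower bound is proved the same way with $\Tt^-$ and $\sigma_-$, using $\Phi_{\sigma_-}\le hB_-$. This holds because the outgoing boundary values vanish and $\Tt'$ controls the overshoot; integrability is now already known. If $z_0\notin\Ii_{-\epsilon}$ the lower bound is trivially $0$.

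For the final estimate, assume $z_0\in\Ii_{-\epsilon}$ and compare $\Tt^\pm_{S_0}(z_0)$ with $\Tt^{\Ii}_{S_0}(x_0)$ in two steps.
\begin{itemize}
\item Moving the start point: $|z_0-x_0|\le\epsilon$ costs $\epsilon\max(\Tt^{\Ii}_1,\Tt^+_1)$.
\item Moving the interval: apply the maximum principle of Proposition~\ref{br:lem:bvp} to $W=\Tt^+-\Tt^{\Ii}$ on $\Ii$, which solves the homogeneous system. Its outgoing boundary data are values of $\Tt^+$ at points within $\epsilon$ of the enlarged endpoint, where $\Tt^+$ vanishes, so they are bounded by $\epsilon\Tt^+_1$. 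Similarly, $\Tt^{\Ii}-\Tt^-$ on $\Ii_{-\epsilon}$ is bounded by $\epsilon\Tt^{\Ii}_1$.
\end{itemize}
The two comparisons together account for the term $2\beta U\max(\Tt^{\Ii}_1,\Tt^+_1)/(1-\beta)$ multiplying $h$, which gives $C_A$. The main obstacle is the boundary step: bounding $\E\Phi_\sigma$ by $hB_\pm$ in both directions when a sampler jump occurs on the exit-causing update.
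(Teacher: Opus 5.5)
Your proposal follows the paper's proof essentially step for step: the averaged-iterate sandwich, the stopped martingale with optional stopping at bounded times, the $h\bar q$ bound for a sampler switch on the exit-causing update, and the maximum-principle comparison for the rate. It needs one scoping fix and one clarification.

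\emph{Scoping fix.} The induction giving $|u_n|\le U$ must run over $n<\sigma_+$, not $n<\tau$. The martingale for $\sigma_+$ applies Proposition~\ref{br:lem:step} at times $\tau\le n<\sigma_+$, when $x$ may already have left $\mathcal I$. The induction goes through as in the paper: $z_n\in\mathcal I_{+\epsilon}$ and $|u_n|\le U$ give $x_n\in\mathcal I_{+2\epsilon}\subset\bar{\mathcal I}_\rho$, hence $|u_{n+1}|\le U$.

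\emph{Clarification.} The bound $\Phi_{\sigma_-}\le hB_-$ holds only in expectation. A switch into an incoming state on the exit step can make $\Phi_{\sigma_-}$ of order $\mathsf T^-_0$. The expectation bound is exactly what your $h\bar q$ argument delivers, using two facts: $\{\sigma=n\}$ is determined by $S_0,\dots,S_{n-1}$, and $S_{\sigma-1}$ is outgoing.

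In the other direction your worry about incoming states is unnecessary. Nonnegativity on the closed interval gives $\Phi_{\sigma\wedge N}\ge -hV\mathsf T^+_1$ pathwise. That yields integrability and the upper bracket in one step, slightly more directly than the paper's crude bound $\Phi\ge-\mathsf T_0$.
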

\begin{proof}
Let $\tau_\pm$ be the exit times of $z$ from $\Ii_{\pm\epsilon}$.

\emph{Hypotheses before $\tau_+$.} By Proposition~\ref{br:lem:step} and
induction, for $n<\tau_+$ we have $|u_n|\le U$,
$x_n\in\Ii_{+2\epsilon}\subset\Ii_\rho$ and
$z_{n+1}\in\Ii_{+\epsilon+hV}\subset\Ii_\rho$. Moreover $z$ moves in the
direction of $v_{S_n}$.

\emph{Sandwich.} Since $|x_n-z_n|\le\epsilon$, an exit of $x$ forces
$z_\tau\notin\Ii_{-\epsilon}$, so $\tau_-\le\tau$. If $\tau>\tau_+$, then
$x_{\tau_+}\in\Ii$ would contradict $z_{\tau_+}\notin\Ii_{+\epsilon}$, so
$\tau\le\tau_+$.

\emph{Mean of $\tau'$.} Fix $\Ii'=\Ii_{\pm\epsilon}$ with exit time
$\tau'$. The process
$M_n=\Phi_{n\wedge\tau'}+h(n\wedge\tau')-\sum_{k<n\wedge\tau'}r_k$ is a
martingale, and $\Phi\ge-\Tt_0$. Hence
$h\E[N\wedge\tau'](1-Kh)\le\Tt(z_0)+\Tt_0$, so $\E\tau'<\infty$ without an
a priori assumption. Letting $N\to\infty$ gives
$h\E\tau'=\Tt(z_0)-\E\Phi_{\tau'}+\E\sum_{k<\tau'}r_k$.

\emph{Exit value.} The last move before exit is taken in an outgoing
state, so $|\Phi_{\tau'}|\le hV\Tt_1$ unless the sampler moves at the exit
step to a state that is not outgoing. The exit step is determined by the
earlier sampler states, so that move has probability at most $h\bar q$ and
contributes at most $h\bar q\Tt_0$. Combining the three steps proves the
bracket.

\emph{Rate.} On $\Ii$, the difference $\Tt^+-\Tt^{\Ii}$ solves the
homogeneous system with outgoing boundary data in $[0,\epsilon\Tt^+_1]$. On
$\Ii_{-\epsilon}$, the difference $\Tt^{\Ii}-\Tt^-$ solves it with data in
$[0,\epsilon\Tt^{\Ii}_1]$. Proposition~\ref{br:lem:bvp} bounds both
differences, and $|z_0-x_0|\le\epsilon$.
\end{proof}

\paragraph{Interpretation.}
The map is: time $t=nh$; velocities $-g_s/(1-\beta)$; switching generator
$Q=(P-I)/h$ from the sampler, for example rate $p/h$ for lazy resampling.
The switching state is the active sample, not a momentum sign. Momentum
contributes only the speed factor $1/(1-\beta)$ and the lag $\epsilon$,
which is the distance traveled on residual momentum after a switch.

For the flat two-sample pool $g_{\pm1}=\mp q$ on $(-R,R)$, starting
from $x_0=u_0=0$, the theorem gives
$h\E\tau=R^2/(2D)+R/c+O(h)$ with $c=q/(1-\beta)$ and
$D=c^2/(2\lambda)$. The ballistic boundary correction $R/c$ of the
telegraph is therefore a property of momentum SGD with this sampler.

With learning rate $e^2$, $1-\beta=\mu e$ and switching probability of
order $e$, the lag tends to $\max_s\sup_{\Ii_\rho}|g_s|/\mu^2$ as $e\to0$
while the step terms vanish. At fixed $\mu$, the discrete limit is the
second-order process of Theorem~\ref{br:thm:B}, not the first-order
limit.

\subsection{Second-order dynamics, fast switching, and iid sampling}
\begin{theorem}[Second-order to first-order switching limit]\label{br:thm:B}
Let $X'=V$ and $V'=-\mu V-g_{S(t)}(X)$, where $S$ has generator $Q/\mu$ in
physical time, and let $\theta=t/\mu$. Assume $X_0\in\Ii$ and the
same component non-degeneracy as above. Put $v_s=-g_s$,
$L=\max_s\sup_{\bar\Ii_\rho}|g_s'|$,
$G=\max_s\sup_{\Ii_\rho}|g_s|$ and $\epsilon_\mu=\max(|V_0|,G/\mu)/\mu$.
Let $\Tt^\pm$ solve \eqref{br:bvp} with generator $Q$ on
$\Ii_{\pm\epsilon_\mu}$, and set $\eta_\pm=L\epsilon_\mu\Tt^\pm_1<1$ with
$2\epsilon_\mu<\min(\rho,r-\ell)$. Then the slow-time exit $\theta_X$ of $X$ from $\Ii$
satisfies
\[
\Tt^-_{S_0}(Z_0)\frac{1-2\eta_-}{1-\eta_-}\le\E\theta_X
\le\frac{\Tt^+_{S_0}(Z_0)}{1-\eta_+},\qquad Z_0=X_0+V_0/\mu.
\]
The lower bound is replaced by zero if $Z_0\notin\Ii_{-\epsilon_\mu}$.
For fixed interior $X_0$,
$|\E\theta_X-\Tt^{\Ii}_{S_0}(X_0)|=O(\mu^{-2})$ when
$V_0=O(1/\mu)$.
\end{theorem}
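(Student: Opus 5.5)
Theorem~\ref{br:thm:B} has the same structure as Theorem~\ref{br:thm:A}, so I would follow that proof in continuous time: an averaged coordinate, a sandwich, and Dynkin's formula with an error term. I would work in slow time $\theta=t/\mu$ and introduce the averaged coordinate $Z=X+V/\mu$. From $X'=V$ and $V'=-\mu V-g_S(X)$ one gets exactly $Z'=-g_{S}(X)/\mu$ in physical time. In slow time this becomes $dZ/d\theta=-g_{\tilde S(\theta)}(X)=v_{\tilde S}(X)$. The slow-time chain $\tilde S(\theta)=S(\mu\theta)$ has generator $Q$, since $S$ has generator $Q/\mu$. This is the continuous analogue of the identity $z_{n+1}=z_n+hv_{S_n}(x_n)$ in Proposition~\ref{br:lem:step}, with the velocity evaluated at $X$ rather than at $Z$.

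\textbf{Step 1 (lag bound).} Next I would show that $|V(t)|\le\max(|V_0|,G/\mu)$ as long as $X$ stays in $\Ii_\rho$. This follows from variation of constants, $V(t)=e^{-\mu t}V_0-\int_0^te^{-\mu(t-s)}g_S(X_s)\,ds$, which gives $|V(t)|\le e^{-\mu t}|V_0|+(1-e^{-\mu t})G/\mu$. Consequently $|X-Z|=|V|/\mu\le\epsilon_\mu$. A continuity (bootstrap) argument keeps $X$ in $\Ii_{+2\epsilon_\mu}\subset\Ii_\rho$ up to the exit time $\theta_+$ of $Z$ from $\Ii_{+\epsilon_\mu}$. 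The sandwich $\theta_-\le\theta_X\le\theta_+$ then follows exactly as in Theorem~\ref{br:thm:A}, where $\theta_\pm$ are the exits of $Z$ from $\Ii_{\pm\epsilon_\mu}$.

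\textbf{Step 2 (Dynkin with multiplicative error).} For $\Tt=\Tt^\pm$ from Proposition~\ref{br:lem:bvp}, apply the PDMP Dynkin formula to $\Tt_{\tilde S}(Z)$. Its generator applied along the actual motion is $v_s(X)\Tt_s'(Z)+\sum_{s'}Q_{ss'}(\Tt_{s'}-\Tt_s)(Z)$. By \eqref{br:bvp} this equals $-1+(v_s(X)-v_s(Z))\Tt_s'(Z)$, so its distance from $-1$ is at most $L\epsilon_\mu\Tt_1=\eta$. There is no Taylor term and no switching-during-step term, because time is continuous. Stopping at $\theta'\wedge N$ and using $\Tt\ge0$ on the interval gives $(1-\eta)\E[\theta'\wedge N]\le\Tt_{S_0}(Z_0)$. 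This proves integrability, and letting $N\to\infty$ yields the upper bound $\E\theta_+\le\Tt^+(Z_0)/(1-\eta_+)$. One point needs care here: at the exit of $Z$ the value $\Tt_{\tilde S}(Z)$ is zero. That holds because $Z$ moves in the direction of $v_S(X)$, and $v_S(X)$ has the sign of the state since $X\in\Ii_\rho$. So the exit occurs in an outgoing state, and the boundary term vanishes with no $O(h)$ correction.

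\textbf{Step 3 (lower bound).} The lower bound gives $(1+\eta_-)\E\theta_-\ge\Tt^-(Z_0)$. This already dominates the stated factor, since $1/(1+\eta)\ge(1-2\eta)/(1-\eta)$. I expect that a cruder bookkeeping gives the paper's form, so I would simply note that the inequality suffices.

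\textbf{Step 4 (rate).} This follows the rate part of Theorem~\ref{br:thm:A}. The differences $\Tt^\pm-\Tt^{\Ii}$ solve the homogeneous system with outgoing boundary data of size $O(\epsilon_\mu\Tt_1)$, so the maximum principle of Proposition~\ref{br:lem:bvp} bounds them. We also have $|Z_0-X_0|\le\epsilon_\mu$ and $\eta_\pm=O(\epsilon_\mu)$. With $V_0=O(1/\mu)$ we get $\epsilon_\mu=O(\mu^{-2})$, so every error is $O(\mu^{-2})$.

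\textbf{Main obstacle.} I expect the bootstrap in Step 1 to be the hardest part: the bound on $V$ needs $X\in\Ii_\rho$, and $X\in\Ii_\rho$ needs the bound on $V$. This has to be closed up to the random time $\theta_+$, uniformly over sampler paths. I would handle it with a first-time argument: consider the first time $X$ reaches $\partial\Ii_{+2\epsilon_\mu}$, and show that it cannot precede $\theta_+$, using $2\epsilon_\mu<\rho$.
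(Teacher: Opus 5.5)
Your proposal is correct and follows the paper's proof: the averaged coordinate $Z=X+V/\mu$ with $dZ/d\theta=v_{\tilde S}(X)$, the variation-of-constants velocity bound that gives the $\epsilon_\mu$ lag and the exit sandwich, Dynkin's formula for $\mathsf{T}^\pm_{\tilde S}(Z)$ with generator error at most $\eta_\pm$ and zero exit value in an outgoing state, and the maximum-principle comparison for the $O(\mu^{-2})$ rate. Your Step 3 matches as well, since the paper also obtains $\mathsf{T}_{S_0}(Z_0)/(1+\eta)$ and only then weakens it to $(1-2\eta)/(1-\eta)$, and your first-time bootstrap argument simply spells out what the paper compresses into ``while $X\in\mathcal{I}_\rho$''.
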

\begin{proof}
The process $Z=X+V/\mu$ satisfies $dZ/d\theta=v_{S(\mu\theta)}(X)$
exactly. Solving the velocity equation gives $|V|\le\max(|V_0|,G/\mu)$
while $X\in\Ii_\rho$, so $|X-Z|\le\epsilon_\mu$.

Apply Dynkin's formula for PDMPs \citep{davis1993markov} to $\Tt_S(Z)$.
The backward equation at $Z$ leaves an integrand bounded by $\eta$. For either adjusted interval, stop first at its exit or a deterministic
time $t$. Nonnegativity of the backward solution inside the interval and
$\eta<1$ give $(1-\eta)\E(t\wedge\theta_Z)\le\Tt_{S_0}(Z_0)$,
so the adjusted exit is integrable. Its exit value is zero: $Z$ exits
continuously in an outgoing state, and a switch at the exit instant has
probability zero. Letting $t\to\infty$ gives
$\Tt_{S_0}(Z_0)/(1+\eta)\le\E\theta_Z
\le\Tt_{S_0}(Z_0)/(1-\eta)$.
The lower endpoint here is at least
$\Tt_{S_0}(Z_0)(1-2\eta)/(1-\eta)$.
The exit sandwich and maximum-principle comparison of adjusted intervals,
as in Theorem~\ref{br:thm:A}, prove the stated bracket and rate.
\end{proof}
For convex pools, \citet[Theorem~6]{jin2025momentum} bound the
finite-horizon distance between the same stochastic gradient-momentum
process and the first-order switching process.
Theorem~\ref{br:thm:B} instead controls the mean exit time, without
convexity.

\begin{proposition}[Fast switching]\label{br:prop:kac}
Let $a_\delta^2/\lambda_\delta\to1$ and let $S^\delta$ flip at rate
$\lambda_\delta$. Then $\int_0^ta_\delta S^\delta_u\,du$ converges to
Brownian motion in $C[0,T]$ if and only if $\lambda_\delta\to\infty$.
\end{proposition}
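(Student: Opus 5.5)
The plan is to prove the two directions separately. For sufficiency I would use a martingale decomposition of the integrated sign process followed by the martingale functional central limit theorem. For necessity I would use a Lipschitz-path obstruction combined with the portmanteau theorem. Write $X_\delta(t)=\int_0^ta_\delta S^\delta_u\,du$. The initial sign $S^\delta_0$ may be fixed or random; only $|S^\delta|=1$ is used.

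\emph{If $\lambda_\delta\to\infty$.} The flip chain has generator acting on $f(s)=s$ by $\lambda_\delta(-s-s)=-2\lambda_\delta s$. Dynkin's formula therefore makes
\[
M^\delta_t=S^\delta_t-S^\delta_0+2\lambda_\delta\int_0^tS^\delta_u\,du
\]
a martingale. Its jumps have size $2$ and occur at rate $\lambda_\delta$, so its predictable quadratic variation is $\langle M^\delta\rangle_t=4\lambda_\delta t$. Rearranging,
\[
X_\delta(t)=\frac{a_\delta}{2\lambda_\delta}\bigl(S^\delta_0-S^\delta_t\bigr)+\frac{a_\delta}{2\lambda_\delta}M^\delta_t .
\]

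The first term has sup norm at most $a_\delta/\lambda_\delta$. This tends to zero because $a_\delta/\lambda_\delta=(a_\delta^2/\lambda_\delta)^{1/2}\lambda_\delta^{-1/2}\to0$.

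The second term is a martingale $N^\delta=(a_\delta/2\lambda_\delta)M^\delta$ with the following properties:
\begin{itemize}
\item its predictable quadratic variation is $\langle N^\delta\rangle_t=(a_\delta^2/\lambda_\delta)\,t\to t$ uniformly on $[0,T]$;
\item its jumps are bounded by $a_\delta/\lambda_\delta\to0$.
\end{itemize}
The martingale functional CLT (Ethier--Kurtz, Theorem 7.1.4) then gives $N^\delta\Rightarrow W$ in the Skorokhod space on $[0,T]$. Since the limit is continuous, this convergence holds in the uniform topology. Slutsky's lemma transfers the limit to $X_\delta$, and $X_\delta$ has continuous paths, so $X_\delta\Rightarrow W$ in $C[0,T]$.

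\emph{Only if.} Suppose $\lambda_\delta\not\to\infty$. Then some subsequence has $\lambda_\delta\le\Lambda<\infty$. Because $a_\delta^2/\lambda_\delta\to1$, along it $a_\delta\le A:=\sqrt{2\Lambda}$ eventually. Every path of $X_\delta$ is then $A$-Lipschitz, since $|S^\delta|=1$. The set $F_A=\{\omega\in C[0,T]:\omega(0)=0,\ |\omega(t)-\omega(s)|\le A|t-s|\}$ is closed in $C[0,T]$. If $X_\delta\Rightarrow W$ along this subsequence, the portmanteau theorem gives $\Pp(W\in F_A)\ge\limsup\Pp(X_\delta\in F_A)=1$. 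This contradicts the almost sure non-Lipschitz (indeed nowhere differentiable) character of Brownian paths; for instance, $\Pp(|W(T)|>AT)>0$ already suffices. Hence convergence forces $\lambda_\delta\to\infty$.

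The only delicate step is verifying the hypotheses of the martingale FCLT: convergence of the predictable quadratic variation, and asymptotically negligible jumps. Both reduce to the ratio $a_\delta/\lambda_\delta\to0$, which is exactly where divergence of the rate enters. The necessity argument is elementary once the closed Lipschitz set is identified.
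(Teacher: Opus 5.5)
Your proof is correct and follows the paper's route. For sufficiency you use the same Dynkin martingale $S_t-S_0+2\lambda\int_0^tS_u\,du$, the same $a/\lambda\to0$ bound on the boundary term, and Ethier--Kurtz Theorem 7.1.4; for necessity you use a Lipschitz-path obstruction. There are two small differences, both fine: you work with the deterministic predictable variation $\langle N^\delta\rangle_t=(a_\delta^2/\lambda_\delta)t$ where the paper uses $[M]_t=(a^2/\lambda)F_t/\lambda$ plus the Poisson law of large numbers, and your portmanteau argument on the closed set $F_A$ avoids the paper's unproved identification of the subsequential limit as a telegraph process or zero, which makes it slightly tighter.
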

\begin{proof}
Sufficiency is classical \citep{kac1974telegrapher,pavliotis2008multiscale};
we give a complete argument. Write $a=a_\delta$, $\lambda=\lambda_\delta$,
$S=S^\delta\in\{-1,1\}$ and $X_t=\int_0^taS_u\,du$. The generator
$Lf(s)=\lambda[f(-s)-f(s)]$ gives $Ls=-2\lambda s$, so
$N_t=S_t-S_0+2\lambda\int_0^tS_u\,du$ is a martingale whose jumps are
$\pm2$ at the flip times; hence $[N]_t=4F_t$ with $F$ the Poisson
process of flips, of rate $\lambda$. Then
$X_t=\tfrac{a}{2\lambda}(N_t-S_t+S_0)$. The term
$\tfrac{a}{2\lambda}|S_0-S_t|\le a/\lambda=(a^2/\lambda)^{1/2}\lambda^{-1/2}$
vanishes uniformly when $\lambda\to\infty$. The martingale
$M=\tfrac{a}{2\lambda}N$ has jumps of size $a/\lambda\to0$ and
$[M]_t=(a^2/\lambda)\,F_t/\lambda\to t$ in probability for each $t$,
because $F_t/\lambda\to t$. The martingale functional central limit
theorem \citep[Theorem~7.1.4]{ethier1986markov} gives $M\Rightarrow W$
in $D[0,T]$ for a standard Brownian motion $W$; hence $X\Rightarrow W$,
and since $X$ and $W$ are continuous the convergence holds in $C[0,T]$.

For necessity, suppose $\lambda_\delta\not\to\infty$ and pass to a
subsequence with $\lambda_\delta\to\lambda<\infty$. If $\lambda>0$, the
limit is a telegraph with speed $\sqrt\lambda$, whose paths are
$\sqrt\lambda$-Lipschitz, so the limit is not Brownian. If $\lambda=0$,
the limit is $0$.
\end{proof}
A Brownian (Kac) limit of telegraph forcing therefore describes the
fast-switching regime. The finite-rate limit of Theorem~\ref{br:thm:A}
is a different limit.

\begin{proposition}[iid sampling averages]\label{br:prop:iid}
Assume iid sample indices from a finite pool, fixed $\beta$, component
fields bounded by $G$ on a set containing $\Ii_\rho$ and the ODE path
$y([0,T])$, and an $L_g$-Lipschitz mean field $\bar g$, where
$y'=-\bar g(y)/(1-\beta)$ with $y(0)=x_0$. Then, with an explicit $C_T$,
\[
\E\Bigl[\max_{n\le T/h,\,n<\tau}|x_n-y(nh)|^2\Bigr]^{1/2}
\le e^{L_gT/(1-\beta)}\Bigl(\frac{4G\sqrt{hT}}{1-\beta}+C_Th\Bigr).
\]
\end{proposition}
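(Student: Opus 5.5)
We reduce to the averaged iterate, as in Proposition~\ref{br:lem:step}. Set $z_n=x_n+h\beta u_n/(1-\beta)$. Under iid sampling,
\[
z_{n+1}=z_n+hv_{I_n}(x_n),\qquad v_i=-g_i/(1-\beta),
\]
and we split $v_{I_n}(x_n)=\bar v(x_n)+\xi_n$, where $\bar v=-\bar g/(1-\beta)$. The noise $\xi_n=v_{I_n}(x_n)-\bar v(x_n)$ is a martingale difference with respect to $\mathcal F_n=\sigma(I_1,\dots,I_{n})$, because $I_{n+1}$ is independent of $\mathcal F_n$ and $x_n$ is $\mathcal F_n$-measurable. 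It satisfies $|\xi_n|\le 2G/(1-\beta)$ as long as $x_n$ stays in the region where the bound $G$ holds.

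To handle exit, we stop everything at $\tau$ and at the first time the ODE comparison would leave the set where the assumptions hold. We work with the stopped martingale $M_m=h\sum_{k<m\wedge\tau}\xi_k$. Doob's $L^2$ maximal inequality gives
\[
\E\max_{m\le T/h}|M_m|^2\le 4h^2(T/h)\,\frac{4G^2}{(1-\beta)^2},
\qquad\text{i.e.}\qquad
\Bigl(\E\max_{m\le T/h}|M_m|^2\Bigr)^{1/2}\le\frac{4G\sqrt{hT}}{1-\beta}.
\]
This is exactly the displayed noise term.

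Next we compare $z$ with the ODE. Write
\[
z_m-y(mh)=z_0-x_0+h\sum_{k<m}\bigl[\bar v(x_k)-\bar v(y(kh))\bigr]+M_m-R_m,
\]
where $R_m=\sum_{k<m}\int_{kh}^{(k+1)h}[\bar v(y(t))-\bar v(y(kh))]\,dt$ is the Euler quadrature error. The ODE path has speed at most $G/(1-\beta)$ and $\bar v$ is $L_g/(1-\beta)$-Lipschitz, so $|R_m|\le T L_gGh/(1-\beta)^2$. We then replace $x_k$ by $z_k$ at cost $|x_k-z_k|\le h\beta|u_k|/(1-\beta)$. Since $u$ is a geometric average of gradients bounded by $G$, we have $|u_k|\le|u_0|+G/(1-\beta)$, so this cost is $O(h)$; it is also what controls $|z_0-x_0|$ and the final conversion from $z_m$ back to $x_m$. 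Setting $e_m=\max_{j\le m}|z_j-y(jh)|$, we get pathwise
\[
e_m\le \frac{hL_g}{1-\beta}\sum_{k<m}e_k+\max_{j\le T/h}|M_j|+C'h.
\]
The discrete Gronwall inequality then yields
\[
e_{T/h}\le e^{L_gT/(1-\beta)}\Bigl(\max_j|M_j|+C'h\Bigr).
\]
Taking $L^2$ norms with Minkowski's inequality and inserting the Doob bound gives the claim, with $C_T$ collecting the quadrature, lag and conversion constants. The Gronwall step also absorbs the $x$ versus $z$ Lipschitz term $L_g\cdot O(h)\cdot T$.

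The main obstacle is localization. Lipschitz continuity of $\bar v$ and the bound $G$ are assumed only on a set containing $\Ii_\rho$ and the ODE path, so the recursion is legitimate only before $\tau$. We therefore restrict the maximum to $n<\tau$, which the statement already does. For $n<\tau$ we have $x_n\in\Ii$, and $y(nh)$ lies on the ODE path, so both evaluation points lie in the good set. The gradient evaluated at step $n$ uses $x_n$ with $n<\tau$, so every term in the sum is valid, and optional stopping preserves the martingale property needed for Doob's inequality. Making this stopping argument precise, and checking that the $u_0$-dependent lag is absorbed into $C_T h$, is the part requiring care. The rest is routine.
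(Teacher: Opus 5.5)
Your proposal is correct and follows essentially the same route as the paper. Both arguments use the averaged iterate $z_n$, a stopped martingale controlled by Doob's $L^2$ inequality, an $O(h)$ Euler remainder for the ODE, the lag bound $|x_n-z_n|=O(h)$, and discrete Gronwall on the running maximum before taking $L^2$ norms. The differences are cosmetic: you use an integral quadrature remainder where the paper uses a $|y''|$ Taylor bound, your velocity bound $|u_0|+G/(1-\beta)$ is cruder than $\max(|u_0|,G/(1-\beta))$, and your filtration indexing is off by one without affecting the argument.
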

\begin{proof}
By Proposition~\ref{br:lem:step},
$z_{n+1}=z_n-\tfrac{h}{1-\beta}\bar g(x_n)-\tfrac{h}{1-\beta}\xi_n$, where
$\xi_n=g_{S_n}(x_n)-\bar g(x_n)$ is a martingale difference bounded by
$2G$. Since $|g_s|\le G$, $|u_n|\le U_0:=\max(|u_0|,G/(1-\beta))$ and hence
$|x_n-z_n|\le hA$ with $A=\beta U_0/(1-\beta)$. Let $y_n=y(nh)$ and
$e_n=z_n-y_n$. Because $|y''|\le L_gG/(1-\beta)^2$, Taylor's formula gives
$y_{n+1}=y_n-\tfrac{h}{1-\beta}\bar g(y_n)+\varrho_n$ with
$|\varrho_n|\le h^2L_gG/[2(1-\beta)^2]$. Subtracting, using
$|\bar g(x_n)-\bar g(y_n)|\le L_g(|e_n|+hA)$ and summing,
\[
\max_{k\le n}|e_k|\le |e_0|+\tfrac{h}{1-\beta}\max_{k\le n}|M_k|
+h\,TL_g\Bigl(\tfrac{A}{1-\beta}+\tfrac{G}{2(1-\beta)^2}\Bigr)
+\tfrac{hL_g}{1-\beta}\sum_{k<n}\max_{j\le k}|e_j|
\]
for $n\le T/h$ and $n<\tau$, where $M_k=\sum_{j<k\wedge\tau}\xi_j$ is the
martingale stopped at exit and $|e_0|\le h\beta|u_0|/(1-\beta)$. The
discrete Gronwall inequality multiplies the first three terms by
$e^{L_gT/(1-\beta)}$. Doob's $L^2$ inequality gives
$\|\max_{k\le T/h}|M_k|\|_2\le2\|M_{T/h}\|_2\le4G\sqrt{T/h}$.
Taking $L^2$ norms and adding $|x_n-z_n|\le hA$ proves the bound with
\[
C_T=A+\frac{\beta|u_0|}{1-\beta}
+TL_g\Bigl(\frac{A}{1-\beta}+\frac{G}{2(1-\beta)^2}\Bigr).
\]
\end{proof}
Finite-window trajectory convergence alone does not imply convergence
of exit times. If the ODE crosses the boundary transversally at a finite
exit time, the same estimate localized to a neighborhood of the crossing
implies convergence of $h\tau$ in probability to that time. If the ODE
stays a positive distance inside the interval on every finite window,
then $h\tau\to\infty$ in probability. Tangential boundary contacts require
separate analysis. In these two cases a non-degenerate finite-rate
switching exit law cannot be the limit; no convergence of complete means
is asserted without uniform integrability.

With switching probability $p_h$, the regimes $p_h/h\to\infty$,
$p_h/h\to\lambda$ and $p_h/h\to0$ give, respectively, the averaged ODE,
the switching limit and the frozen sample. This is the standard trichotomy
for Markov-modulated stochastic approximation
\citep{yin2004regime,kushner2003stochastic}. Iid sampling has $p_h=O(1)$
and lies in the first regime.

These results complement the Gaussian-closure obstruction of
Appendix~\ref{app:obstructions}, which excludes finite-rate telegraph
limits for linear Gaussian momentum recursions but leaves persistent
drivers open.

\subsection{Numerical checks}\label{br:sec:num}
\paragraph{Design.}
All settings were fixed before simulation. There are four two-sample
pools on $\Ii=(-1,1)$ in displacement coordinates:
\begin{itemize}
\item flat, $g_\sigma=-\sigma$;
\item well, $g_\sigma(x)=x/2-\sigma$;
\item saddle, $g_\sigma(x)=-x/2-\sigma$;
\item a heterogeneous ridge-logistic pair with features $(2,1.5)$, labels
$(1,0)$ and ridge $0.01$, centered at the minimizer of its mean loss.
\end{itemize}
We use $\beta\in\{0,0.5,0.9\}$, steps $h=\tilde h(1-\beta)$ with
$\tilde h\in\{0.02,0.01,0.005,0.0025,0.00125\}$, rates
$\lambda=\tilde\lambda/(1-\beta)$ with $\tilde\lambda\in\{0.5,2\}$,
$x_0=u_0=0$, $S_0=+1$, and $10^6$ paths per cell.

Telegraph means solved from \eqref{br:bvp} agree with exact
continuous-time telegraph simulation in 12 of 12 checks, all within
$3.29$ standard errors. The declaration fixed $\rho=0.3$: 68 of the
120 cells satisfy its conditions, and all 68 brackets contain their
99.9\% Monte Carlo intervals. A subsequent deterministic analysis uses
the smallest admissible margin $\rho=\max(2\epsilon,\epsilon+hV)$,
giving 82 applicable cells and 82 contained intervals. Table~\ref{br:tab:A}
reports this latter analysis, not a newly preregistered criterion;
the original declaration and both analyses are retained in the supplement.
Constants are float64 evaluations of
the backward solution on a grid, not outward-rounded certificates.

\begin{table}[ht]
\centering
\caption{Theorem~\ref{br:thm:A} for two-sample pools in 120 cells (30 per
pool): cells that satisfy the hypotheses, and cells whose 99.9\% Monte
Carlo interval lies entirely inside the bracket. Slopes are least-squares
fits of $\log|h\E\tau-\Tt|$ against $\log h$ within each (momentum, rate)
group where the error exceeds three standard errors.}
\label{br:tab:A}
\small
\begin{tabular}{@{}lrrcr@{}}
\toprule
 & \multicolumn{2}{c}{Cells of 30} & & \\
\cmidrule(lr){2-3}
Pool & hypotheses & MC inside & Slope range & Max.\ rel.\ error, $\tilde h=0.00125$\\
\midrule
Flat & 30 & 30 & 0.53--0.99$^{*}$ & 1.7\%\\
Well & 18 & 18 & 0.93--1.18 & 0.65\%\\
Saddle & 28 & 28 & 0.67--1.10 & 2.3\%\\
Logistic pair & 6 & 6 & 0.93--1.05 & 0.64\%\\
\bottomrule
\end{tabular}

\smallskip
{\footnotesize $^{*}$Slopes below one in the flat pool reflect the
floating-point boundary effect described below.}
\end{table}

\paragraph{Findings.}
No applicable bracket fails; resolved errors are broadly consistent
with the theoretical $O(h)$ rate. Over the 63 hypothesis
cells with resolved error, $C_A$ exceeds the observed $|\text{error}|/h$
by factors between 2.5 and 368, with median 8.2.

The error coefficient grows with $\beta$, as the lag term predicts. Within
each flat, well or saddle group, the median $|\text{error}|/h$ increases
monotonically in $\beta$.

For the logistic pair with $\tilde\lambda=2$, the telegraph exit is nearly
rare ($\Tt\approx28$ in $\beta=0$ units, $c_0=0.088$). The hypotheses fail
at every tested step there, although the observed error still decays at
rate $h$.

\paragraph{Eight sampler states.}
A one-dimensional ridge-logistic pool with eight samples uses:
\begin{itemize}
\item features drawn from $U[1,2.5]$, alternating labels, ridge $0.01$;
\item lazy resampling with $p=\tilde\lambda\tilde h$;
\item $\beta\in\{0,0.9\}$, $\tilde\lambda\in\{0.5,2\}$, four steps, and
$4\times10^5$ paths per cell.
\end{itemize}
Every component velocity keeps its sign on the working interval
($c_0=0.072$). The mean exit times converge to the eight-state backward
solution. In the three groups with resolved error, the fitted slopes are
1.07, 1.11 and 0.70; the last is affected by Monte Carlo noise at the
smallest step. In the fourth group all errors are within two standard
errors. With eight states the explicit constants are larger, and the
hypotheses hold in 2 of the 16 cells; both brackets contain the Monte
Carlo interval.

\paragraph{A floating-point boundary effect.}
In the flat pool, $1/\tilde h$ is an integer and floating-point positions
settle on the lattice $\tilde h\mathbb Z$, so exits occur exactly at the
boundary.

For $\beta=0$, the exact discrete mean, computed from a linear system,
gives error$/h$ approaching $-0.25$ and $2.00$ at the two rates
as $h$ decreases. Float64 Monte Carlo
deviates from it by up to 5.4 standard errors at the two smallest steps.

Rerunning every cell with common random numbers and boundaries
$\pm(1\mp10^{-9})$ changes flat-pool means by up to 45 standard errors.
Other pools change by at most $5\times10^{-4}$ standard errors, and both
perturbed runs stay inside the brackets. The flat-pool error constant is
therefore not resolved by float64 simulation; containment and rate are
unaffected.

\paragraph{Second-order dynamics.}
For the flat and well pools, with $\mu\in\{2,4,\dots,64\}$ and
$\Lambda\in\{0.5,2\}$, we simulate the second-order dynamics exactly,
piece by piece between switches. The slopes of $\log|\text{error}|$
against $\log\mu$ lie between $-2.35$ and $-1.94$.

All 21 hypothesis cells intersect the bracket of
Theorem~\ref{br:thm:B}, and 14 lie inside it. The others have brackets
narrower than the Monte Carlo interval. In the flat pool, the observed
error is about 90\% of the upper bracket.

\paragraph{iid diagnostic.}
For the flat pool with iid sampling, $h\E\tau$ exceeds the diffusion
value $R^2/(2D)$ by a nearly step-independent amount: 1.03--1.07 time
units at $\beta=0.5$ and 0.66--0.69 at $\beta=0.9$.

A telegraph calibrated from the momentum autocorrelation,
$\lambda=-\log(\beta)/(2h)$ and $c=\sqrt{2\lambda D}$, predicts 0.85 and
0.44. This has the right order but a coefficient 18\% and 36\% too
small. Applying the map of Theorem~\ref{br:thm:A} outside its regime
predicts about half of the observed mean.

These are diagnostics of calibration choices. Beyond
Proposition~\ref{br:prop:iid}, no theorem is claimed for iid sampling.

\paragraph{Scope.}
The results cover one dimension, finitely many sampler states whose
velocities keep their signs on the region, and mean exit times. The
following remain open:
\begin{itemize}
\item stagnation points inside the domain. These are typical for large
minibatches near a minimizer, where batch-mean gradients change sign, so
the minibatch case is covered only when every batch gradient is
sign-definite on the region;
\item $d\ge2$, where exits from a ball have grazing sets;
\item deterministic echo lengths.
\end{itemize}
The frozen studies of Appendix~\ref{sec:fresh-pool} use iid and
reshuffled sampling, so this appendix does not apply to them.

\section{Computable comparison with heterogeneous curvature}
\label{sec:heterogeneous}
Fix a finite gradient pool $g_i:\mathbb R^d\to\mathbb R^d$, a center
$w_*$, a step size $h>0$, and momentum $\beta\in[0,1)$.
For the realized batches $B_n$ (with multiplicities), let $g_B$ denote
their average. The sampling rule may depend on the actual past; all
comparisons below use exactly its realized batches. Conditional on any
random calibration, define $c_B=g_B(w_*)$ and
$H_B=|B|^{-1}\sum_{i\in B}\nabla g_i(w_*)$. Consider
\[
 u_n=\beta u_{n-1}-g_{B_n}(w_*+x_{n-1}),\quad x_n=x_{n-1}+hu_n,
 \qquad v_n=\beta v_{n-1}-c_{B_n}-H_{B_n}y_{n-1},\quad
 y_n=y_{n-1}+hv_n,
\]
with identical initial position and velocity, $x_0=y_0$ inside the
open ball of radius $R$. The clock is the integer update number and
$\tau=\inf\{n\geq1:\|x_n\|_2\geq R\}$.
Assume known nondecreasing functions $\eta_B:[0,R]\to[0,\infty)$ satisfy
\[
 \|g_B(w_*+z)-c_B-H_Bz\|_2\leq\eta_B(r)
 \quad(\|z\|_2\leq r\leq R).
\]
These are bounds over a whole region, not observed trajectory maxima.
Set
\[
 A_n=\begin{pmatrix}I-hH_{B_n}&h\beta I\\-H_{B_n}&\beta I\end{pmatrix},
 \quad D=\binom{hI}{I},\quad P=(I\ \ 0),\qquad
 K_{n,j}=P A_n\cdots A_{j+1}D\quad(1\leq j\leq n),
\]
where an empty product is the identity. For any computable upper bounds
$q_{n,j}\geq\|K_{n,j}\|_2$, recursively define $E_0=0$ and
\[
 E_n=\sum_{j=1}^n q_{n,j}
 \eta_{B_j}\!\left(\min\{R,\|y_{j-1}\|_2+E_{j-1}\}\right).
\]
\paragraph{Pathwise finite-window comparison.}
For every realized batch sequence and $n\leq\tau$,
$\|x_n-y_n\|_2\leq E_n$. Define
\[
 T_- =\inf\{n\geq1:\|y_n\|_2+E_n\geq R\},\qquad
 T_+ =\inf\{n\geq1:\|y_n\|_2-E_n\geq R\}.
\]
Then $T_-\leq\tau\leq T_+$, with infinity allowed. Consequently,
$\Pr(T_+\leq t)\leq\Pr(\tau\leq t)\leq\Pr(T_-\leq t)$ and
$\mathbb E[T_-\wedge J]\leq\mathbb E[\tau\wedge J]
\leq\mathbb E[T_+\wedge J]$ for every finite $J$.
No stability or independent sampling assumption is needed for these
finite-window statements; useful widths are a separate issue.

\paragraph{Proof.}
Subtracting the recursions gives the exact variation-of-constants sum
$x_n-y_n=-\sum_{j=1}^n K_{n,j}r_{B_j}(x_{j-1})$.
For $n\leq\tau$, each input $x_{j-1}$ is inside the ball; induction bounds
its norm by $\min\{R,\|y_{j-1}\|_2+E_{j-1}\}$ and proves the tube,
including the update causing exit. If $\tau<T_-$, the triangle inequality
contradicts exit at $\tau$. If $T_+<\tau$, the reverse triangle inequality
contradicts survival at $T_+$. This proves the ordering and its bounded
expectation consequences. The products depend on later batches, so this
proof does not treat them as predictable martingale weights.

\paragraph{Computable logistic constants and the common-curvature control.}
For ridge logistic gradients
$g_i(w)=(\sigma(a_i^Tw)-b_i)a_i+\lambda w$, Taylor's theorem and
$\sup_z|\sigma''(z)|=1/(6\sqrt3)$ give
\[
 \eta_B(r)=\frac{r^2}{12\sqrt3|B|}\sum_{i\in B}\|a_i\|_2^3.
\]
If the reference instead uses the common matrix $\bar H$, the same
comparison applies with
$\eta_B^{\rm common}(r)=\|H_B-\bar H\|_2r+\eta_B(r)$.
Thus retaining batch curvature removes the first-order heterogeneity
term from this particular remainder bound. It does not imply that every
heterogeneous-reference interval is narrower: the propagators also change.
The matrices need not commute. For history-dependent sampling the reference
is a coupled comparison on actual realized batches, not an autonomous
simulator with independently generated histories. IID sampling and fresh
epoch permutations can both be generated in advance and replayed exactly.

\paragraph{Scope.}
The event here is exit from a full Euclidean ball in the controlled
finite-pool model, distinct from the historical projected probe event.
The construction uses standard perturbation analysis; its research value
must be established by useful computable widths beyond common-curvature
comparisons. Monte Carlo summaries of these pathwise brackets are not
population confidence bounds. Full MFPT inference still requires separately
justified tail control; numerical matrix norms in ordinary floating point
are not outward-rounded certificates (the fresh-pool brackets are
certified separately in Appendix~\ref{sec:fresh-pool}).

\paragraph{Initialization scope of the curvature experiments.}
The historical logistic pools used for the curvature and short-memory
comparisons were recentered by a ridge-logistic Newton solve and started
with zero momentum. They are not the unrecentered stochastic-training
checkpoints from which those pools originally arose. New continuation
streams on these pools establish conditional replication, not validation
across newly trained runs or nonzero training velocities. Appendix~\ref{sec:fresh-pool} separately evaluates newly trained complete
states on fresh pools.

\paragraph{Finite-pool numerical validation.}
After a separate 64-step pilot, we fixed eight other historical logistic
pools, two samplers (iid and fresh epoch reshuffling), 16 driver sequences
per pool and sampler, three previously specified radii, and a 200-update
window. There are 256 driver sequences and 48 dependent radius cells,
not 48 independent training runs. Both references use identical batches,
initial states, Euclidean-ball events, and adaptive remainder bounds.
The common-Hessian control uses the extra curvature-mismatch term above.
No observed trajectory residual is used to construct either bound.
Across all cells the median absolute RMST-width reduction is 21.781 updates. Among the 45 cells with a nonzero common-reference width, the median relative reduction is 84.5\%. There are 45 narrower, 3 tied, and 0 wider cells. Mean widths across cells are 35.306 updates for the common reference and 7.810 for the batch reference. Mean paired absolute errors of the capped exit-time point predictions are respectively 0.474 and 0.001 updates. These summaries concern numerical enclosure sharpness on fixed
pools, not confidence coverage across training runs or a complete exit mean.
The saved per-cell Monte Carlo standard errors quantify the attained
conditional precision. No stopped tube check failed at the stated
floating-point tolerance. The comparison establishes neither a universal
advantage over other probabilistic bounds nor a high-dimensional neural
implementation.

\subsection{Reading a precision guarantee from an exit bracket}
\label{sec:reference-precision-gate}
The existing bracket has a direct reference-only precision interpretation.
Let $\phi(t)=\min(t,J+1)$, including $\phi(\infty)=J+1$, and put
$\underline T=\phi(T_-)$, $\overline T=\phi(T_+)$ and
$\hat\tau=\phi(\tau_y)$, where $\tau_y$ is the integer exit time of the
reference. Since $\underline T\leq\hat\tau\leq\overline T$ and
$\underline T\leq\phi(\tau)\leq\overline T$,
\[
 |\phi(\tau)-\hat\tau|\leq D:=\max\{\hat\tau-\underline T,\overline T-\hat\tau\},\qquad
 (\overline T-\underline T)/2\leq D\leq\overline T-\underline T.
\]
Thus $D\leq\delta$ is equivalent to
$[\underline T,\overline T]\subseteq[\hat\tau-\delta,\hat\tau+\delta]$. This is a restatement of the
bracket, not an additional approximation theorem or a cheaper screening
method. It uses the reference and regional envelope without evaluating
the actual path, but still requires their full construction and the
realized batches. Under history-dependent sampling, obtaining those
batches can itself require the actual trajectory.
For $\delta=0$, $\hat\tau=J+1$ guarantees survival through $J$ only; for
positive tolerance near $J$, exit status need not be resolved.

An exploratory application to the same saved trained-state runs yields
zero-tolerance acceptance fractions 0.8132 and 0.4518 for the batch and
common references. The corresponding observed exact-code agreement
fractions are 0.9935 and 0.9635. Fractions weight runs equally.
Every accepted saved case satisfies its error requirement, as expected
from bracket containment; this is not an independent coverage experiment.
Unresolved cases need not have inaccurate predictions. All tolerances
$0,1,5,20$ and both initializations are retained in the saved results.
Ordinary floating-point envelope calculations do not constitute
outward-rounded numerical certificates.

\subsection{A reference-only radius--time ambiguity map}
\label{sec:radius-window-map}
\paragraph{A radius--time view of ambiguity.}
The same tube can expose sensitivity to the chosen event without
rerunning the actual process. Suppose it is valid through exit from a
maximal ball of radius $R_*$, with both positions initially at its
center. For $n\le J$, define
\begin{align*}
 a_n&=\max\{0,\max_{k\le n}(\|y_k\|-E_k)\},\\
 b_n&=\max_{k\le n}(\|y_k\|+E_k).
\end{align*}
For every $0<R\le R_*$ the smaller-ball exit satisfies
\[
 \mathbf1_{\{a_n\ge R\}}\le
 \mathbf1_{\{\tau(R)\le n\}}\le
 \mathbf1_{\{b_n\ge R\}}.
\]
For the lower inequality, survival in the smaller ball would keep the
tube valid at all earlier updates, contradicting $a_n\ge R$.
For the upper inequality, use the tube at the actual exit-causing
update. No control of the actual path after exit is needed. The
uncertain radii lie in $a_n<R\le b_n$; over a specified range
$[r_0,R_*]$ their total length is
$[\min(b_n,R_*)-\max(a_n,r_0)]_+$. This separates a geometric source of
exit uncertainty from the magnitude of the process error. It also
explains why changing a radius after observing outcomes would change
the question rather than validate the original prediction. We keep
radii fixed for the confirmatory comparisons.

This elementary interval identity describes boundary sensitivity; it
is not a new exit comparison theorem. Averaging the indicators over
saved reference paths gives the empirical CDF bracket gap $q_n(R)$.
Sweeping their endpoints computes the radius fraction for which
$\max_{1\leq n\leq J}q_n(R)$ does not exceed a chosen descriptive threshold.
Zero empirical ambiguity does not establish zero population error.
The use of a maximal-radius envelope may be conservative for smaller
balls, and the map still requires the full reference/envelope cost.
It changes neither the originally specified event nor its clock.

This post-hoc diagnostic is applied to every saved run, initialization
and sampling law. The analyzed radius interval is the original smallest
to largest radius for that momentum setting; the envelope is always
the one computed for the largest radius. With the trained full initial
state, the mean radius fraction having zero empirical ambiguity at
every one of the 200 observation times is
0.3370 for batch curvature and 0.0374 for common curvature.
The corresponding fractions allowing at most one of eight paths
to remain ambiguous are
0.7376 and 0.2482.
These are equal-run, equal-law descriptive averages, not population
precision guarantees. The batch zero-ambiguity fraction improves in
28 runs and ties in 4; its run means range from 0.0000 to 0.8193.
In particular, useful simultaneous regions are absent in some runs.
All stationary-initialization results are retained in the reproducibility
output. This reuses the existing experiment and brackets; it is neither
an independent confirmation nor an additional methodological contribution.

\paragraph{Where the enclosure loses precision.}
A post-hoc audit of the same 256 historical-pool drivers separates
propagated curvature and nonlinear terms using the identical adaptive
radii in the common-curvature bound. At the actual exit capped at 200,
the median curvature share of that bound is 0.7465. This is an additive
accounting identity for the bound, not a causal attribution of actual
trajectory error; removing a term changes the adaptive radii.
For diagnosis only, replace the envelope at each update by the observed
norm of the actual-reference error along the saved full trajectory.
The resulting average capped exit-bracket widths are 3.4753 updates
for the common-curvature reference and 0.1914 for the sample-curvature
reference, compared with computable widths 35.3060 and 7.8099.
These oracle brackets use the target trajectories and are not prediction
methods or uniform residual estimates. Each mean averages three radii
within each driver, then the 256 drivers; the radii do not constitute
independent observations. The audit motivates sharper region-wide
nonlinear constants rather than treating more simulation as a remedy
for loose deterministic envelopes.

\paragraph{A region-dependent logistic remainder.}\label{sec:local-logistic}
For $z_i=a_i^Tw_*$ and $s_i=\|a_i\|_2$, define
\[
 m_i(r)=\sup_{z\in[z_i-s_ir,z_i+s_ir]}|\sigma''(z)|,
 \qquad \eta_B^{\rm loc}(r)=\frac{r^2}{2|B|}\sum_{i\in B}s_i^3m_i(r).
\]
This is a computable nondecreasing whole-ball remainder bound, and
$\eta_B^{\rm loc}(r)\leq r^2\sum_{i\in B}s_i^3/(12\sqrt3|B|)$.
To compute $m_i$, evaluate $|\sigma''|$ at the interval endpoints and
at each of $\pm\log(2+\sqrt3)$ lying in the interval; the latter values
are $1/(6\sqrt3)$. These are the only nonzero interior extrema:
writing $p=\sigma(z)$ gives $\sigma''(z)=p(1-p)(1-2p)$ and stationary
values at $p=(3\pm\sqrt3)/6$. The remaining cusp of the absolute value
at $z=0$ is a minimum. For every $\|x\|\leq r$, Taylor's integral
remainder along $w_*+t x$ has gradient norm at most
$s_i^3m_i(r)r^2/2$. Averaging proves the bound; the ridge term has no
Taylor remainder. Interval inclusion proves monotonicity of $m_i$.
This is a refinement of a standard Taylor bound, not a new
concentration or approximation theorem.

Replacing the global remainder in the same recursive envelope by
$\eta_B^{\rm loc}$ cannot increase that envelope: induction uses its
monotonicity, its pointwise domination by the global bound, and the
nonnegative impulse-response norms. This applies also to the
common-curvature reference when its unchanged linear mismatch term is
added to both remainder functions. The corresponding capped exit
brackets are therefore nested in exact arithmetic. The reference,
initialization, sampling law, clock and boundary are unchanged.
\paragraph{Saved-driver evaluation.}
On all 256 drivers and 48 settings of the historical-pool validation,
the region-dependent bound reduces the mean sample-curvature bracket
width from 7.8099 to 4.2786 updates (45.2\%); 31 settings improve and 17 tie.
For the common-curvature reference, the corresponding widths are
35.3060 and 30.4010, with 35 improvements and 13 ties. No setting worsens.
This is a post-hoc paired analysis, not an independent confirmation.
Means average the three radii within each driver before averaging drivers.
The two envelope loops took 14.947 seconds (local) and 0.943 seconds
(global), a 15.85-fold cost ratio; these single-pass timings alternate
execution order and exclude the shared reference and propagator computation.
They are not an end-to-end benchmark or a speedup over direct simulation.
The maximum observed stopped-tube discrepancy was below $1.4\times10^{-17}$.
All calculations use ordinary floating point rather than outward rounding.

\paragraph{Precomputing the ball constant.}
For each prescribed exit radius $R$, define
\[
 \eta_B^{\rm pre}(r)=\frac{r^2}{2|B|}\sum_{i\in B}\|a_i\|^3m_i(R),
 \qquad 0\leq r\leq R.
\]
Here $m_i$ is the interval maximum defined above. Interval inclusion yields
$\eta_B^{\rm loc}(r)\leq\eta_B^{\rm pre}(r)\leq\eta_B^{\rm global}(r)$.
All three are nondecreasing, so the same induction nests their
adaptive envelopes and their capped exit brackets. Precomputation
therefore trades width for fewer sigmoid evaluations without changing
the reference process or event. This is an implementation refinement
of the existing Taylor bound.
On the same saved-driver grid, the precomputed sample-curvature
bracket has mean width 4.7487 updates, retaining 86.7\% of the width
reduction from the adaptive local bound. For the common-curvature
reference these quantities are 30.5794 and 96.4\%. Fixed-coefficient
construction and envelope propagation together took 3.723 seconds,
versus 0.687 for the global-envelope loop in this run. These timings
exclude reference and propagator construction, and the adaptive timing
from the preceding run is not a matched repeated benchmark. All results
remain post-hoc analyses of the same historical drivers. We retain these
two implementations as width--cost choices, without treating successive
constant refinements as separate scientific contributions.

\section{Computable exit exclusion from the sampling schedule}
\label{sec:sampler-reach}
Random reshuffling constrains the support of a whole epoch, not merely
the conditional variance of its next update. We exploit this constraint
to compute the largest reachable position of a scalar affine momentum
reference, including observations inside epochs. This is a support
calculation using standard linear reachability and the rearrangement
inequality, rather than a new concentration inequality.

Let $a_1,\ldots,a_N\in\mathbb R$, $\sum_i a_i=0$, and consider
$g_i(x)=\lambda x-a_i$, $\lambda>0$, with batch size one. From
$x_0=u_0=0$, use $u_{n+1}=\beta u_n-g_{i_{n+1}}(x_n)$ and
$x_{n+1}=x_n+hu_{n+1}$. The clock is the integer update count,
$0\le\beta<1$, $0<h\lambda<2(1+\beta)$, and every epoch uses an
independent uniform permutation of all $N$ indices, with no dropped
samples. Let $e=(1,0)$ and
\[
 A=\begin{pmatrix}1-h\lambda&h\beta\\-\lambda&\beta\end{pmatrix},
 \quad D=\binom h1,\quad C=A^N,\quad
 b_\pi=\sum_{j=1}^N A^{N-j}D a_{\pi_j},\quad
 d_{\pi,l}=\sum_{j=1}^l A^{l-j}D a_{\pi_j}.
\]
The displayed stability range is the Schur range of $A$.
For a row $r$, define $H(r)=\max_\pi rb_\pi$ and, for
$s\in\{-1,1\}$, $P_{s,l}=\max_\pi se d_{\pi,l}$.
For $l=0$, take $d_{\pi,0}=0$. All maxima are over permutations,
including repeated-valued atoms with their multiplicities.

\begin{theorem}[Phase-dependent reachable radius]\label{thm:reach}
The supremum of $|x_n|$ over every finite time and every allowed
permutation sequence equals
\begin{equation}\label{eq:reach-radius}
 R_{\rm RR}=\max_{\substack{s\in\{-1,1\}\\0\le l<N}}
 \left\{P_{s,l}+\sum_{k=0}^\infty H(seA^lC^k)\right\}.
\end{equation}
Each summand is nonnegative. Each permutation maximum is computable
by sorting: if $w_j=rA^{N-j}D$, then
$H(r)=\sum_j w_{(j)}a_{(j)}$, with both sequences sorted increasingly.
For $P_{s,l}$, sort the $N$ weights consisting of
$seA^{l-j}D$, $1\le j\le l$, and $N-l$ zeros.
If $R>R_{\rm RR}$, the integer exit time from $(-R,R)$ is infinite
on every path. If $0<R<R_{\rm RR}$, its complete mean is finite.
Thus the same computable radius separates finite and infinite complete
means away from equality. No conclusion at equality is asserted.
\end{theorem}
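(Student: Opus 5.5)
The plan is to write the state linearly. With $X_n=(x_n,u_n)^\top$, the recursion is $X_{n+1}=AX_n+Da_{i_{n+1}}$ with $X_0=0$. At epoch boundaries we get $X_{kN}=\sum_{m=0}^{k-1}C^{k-1-m}b_{\pi^{(m+1)}}$, and inside the current epoch $X_{kN+l}=A^lX_{kN}+d_{\pi^{(k+1)},l}$. Hence, for a sign $s$ and phase $l$,
\[
 se X_{kN+l}=se\,d_{\pi^{(k+1)},l}+\sum_{m=0}^{k-1}seA^lC^{m}b_{\pi^{(k-m)}}.
\]
The permutations of distinct epochs, together with the partial current epoch, can be chosen independently. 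So the supremum of $seX_{kN+l}$ over permutation sequences is exactly $P_{s,l}+\sum_{m=0}^{k-1}H(seA^lC^m)$.

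Nonnegativity of each summand follows from $\sum_ia_i=0$. Averaging $rb_\pi$ over uniform $\pi$ gives $r(\sum_jA^{N-j}D)\bar a=0$, so the maximum satisfies $H(r)\ge0$; the same argument gives $P_{s,l}\ge0$. The partial sums are therefore nondecreasing in $k$. Maximizing over $s,l$ and $k$, using $|x_n|=\max_s se X_n$, gives the supremum as the limit \eqref{eq:reach-radius}.

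Convergence follows from Schur stability: $\rho(A)<1$ in the stated range, so $\|C^k\|$ decays geometrically and $H(r)\le\|r\|\max_\pi\|b_\pi\|$. The sorting formulas are the rearrangement inequality applied to $\sum_j w_ja_{\pi_j}$. For $P_{s,l}$ one pads with $N-l$ zero weights, which represent the unused entries of the partial epoch; their multiplicities are handled automatically.

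For $R>R_{\rm RR}$, every position satisfies $|x_n|\le R_{\rm RR}<R$, so no path ever exits. For $R<R_{\rm RR}$, choose $s,l,k$ with partial value exceeding $R$, and let the optimizing permutations be $\pi^{(1)},\dots,\pi^{(k+1)}$. The main obstacle is that the process does not restart from zero after a failed attempt. I would handle it by linearity and stability. From an arbitrary state $X$ at an epoch boundary, after $M$ further epochs the state is $C^MX$ plus the zero-start contribution. While the process survives, $\|X\|$ is bounded: $|x|<R$, and $u$ is bounded via the stable recursion. So for $M$ large, $\|seA^lC^{M}X\|$ is below the slack (partial value minus $R$).

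Choose $M\ge k$ and the final $k+1$ epochs equal to the optimizers; the earlier $M-k$ epochs contribute nonnegatively if they too are chosen as the maximizers $H(seA^lC^m)$. This specific block of $M+1$ epoch permutations has probability $(N!)^{-(M+1)}>0$ from any surviving boundary state, and it forces exit by its end (or earlier). A uniform geometric tail over blocks of length $(M+1)N$ then gives a finite complete mean.

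The delicate point is making the bound on surviving states uniform, including the velocity. For that I would bound $\|X_n\|$ globally by $\sum_k\|A^k\|\|D\|\max|a_i|$, independent of exit, which suffices.
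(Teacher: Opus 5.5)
Your proposal is correct and follows essentially the same route as the paper. Both arguments unroll the recursion across epochs and maximize each epoch's permutation independently; both get nonnegativity from $\sum_i a_i=0$ and the sorting formulas from the rearrangement inequality. Both then force exit with a block of $(M+1)N$ epoch-aligned updates, whose probability is at least $(N!)^{-(M+1)}$, using stability to make $seA^lC^MX$ smaller than the slack. The one variation is that you bound block-start states by the global a priori bound $\sum_k\|A^k\|\|D\|\max_i|a_i|$, whereas the paper uses the survival-based bounds $|x|<R$ and $|u|<2R/h$; both bounds are valid.
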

\begin{proof}
At phase $l$ of epoch $m$, unrolling gives
\[
 x_{mN+l}=eA^l\sum_{k=0}^{m-1}C^k b_{\pi_{m-1-k}}
                 +ed_{\pi_m,l}.
\]
For fixed $m,l,s$, permutations in separate epochs can be chosen
independently, so the maximum of $sx_{mN+l}$ is exactly
$P_{s,l}+\sum_{k<m}H(seA^lC^k)$.
The average of each epoch vector and each prefix vector over uniform
permutations is zero because $\sum_i a_i=0$. Their support maxima
are therefore nonnegative. Stability implies absolute summability,
so taking the supremum over $m$ proves \eqref{eq:reach-radius}.
An exchange of two inverted atom assignments increases their weighted
sum by $(w_j-w_i)(a_{\pi_i}-a_{\pi_j})\ge0$, proving the sorting
formula. To prove the finite-mean assertion, fix a phase and sign with
limiting support larger than $R$, and choose $\epsilon>0$ smaller than
one third of this gap. At every surviving epoch boundary,
$|x|<R$ and $|u|<2R/h$; the latter follows from two consecutive
surviving positions, with $u_0=0$ handled separately. Stability and
monotone convergence of the support sums allow an integer $m$ for which
the zero-state maximum at $mN+l$ exceeds $R+2\epsilon$, while
\[
 |eA^l C^m (x,u)^\top|\le\epsilon
 \quad (|x|\le R,\ |u|\le2R/h).
\]
Choose permutations realizing that maximum. From every surviving
epoch-boundary state their endpoint exceeds $R$ in the selected
direction, unless exit has occurred earlier. This word has conditional
probability at least $p=(N!)^{-(m+1)}$. Repeating blocks of
$L=(m+1)N$ updates, aligned to epochs, gives
$\Pp(\tau>kL)\le(1-p)^k$ and $\E\tau\le L/p$.
The bound is finite, although it can be extremely loose.
The opposite strict inequality excludes every exit, including
boundary equality at any finite observation.
\end{proof}

\paragraph{A finite, verifiable truncation.}
Set $B_* = \max_\pi\|b_\pi\|_\infty$. This requires only four
sorting calculations, one for each signed coordinate.
Choose an integer $q$ such that $\theta=\|C^q\|_\infty<1$.
The omitted sum after $K$ terms is at most
\begin{equation}\label{eq:reach-tail}
 B_*\|seA^lC^K\|_1
 \frac{\sum_{j=0}^{q-1}\|C^j\|_\infty}{1-\theta}.
\end{equation}
Indeed, write $k=K+tq+j$ and apply the induced-norm inequality.
Schur stability guarantees existence of $q$. Partial sums give lower
bounds. Thus the support threshold can be bracketed to arbitrary
accuracy without enumerating $N!$ permutations. Rational inputs allow
all operations in this bracket to use rational arithmetic.

\begin{theorem}[Nonlinear exit exclusion and finite-window comparison]
\label{thm:reach-nonlinear}
Replace the gradients by $g_i(x)=\lambda x-a_i+r_i(x)$, where
$|r_i(x)|\le\eta_R$ for every $i$ and $|x|<R$. Use the identical
initial state and index sequence in the affine reference $y_n$.
For $\tau_x=\inf\{n\ge1:|x_n|\ge R\}$, put
\[
 E_n=\eta_R\sum_{j=0}^{n-1}|eA^jD|,
 \qquad G=\sum_{j=0}^\infty|eA^jD|<\infty.
\]
Then $|x_n-y_n|\le E_n$ for all $n\le\tau_x$.
For any finite $J$, if
$\tau_\pm=\inf\{n\ge1:|y_n|\ge R\pm E_n\}$, then
\[
 \tau_-\wedge J\le\tau_x\wedge J\le\tau_+\wedge J.
\]
Consequently the corresponding CDF and RMST bounds hold with no
exceptional-event penalty. In particular, under reshuffling,
$R_{\rm RR}+\eta_R G<R$ implies $\tau_x=\infty$ on every path.
The finite-window statement holds for any shared sampling sequence;
only the support radius uses the reshuffling law.
\end{theorem}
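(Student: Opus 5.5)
We prove the tube bound first and derive the other claims from it. Subtracting the affine reference from the actual recursion with the same index sequence gives, for the error state $(x_n-y_n,u_n-v_n)$, the linear recursion driven by the remainder: $(x_n-y_n,u_n-v_n)^\top=A(x_{n-1}-y_{n-1},u_{n-1}-v_{n-1})^\top-Dr_{i_n}(x_{n-1})$. This holds because both processes share the curvature $\lambda$, so the matrix $A$ is common and no index-dependent propagator appears. The initial states coincide, so variation of constants gives
\[
x_n-y_n=-\sum_{j=1}^{n}eA^{n-j}Dr_{i_j}(x_{j-1}).
\]
This is the scalar, common-curvature case of the identity in Appendix~\ref{sec:heterogeneous}. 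For $n\le\tau_x$, every input $x_{j-1}$ with $j\le n$ satisfies $|x_{j-1}|<R$. This includes $j=1$ since $|x_0|=0<R$, and it holds because exit is first detected at $\tau_x$. Hence $|r_{i_j}(x_{j-1})|\le\eta_R$, and reindexing $k=n-j$ gives $|x_n-y_n|\le\eta_R\sum_{k=0}^{n-1}|eA^kD|=E_n$. No induction on adaptive radii is needed, because the remainder bound is uniform on the open interval. Finiteness of $G$ follows from Schur stability of $A$ in the stated range $0<h\lambda<2(1+\beta)$, since then $\|A^k\|$ decays geometrically.

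For the sandwich, we argue pathwise as in Appendix~\ref{sec:heterogeneous}.

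\emph{First inequality.} Suppose $\tau_x<\tau_-$ with $\tau_x$ finite. At $n=\tau_x$ the tube gives $|y_n|\ge|x_n|-E_n\ge R-E_n$. This means the condition defining $\tau_-$ holds at $n$, contradicting $\tau_x<\tau_-$. So $\tau_-\le\tau_x$, and capping at $J$ preserves the order.

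\emph{Second inequality.} Suppose $\tau_+<\tau_x$ with $\tau_+$ finite. At $n=\tau_+\le\tau_x$ the tube is valid, so $|x_n|\ge|y_n|-E_n\ge R$. Thus $x$ has exited by $n$, contradicting $\tau_+<\tau_x$. Hence $\tau_x\le\tau_+$, again with infinity allowed, and capping at $J$ is monotone.

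The CDF and RMST bounds follow by taking probabilities and expectations of these pathwise inequalities. No exceptional event arises because the tube is deterministic, given the shared sequence. The sampling law enters nowhere in this step.

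For the reshuffling exclusion, we argue by contradiction. Assume $R_{\rm RR}+\eta_RG<R$ and suppose $\tau_x=n<\infty$ on some path. The tube holds at $n$, so $|x_n|\le|y_n|+E_n$. Theorem~\ref{thm:reach}, applied to the affine reference under every allowed permutation sequence from the zero state, gives $|y_n|\le R_{\rm RR}$. Also $E_n\le\eta_RG$. Together these give $|x_n|<R$, contradicting exit at $n$.

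The main care point is bookkeeping of indices: we must check that the tube is applied at the exit-causing update, whose input $x_{n-1}$ is interior, and not beyond it. We must also check that the reference starts from the same zero state assumed in Theorem~\ref{thm:reach}. If a nonzero shared initial state were allowed, the reachability radius would need the homogeneous term added. We therefore restrict the exclusion claim to the stated zero initialization.
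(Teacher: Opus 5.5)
Your proposal is correct and follows the paper's proof essentially step by step. The steps are the variation-of-constants identity with the common matrix $A$, the uniform remainder bound applied only to the surviving inputs $x_0,\dots,x_{\tau_x-1}$, the two pathwise contradictions at $\tau_x$ and at $\tau_+$, and the contradiction $|x_{\tau_x}|\le R_{\mathrm{RR}}+\eta_R G<R$. The only cosmetic difference is that the paper writes the error with a stopped forcing term $\mathbf{1}_{\{j<\tau_x\}}$, so that its auxiliary sum is defined for every $n$. You restrict directly to $n\le\tau_x$, and that suffices for every claim in the statement.
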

\begin{proof}
Subtract the two recursions and stop the forcing before exit:
$z_n=-\sum_{j=0}^{n-1}A^{n-1-j}D
\mathbf1_{\{j<\tau_x\}}r_{i_{j+1}}(x_j)$.
Induction gives $ez_n=x_n-y_n$ through and including $n=\tau_x$,
because the exit-causing update evaluates its gradient at a surviving
position. The convolution bound gives $E_n$. At the first actual
exit the reference has reached the eroded boundary; at an expanded
reference exit before actual exit the actual position must already
be outside, a contradiction. This proves the capped ordering even
if the eroded boundary is nonpositive. If the all-time strict
inequality holds, a first actual exit would satisfy
$|x_{\tau_x}|\le R_{\rm RR}+\eta_R G<R$, again a contradiction.
\end{proof}

\paragraph{A nonlinear finite-data separation.}
Take two nonnegative strongly convex losses
\[
 \ell_\pm(x)=\tfrac12(x\mp1)^2+\tfrac\kappa4x^4,
 \quad h=\tfrac1{10},\quad\beta=\tfrac12,
 \quad R=\tfrac12,\quad 0\le\kappa\le1.
\]
The exact-arithmetic calculation accompanying this section establishes
$R_{\rm RR}<0.178$ and $G<1.002$. Since
$\eta_R=\kappa/8$, the reshuffled iterates never exit for this entire
nonlinear family. The model has a restoring force and a nonconstant
strongly convex empirical objective; the preceding flat-loss example
does not provide this conclusion.

For iid sampling from the very same two losses, the complete mean
is finite. On a surviving history, $|u_n|<2R/h=10$ (and $u_0=0$).
If the next $M$ atoms all have sign $+$, their negative gradient is
at least $\delta=1-R-\kappa R^3\ge3/8$ until exit. Unless exit
has already occurred, the displacement after $M$ steps is at least
\[
 \frac{h\delta M}{1-\beta}
 -\frac{h\beta(1-\beta^M)}{1-\beta}
       \left(10+\frac\delta{1-\beta}\right).
\]
For $M=28$ and $\delta=3/8$ this exceeds $2R=1$.
Thus every surviving history has conditional exit probability at
least $2^{-28}$ in the next 28 updates, and
$\E\tau_{\rm iid}\le28\,2^{28}<\infty$.
This loose upper bound establishes integrability, not an accurate
mean prediction. Both algorithms start with the same complete state,
use the same integer exit event, and have the same finite pool.
Their conditional gradient laws differ; no covariance matching is claimed.

\paragraph{What the computation adds.}
The support calculation preserves epoch phase and every within-epoch
observation; observing only epoch endpoints can miss the largest
excursion. The residual term then gives both finite-window comparisons
and a test for when a proposed complete-mean model is inappropriate.
The algorithm is restricted to a common scalar affine curvature, with
uniformly bounded local residuals. General high-dimensional Hessian
heterogeneity and neural-network constants require additional work.
Linear invariant-set sums are classical \cite{rakovic2005}; shuffling
effects on momentum variance are also established \cite{domingo2022}.
The research target here is computable first-exit exclusion and its
nonlinear robustness, rather than the discovery that shuffling changes
optimization dynamics.

\paragraph{Exploratory implementation check.}
The frozen correctness pilot uses 256 paths per sampler and nonlinearity,
with the same driver reused across nonlinearities and radii. At $R=1/2$
and horizon 200, the following are empirical capped means and pathwise
reference-bracket averages, not confidence intervals or complete means
(Table~\ref{tab:reach-pilot}). All 18 sampler--nonlinearity--radius cells
and raw paths are retained.
\begin{table}[h]
\centering
\caption{Exploratory implementation check at $R=1/2$ and horizon 200:
empirical capped means and average pathwise reference brackets (256
paths per sampler and nonlinearity).}
\label{tab:reach-pilot}
\small
\begin{tabular}{@{}lrrrrr@{}}
\toprule
 & & & & \multicolumn{2}{c}{Mean bracket}\\
\cmidrule(l){5-6}
Sampler & $\kappa$ & Exits & RMST & lower & upper\\
\midrule
iid & 0 & 256 & 22.656 & 22.656 & 22.656\\
 & 0.1 & 256 & 22.848 & 20.469 & 24.273\\
 & 1 & 256 & 24.316 & 13.480 & 51.172\\
\addlinespace[2pt]
RR & 0 & 0 & 200.000 & 200.000 & 200.000\\
 & 0.1 & 0 & 200.000 & 200.000 & 200.000\\
 & 1 & 0 & 200.000 & 200.000 & 200.000\\
\bottomrule
\end{tabular}
\end{table}
The stronger nonlinear iid case has a wide bracket, illustrating that
valid coupling need not give precise prediction. The reshuffling
infinite-mean conclusion follows from the proof, not from zero observed
exits. Exact rational support calculations and a separate independent
AI proof review accompany the source; neither is formal verification.

\section{Fresh-pool validation from complete training states}
\label{sec:fresh-pool}
We froze a separate study of 32 new ridge-logistic training runs,
with eight runs per dataset--momentum stratum (MNIST/Fashion-MNIST,
$\beta=0.5/0.9$). Each run independently samples a 2,048-example
pool from the fixed empirical training dataset and trains for two
reshuffled epochs. Features are fixed normalized 17-dimensional pooled
images with an intercept. We use $h=0.2$, batch size 64, ridge 0.1,
and the three previously fixed radii for each momentum. Dataset examples
can be reused across runs and historical studies; the independence is in
the pool and algorithm random streams, conditional on the datasets.

Continuations use the original training pool and match the full trained
position and velocity. A second initialization uses the same pool's
Newton stationary point with zero velocity. Each initialization uses
eight paths per sampling law (iid or fresh-epoch reshuffling), sharing
batch sequences across the two initializations: 1,024 actual trajectories
but 512 distinct driving streams. The event is the first integer-time
exit from a full parameter ball, with equality counted as exit, and
all times are capped at 200. Both references use the precomputed
whole-ball remainder, and neither is selected after seeing outcomes.

The primary endpoint is run-level success from the trained state:
after averaging paths, samplers and radii within a run, the sample-curvature
bracket width is at most 20 updates and improves on the common-curvature
width by at least five updates. This evaluates the probability of success
of the fixed finite-path protocol, rather than the proportion of pools
whose exact conditional expected widths meet the thresholds.
Of 32 runs, 29 succeed. The one-sided 95\% Hoeffding lower bound is
0.6899, exceeding the prespecified one-half threshold. No runs were added
or removed. Two unsuccessful runs miss the five-update gain threshold;
one misses the 20-update width threshold.

Table~\ref{tab:trained-comparison} in the main text reports the width and point-error comparison.

These are means of run-level averages in updates. The stationary
initialization has 27 successful runs as a descriptive secondary result.
Both point predictors are already accurate; the main observed benefit
is sharper model-comparison brackets. The two starts change both position
and velocity, so their contrast does not isolate an initial-velocity effect.
The primary result is not a complete-MFPT claim or a guarantee for ReLU
networks. The archived brackets were evaluated in float64; the
recomputation below certifies them.
The sequential CPU run computations took 18.98 minutes, with maximum
per-process peak working set 188.2 MiB; these measurements exclude final
independent audit and aggregation. No GPU was used.

\paragraph{Certified recomputation.}
After review we recomputed every archived bracket (32 runs, 6,144
brackets) from the stored inputs (pool, center and velocity, realized
batch indices, radii, $h$, $\beta$, ridge) with bounds on every rounding
error. The sigmoid terms and the regional constants $m_i(R)$ are enclosed by
interval arithmetic at 80 bits. Each floating-point sum and matrix product
carries an a priori error bound. Propagator norms are certified upper
bounds: for a computed matrix $M$, $\|M\|_2\le\sqrt t$ once
$V^\top(tI-M^\top M)V$, with $V$ the computed eigenvectors, is shown
positive definite by Gershgorin's theorem on its interval enclosure.
The error of the floating-point state transitions, which enter every
propagator, is bounded by a discrete Gronwall argument, and the exact
affine reference is enclosed around its floating-point evaluation by the
same propagators. The certified envelopes exceed the archived ones by at
most a relative $3.1\times10^{-9}$, the transition and reference errors
are below $3\times10^{-11}$, and every integer bracket endpoint is
unchanged. The brackets therefore hold for the exact-arithmetic algorithm
with the stored inputs, and so does the primary result of 29 successes
(mean widths 5.112 and 30.857 updates). The recomputation is post hoc;
its code and outputs are in the supplement.

\paragraph{Where the brackets remain informative.}
Post-hoc stratification retains all runs and averages nested observations
within each run. The widest trained-state stratum is Fashion-MNIST with
$\beta=0.9$: mean sample- and common-curvature widths are 13.065 and 70.130
updates. Two primary non-successes have narrow common-curvature brackets
already and miss the absolute improvement target; the third improves
substantially but misses the absolute width target. Thus failure of the
joint practical criterion has two distinct interpretations. Small capped
widths also require care when most paths remain inside at the horizon:
they describe restricted means, not the unobserved exit tail. These
descriptive comparisons neither isolate a momentum effect (radii differ
across momentum settings) nor decompose the causes of enclosure width.

\paragraph{Retaining signed reference mismatch.}
A post-hoc control tests the role of the enclosure construction. Write
$y_n$ for the batch reference and $z_n$ for the common reference, and
set $F_n=\|y_n-z_n\|_2+E_n$, where $E_n$ is the batch-reference
envelope. Triangle inequality makes $F_n$ a valid
error envelope around $z_n$ through actual exit. This uses two reference
paths and sampled curvature, but no observed actual-trajectory residual.
The resulting bracket contains the batch-reference bracket by construction;
their ordering is not an empirical finding. On the same 32 trained-state
runs, mean widths are 30.857 for the old common bound, 7.736 for this
control, and 5.112 for the batch bound. The control closes 89.8\% of
the original mean-width gap. Thus much of that gap can be recovered
by retaining the propagated mismatch vector before taking its norm.
This descriptive calculation distinguishes bound construction from point
prediction accuracy; it is not a causal decomposition, a cheaper
low-information competitor, or a new confirmatory test.

\paragraph{Prediction error versus enclosure width.}
A further post-hoc diagnostic reconstructs integer exit times from all
saved paths, distinguishing exit at $J$ from censoring beyond $J$.
For trained initial states, the batch and common references have mean
paired capped-time errors of 0.094 and 0.544 updates, respectively.
The batch error is smaller in 24 runs, equal in 7, and larger in 1.
Mean cellwise empirical CDF sup differences are 0.0065 and 0.0312;
each cell has only eight paired paths, so these describe empirical
measures rather than precisely estimated population CDF errors.
All averages weight runs equally. Both references are already accurate
on these saved paths: the substantial enclosure-width gap should not
be read as a comparably large point-prediction gain. Observed residual
and boundary-margin diagnostics are retrospective and are not uniform
regional certificates. They do not change the frozen primary endpoint.

\section{Short-memory propagation with an explicit old-error bound}
\label{sec:shortmemory}
Use the batch-dependent matrices $A_n,D,P$ and uniform residual
functions of Section~\ref{sec:heterogeneous}. Fix an invertible matrix
$Q$ and a memory length $L\geq1$, before evaluating the comparison.
Write $\Phi_{n,j}=A_n\cdots A_{j+1}$, with $\Phi_{n,n}=I$, and set
\[
 \rho_n=\|Q A_n Q^{-1}\|_2,\qquad \gamma=\|P Q^{-1}\|_2.
\]
Initialize $Z_0=0$, $\widetilde E_0=0$. Before update $j$ store
\[
 b_j=\eta_{B_j}(\min\{R,\|y_{j-1}\|_2+\widetilde E_{j-1}\}).
\]
Recursively define
\[
 Z_n=\rho_n Z_{n-1}
 +\mathbf1_{\{n>L\}}\|Q\Phi_{n,n-L}D\|_2 b_{n-L},
 \qquad
 \widetilde E_n=\gamma Z_n+
 \sum_{j=\max(1,n-L+1)}^n\|P\Phi_{n,j}D\|_2 b_j.
\]
Then $\|x_n-y_n\|_2\leq\widetilde E_n$ through the exit-causing
update. Hence this envelope can replace $E_n$ in the finite-window
exit comparison. In particular, no discarded impulse is assigned zero
error and no contraction assumption is needed for validity.

\paragraph{Proof.}
Split the exact error sum into inputs $j\leq n-L$ and the last $L$
inputs. In the $Q$ norm, the former sum at time $n$ is bounded by
$\rho_n$ times its preceding bound plus the single impulse with index
$n-L$, which has just left the retained window. Triangle inequality
gives $Z_n$. Multiplying by $P Q^{-1}$ bounds its position component;
the recent terms remain explicit. Induction with the stopped residual
bound proves the assertion, exactly as in the full-memory comparison.
The same argument using nonnegative impulse magnitudes shows
$\widetilde E_n\geq E_n$ when both use exact spectral norms and the
same nondecreasing residual functions. This ordering is a statement
about envelopes, not a guarantee of useful inflation factors.

\paragraph{Cost and metric choice.}
Retaining only $L$ matrix impulses reduces the dense arithmetic from
$O(J^2d^3)$ to $O(JLd^3)$, including a per-step norm of the $2d$-state
matrix. Working impulse storage becomes $O(Ld^2)$; saved output has its
own cost. The old bound can grow when $\rho_n>1$. In the pilot we fix
$Q$ from the common-reference discrete Lyapunov equation
$M-\bar A^T M\bar A=I$, $M=Q^TQ$, whenever $\bar A$ is Schur stable.
This metric choice does not assert common contraction of the realized
batch matrices. Norms and the Lyapunov solve are evaluated in ordinary
floating point, not certified outward rounding.

\section{Stronger comparisons and computational limits}
\label{sec:stronger-comparisons}
\paragraph{A common-curvature probability baseline.}
On the saved iid paths we also evaluate the coordinate Bennett tube,
using five fixed risks $\delta\in\{0.001,0.01,0.05,0.1,0.2\}$.
Each RMST endpoint pays $(J-1)\delta$ before clipping to $[1,J]$.
We intersect the resulting expectation bounds with the deterministic
common-curvature bounds. This is a comparison against this particular
computable probability bound, not all possible concentration methods.
The iid constants are not substituted into reshuffled sampling.
On the 24 iid cells, the probability bounds improve none of the common-reference nominal intervals: average widths remain 39.531 updates, versus 8.172 for batch curvature. Accounting simultaneously for all 336 estimated endpoints gives a Hoeffding margin of 108.462 updates; the mean corrected widths become 167.376 and 153.477, respectively. Thus the small saved sample supports a comparison of numerical
sharpness, not a precise population expectation enclosure. Risk-grid
intersections are taken only after simultaneous MC correction when
reporting confidence bounds.

\paragraph{Separating curvature and nonlinearity.}
We use a synthetic two-dimensional finite pool with gradients
\[
 g_i^{s,\nu}(x)=c_i+[\bar H+s(H_i-\bar H)]x+\nu r_i(x),
\]
where $c_i$ are centered gradients and $r_i$ are logistic Taylor
remainders. The latter need not have zero pool mean. The grid fixes
$s\in\{0,0.5,1,2\}$ and $\nu\in\{0,1,2\}$, two momentum values,
two samplers, three ball radii, and a 200-update horizon. Sixty-four
driver streams are shared across the parameter grid. All 144 cells
are retained; the 48 quadratic cells are algebraic controls.
For the iid stress baseline, whole-ball coordinate mean, variance
and amplitude constants are derived from the Hessian dispersion and
the logistic Taylor bound; the companion proof note gives their formulas.
Among the 96 nonlinear cells, 29 have batch-reference width at most 20 updates, while 44 improve the common-reference intersection by at least 5 updates and 20\%. These are distinct criteria. At zero heterogeneity there is no such improvement. Doubling the
nonlinearity reduces the number of useful enclosures in the tested grid.
These are nominal conditional Monte Carlo summaries of synthetic losses;
they do not establish new training-run generalization. Individual
losses elsewhere in the parameter grid need not be convex.

\paragraph{Short-memory pilot.}
On one fixed saved driver per pool and sampler (16 paired drivers),
we time propagation-coefficient construction and retain all three memory
lengths. The full-memory mean width is 8.50 updates
(Table~\ref{tab:shortmemory-pilot}).
\begin{table}[h]
\centering
\caption{Short-memory pilot: median coefficient speedup and mean bracket
width (updates) for each memory length.}
\label{tab:shortmemory-pilot}
\small
\begin{tabular}{@{}rrr@{}}
\toprule
Memory $L$ & Median speedup & Mean width\\
\midrule
8 & 5.32 & 13.688\\
16 & 4.00 & 11.208\\
32 & 2.48 & 9.062\\
\bottomrule
\end{tabular}
\end{table}
Each timing is a single measurement, not an end-to-end benchmark.
Only 11 of the 48 radius cells have positive full-memory width;
the corresponding median inflation is zero, but this hides 4, 4 and 3
zero-width cells becoming positive at memory 8, 16 and 32. We therefore
report absolute mean widths as well. Repeated timings and a new-driver
validation are needed before selecting a default memory length.

\begin{figure*}[t]
\centering
\includegraphics[width=.95\textwidth]{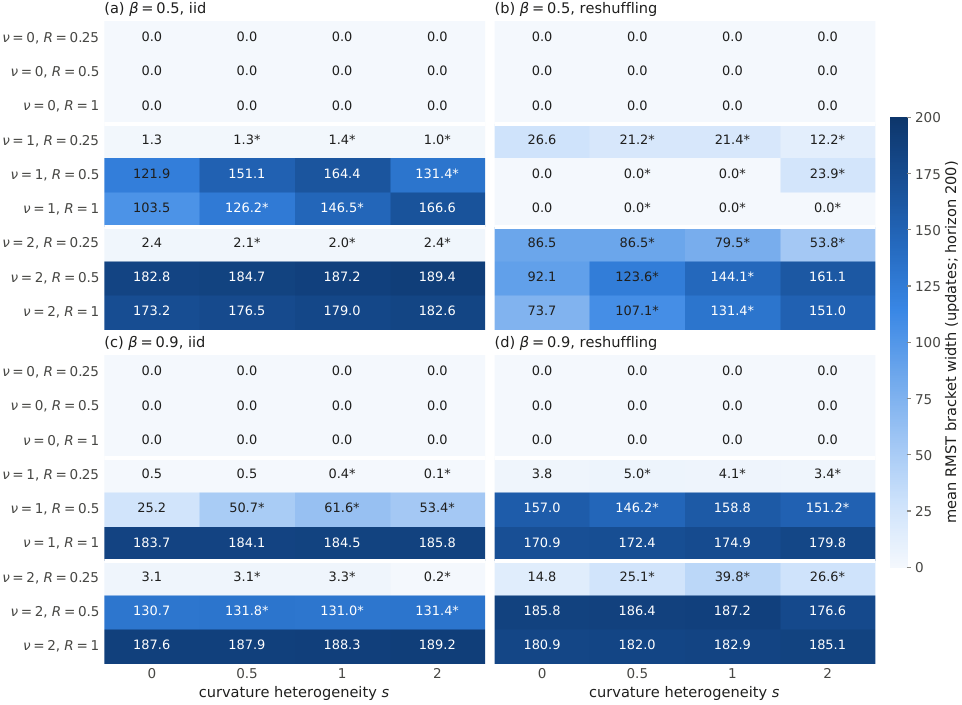}
\caption{All 144 frozen synthetic stress cells, including the quadratic
controls ($\nu=0$) and wide nonlinear enclosures. Entries are nominal
average RMST bracket widths with sampled curvature (updates, horizon 200)
over 16 drivers per cell, shared across settings. Stars mark the predefined
improvement criterion (a gain of at least 5 updates and 20\% over the
common-curvature intersection), not significance.}
\label{fig:curvature-applicability}
\end{figure*}

\section{Probabilistic reference transfer to complete means}\label{app:probtransfer}
\subsection{Stopped minibatch error}
Let
\[
u_{n+1}=\beta u_n-g_{\mathcal B_n}(w_0+X_n),\quad
X_{n+1}=X_n+h u_{n+1},\quad X_0=u_0=0.
\]
Here the batch $\mathcal B_n$ acts at $X_n$, so it is the minibatch
$B_{n+1}$ of Section~\ref{sec:process}; $B$ denotes the batch size. Every
batch contains $B$ fresh iid uniform with-replacement draws from a fixed
pool. Exit is $\tau_X=\inf\{n\geq1:\norm{X_n}\geq R\}$.
The affine reference uses the same batches, initial state and
integer clock, with gradient $g_{\mathcal B_n}(w_0)+HY_n$, where $H$ is
the deterministic symmetric mean Hessian at the center.

Fix an orthonormal eigenbasis $e_l$ of $H$ with eigenvalues $\lambda_l$,
and write
$d_i(x)=g_i(w_0+x)-g_i(w_0)$, $d(x)=\E_i d_i(x)$.
Uniformly for $\norm{x}<R$, assume
\[
 |e_l^\top[d(x)-Hx]|\leq a_l,\quad
 {\rm Var}_i(e_l^\top d_i(x))\leq v_l,\quad
 |e_l^\top[d_i(x)-d(x)]|\leq L_l.
\]
Let $p_0(\lambda)=1$, $p_1(\lambda)=1+\beta-h\lambda$ and
$p_j=(1+\beta-h\lambda)p_{j-1}-\beta p_{j-2}$.
Put $w_{j,l}=hp_j(\lambda_l)$. For $n\geq2$, define
\[
A_{n,l}=a_l\sum_{j=0}^{n-2}|w_{j,l}|,\quad
V_{n,l}=\frac{v_l}{B}\sum_{j=0}^{n-2}w_{j,l}^2,\quad
C_{n,l}=\frac{L_l}{B}\max_{j\leq n-2}|w_{j,l}|.
\]
The residual at the initial center is zero, which explains the
upper limit $n-2$ rather than $n-1$.

For $V,C>0$, define $q(V,C,\ell)$ by
\[
\frac{V}{C^2}\left[
\left(1+\frac{Cq}{V}\right)\log\left(1+\frac{Cq}{V}\right)
-\frac{Cq}{V}\right]=\ell.
\]
If $V=0$ or $C=0$, the centered sum is zero and we take $q=0$.
For window $J\geq2$, risk $\delta\in(0,1)$,
and dimension $d$, set $\ell=\log[2d(J-1)/\delta]$ and
\[
E_1=0,\qquad
E_n=\left\{\sum_{l=1}^d
[A_{n,l}+q(V_{n,l},C_{n,l},\ell)]^2\right\}^{1/2}.
\]
Here $\ell$ is the logarithmic risk factor; $l$ indexes directions.

\begin{theorem}\label{prob:thm:complete}
With probability at least $1-\delta$, the stopped auxiliary
position error is at most $E_n$ simultaneously for $n=1,\ldots,J$.
Let $\tau_\pm$ be reference exits at radii $R\pm E_n$ through $J$,
with a nonpositive eroded radius causing exit at that observation.
If the actual remaining exit mean is at most $U$ at every surviving
history, uniformly in velocity, then
\begin{align}
\E(\tau_-\wedge J)-(J-1)\delta
&\leq \E\tau_X,                                      \label{prob:eq:lower}\\
\E\tau_X
&\leq \E(\tau_+\wedge J)+(J-1)\delta
+U\min\{1,\Pp(\tau_+>J)+\delta\}.                    \label{prob:eq:upper}
\end{align}
\end{theorem}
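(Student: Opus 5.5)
Write $X_n-Y_n$ by variation of constants and decompose it along the eigenbasis of $H$. Since the affine reference uses the deterministic mean Hessian $H$, the error obeys the linear recursion with matrix $\begin{pmatrix}I-hH&h\beta I\\-H&\beta I\end{pmatrix}$, forced by the residual $r_{\mathcal B_k}(X_k)=d_{\mathcal B_k}(X_k)-HX_k$. To keep inputs inside the ball, stop the forcing at $\tau_X$, i.e.\ multiply by $\mathbf 1_{\{k<\tau_X\}}$. Along $e_l$ the position response to an impulse at step $k$, observed at time $n$, is $w_{n-1-k,l}=h\,p_{n-1-k}(\lambda_l)$; this is the standard characteristic recursion of the heavy-ball companion matrix. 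The $k=0$ input vanishes because $X_0=0$ and $d(0)=0$, which is why the sums run only to $n-2$.

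Split each scalar residual $e_l^\top r_{\mathcal B_k}(X_k)$ into a bias and a fluctuation:
\[
e_l^\top[d(X_k)-HX_k]+e_l^\top[d_{\mathcal B_k}(X_k)-d(X_k)].
\]
\begin{itemize}
\item The bias part is bounded deterministically by $a_l\sum_j|w_{j,l}|=A_{n,l}$.
\item For the fluctuation part, condition on $\mathcal F_k$, which fixes $X_k$. The batch is then $B$ fresh iid draws, so the increment $\xi_{k,l}=e_l^\top[d_{\mathcal B_k}(X_k)-d(X_k)]\mathbf 1_{\{k<\tau_X\}}$ is a martingale difference. Its conditional variance is at most $v_l/B$, and, as an average of $B$ terms each bounded by $L_l$, it satisfies $|\xi_{k,l}|\le L_l/B\cdot B$. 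More carefully, one applies Bennett to the $B$ individual draws, each bounded by $L_l/B$ after the $1/B$ averaging weight.
\end{itemize}
Freedman's (Bennett-type) martingale inequality then applies to the weighted sum $\sum_k w_{n-1-k,l}\xi_{k,l}$, for fixed $n$, because the weights are deterministic. This gives a tail $\exp(-\ell)$ at deviation $q(V_{n,l},C_{n,l},\ell)$, with $V_{n,l}$ the predictable variance bound and $C_{n,l}$ the increment bound. A union bound over $d$ directions, two signs and $n=2,\dots,J$ totals $2d(J-1)e^{-\ell}=\delta$. Summing the squared coordinates yields the simultaneous bound $E_n$ on the stopped auxiliary error. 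On the event $\{n\le\tau_X\}$ the stopped error equals the actual error, including the exit-causing update.

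On the good event $G$, the pathwise sandwich argument of Appendix~\ref{sec:heterogeneous} gives $\tau_-\wedge J\le\tau_X\wedge J\le\tau_+\wedge J$. Here eroded radii that are nonpositive simply force a reference exit, so the contradiction argument is unchanged. On $G^c$ all capped times lie in $[1,J]$, so any pair differs by at most $J-1$. This gives $|\E(\tau_X\wedge J)-\E(\tau_\pm\wedge J)\text{ on the relevant side}|\le(J-1)\delta$. Then \eqref{prob:eq:lower} follows from $\tau_X\ge\tau_X\wedge J$.

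For \eqref{prob:eq:upper}, write $\E\tau_X=\E(\tau_X\wedge J)+\E[(\tau_X-J)\mathbf 1_{\{\tau_X>J\}}]$. Condition the tail on $\mathcal F_J$: by the uniform remaining-mean assumption it is at most $U\,\Pp(\tau_X>J)$. Then $\{\tau_X>J\}\subseteq\{\tau_+>J\}\cup G^c$ gives $\Pp(\tau_X>J)\le\min\{1,\Pp(\tau_+>J)+\delta\}$.

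The main obstacle is the concentration step. The weights $w_{j,l}$ must be deterministic, which holds because $H$ is fixed rather than sampled. The stopping indicator must be predictable, which it is, since $\{k<\tau_X\}\in\mathcal F_k$. The Bennett function with the stated $V,C$ normalization also needs care, since the increment bound must be per draw, $|w|L_l/B$, for the $1/B$ scaling in $C_{n,l}$ to be correct. I would treat each of the $B$ draws within a batch as a separate martingale increment, using the filtration refined by draw.
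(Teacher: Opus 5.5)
Your proposal is correct and follows the paper's proof essentially step for step. The shared steps are: stopped variation of constants along the eigenbasis of $H$, the deterministic bias bound $A_{n,l}$, Bennett's exponential bound applied draw by draw with deterministic weights, a union bound over $2d(J-1)$ events with orthonormality, the $(J-1)\delta$ cost of the exceptional event, and conditioning the tail at $\mathcal F_J$. The one step you leave implicit, $\{\tau_X>J\}\cap G\subseteq\{\tau_+>J\}$, needs the uncapped pathwise argument at observation $J$ (if $\tau_+\le J$ then $\tau_X\le\tau_+$), because the capped ordering alone only gives $\tau_+\ge J$; the paper appeals to the same ``bound at observation $J$.''
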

\begin{proof}
Let $\mathcal F_k$ contain the history before batch $k$.
Define the stopped residual
\[
 r_k=\mathbf1_{\{k<\tau_X\}}
 \{d_{\mathcal B_k}(X_k)-HX_k\},
 \qquad
 Z_{n,l}=-\sum_{k=0}^{n-1}w_{n-1-k,l}e_l^\top r_k .
\]
Variation of constants gives
$e_l^\top(X_n-Y_n)=Z_{n,l}$ for every $n\leq\tau_X$,
including the exit-causing observation. After exit, $Z_{n,l}$
is an auxiliary error, not the difference of the unstopped paths.
The state and the indicator in $r_k$ are $\mathcal F_k$-measurable.
Its conditional mean is bounded by $a_l$ in direction $e_l$;
$r_0=0$. Thus the drift part of $Z_{n,l}$ is at most $A_{n,l}$.
Expose the $B$ indices individually,
holding the pre-batch state and stopping indicator fixed.
Each centered contribution is bounded by $C_{n,l}$ and the sum of
conditional variances is at most $V_{n,l}$.
For a centered increment $Z$ with $|Z|\leq C$,
\[
\E[e^{tZ}\mid\mathcal F]\leq
\exp\{(e^{tC}-1-tC)\E[Z^2\mid\mathcal F]/C^2\}.
\]
Iterating and optimizing in $t>0$ gives the Bennett exponential
bound defining $q$, for either sign. A union bound over the two
signs, $d$ coordinates and $J-1$ nontrivial times, followed by
orthonormality, proves the tube claim. Independence between error
coordinates is unnecessary; the weights must be deterministic.

On the tube event the first-exit argument gives
$\tau_-\wedge J\leq\tau_X\wedge J\leq\tau_+\wedge J$.
The exceptional event costs at most $(J-1)\delta$ for these capped
variables. The bound at observation $J$ also gives
$\Pp(\tau_X>J)\leq\Pp(\tau_+>J)+\delta$.
The residual-mean assumption is the full-history statement
$\E[\tau_X-J\mid\mathcal F_J]\leq U$ on $\{\tau_X>J\}$.
It follows, for example, from the uniform block-exit tail, which
first proves integrability from every surviving state.
Conditioning at $J$ yields
$\E[(\tau_X-J)_+]\leq U\Pp(\tau_X>J)$.
Combining the statements proves both inequalities.
\end{proof}
The conditional bound $U$ is essential. A finite-window coupling
failure probability alone cannot control the expectation of an
unbounded exit time. The martingale exponential method is standard
\citep{freedman}; we make no novelty claim for it.

\subsection{Logistic constants and the actual tail bound}
For features $a_i$, set $s_i=\norm{a_i}$ and $q_{il}=e_l^\top a_i$, as
in Appendix~\ref{sec:local-logistic}. Let $c_i$ bound the sigmoid
derivative and $m_i=m_i(R)$ its absolute second derivative on
$a_i^\top w_0+[-Rs_i,Rs_i]$; $\E_i$ averages over the pool. Valid
constants are
\begin{align*}
a_l&=\tfrac12R^2\lambda_{\max}
      \left(\E_i\,m_i|q_{il}|a_i a_i^\top\right),\\
v_l&=R^2\lambda_{\max}
      \left(\E_i\,c_i^2q_{il}^2a_i a_i^\top\right),\\
L_l&=2R\max_i c_i|q_{il}|s_i.
\end{align*}
Taylor's remainder proves the bias bound. Centering decreases the
second moment after removing the common deterministic ridge
contribution, giving the variance bound. Atom and mean increments
lie in the same symmetric directional range, giving the amplitude
bound. Thus the order-$R^2$ mean remainder is separated from the
order-$R$ random Hessian fluctuation.

The evaluated $U=M/p$ uses the block-exit construction proved in
Section~\ref{prob:sec:inputs}: at every surviving history, the
probability of exit in the next $M$ updates is at least $p>0$.
Iteration gives a geometric tail and the uniform remaining mean.
The pool, step size, momentum, radius and sampling law must match
the current setting. This is not a bound only at zero velocity.
Random reshuffling is outside the iid argument;
Appendix~\ref{app:reshuffling} shows that iid tail control does not
transfer to it.

\subsection{Inputs used by the numerical complete-mean comparison}
\label{prob:sec:inputs}
The following constructions supply the uniform upper bound and the
deterministic lower bound used below. They concern the iid
with-replacement pool of this appendix, not the neural studies.

\paragraph{A history-uniform block-exit bound.}
Fix a unit vector $v$, an integer $M\ge1$, and a deterministic matrix
$H$. Put
\[
 A=\begin{pmatrix}I-hH&h\beta I\\-H&\beta I\end{pmatrix},\quad
 D=\binom{hI}{I},\quad r_j^\top=(v^\top,0)A^{M-j}D,
 \qquad 1\le j\le M.
\]
Suppose, for every $\|x\|<R$,
\[
 |r_j^\top[d(x)-Hx]|\le D_j,\quad
 \operatorname{Var}_i(r_j^\top d_i(x))\le V_j,\quad
 |r_j^\top[d_i(x)-d(x)]|\le L_j.
\]
Write $\mathcal D=\sum_jD_j$, $\mathcal V=\sum_jV_j/B$,
$\mathcal L=\max_jL_j/B$, and, for each sign $s\in\{-1,1\}$, let
$S_s=-s\sum_jr_j^\top g_{\mathcal B_j}(w_0)$ be the unstopped frozen
forcing. For $a>0$, a valid stopped-remainder tail bound is
\[
 \Psi(a)=\min\left\{\frac{\mathcal V}{\mathcal V+a^2},
 \exp\left[-\frac{\mathcal V}{\mathcal L^2}
 h_B\left(\frac{\mathcal La}{\mathcal V}\right)\right]\right\},
 \qquad h_B(z)=(1+z)\log(1+z)-z,
\]
with $\Psi(a)=0$ when $\mathcal V=0$ or $\mathcal L=0$.
If
\[
 0<p\le\min_{s\in\{-1,1\}}
       \Pp(S_s\ge R+\mathcal D+a)-\Psi(a),
\]
then, from every interior position and every initial velocity,
$\Pp(\tau_X>kM)\le(1-p)^k$ and $\E\tau_X\le M/p$.

To prove the claim, condition on the full block-start state, set
$c=(v^\top,0)A^M(X_0^\top,u_0^\top)^\top$, and choose
$s=\operatorname{sign}(c)$, taking $s=1$ at zero. On survival through
the block, variation of constants gives
$sv^\top X_M=|c|+S_s-T-b$, where $|b|\le\mathcal D$ and $T$ is the
centered stopped remainder. Revealing the $B$ draws separately gives
conditional variance at most $\mathcal V$ and increment bound
$\mathcal L$. Conditional Cantelli and the same exponential argument
as in Theorem~\ref{prob:thm:complete} give
$\Pp(T>a\mid\mathcal F_0)\le\Psi(a)$.
The joint event $S_s\ge R+\mathcal D+a$, $T\le a$ contradicts survival.
A union bound therefore proves the exit probability $p$ without
assuming independence of $S_s$ and $T$. Repeating from each surviving
history gives the geometric tail, whose sum is at most $M/p$.

For logistic pools the constants are those in the preceding subsection,
with the unit direction replaced by $r_j$: specifically
\[
 D_j=\tfrac12R^2\lambda_{\max}(\E_i m_i|r_j^\top a_i|a_i a_i^\top),
 \quad V_j=R^2\lambda_{\max}(\E_i c_i^2(r_j^\top a_i)^2a_i a_i^\top),
 \quad L_j=2R\max_i c_i|r_j^\top a_i|\|a_i\|.
\]
These formulas use the exact mean Hessian. An approximate Hessian
$\widetilde H$ requires an additional bias allowance
$R\|r_j\|\|\widetilde H-\nabla g(w_0)\|$.

\paragraph{Computing the forcing probability.}
For fixed $s$, let $a_{ji}=-sr_j^\top g_i(w_0)$. Independence gives
the log moment-generating function
\[
 K_s(t)=B\sum_{j=1}^M\log\left(N^{-1}\sum_i e^{ta_{ji}/B}\right).
\]
Under exponential tilting, let $m_j,v_j,z_j$ be the mean, variance and
third absolute centered moment of the atom $a_{ji}$. Put
$\mu_t=\sum_jm_j$, $\sigma_t^2=\sum_jv_j/B$ and
$\epsilon_t=0.56\sum_jz_j/(B^2\sigma_t^3)$ when $\sigma_t>0$.
The independent, non-identically distributed Berry--Esseen inequality
\citep{pinelis} and change of measure on
$q<S_s\le q+z\sigma_t$ give, for $t\ge0$ and $z>0$,
\[
 \Pp(S_s\ge q)\ge e^{K_s(t)-t(q+z\sigma_t)}
 \left[\Phi\left(\frac{q-\mu_t}{\sigma_t}+z\right)
       -\Phi\left(\frac{q-\mu_t}{\sigma_t}\right)-2\epsilon_t\right]_+.
\]
Here $\Phi$ is the standard normal CDF and $q=R+\mathcal D+a$.
One may maximize these valid lower bounds separately for each sign
before taking their minimum. The finite analytical search is not
selected using continuation outcomes.

\paragraph{A convex-potential lower bound.}
For this part $X_0=u_0=0$. Let the mean loss $f$ have
$0\preceq\nabla^2 f(w_0+x)\preceq L_R I$ on the ball. Suppose the
gradient force is bounded by $K$ and the conditional gradient-noise
second moment by $v_+$. Set
\[
 b=\frac h{1-\beta},\quad c=\frac{h\beta}{1-\beta},\quad
 e=\|g(w_0)\|,\quad
 a_R=R+c\min\left(\frac K{1-\beta},\frac{2R}h\right),\quad bL_R\le2.
\]
Then a valid complete-mean lower bound is
\[
 L_{\rm deterministic}=\max\{1,R^2/C_R\},\qquad
 C_R=b^2v_++2ba_Re+2b^2L_RRe+b^2e^2,
\]
with $\E\tau_X=\infty$ if $C_R=0$.
Define $V(x)=f(w_0+x)-f(w_0)-g(w_0)^\top x$, $t(x)=\nabla V(x)$,
$Y_n=X_n+cu_n$, and
$\mathcal P_n=\|Y_n\|^2+(2bc/h)V(X_{n-1})$, with $X_{-1}=0$.
Before exit, convexity gives
$V(X_n)-V(X_{n-1})\le h u_n^\top t(X_n)$ and
$\|t(x)\|^2\le L_R x^\top t(x)$.
Using $Y_{n+1}-Y_n=-b g_{\mathcal B_n}(w_0+X_n)$, the potential
increment cancels the velocity--gradient cross term. The remaining
quadratic terms are nonpositive when $bL_R\le2$, and
$\E[\mathcal P_{n+1}-\mathcal P_n\mid\mathcal F_n]\le C_R$ before exit.
Bounded stopping gives
$\E\mathcal P_{\tau_X\wedge N}\le C_R\E(\tau_X\wedge N)$.
At exit, the outward update and $\beta\ge0$ imply $\|Y_{\tau_X}\|\ge R$;
the potential term is nonnegative. If the mean is finite, bounded
convergence therefore gives $R^2\le C_R\E\tau_X$; otherwise the lower
bound is automatic. The stopped potential is bounded because forces
are bounded before exit and its potential term uses the pre-exit position.

The 72-setting calculation uses these constructions with matching
pool, radius, step, momentum and iid sampling. Its constants, Hessians
and tail probabilities are evaluated in float64, without the rounding
certification of Appendix~\ref{sec:fresh-pool}; its reported enclosures
remain conditional numerical evaluations of the analytical bounds.

\subsection{Monte Carlo enclosures and all-setting results}
Each of 24 pool--momentum combinations receives 5,000 new affine
reference sequences through $J=200$, reused across three radii:
120,000 reference sequences and 72 settings, with no new independent
neural training. The deterministic process-error risk is $\delta=0.01$
per setting. The Monte Carlo risk is separately allocated as
$d_{\rm MC}=0.05/(3\cdot72)$.
If $m_\pm$ are sample capped means, $K$ counts survival of the
expanded reference strictly past $J$, and
\[
b=(J-1)\sqrt{\frac{\log(1/d_{\rm MC})}{2n}},\qquad
q_{\rm up}=
\begin{cases}
{\rm Beta}_{K+1,n-K}^{-1}(1-d_{\rm MC}),&K<n,\\
1,&K=n,
\end{cases}
\]
the complete mean lies in the model-conditional simultaneous
nominal 95\% enclosure
\begin{align*}
L&=\max\{1,L_{\rm deterministic},m_--b-(J-1)\delta\},\\
U_{\rm new}&=\min\{U,m_++b+(J-1)\delta
             +U\min(1,q_{\rm up}+\delta)\}.
\end{align*}
$L_{\rm deterministic}$ is the convex-potential bound in
Section~\ref{prob:sec:inputs}.
A hit exactly at $J$ is an event, not censoring. We do not
intersect this family with another 95\% family without a joint
confidence correction. The process risk and confidence risk have
different roles.

\begin{table}[ht]
\centering\small
\caption{All 72 retained iid settings, by pool and momentum. Width is
$(U_{\rm new}-L)/200$; a median above one means an interval wider
than the observation window. Counts are descriptive, not
independent training replications.}
\begin{tabular}{@{}lrrrrr@{}}
\toprule
 & & & \multicolumn{2}{c}{Upper endpoints} & \\
\cmidrule(lr){4-5}
Pool & $\beta$ & Settings & improved & below 200 & Median width\\
\midrule
Fashion-MNIST & 0.5 & 18 & 12 & 6 & 30.837\\
 & 0.9 & 18 & 12 & 0 & 557.967\\
\addlinespace[2pt]
MNIST & 0.5 & 18 & 18 & 12 & 0.693\\
 & 0.9 & 18 & 13 & 6 & 22.443\\
\bottomrule
\end{tabular}
\end{table}
All 72 erosions remain nonempty through the full window. The
final error-radius ratio $E_{200}/R$ ranges from 0.146 to 0.434,
whereas the preceding deterministic constant-envelope screen
had no nonempty erosions. All deterministic lower bounds and 55
upper bounds improve; 24 upper endpoints are below 200.
The complete upper endpoints range from 37.27 to
$1.252\times10^9$ updates. Many bounds remain dominated by the
tail term and are not practically useful.
A post-hoc algebraic sensitivity calculation attributes much of the
displayed width to the complete-tail allowance: deleting that allowance
while retaining the outer uniform-mean cap would reduce width by more
than half in 66 of 72 settings, with a median reduction of
97.7\%. This deletion is hypothetical and does not produce a valid
mean interval. The allowance combines the uniform remaining-mean bound,
expanded-reference survival uncertainty, and coupling failure; this
calculation does not isolate the effect of additional Monte Carlo samples.

All 72 intervals were recomputed from stored reference norms.
Checks include deterministic-matrix impulse reconstruction, 216
high-precision Bennett evaluations, 8,760 rational stopped-process
identity/moment checks and 20 exact complete-tail checks.
Floating-point constants and quantiles are not outward-rounded
certificates. These implementation checks do not replace the proof
or independent mathematical review.

\begin{figure}[ht]
\centering
\includegraphics[width=.64\textwidth]{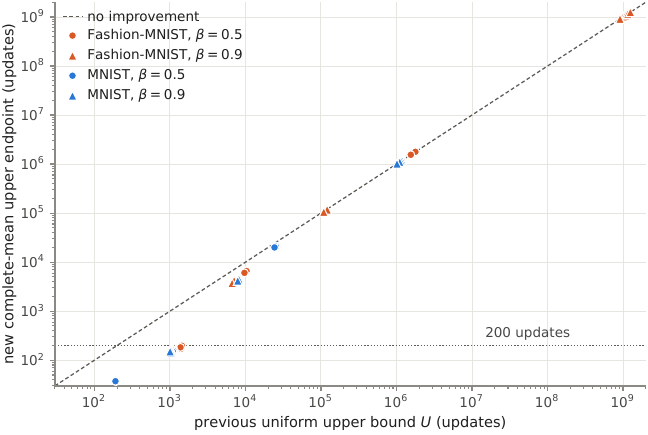}
\caption{All 72 model-conditional upper endpoints compared with the prior uniform
upper bound. The diagonal marks no improvement; the horizontal line marks 200
updates. The log scale retains the extremely loose cases rather than hiding them.}
\end{figure}

\section{Why iid tail control does not transfer to reshuffling}
\label{app:reshuffling}
This elementary example isolates the sampling law. It is distinct from
the covariance-matched obstruction in Appendix~\ref{app:obstructions}:
here the one-time gradient laws match, but the temporal laws and long-run
diffusivities do not. The broader distinction between random reshuffling
and iid SGD is established prior work \citep{mishchenko2020reshuffling}.

\begin{proposition}[Finite versus infinite complete exit means]
\label{prop:reshuffling}
Fix $h,\sigma>0$ and $0\leq\beta<1$. Let
$u_{n+1}=\beta u_n-G_{n+1}$ and $X_{n+1}=X_n+hu_{n+1}$,
with $X_0=u_0=0$. The two gradient atoms are $+\sigma$ and
$-\sigma$, corresponding to affine component losses with constant
average objective.

Under reshuffling, each epoch independently chooses a uniform permutation
of these two atoms. Under iid sampling, each update independently chooses
a uniform atom. At every deterministic update, both gradients are
uniform on $\{-\sigma,\sigma\}$. Nevertheless, if
\[
 R>\frac{h\sigma}{1-\beta^2},
\]
then the reshuffled process never exits $(-R,R)$, whereas the iid
process has finite complete exit mean.
\end{proposition}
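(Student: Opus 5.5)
The plan has two parts: a deterministic invariance computation under reshuffling, and a block-exit argument under iid sampling. The preliminary facts are routine. At each deterministic update the reshuffled atom is uniform on $\{\pm\sigma\}$, because each epoch is an independent uniform permutation of the two atoms. Unrolling with $u_0=0$ gives $u_n=-\sum_{k=1}^n\beta^{n-k}G_k$ and $X_n=h\sum_{k=1}^n u_k$. Interchanging sums,
\[
 X_n=-h\sum_{k=1}^n G_k\,\frac{1-\beta^{n-k+1}}{1-\beta}.
\]

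\emph{Reshuffling.} Pair the atoms epoch by epoch. Within epoch $m$ (updates $2m+1,2m+2$) the atoms are $\epsilon_m\sigma$ and $-\epsilon_m\sigma$ with $\epsilon_m\in\{\pm1\}$. Write $c_j=(1-\beta^{j})/(1-\beta)$ for the weight of an atom applied $j$ steps before the current observation, counting that atom's own step, so the weight of $G_k$ in $X_n$ is $c_{n-k+1}$. For an even time $n=2M$, a completed epoch contributes $\mp h\sigma\epsilon_m(c_{j+1}-c_j)=\mp h\sigma\epsilon_m\beta^{j}$, with $j$ even. Hence
\[
 |X_{2M}|\le h\sigma\sum_{i\ge0}\beta^{2i}=\frac{h\sigma}{1-\beta^2}.
\]
For an odd time $2M+1$ there is an additional half-epoch term $h\sigma c_1=h\sigma$. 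The paired contributions then carry odd exponents, $\beta^{2i+1}$, which is the same mechanism as in Theorem~\ref{thm:reach}.

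The crux, and the step I expect to be the main obstacle, is the odd-time bound. The extra single atom must combine with the odd-power geometric sum to give a supremum not exceeding $h\sigma/(1-\beta^2)$. Both pieces are signed, so the pieces cannot simply be added in absolute value. I would attempt an alternating-sign argument, or directly apply \eqref{eq:reach-radius} with $N=2$, $\lambda=0$. There $A$ is the matrix with $\lambda=0$, and $H(r)$ is computed by sorting. This yields $R_{\rm RR}$ as an explicit geometric series, which I would then show equals $h\sigma/(1-\beta^2)$.

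A caveat is that Theorem~\ref{thm:reach} assumes $\lambda>0$ and Schur stability. With $\lambda=0$ the matrix $A$ has eigenvalues $1$ and $\beta$, so I would redo its unrolling argument directly. That argument is valid because each epoch's forcing sums to zero, which kills the unit-eigenvalue mode. If the strict bound $\sup_n|X_n|\le h\sigma/(1-\beta^2)<R$ holds, the reshuffled path never reaches $R$.

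\emph{iid sampling.} I would follow the block argument used for the nonlinear finite-data separation in Appendix~\ref{sec:sampler-reach}. On any surviving history $|X_n|<R$, and $|u_n|<2R/h$ whenever $n\ge1$ (from two consecutive interior positions), while $u_0=0$. Suppose the next $M$ atoms all equal $-\sigma\,\mathrm{sign}(\cdot)$, chosen to push in the direction of the current velocity, or an arbitrary direction if the velocity is zero. Then the displacement over the block is at least
\[
 \frac{h\sigma M}{1-\beta}-\frac{h\beta}{1-\beta}\,\frac{2R}{h}.
\]
This exceeds $2R$ for $M$ large. So every surviving history exits within $M$ steps with probability at least $2^{-M}$. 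Iterating gives $\Pp(\tau>kM)\le(1-2^{-M})^k$, hence $\E\tau\le M2^M<\infty$.
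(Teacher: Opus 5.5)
Your reshuffling half has a genuine gap, caused by a bookkeeping slip rather than a missing idea: you swapped the parities of the exponents, and that is what creates your ``crux.'' Since $c_{j+1}-c_j=\beta^j$, epoch $m$ contributes $-h\sigma\epsilon_m\beta^{\,n-2m-1}$ to $X_n$. The exponent is therefore odd at even times and even at odd times:
\[
 X_{2M}=-h\sigma\sum_{m=0}^{M-1}\epsilon_m\beta^{2(M-m)-1},\qquad
 X_{2M+1}=-h\sigma\epsilon_M-h\sigma\sum_{m=0}^{M-1}\epsilon_m\beta^{2(M-m)}.
\]
For example, $X_2=-h\sigma\beta\epsilon_0$ and $X_3=-h\sigma(\epsilon_1+\beta^2\epsilon_0)$. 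The plain triangle inequality then gives $|X_{2M}|\le h\sigma\beta/(1-\beta^2)$ and $|X_{2M+1}|\le h\sigma(1+\beta^2+\cdots+\beta^{2M})\le h\sigma/(1-\beta^2)<R$ on every permutation sequence, which is the whole reshuffling claim. Your statement that the pieces cannot be added in absolute value is wrong.

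The alternating-sign argument you propose could not have worked anyway. The $\epsilon_m$ are independent, and every finite sign pattern has positive probability, so $\sup|X_n|$ equals the absolute-value sum. With your exponents that sum would approach $h\sigma(1+\beta/(1-\beta^2))>h\sigma/(1-\beta^2)$ for $\beta\in(0,1)$, and the stated threshold would be false. Re-deriving Theorem~\ref{thm:reach} at $\lambda=0$, outside its hypotheses, is also unnecessary.

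The paper does the same computation through the averaged iterate $Y_n=X_n+h\beta u_n/(1-\beta)$:
\begin{itemize}
\item $Y$ vanishes at every epoch end;
\item $u_{2m+2}=\beta^2u_{2m}+(1-\beta)\sigma S_m$ gives $|u_{2m}|\le\sigma/(1+\beta)$;
\item $X_{2m+1}=-h\sigma S_m-h\beta^2u_{2m}/(1-\beta)$ then yields the bound by the triangle inequality.
\end{itemize}

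Your iid half is correct and takes a different route from the paper. The atoms are state-independent, so the word of $M$ atoms pushing along $s=\operatorname{sign}(u_n)$ has conditional probability $2^{-M}$ from every surviving history. The geometric tail then gives $\E\tau\le M2^M$.

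Your displayed displacement bound is slightly off. It omits the velocity ramp-up lag, and the velocity penalty is unnecessary when you push along $u_n$. The correct estimate is $s(X_{n+M}-X_n)\ge h\sigma M/(1-\beta)-h\sigma\beta(1-\beta^M)/(1-\beta)^2$, which is still linear in $M$, so nothing breaks.

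The paper instead notes that $Y_n$ is a martingale with independent increments of variance $q=h^2\sigma^2/(1-\beta)^2$, with overshoot $|Y_{\tau\wedge N}|\le R+h\sigma/(1-\beta)^2$. Optional stopping then gives $\E\tau\le(R+h\sigma/(1-\beta)^2)^2/q$. That bound is polynomial in $R$, whereas yours is exponential in $R(1-\beta)/(h\sigma)$; both establish finiteness.
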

\begin{proof}
Write $S_m\in\{-1,1\}$ for the first sign in reshuffling epoch $m$.
The updates in that epoch are $+\sigma S_m$ and $-\sigma S_m$.
The corrected position
\[
 Y_n=X_n+\frac{h\beta}{1-\beta}u_n
\]
has increment $-hG_{n+1}/(1-\beta)$, so $Y_{2m}=0$ and
$Y_{2m+1}=-h\sigma S_m/(1-\beta)$. Meanwhile,
\[
 u_{2m+2}=\beta^2u_{2m}+(1-\beta)\sigma S_m,
 \qquad |u_{2m}|\leq\frac{\sigma}{1+\beta}.
\]
Consequently
\[
 |X_{2m}|\leq\frac{h\beta\sigma}{1-\beta^2},\qquad
 X_{2m+1}=-h\sigma S_m-\frac{h\beta^2}{1-\beta}u_{2m},
\]
and $|X_{2m+1}|\leq h\sigma/(1-\beta^2)$.
These bounds hold for every permutation sequence, so the stated
reshuffling exit time is infinite almost surely.

For iid sampling, $Y_n$ is a martingale with independent centered
increments of variance
$q=h^2\sigma^2/(1-\beta)^2$. Put
$r=h\sigma/(1-\beta)^2$. The velocity satisfies
$|u_n|\leq\sigma/(1-\beta)$ before exit and on its causing step.
Thus $|Y_{\tau\wedge N}|\leq R+r$.
Bounded-time telescoping gives
\[
 q\,\E(\tau\wedge N)=\E Y_{\tau\wedge N}^2\leq(R+r)^2.
\]
Monotone convergence proves
$\E\tau\leq(R+r)^2/q<\infty$ without assuming integrability in advance.
\end{proof}

No neural basin, state-dependent gradient field, or matched diffusivity
is asserted in this example. The iid process has diffusivity $q/2>0$;
the bounded reshuffled positions have zero long-run diffusivity.
The purpose is to rule out replacing an iid full-history conditional
assumption by a one-time marginal statement. In particular, an iid
uniform residual-mean bound cannot be applied to reshuffling merely
because a frozen one-time gradient histogram agrees.

\section{Sampler dependence in trained networks: design and full results}\label{app:nn-sampler}
This appendix documents the eight studies summarized in
Section~\ref{sec:networks}. Each was declared in a time-stamped,
hashed file before its continuation runs. Amendments were also recorded
before the outputs they concern existed, and all are stated below.
The original decision rules are retained as historical protocol records.
Their 3--4-seed bootstrap ranges and labels are descriptive only; the
revision makes no population significance claim from these studies.
Only the separately specified new joint endpoints carry corrected
confirmatory tests (Appendices~\ref{app:replication-followup}
and~\ref{app:mechanism-followup}).

\subsection{Networks, training and continuation}
\paragraph{Networks.} Four configurations vary architecture and data:
\begin{itemize}
\item an MLP (784--256--256--10, ReLU) on MNIST \citep{lecun1998},
trained for 10 epochs with learning rate 0.05;
\item a CNN (two $3\times3$ convolutions with 32 and 64 channels, each
followed by max-pooling, and a 128-unit hidden layer) on Fashion-MNIST
\citep{xiao2017}, 15 epochs, learning rate 0.05;
\item ResNet-18 \citep{he2016deep} with a $3\times3$ stem and no initial
max-pooling on CIFAR-10 \citep{krizhevsky2009learning}, 30 epochs,
learning rate 0.1, random crops and horizontal flips;
\item a ViT \citep{dosovitskiy2021image} with $4\times4$ patches, width
192, depth 6, three heads and MLP width 384 on CIFAR-10, 60 epochs,
learning rate 0.05 with a five-epoch warm-up and the same augmentation.
\end{itemize}
Training uses momentum SGD ($\beta=0.9$), batch size 128, weight decay
$5\times10^{-4}$, cosine decay and bfloat16 autocast. Seed 0 is the
pilot, seeds 1--4 form Study 1 and seeds 5--8 form Study 2. Test
accuracies across these eight training seeds range from 98.28--98.50\%
(MLP), 91.65--92.24\% (CNN), 92.56--93.61\% (ResNet-18), and
78.29--79.82\% (ViT).

\paragraph{Common start.} For each trained network, a pool of $N=4096$
training examples is drawn without replacement and without augmentation.
The network is put in evaluation mode, so BatchNorm statistics are frozen
and the minibatch loss is the mean of per-example losses. Every
sampler below has the same marginal batch law, a uniformly random
64-subset of the pool; equality of conditional laws along training
is not assumed. A burn-in of 100 iid momentum-SGD steps ($b=64$, $\beta=0.9$,
weight decay $5\times10^{-4}$, continuation learning rate) gives the full
start state $(\theta_0,u_0)$, shared by all samplers. The burn-in
changes the batch size and learning rate from those at the end of
training, so exits measure displacement under continued training from
this start, including any drift that follows the change; every sampler
shares the start and the drift. Studies 5 and 6 repeat the burn-in with
other momenta, pools and batch sizes, and Study 7 lengthens it tenfold.

\paragraph{Continuation.} From the common start, 32 paths per sampler run
for $J=1000$ updates in single precision, each with its own seed; every
sampler uses the same arithmetic, but its effect can depend on the
dynamics. Float64 reruns preserve the checked ratio point estimates
to the reported tolerance
(Appendix~\ref{app:nn-precision}). The
samplers are those of Section~\ref{sec:process}: iid, RR (a fresh
permutation of the pool split into 64 consecutive batches per epoch) and
persistent sampling with probability $p$ of drawing a fresh batch, that
is, mean echo count $1/p$. Study 1 uses $p\in\{0.25,0.05\}$ and Study 2
uses $p\in\{0.5,0.25,0.1,0.05,0.02\}$; iid is the case $p=1$. Study 3
reruns the Study 2 starts at smaller learning rates (below).

\paragraph{Directions and events.} The direction $e_{\rm noise}$ is an
approximate top eigenvector of the per-example gradient covariance on the
pool at $\theta_0$, from 30 power iterations with forward-mode
Jacobian--vector products. The direction $e_{\rm rand}$ is a uniformly
random unit vector. The exit time is the first $n\le J$ with
$\|\theta_n-\theta_0\|_2\ge R$ (ball) or
$|e^\top(\theta_n-\theta_0)|\ge r$ (projection), censored at $J$.
Non-finite parameters would count as an exit. None of the 1,920 Study 1
paths became non-finite. In Study 2, 4 of 3,360 paths (ResNet-18,
persistent sampling with $p=0.1$ or $0.05$) became non-finite, each
after all of its recorded exits.

\subsection{Pilot}
A pilot on seed 0 fixed two things only. The continuation learning rate
is the largest of two candidates whose burn-in is finite and whose 24
pilot paths do not diverge: 0.05 (MLP, CNN) and 0.01 (ResNet-18, ViT).
The radii $r_1$ and $r_2$ are the medians, over the iid pilot paths, of
the running maxima of each statistic at steps 200 and 600. Sampler
comparisons in the pilot were not examined. All later studies of these
networks use these radii and, apart from the reduced steps of Study 3,
these learning rates; Study 4, on new datasets, has its own pilot
(Appendix~\ref{app:nn-studies456}).

\subsection{Study 1: declaration and amendments}
A declaration recorded before any Study 1 continuation fixed the radii,
seeds, sampler set, endpoints and hypotheses. The primary endpoint is
the ratio of each sampler's restricted mean exit time (RMST over $J$
updates) to that of iid for the ball at $r_1$. The interval is a 95\%
hierarchical bootstrap interval, resampling seeds and then paths within
seeds (10{,}000 replicates). The directional hypotheses were
\begin{itemize}
\item H1a: RR exits later than iid;
\item H1b: persistent sampling with $p=0.05$ exits earlier than iid;
\item H1c: the RMST decreases with persistence, from iid to $p=0.25$ to
$p=0.05$.
\end{itemize}
The archived rule labeled a hypothesis as supported when its interval
excludes one in the predicted direction (for H1c, both pairwise
intervals). The other radii and the two projections are secondary
endpoints.

The ResNet-18 burn-in for seed 3 diverged at the declared learning rate.
An amendment recorded before any ResNet-18 continuation output existed
analyses seeds 1, 2 and 4 without a replacement seed. The burn-in
precedes and is shared by all sampler arms, so the exclusion removes
every arm of that seed alike: it conditions on a stable start but does
not act differently on the samplers. A separate analysis
note, recorded before any formal output was loaded, fixed how surrogate
errors are averaged over seeds.

\subsection{Surrogates along \texorpdfstring{$e_{\rm noise}$}{the noise direction}}
At $\theta_0$ we compute the projected per-example gradients
$a_i=e^\top\nabla\ell_i(\theta_0)+\lambda_{\rm wd}e^\top\theta_0$, the
curvature $\kappa=e^\top(H+\lambda_{\rm wd}I)e$ of the pool loss, and the
projected start momentum. The surrogates follow the scalar recursion
$u_{n+1}=\beta u_n-(G_n+\kappa x_n)$, $x_{n+1}=x_n+hu_{n+1}$, and differ
only in the forcing $G_n$:
\begin{itemize}
\item S1: iid Gaussian with the mean and variance of a uniform 64-subset
mean of the $a_i$, that is, variance $\sigma_a^2(N-64)/\{64(N-1)\}$ with
$\sigma_a^2$ the pool variance (sampling without replacement); it is the
same for every sampler;
\item S2: the mean of the $a_i$ over the batches of the actual sampler;
\item S4 (persistent samplers): Gaussian values with the persistent
sampler's switching;
\item S3 (persistent samplers): the switching limit of
Theorem~\ref{thm:telegraph-bridge},
$\dot x=-(G+\kappa x)/(1-\beta)$ with $G$ refreshed at rate $p/h$, with
time reported in updates.
\end{itemize}
The surrogates separate sampler memory from the shape of the projected
forcing. S1 replaces the finite-pool forcing by iid Gaussian noise, S2
retains the pool and sampler, and S4 combines Gaussian values with
persistent refreshes. S3 also removes the explicit momentum state in
favor of the switching flow. S1, S2 and S4 initialize the projected
momentum; S3 has no independent velocity coordinate to match. Thus a
difference between S2 and S3 can reflect initialization and finite-step
dynamics as well as the continuum approximation. Study 3 places the
networks outside the switching regime of
Theorem~\ref{thm:telegraph-bridge}, so the large S3 errors (relative
error up to 79) mark the limit of its applicability, not a failure
within its hypotheses. The comparison diagnoses which information is
omitted; it does not identify a causal share of prediction error for
each omission, and, being one-dimensional, it cannot separate the
projection from the freezing of local information.

Each surrogate uses 20{,}000 paths. The error measure is the relative
RMST error $|{\rm RMST}_{\rm surrogate}-{\rm RMST}_{\rm actual}|/{\rm
RMST}_{\rm actual}$, averaged over seeds. The declared secondary
hypotheses were
\begin{itemize}
\item H2a: S1 has larger error under RR and persistent sampling than
under iid;
\item H2b: S2 has smaller error than S1 in at least 13 of the 24
(network, non-iid sampler, radius) cells.
\end{itemize}

\subsection{Study 1 results}
Table~\ref{tab:nn-all} reports all six declared endpoints;
Figure~\ref{fig:nn-study1} in Section~\ref{sec:networks} shows the
ratios at the first radius and the survival curves. The historical
interval rule selected H1a only for the ViT, H1b for the CNN and ViT,
and H1c for none; these are not population-level decisions.
For H2a, the Gaussian surrogate has larger relative error than under iid
in 20 of 24 non-iid cells. For H2b, the sampler-driven surrogate improves
on the iid Gaussian surrogate in 20 of 24 cells, exceeding the declared
13-cell threshold. These are cell counts, not independent replications;
the minimum seed-averaged relative error of S2 is still 1.05.
The ResNet-18 exclusion is a burn-in failure, and all its conclusions
are conditional on the three retained starts.

\begin{table*}[t]
\centering
\caption{Study 1: RMST ratios to iid with 95\% hierarchical bootstrap
ranges for every declared event; retained for descriptive comparison,
without significance highlighting. RMST$_{\rm iid}$ is in updates (maximum $1000$).
$^\dagger$Three seeds (amendment above); its endpoints shown as 1.00 equal
one exactly (Appendix~\ref{app:nn-seedlevel}).}
\label{tab:nn-all}
\small
\begin{tabular}{@{}llrlll@{}}
\toprule
 & & & \multicolumn{3}{c}{RMST ratio to iid, with 95\% interval} \\
\cmidrule(l){4-6}
Network & Event & RMST$_{\rm iid}$ & RR & $p=0.25$ & $p=0.05$ \\
\midrule
MLP / MNIST & ball $r_1$ & 160 & 0.81\ci{0.61}{1.13} & 0.61\ci{0.50}{0.71} & 1.61\ci{1.33}{1.94} \\
 & ball $r_2$ & 332 & 0.88\ci{0.63}{1.21} & 0.34\ci{0.25}{0.49} & 0.90\ci{0.65}{1.30} \\
 & $e_{\rm noise}$ $r_1$ & 531 & 0.95\ci{0.47}{1.02} & 0.70\ci{0.36}{1.19} & 1.06\ci{0.75}{3.69} \\
 & $e_{\rm noise}$ $r_2$ & 555 & 0.94\ci{0.37}{1.04} & 0.69\ci{0.37}{1.02} & 1.05\ci{0.78}{2.51} \\
 & $e_{\rm rand}$ $r_1$ & 295 & 1.31\ci{0.81}{2.12} & 0.43\ci{0.32}{0.62} & 1.05\ci{0.75}{1.55} \\
 & $e_{\rm rand}$ $r_2$ & 510 & 1.15\ci{0.88}{1.47} & 0.37\ci{0.27}{0.53} & 0.81\ci{0.56}{1.34} \\
\addlinespace[3pt]
CNN / F-MNIST & ball $r_1$ & 199 & 0.97\ci{0.93}{1.01} & 0.43\ci{0.41}{0.45} & 0.86\ci{0.77}{0.97} \\
 & ball $r_2$ & 619 & 1.05\ci{0.95}{1.17} & 0.25\ci{0.22}{0.30} & 0.51\ci{0.43}{0.62} \\
 & $e_{\rm noise}$ $r_1$ & 250 & 0.94\ci{0.54}{1.36} & 0.40\ci{0.26}{0.65} & 0.87\ci{0.57}{1.53} \\
 & $e_{\rm noise}$ $r_2$ & 325 & 1.04\ci{0.65}{1.42} & 0.33\ci{0.21}{0.51} & 0.72\ci{0.49}{1.19} \\
 & $e_{\rm rand}$ $r_1$ & 397 & 1.03\ci{0.78}{1.47} & 0.41\ci{0.30}{0.71} & 0.72\ci{0.51}{1.29} \\
 & $e_{\rm rand}$ $r_2$ & 518 & 0.95\ci{0.74}{1.18} & 0.39\ci{0.29}{0.65} & 0.69\ci{0.51}{1.14} \\
\addlinespace[3pt]
ResNet-18 / CIFAR-10$^\dagger$ & ball $r_1$ & 474 & 0.98\ci{0.89}{1.00} & 0.91\ci{0.67}{1.00} & 1.21\ci{1.00}{1.81} \\
 & ball $r_2$ & 783 & 0.96\ci{0.85}{1.04} & 0.67\ci{0.40}{1.00} & 1.00\ci{0.92}{1.08} \\
 & $e_{\rm noise}$ $r_1$ & 278 & 1.21\ci{0.86}{1.38} & 0.19\ci{0.14}{1.65} & 0.08\ci{0.06}{3.04} \\
 & $e_{\rm noise}$ $r_2$ & 291 & 1.15\ci{0.83}{1.31} & 0.18\ci{0.14}{1.56} & 0.08\ci{0.06}{3.29} \\
 & $e_{\rm rand}$ $r_1$ & 495 & 0.99\ci{0.70}{1.34} & 0.90\ci{0.52}{1.02} & 1.18\ci{1.00}{2.16} \\
 & $e_{\rm rand}$ $r_2$ & 668 & 1.03\ci{0.85}{1.28} & 0.93\ci{0.69}{1.09} & 1.26\ci{1.00}{2.03} \\
\addlinespace[3pt]
ViT / CIFAR-10 & ball $r_1$ & 204 & 1.07\ci{1.06}{1.08} & 0.41\ci{0.40}{0.42} & 0.72\ci{0.66}{0.78} \\
 & ball $r_2$ & 591 & 0.96\ci{0.96}{0.97} & 0.50\ci{0.49}{0.51} & 0.73\ci{0.70}{0.77} \\
 & $e_{\rm noise}$ $r_1$ & 33 & 0.81\ci{0.53}{1.23} & 0.52\ci{0.30}{1.51} & 0.96\ci{0.53}{2.91} \\
 & $e_{\rm noise}$ $r_2$ & 41 & 0.94\ci{0.60}{1.47} & 0.49\ci{0.27}{1.68} & 0.95\ci{0.49}{3.34} \\
 & $e_{\rm rand}$ $r_1$ & 195 & 1.26\ci{0.90}{1.68} & 0.44\ci{0.33}{0.60} & 0.77\ci{0.58}{1.00} \\
 & $e_{\rm rand}$ $r_2$ & 276 & 1.31\ci{0.89}{1.88} & 0.40\ci{0.32}{0.54} & 0.71\ci{0.52}{0.96} \\
\bottomrule
\end{tabular}

\end{table*}

\begin{table}[t]
\centering
\caption{Study 1: seed-averaged relative RMST errors of the surrogates
along $e_{\rm noise}$. Bold S2 entries are smaller than S1, the comparison
of H2b (20 of 24 non-iid cells); S3 and S4 exist for persistent samplers
only.}
\label{tab:nn-surrogates}
\small
\begin{tabular}{@{}llrrrrrrrr@{}}
\toprule
 & & \multicolumn{4}{c}{radius $r_1$} & \multicolumn{4}{c}{radius $r_2$} \\
\cmidrule(lr){3-6}\cmidrule(l){7-10}
Network & Sampler & S1 & S2 & S4 & S3 & S1 & S2 & S4 & S3 \\
\midrule
MLP / MNIST & iid & 5.38 & 5.28 & -- & -- & 3.23 & 3.20 & -- & -- \\
 & RR & 11.86 & \textbf{11.71} & -- & -- & 10.00 & \textbf{9.92} & -- & -- \\
 & $p=0.25$ & 5.68 & \textbf{1.50} & 2.38 & 4.82 & 5.12 & \textbf{1.51} & 2.41 & 4.39 \\
 & $p=0.05$ & 1.76 & \textbf{1.09} & 1.44 & 1.64 & 1.60 & \textbf{1.05} & 1.37 & 1.50 \\
\addlinespace[3pt]
CNN / F-MNIST & iid & 2.60 & 2.45 & -- & -- & 1.87 & 1.75 & -- & -- \\
 & RR & 3.60 & \textbf{3.39} & -- & -- & 2.47 & \textbf{2.28} & -- & -- \\
 & $p=0.25$ & 7.62 & \textbf{2.79} & 3.66 & 11.10 & 7.47 & \textbf{3.06} & 3.97 & 10.51 \\
 & $p=0.05$ & 3.16 & \textbf{2.75} & 3.20 & 4.50 & 2.81 & \textbf{2.52} & 2.91 & 3.91 \\
\addlinespace[3pt]
ResNet-18 / CIFAR-10 & iid & 1.75 & 1.81 & -- & -- & 1.94 & 1.99 & -- & -- \\
 & RR & 1.80 & \textbf{1.67} & -- & -- & 2.02 & \textbf{1.89} & -- & -- \\
 & $p=0.25$ & 2.68 & \textbf{2.41} & 2.13 & 35.19 & 3.01 & \textbf{2.70} & 2.53 & 42.33 \\
 & $p=0.05$ & 5.56 & 9.82 & 8.89 & 78.96 & 5.50 & 8.36 & 8.04 & 61.78 \\
\addlinespace[3pt]
ViT / CIFAR-10 & iid & 2.75 & 2.71 & -- & -- & 3.35 & 3.26 & -- & -- \\
 & RR & 3.38 & \textbf{3.20} & -- & -- & 3.49 & \textbf{3.27} & -- & -- \\
 & $p=0.25$ & 4.47 & \textbf{1.34} & 1.31 & 21.82 & 5.05 & \textbf{1.33} & 1.34 & 21.66 \\
 & $p=0.05$ & 2.20 & 3.54 & 3.58 & 22.74 & 2.37 & 3.30 & 3.38 & 18.89 \\
\bottomrule
\end{tabular}

\end{table}

\subsection{Study 2: persistence dose-response}
\paragraph{Declaration.} After Study 1 had finished for three networks, a
second declaration fixed a dose-response study on the new seeds 5--8. It
keeps the Study 1 learning rates, radii and continuation protocol and
uses seven samplers: iid ($p=1$), persistent sampling with
$p\in\{0.5,0.25,0.1,0.05,0.02\}$, and RR. The primary hypothesis H3
states an interior minimum for the ball at $r_1$: the ratios
${\rm RMST}(p_c)/{\rm RMST}({\rm iid})$ and
${\rm RMST}(p_c)/{\rm RMST}(p{=}0.02)$ both have 95\% intervals below one.
Here $p_c$ is whichever of 0.25 and 0.05 had the smaller Study 1 RMST;
this gives $p_c=0.25$ for every network (for ResNet-18 the rule was
applied once its Study 1 run ended). Secondary endpoints are the same
contrasts for the other events, the RR contrast, and the agreement of the
S2 surrogate's ordering of the samplers with the observed ordering. The
ResNet-18 burn-in diverged for seed 7. By the declared rule this seed is
excluded without replacement, as recorded in an amendment before any
Study 2 output existed. Over both studies, the eval-mode ResNet-18
burn-in at learning rate 0.01 diverged for 2 of 8 seeds.

\paragraph{Results.} The historical H3 rule selected the ball at both radii in all four
networks, and for the random projection at both radii in all four
(Table~\ref{tab:nn-study2}). Along $e_{\rm noise}$ the rule selected the MLP
and CNN but not for ResNet-18, whose $e_{\rm noise}$ exits take only a
few updates and are fastest under iid, nor for the ViT, whose interval
against iid includes one. At the first radius the ball RMST is smallest
at $p=0.5$ (MLP, ResNet-18) or $p=0.25$ (CNN, ViT). The RR ratios there are
0.84 [0.69, 1.03] (MLP), 1.00 [0.97, 1.03] (CNN), 0.93 [0.89, 0.97]
(ResNet-18) and 1.08 [1.07, 1.09] (ViT). Over sampler pairs whose
observed RMSTs differ by at least 10\%, the S2 surrogate reproduces the
observed order in 483 of 565 pairs. It locates the observed minimizing
persistence level in 14 of 30 (network, seed, radius) cells.

\begin{table*}[t]
\centering
\caption{Study 2: RMST by sampler, pooled over seeds, and the declared H3 contrasts with
95\% hierarchical bootstrap intervals. ResNet-18 uses seeds 5, 6 and 8.}
\label{tab:nn-study2}
\footnotesize
\setlength{\tabcolsep}{4.5pt}
\begin{tabular}{@{}llrrrrrrrll@{}}
\toprule
 & & \multicolumn{7}{c}{RMST (updates), maximum 1000} & \multicolumn{2}{c}{H3 contrasts, $p_c=0.25$} \\
\cmidrule(lr){3-9}\cmidrule(l){10-11}
Network & Event & iid & $p=0.5$ & $0.25$ & $0.1$ & $0.05$ & $0.02$ & RR & $p_c$/iid & $p_c$/$(p=0.02)$ \\
\midrule
MLP / MNIST & ball $r_1$ & 272 & 97 & 109 & 172 & 290 & 687 & 228 & 0.40\ci{0.21}{0.80} & 0.16\ci{0.14}{0.18} \\
 & ball $r_2$ & 422 & 114 & 126 & 198 & 340 & 774 & 362 & 0.30\ci{0.18}{0.60} & 0.16\ci{0.15}{0.18} \\
 & $e_{\rm noise}$ $r_1$ & 576 & 278 & 278 & 393 & 553 & 799 & 466 & 0.48\ci{0.23}{0.72} & 0.35\ci{0.14}{0.68} \\
 & $e_{\rm rand}$ $r_1$ & 462 & 180 & 167 & 237 & 368 & 646 & 369 & 0.36\ci{0.25}{0.56} & 0.26\ci{0.20}{0.34} \\
\addlinespace[3pt]
CNN / F-MNIST & ball $r_1$ & 209 & 94 & 87 & 119 & 185 & 383 & 209 & 0.42\ci{0.40}{0.44} & 0.23\ci{0.21}{0.25} \\
 & ball $r_2$ & 601 & 191 & 160 & 218 & 350 & 722 & 641 & 0.27\ci{0.25}{0.29} & 0.22\ci{0.21}{0.24} \\
 & $e_{\rm noise}$ $r_1$ & 235 & 73 & 86 & 118 & 230 & 502 & 217 & 0.37\ci{0.23}{0.60} & 0.17\ci{0.12}{0.26} \\
 & $e_{\rm rand}$ $r_1$ & 568 & 221 & 221 & 248 & 393 & 601 & 586 & 0.39\ci{0.29}{0.52} & 0.37\ci{0.28}{0.47} \\
\addlinespace[3pt]
ResNet-18 / CIFAR-10 & ball $r_1$ & 223 & 143 & 153 & 217 & 347 & 765 & 208 & 0.69\ci{0.65}{0.73} & 0.20\ci{0.19}{0.21} \\
 & ball $r_2$ & 674 & 305 & 293 & 409 & 677 & 997 & 625 & 0.43\ci{0.38}{0.49} & 0.29\ci{0.28}{0.31} \\
 & $e_{\rm noise}$ $r_1$ & 4 & 6 & 8 & 12 & 20 & 31 & 4 & 1.89\ci{1.23}{2.62} & 0.26\ci{0.16}{0.89} \\
 & $e_{\rm rand}$ $r_1$ & 260 & 173 & 180 & 231 & 332 & 680 & 273 & 0.69\ci{0.50}{0.99} & 0.27\ci{0.20}{0.36} \\
\addlinespace[3pt]
ViT / CIFAR-10 & ball $r_1$ & 203 & 106 & 81 & 101 & 140 & 290 & 219 & 0.40\ci{0.39}{0.41} & 0.28\ci{0.25}{0.31} \\
 & ball $r_2$ & 593 & 425 & 296 & 298 & 421 & 835 & 572 & 0.50\ci{0.49}{0.51} & 0.35\ci{0.34}{0.37} \\
 & $e_{\rm noise}$ $r_1$ & 268 & 59 & 44 & 56 & 91 & 187 & 228 & 0.17\ci{0.11}{1.22} & 0.24\ci{0.18}{0.33} \\
 & $e_{\rm rand}$ $r_1$ & 210 & 119 & 99 & 119 & 148 & 275 & 267 & 0.47\ci{0.32}{0.72} & 0.36\ci{0.28}{0.48} \\
\bottomrule
\end{tabular}

\end{table*}

\subsection{Study 3: step-size scaling}
\paragraph{Declaration.} Theorem~\ref{thm:telegraph-bridge} implies that,
in time units $t=nh$, a lazy-resampling sampler enters the exit law only
through its switching rate $p/h$ (up to $O(h)$), whereas iid sampling,
whose rate diverges as $h\to0$, averages out. After Studies 1 and 2, a
third declaration tested this scaling. It reruns the Study 2 starts
(seeds 5--8; the same $\theta_0$ and momentum buffer) of the MLP, CNN and
ViT at the learning rates $h_0/2$ and $h_0/4$, where $h_0$ is the Study 2
rate; ResNet-18 is excluded for cost. The time window is fixed at
$1000h_0$, so $J=2000$ and $4000$ updates. The switching rates
$\tilde\lambda=p\,h_0/h\in\{0.02,0.05,0.1,0.25,0.5,1\}$ are those of
Study 2, so at $h_0/2$ and $h_0/4$ the persistent samplers use
$p=\tilde\lambda/2$ and $p=\tilde\lambda/4$; iid is also run at each step.
Each (seed, step, sampler) cell has 32 paths (16 for the ViT). The
estimand is the RMST in time units, $h\,\E\min(\tau,J)/h_0$, with the
hierarchical bootstrap of Study 1 applied to ratios with shared seeds.
The declared hypotheses were
\begin{itemize}
\item H5a (averaging): for the ball at $r_1$, the time-unit RMST of iid
at $h_0/4$ exceeds that at $h_0$ (interval above one);
\item H5b (switching-rate invariance): at $\tilde\lambda=0.25$, the
time-unit RMST at $h_0/4$ is equivalent to that at $h_0/2$, with the
95\% interval of their ratio inside $[0.8,1.25]$.
\end{itemize}
The analysis code was hashed before any Study 3 output was loaded. A
note records that the runs were moved from one to three GPUs when other
users' jobs ended; the jobs and analysis were unchanged.

\paragraph{Results.} Table~\ref{tab:nn-study3} and
Figure~\ref{fig:nn-scaling} report the outcome; no Study 3 path became
non-finite. The per-arm intervals in Figure~\ref{fig:nn-scaling} are
descriptive hierarchical bootstrap intervals (seeds, then paths), added
after the declared analysis; the declared contrasts are those of the
table. The historical H5a rule selected all three networks. H5b fails for the CNN and
the ViT. For the MLP its interval lies inside the margin, but only
because 3\% and 0\% of paths leave the ball within the window at
$h_0/2$ and $h_0/4$, so both arms are censored at $1000h_0$; we do not
count this as evidence of invariance. At $h_0/2$ and $h_0/4$ no MLP or
CNN iid path leaves the ball within the window (time-unit RMST 1000),
whereas CNN paths at rates 0.25--0.5 still do (RMST 190--202 at
$h_0/2$ and 597--648 at $h_0/4$). Across matched switching rates,
the ratio of time-unit RMSTs between $h_0/2$ and $h_0$ ranges from 1.37
to 2.95 in the CNN and ViT, except at the fastest rate, where the ViT
ratio is 0.99; between $h_0/4$ and $h_0/2$ it ranges from 1.09 to 3.20.
Strong persistence is the most step-dependent: at rate 0.02 the ViT RMST
grows from 290 to 969 between $h_0$ and $h_0/4$. The minimizing rate
stays at 0.25--0.5 for the CNN and ViT.

\paragraph{Exploratory minibatch sharpness.} To locate these networks
relative to the small-step regime, we estimated at each Study 2 start of
seed 5 the top eigenvalue $\kappa$ of the loss Hessian plus weight decay,
for 16 uniformly random 64-subsets and for the whole pool (20 power
iterations with Hessian--vector products; not declared). The median
minibatch value of $h_0\kappa$ is 1.07 (MLP), 0.71 (CNN), 1.26
(ResNet-18) and 0.69 (ViT), so $h_0\kappa/(1-\beta)$ lies between 6.9
and 12.6; whole-pool values of $h_0\kappa$ are 0.28--0.62. The largest
probed minibatch value, 2.45, is below the heavy-ball stability
threshold $2(1+\beta)=3.8$, so the persistence effects are not linear
instabilities of individual batches, but the steps are far from the
regime $h\kappa/(1-\beta)\ll1$ in which the switching limit applies.

\begin{table*}[t]
\centering
\caption{Study 3: declared contrasts for the ball at $r_1$ (time-unit RMST
ratios with 95\% hierarchical bootstrap intervals) and the fraction of
paths leaving the ball within $1000h_0$ at each step (all samplers).
$^\ast$Inside the margin only because both arms are censored.}
\label{tab:nn-study3}
\small
\begin{tabular}{@{}lll*{3}{>{\centering\arraybackslash}p{1.25cm}}@{}}
\toprule
 & \multicolumn{2}{c}{Declared contrast, ball $r_1$} & \multicolumn{3}{c}{Paths leaving the ball by $1000h_0$} \\
\cmidrule(lr){2-3}\cmidrule(l){4-6}
Network & H5a: iid, $h_0/4$ vs $h_0$ & H5b: $\tilde\lambda=0.25$, $h_0/4$ vs $h_0/2$ & $h_0$ & $h_0/2$ & $h_0/4$ \\
\midrule
MLP / MNIST & 3.68\ci{1.87}{7.42} & 1.03\ci{1.01}{1.06}$^\ast$ & 96\% & 3\% & 0\% \\
CNN / F-MNIST & 4.79\ci{4.43}{5.21} & 3.20\ci{2.89}{3.53} & 100\% & 81\% & 44\% \\
ViT / CIFAR-10 & 1.54\ci{1.40}{1.68} & 1.95\ci{1.83}{2.08} & 100\% & 100\% & 91\% \\
\bottomrule
\end{tabular}

\end{table*}

\paragraph{Compute.} The new network checkpoints record NVIDIA GeForce
RTX 3090 GPUs and PyTorch 2.10.0+cu128. The 240 Study 1 continuation
files contain 1,920 paths, the 420 Study 2 files 3,360 paths and the 560
Study 3 files 4,480 paths. Their recorded batch execution times sum to
22,746.9, 39,248.6 and 61,291.3 seconds (6.32, 10.90 and 17.03
device-hours), excluding training, calibration, and CPU surrogate
analysis. These sums are not elapsed wall-clock time.
The older neural diagnostic in Appendix~\ref{app:extended-validation}
used an RTX 3060 Laptop GPU and is a separate experiment.

\subsection{Studies 4--7: new datasets, momentum, pool and batch size, burn-in length}
\label{app:nn-studies456}

\begin{figure*}[t]
\centering
\includegraphics[width=\textwidth]{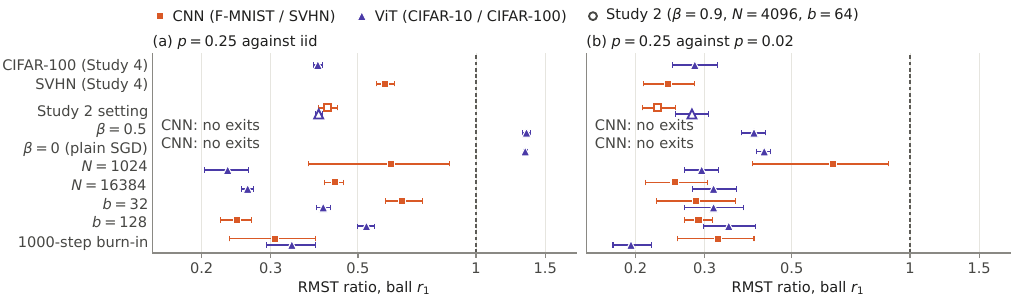}
\caption{Studies 4--7: declared RMST ratios for the ball at $r_1$ with
descriptive bootstrap ranges; below one, $p=0.25$ exits
earlier. Open markers: the Study 2 setting. At $\beta\le0.5$ no CNN path
exited.}
\label{fig:nn-robust}
\end{figure*}

\paragraph{Declarations.} Studies 4--6 were declared after Studies 1--3,
and Study 7 after review, each in a hashed file before its runs. All four
test the interior minimum of Study 2 in the same form: for the ball at
$r_1$, ${\rm RMST}(p{=}0.25)/{\rm RMST}({\rm iid})<1$ and
${\rm RMST}(p{=}0.25)/{\rm RMST}(p{=}0.02)<1$, both 95\% hierarchical
bootstrap intervals below one, assessed separately for every network
and setting (hypotheses H6--H9). Every study uses the seven samplers of
Study 2 and reports every cell.
\begin{itemize}
\item \emph{Study 4 (new datasets, H6).} ResNet-18 and the ViT on
CIFAR-100 \citep{krizhevsky2009learning} (100 classes; 30 and 60 epochs
with crops and flips) and the CNN with three input channels on SVHN
\citep{netzer2011reading} (15 epochs, no augmentation). Learning rates and
radii were fixed on pilot seed~0 by the rules of Study~1 before the
formal declaration; seeds 1--4, 32 paths per seed and sampler, $J=1000$.
For ResNet-18 on CIFAR-100 neither declared candidate passed the pilot:
the burn-in diverged at 0.02, and 3 of 24 pilot paths diverged at 0.01.
By the declared rule this configuration was reported and not run. The
admitted learning rates are 0.05 (CNN, SVHN) and 0.01 (ViT, CIFAR-100).
Test accuracies over seeds 0--4 range from 90.97--91.36\% (CNN, SVHN),
51.16--52.57\% (ViT, CIFAR-100) and 74.24--74.46\% (ResNet-18, CIFAR-100).
\item \emph{Study 5 (momentum, H7).} The CNN and the ViT at $\beta=0$
(plain SGD) and $\beta=0.5$, at the Study 2 learning rates, from new
100-step burn-ins at each $\beta$ on the Study 2 checkpoints and pools
(seeds 5--8). The windows $J=1000(1-\beta)/(1-0.9)$, that is 10,000 and
5,000 updates, keep $Jh/(1-\beta)$ equal to the Study 2 window; RMSTs are
convertible to $\beta=0.9$ equivalent updates as
${\rm RMST}\,0.1/(1-\beta)$; the tables retain raw updates.
Paths per seed and sampler: 16 (CNN) and 8 (ViT). The MLP was not
included because one of its probed minibatches exceeds the plain-SGD
stability threshold $h\kappa<2$ at its learning rate, so that persistent
echoing could diverge and divergence would count as exit.
\item \emph{Study 6 (pool and batch size, H8).} The CNN and the ViT with
$N\in\{1024,16384\}$ at $b=64$, and with $b\in\{32,128\}$ at $N=4096$
($\beta=0.9$), each with its own burn-in on the Study 2 checkpoints;
16 paths per seed and sampler, $J=1000$.
\item \emph{Study 7 (burn-in length, H9).} Training ends with batch size
128 and a decayed learning rate, whereas the burn-in and continuations use
$b=64$ and a constant learning rate. Study 7 lengthens the burn-in
tenfold, to 1,000 iid momentum steps, in the Study 2 setting of the CNN
and the ViT (seeds 5--8, same checkpoints and pools; its first 100 steps
are the Study 2 burn-in); 16 paths per seed and sampler, $J=1000$.
\end{itemize}
Studies 5--7 keep the radii of Study~1. A note records that two
workers were added on a freed GPU and processed the declared Study 5 jobs
in reverse order; jobs, seeds and analysis were unchanged. Two Study 7
notes record that the preparation of the ViT starts ran out of GPU memory,
because another job occupied the shared GPUs, and was repeated with
smaller chunks in the direction and curvature computations; this changes
only the order of summation in those secondary quantities, not the
burn-in, the start state or the ball endpoint.

\paragraph{Study 4 results.} The historical H6 rule selected both admitted configurations
(Table~\ref{tab:nn-study4}). For the CNN on SVHN, $p=0.25$ gave ball-RMST
ratios of 0.59 \ci{0.56}{0.62} to iid and 0.24 \ci{0.21}{0.28} to
$p=0.02$; for the ViT on CIFAR-100, 0.40 \ci{0.39}{0.41} and
0.28 \ci{0.25}{0.32}. Exits were fastest at $p=0.25$ for the ball at
both radii in both networks (at $r_1$ the CNN's $p=0.5$ is within 1\%).
The same contrasts hold for the random projection in both networks and
for $e_{\rm noise}$ in the ViT; for the CNN on SVHN the $e_{\rm noise}$
interval against iid includes one (0.76 \ci{0.56}{1.02}). RR exited 6\%
earlier than iid for the CNN (0.94 \ci{0.89}{0.98}) and 2\% later for the
ViT (1.02 \ci{1.02}{1.03}), as for the ViT on CIFAR-10. Four of 1,792
paths (CNN on SVHN, persistent sampling with $p\in\{0.1,0.05,0.02\}$)
became non-finite, each after all of its recorded exits.

\paragraph{Study 5 results.} H7 fails in all four cells
(Table~\ref{tab:nn-ablations}). For the CNN no path of any sampler reached
the ball radius within the window at either $\beta$, so its two cells
carry no evidence either way. For the ViT, exits slowed monotonically
with persistence at both $\beta=0.5$ and $\beta=0$: relative to iid,
$p=0.25$ was slower by factors of 1.34 \ci{1.32}{1.37} and
1.33 \ci{1.32}{1.35}, and $p=0.02$ by 3.35 \ci{3.09}{3.65} and
3.14 \ci{3.00}{3.31}, while RR was 3--4\% faster than iid. At
$\beta=0.5$ only 9\% of the $p=0.02$ paths exited within the window, so
the first of these ratios understates the slowdown. The contrast
with $p=0.02$ stayed below one, but the interior minimum of Study 2 did
not appear: iid and RR exited first. Momentum and the effective step are
confounded in this design, because the raw step was held fixed, so that
$h/(1-\beta)$ fell by factors of 5 and 10; the windows equalize effective
time but not the reach of the fluctuations, as the CNN's censoring shows.
At the step sizes studied, the speed-up from moderate persistence
therefore requires heavy-ball momentum, whereas the dependence of exit
times on the sampler at a fixed one-step law persists without it
(factors up to 3.4). No Study 5 path became non-finite.

\paragraph{Study 6 results.} The historical H8 rule selected all eight cells
(Table~\ref{tab:nn-ablations}). With the larger pool ($N=16384$) and the
smaller batch ($b=32$) every sampler exits, and $p=0.25$ gave ratios of
0.26--0.65 to iid and 0.25--0.32 to $p=0.02$. With the smaller pool
($N=1024$, epochs of 16 batches) no iid or RR path left the ball within
the window in either network, while persistent sampling still produced
exits (61\% and 100\% of paths at $p=0.25$ for the CNN and the ViT); the
fastest level moved to $p=0.1$ for the CNN. The same happened for the CNN
with the larger batch ($b=128$): no iid or RR exits, while every
$p=0.25$ path exited; the ViT behaved as in Study 2 (0.53
\ci{0.50}{0.55} and 0.35 \ci{0.30}{0.40}). Where the iid RMST equals the
window, the ratio to iid compares with a fully censored arm, so the
contrast with $p=0.02$ carries the evidence. RR exited 22--26\% later than
iid for the ViT with $N=16384$ and $b=32$, and 3\% earlier with $b=128$.
No Study 6 path became non-finite.

\paragraph{Study 7 results.} The historical H9 rule selected both networks
(Table~\ref{tab:nn-ablations}). After the 1,000-step burn-in, $p=0.25$
gave ratios of 0.31 \ci{0.24}{0.39} to iid and 0.33 \ci{0.26}{0.40} to
$p=0.02$ for the CNN, and 0.34 \ci{0.29}{0.39} and 0.20 \ci{0.18}{0.22}
for the ViT, and it was the fastest level in both. The extra 900 steps
moved the start by 1.5--1.6 ball radii $r_1$ (CNN) and 2.1--2.3 (ViT),
so the continuations begin from different states, and for the CNN they
reduced the norm of the start momentum buffer from 2.6--3.3 to 0.4--1.5.
From these more settled starts no CNN iid or RR path left the ball within
the window, whereas every $p=0.25$ path did. The speed-up thus persists after a longer burn-in; this weakens a
short-transient explanation without establishing stationarity. For the ViT, RR and iid were indistinguishable
(1.04 \ci{0.88}{1.28}). No Study 7 path became non-finite.

\begin{table}[ht]
\centering
\caption{Study 4: ball RMST by sampler (updates, maximum 1000) and the declared H6 contrasts with 95\%
descriptive bootstrap ranges. ResNet-18 on
CIFAR-100 had no admissible learning rate in the pilot and was not run.}
\label{tab:nn-study4}
\small
\setlength{\tabcolsep}{4.5pt}
\begin{tabular}{@{}lrrrrrrrll@{}}
\toprule
 & \multicolumn{7}{c}{RMST (updates), ball $r_1$} & \multicolumn{2}{c}{${\rm RMST}(p{=}0.25)$ relative to} \\
\cmidrule(lr){2-8}\cmidrule(l){9-10}
Network & iid & $p{=}0.5$ & $0.25$ & $0.1$ & $0.05$ & $0.02$ & RR & iid & $p{=}0.02$ \\
\midrule
ViT / CIFAR-100 & 211 & 107 & 84 & 106 & 150 & 295 & 216 & 0.40\ci{0.39}{0.41} & 0.28\ci{0.25}{0.32} \\
CNN / SVHN & 198 & 117 & 116 & 155 & 234 & 481 & 186 & 0.59\ci{0.56}{0.62} & 0.24\ci{0.21}{0.28} \\
\bottomrule
\end{tabular}

\end{table}

\begin{table}[ht]
\centering
\caption{Studies 5--7: ball RMST by sampler and the declared H7--H9
contrasts with descriptive bootstrap ranges. The Study 2 rows
($\beta=0.9$, $N=4096$, $b=64$, 100-step burn-in) are the reference;
every other row changes one of these. RMSTs are in updates; the windows
are 1000 updates except for $\beta=0.5$ (5000) and $\beta=0$ (10,000).}
\label{tab:nn-ablations}
\footnotesize
\setlength{\tabcolsep}{4pt}
\begin{tabular}{@{}llrrrrrrrll@{}}
\toprule
 & & \multicolumn{7}{c}{RMST (updates), ball $r_1$} & \multicolumn{2}{c}{${\rm RMST}(p{=}0.25)$ relative to} \\
\cmidrule(lr){3-9}\cmidrule(l){10-11}
Network & Setting & iid & $p{=}0.5$ & $0.25$ & $0.1$ & $0.05$ & $0.02$ & RR & iid & $p{=}0.02$ \\
\midrule
CNN & Study 2 & 209 & 94 & 87 & 119 & 185 & 383 & 209 & 0.42\ci{0.40}{0.44} & 0.23\ci{0.21}{0.25} \\
F-MNIST & $\beta=0.5$ & 5000 & 5000 & 5000 & 5000 & 5000 & 5000 & 5000 & \multicolumn{2}{c}{no exits within the window} \\
 & $\beta=0$ & 10000 & 10000 & 10000 & 10000 & 10000 & 10000 & 10000 & \multicolumn{2}{c}{no exits within the window} \\
 & $N=1024$ & 1000 & 1000 & 606 & 468 & 657 & 953 & 1000 & 0.61\ci{0.37}{0.86} & 0.64\ci{0.40}{0.88} \\
 & $N=16384$ & 166 & 78 & 72 & 95 & 141 & 288 & 170 & 0.44\ci{0.41}{0.46} & 0.25\ci{0.21}{0.31} \\
 & $b=32$ & 72 & 46 & 46 & 61 & 91 & 163 & 70 & 0.65\ci{0.59}{0.73} & 0.28\ci{0.23}{0.36} \\
 & $b=128$ & 1000 & 491 & 247 & 300 & 445 & 853 & 1000 & 0.25\ci{0.22}{0.27} & 0.29\ci{0.27}{0.31} \\
 & burn-in 1000 & 1000 & 396 & 307 & 388 & 609 & 943 & 1000 & 0.31\ci{0.24}{0.39} & 0.33\ci{0.26}{0.40} \\
\addlinespace[3pt]
ViT & Study 2 & 203 & 106 & 81 & 101 & 140 & 290 & 219 & 0.40\ci{0.39}{0.41} & 0.28\ci{0.25}{0.31} \\
CIFAR-10 & $\beta=0.5$ & 1487 & 1622 & 1999 & 2972 & 3979 & 4989 & 1429 & 1.34\ci{1.32}{1.37} & 0.40\ci{0.37}{0.43} \\
 & $\beta=0$ & 2885 & 3234 & 3848 & 5211 & 6717 & 9046 & 2792 & 1.33\ci{1.32}{1.35} & 0.43\ci{0.41}{0.44} \\
 & $N=1024$ & 1000 & 885 & 233 & 278 & 439 & 791 & 1000 & 0.23\ci{0.20}{0.26} & 0.29\ci{0.27}{0.32} \\
 & $N=16384$ & 290 & 113 & 76 & 83 & 128 & 241 & 367 & 0.26\ci{0.25}{0.27} & 0.32\ci{0.28}{0.36} \\
 & $b=32$ & 126 & 60 & 51 & 65 & 93 & 163 & 154 & 0.41\ci{0.39}{0.43} & 0.32\ci{0.27}{0.38} \\
 & $b=128$ & 290 & 197 & 153 & 165 & 252 & 442 & 283 & 0.53\ci{0.50}{0.55} & 0.35\ci{0.30}{0.40} \\
 & burn-in 1000 & 346 & 124 & 118 & 183 & 284 & 604 & 360 & 0.34\ci{0.29}{0.39} & 0.19\ci{0.18}{0.22} \\
\bottomrule
\end{tabular}

\end{table}

\paragraph{Compute for Studies 4--7.} The 224, 168, 448 and 112
continuation files of Studies 4--7 contain 1,792, 1,344, 3,584 and 896
paths; their recorded batch execution times sum to 16,956.6, 70,032.6,
45,665.7 and 5,787.8 seconds (4.71, 19.45, 12.68 and 1.61 device-hours).
The Study 4 pilot took 0.42 and the training of its 15 networks 0.91
device-hours; the eight Study 7 start states took 0.33 device-hours,
excluding the attempts that ran out of memory.

\subsection{A mechanism for the interior minimum}
\label{app:nn-mechanism}
This analysis was added after review and uses only the archived paths of
Studies 2, 3 and 5; it is exploratory. For every path we regenerate the
sampler stream from its archived seed, which marks the updates that draw
a fresh batch, and split the path into runs of one reused batch. The
outward progress of update $n$, $\|\theta_n-\theta_0\|-\|\theta_{n-1}-
\theta_0\|$, is recorded exactly in the traces; squared update lengths
are estimated without bias from the random projection,
$D\,(e_{\rm rand}^\top(\theta_n-\theta_{n-1}))^2$, because $e_{\rm rand}$
is drawn independently of training. Every quantity uses the first 400
updates of every path in $\beta=0.9$, $h_0$ equivalent time (one update
counts as $(h/h_0)\,0.1/(1-\beta)$), whether or not the path has
exited, and is reported relative to its mean under iid sampling in the
same setting. Three Study 2 ResNet-18 paths that became non-finite within
this horizon are excluded.

\paragraph{With heavy-ball momentum a reused batch keeps the parameters
moving outward.} In runs of $p=0.02$ at $\beta=0.9$
(Figure~\ref{fig:nn-mechanism}a), the squared update length peaks at the
second or third update of a run, at 1.9--3.0 times its iid mean, as the
momentum buffer aligns with the batch. The outward progress per update rises from
1.1--1.3 times the iid mean at the first update to a peak of 1.8--2.5 at
updates 5--9, and it falls below the iid mean after 17--24 updates, once
the batch is fitted. Averaged over the run lengths of each sampler, the
outward progress per update is largest at $p=0.25$ for the MLP, CNN and
ResNet-18 (1.26--1.74 times iid) and at $p=0.1$ for the ViT (1.42,
against 1.40 at $p=0.25$), and it falls to 0.60--0.83 at $p=0.02$
(Figure~\ref{fig:nn-mechanism}c): the ordering of the exit times. A few
echoes per batch thus keep the iterate in the productive part of each
run, whereas fifty echoes spend most updates on a batch that is already
fitted, and iid sampling never lets the buffer align with one batch.
With smaller steps (Study 3) the productive part lengthens to 28--37
updates and the averaged progress peaks at switching rates $p\,h_0/h$ of
0.25--1, close to the fastest rates of Study 3 (0.25--0.5).

\paragraph{Without heavy-ball momentum each fresh batch pulls the
parameters back.} At $\beta\le0.5$ the first update after a fresh batch
moves the parameters toward $\theta_0$: its outward progress is $-1.4$
to $-3.1$ times the iid mean (Figure~\ref{fig:nn-mechanism}b), so each
refresh undoes part of the progress made on the previous batch. The
averaged progress then decreases with persistence, to 0.58--0.80 at
$p=0.02$, which is the reversal of Study 5. A plausible reading is that
the start lies near a minimizer of the pool loss: a reused batch pulls
the parameters toward its own minimizer, a fresh batch pulls them back,
and with $\beta=0.9$ the momentum buffer carries the outward velocity
across the change of batch (the first update of a run still moves
outward). Study 8 (Appendix~\ref{app:nn-study8}) tested the last part by
intervention: removing the carried velocity stops that first update from
moving outward, but slows exits no more than resetting the buffer at
random times.

The best persistence level of the averaged progress is the same in 52 of
55 network-seed cells of Studies 2, 3 and 5, and the adjacent level in
the other three; at $\beta\le0.5$ no persistence level exceeds the iid
rate by more than 2\% in any seed. These are descriptive statistics of
archived paths, not a causal decomposition, and the outward progress is
a proxy that does not account for the exit events themselves.

\begin{figure}[ht]
\centering
\includegraphics[width=\textwidth]{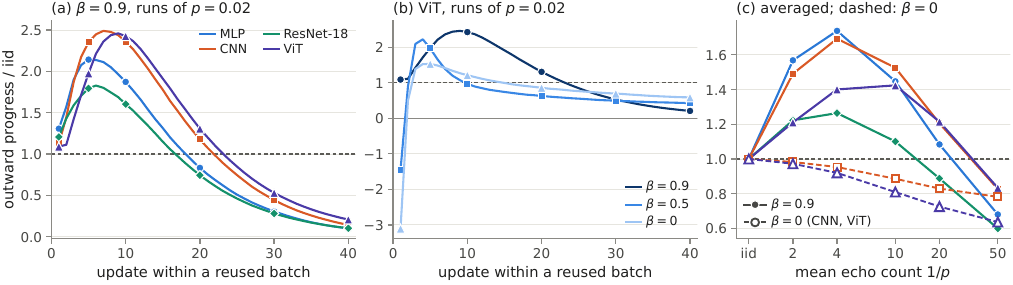}
\caption{Post-hoc mechanism of the interior minimum: outward progress
per update, $\|\theta_n-\theta_0\|-\|\theta_{n-1}-\theta_0\|$, relative
to its iid mean in the same setting. (a) By position within a run of
one reused batch ($p=0.02$, $\beta=0.9$, Study 2). (b) The same for the
ViT at $\beta=0.9$, 0.5 and 0 (Studies 2 and 5). (c) Averaged over all
updates, against the mean echo count, at $\beta=0.9$ (Study 2) and
$\beta=0$ (Study 5, dashed). First 400 equivalent updates of every path.}
\label{fig:nn-mechanism}
\end{figure}

\subsection{Study 8: momentum-buffer resets}
\label{app:nn-study8}
\paragraph{Declaration.} Study 8 tests the mechanism of
Appendix~\ref{app:nn-mechanism} by intervention. It was declared after
that analysis, in a hashed file recorded before any of its runs together
with the path and analysis code. It continues the Study 2 starts of the
MLP, CNN and ViT (seeds 5--8; ResNet-18 was omitted for cost) at
$\beta=0.9$, with the Study 2 learning rates and the Study 1 radii, for
$J=1000$ updates and 16 paths per seed in each of four arms: persistent
sampling with $p=0.25$ (i) unchanged, (ii) with the momentum buffer set
to zero before every update that uses a freshly drawn batch, and (iii)
with the buffer set to zero before each update with probability 0.25,
independently of the batches; and (iv) iid sampling. A reset makes that
update a plain gradient step, after which momentum accumulates again; no
reset precedes the first update, so all arms start from the same full
state. The three persistent arms of a path share its batch stream, so
arms (ii) and (iii) differ only in when the buffer is reset, not in how
often on average. If the buffer carries outward velocity across changes
of batch, resets at refresh remove it at every change, whereas random
resets remove it at a quarter of the changes and otherwise interrupt a
run. We declared, for the ball at $r_1$ and separately for each network,
that resets at refresh slow exits (H10a: ${\rm RMST}$ of arm (ii) over
arm (i) above one), that they slow them more than random resets (H10b:
(ii) over (iii) above one) and, as a secondary directional hypothesis,
that they make $p=0.25$ slower than iid (H10c: (ii) over (iv) above
one), each with its 95\% hierarchical bootstrap interval above one. The
secondary analyses were the same contrasts for the other events and the
outward progress by position within a run, with the declared direction
that resets at refresh lower the progress at the first update of a run.

\begin{figure}[ht]
\centering
\includegraphics[width=\textwidth]{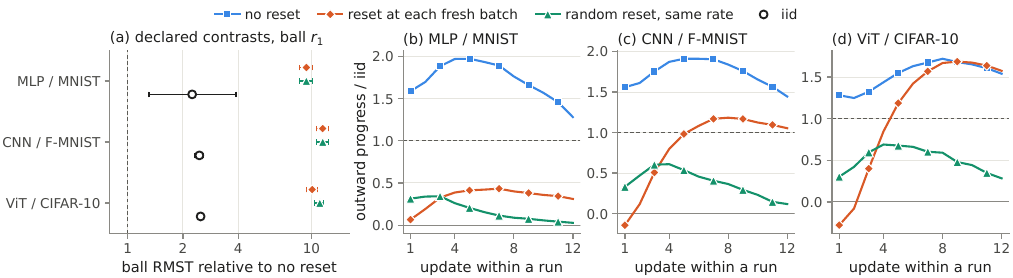}
\caption{Study 8. (a) Ball RMST at $r_1$ relative to persistent sampling
with $p=0.25$ and no reset, with 95\% hierarchical bootstrap intervals;
no MLP or CNN path of either reset arm exited, so those ratios are
limited by the window. (b--d) Outward progress per update,
$\|\theta_n-\theta_0\|-\|\theta_{n-1}-\theta_0\|$, by position within a
run of one batch, relative to its iid mean (first 400 updates of every
path).}
\label{fig:nn-study8}
\end{figure}

\begin{table}[ht]
\centering
\caption{Study 8: RMST by arm (persistent sampling with $p=0.25$ and no
reset, resets at every fresh batch, random resets at the same rate; iid)
and the declared ratios of the resets at refresh to the other arms, with
descriptive bootstrap ranges; no significance highlighting. An RMST
of 1000 means that no path exited; dashes mark ratios between two such
arms. H10a--c concern the ball at $r_1$; the other events are
secondary.}
\label{tab:nn-study8}
\footnotesize
\setlength{\tabcolsep}{4pt}
\begin{tabular}{@{}llrrrrlll@{}}
\toprule
 & & \multicolumn{4}{c}{RMST (updates), maximum 1000} & \multicolumn{3}{c}{Declared ratios, resets at refresh to} \\
\cmidrule(lr){3-6}\cmidrule(l){7-9}
Network & Event & no reset & refresh & random & iid & H10a: no reset & H10b: random & H10c: iid \\
\midrule
MLP / MNIST & ball $r_1$ & 107 & 1000 & 1000 & 239 & 9.38\ci{8.67}{10.10} & \multicolumn{1}{c}{--} & 4.18\ci{2.33}{7.23} \\
 & ball $r_2$ & 122 & 1000 & 1000 & 407 & 8.17\ci{7.47}{8.84} & \multicolumn{1}{c}{--} & 2.46\ci{1.43}{4.82} \\
 & $e_{\rm noise}$ $r_1$ & 287 & 1000 & 985 & 545 & 3.48\ci{1.60}{9.72} & 1.02\ci{1.00}{1.10} & 1.84\ci{1.18}{3.46} \\
 & $e_{\rm noise}$ $r_2$ & 297 & 1000 & 985 & 610 & 3.36\ci{1.59}{9.34} & 1.02\ci{1.00}{1.10} & 1.64\ci{1.15}{2.69} \\
 & $e_{\rm rand}$ $r_1$ & 135 & 997 & 996 & 540 & 7.39\ci{5.75}{9.66} & 1.00\ci{0.99}{1.02} & 1.85\ci{1.34}{2.89} \\
 & $e_{\rm rand}$ $r_2$ & 189 & 1000 & 1000 & 636 & 5.28\ci{3.83}{6.98} & \multicolumn{1}{c}{--} & 1.57\ci{1.21}{2.24} \\
\addlinespace[3pt]
CNN / F-MNIST & ball $r_1$ & 87 & 1000 & 1000 & 213 & 11.52\ci{10.71}{12.34} & \multicolumn{1}{c}{--} & 4.69\ci{4.34}{5.07} \\
 & ball $r_2$ & 160 & 1000 & 1000 & 593 & 6.25\ci{5.96}{6.54} & \multicolumn{1}{c}{--} & 1.69\ci{1.52}{1.87} \\
 & $e_{\rm noise}$ $r_1$ & 81 & 823 & 891 & 161 & 10.21\ci{6.98}{15.06} & 0.92\ci{0.79}{1.08} & 5.11\ci{3.44}{8.25} \\
 & $e_{\rm noise}$ $r_2$ & 88 & 835 & 924 & 219 & 9.46\ci{6.55}{14.12} & 0.90\ci{0.78}{1.03} & 3.82\ci{2.50}{6.50} \\
 & $e_{\rm rand}$ $r_1$ & 244 & 937 & 906 & 538 & 3.84\ci{2.87}{5.31} & 1.03\ci{0.93}{1.16} & 1.74\ci{1.42}{2.24} \\
 & $e_{\rm rand}$ $r_2$ & 296 & 962 & 963 & 709 & 3.25\ci{2.48}{4.42} & 1.00\ci{0.92}{1.07} & 1.36\ci{1.12}{1.68} \\
\addlinespace[3pt]
ViT / CIFAR-10 & ball $r_1$ & 82 & 827 & 910 & 204 & 10.10\ci{9.45}{10.80} & 0.91\ci{0.86}{0.96} & 4.05\ci{3.81}{4.32} \\
 & ball $r_2$ & 299 & 1000 & 1000 & 594 & 3.34\ci{3.19}{3.54} & \multicolumn{1}{c}{--} & 1.68\ci{1.65}{1.75} \\
 & $e_{\rm noise}$ $r_1$ & 35 & 207 & 280 & 253 & 5.85\ci{2.35}{8.41} & 0.74\ci{0.46}{1.06} & 0.82\ci{0.56}{3.80} \\
 & $e_{\rm noise}$ $r_2$ & 36 & 252 & 302 & 272 & 7.04\ci{2.47}{10.12} & 0.83\ci{0.50}{1.14} & 0.93\ci{0.66}{3.59} \\
 & $e_{\rm rand}$ $r_1$ & 96 & 406 & 592 & 260 & 4.22\ci{2.76}{6.47} & 0.69\ci{0.47}{0.99} & 1.56\ci{1.04}{2.44} \\
 & $e_{\rm rand}$ $r_2$ & 152 & 587 & 723 & 359 & 3.87\ci{2.81}{5.46} & 0.81\ci{0.65}{1.02} & 1.64\ci{1.18}{2.40} \\
\bottomrule
\end{tabular}

\end{table}

\paragraph{Results.} Table~\ref{tab:nn-study8} and
Figure~\ref{fig:nn-study8} give every declared endpoint. Without resets,
$p=0.25$ again exited earlier than iid, with ratios of
0.45 \ci{0.26}{0.77}, 0.41 \ci{0.39}{0.43} and 0.40 \ci{0.39}{0.41} for
the MLP, CNN and ViT, close to Study 2. The empirical H10a direction appears in all three networks:
resets at refresh slowed exits by factors of 9.38 \ci{8.67}{10.10},
11.52 \ci{10.71}{12.34} and 10.10 \ci{9.45}{10.80}. No MLP or CNN path
of this arm left the ball within the window, against 94\% of the ViT
paths, so the first two factors are limited by the window. The empirical H10c direction also appears, with ratios to iid of 4.18 \ci{2.33}{7.23}, 4.69 \ci{4.34}{5.07}
and 4.05 \ci{3.81}{4.32}. H10b fails in all three networks: random resets
at the same rate (247--251 per path in both arms) slowed exits at least
as much.
For the MLP and CNN no path of either reset arm left the ball, so the
contrast carries no information there; for the ViT, resets at refresh
slowed exits less than random ones, 0.91 \ci{0.86}{0.96}, the opposite of
the prediction. The secondary events agree: the empirical H10a direction appears for every event in
every network, and no H10b interval lies above one. No Study 8 path
became non-finite.

The recorded first-update progress has the declared direction. With resets at refresh, the
outward progress at the first update of a run fell from 1.59, 1.56 and
1.28 times the iid mean to 0.07, $-0.14$ and $-0.28$: without the carried
velocity, the first update on a fresh batch barely moves outward or moves
back toward $\theta_0$, as at $\beta\le0.5$
(Figure~\ref{fig:nn-study8}b--d). The progress then recovers within the
run as momentum accumulates on the batch: for the CNN and the ViT it
exceeds the iid mean from the sixth and fifth update of a run until the
batch is fitted, and for the ViT it then nearly matches the arm without
resets. With random resets it peaks at 0.34--0.69 times the iid mean
within the first four updates and then declines. Averaged over all
updates, the two reset arms are nearly equal (0.26--0.52 times the iid
mean with resets at refresh and 0.26--0.49 with random resets, against
1.40--1.74 without resets), as their exit times are.

Study 8 documents two features of the intervention in
Appendix~\ref{app:nn-mechanism}; it does not identify causal components. In these experiments the speed-up disappears when the buffer is reset on
average every four updates, $p=0.25$ loses its advantage and exits at
least four times later than iid, whose buffer is never reset. Resets also
shorten the effective step, so, as in Study 5, this part does not
separate accumulation from step length. The results do not support an additional cost specific to removing
velocity at batch changes: resets at refresh reduce the first update's
outward progress, but do not increase RMST more than random resets.
The fully censored MLP and CNN comparisons cannot resolve differences
beyond the observation window. The two reset arms have the same
nominal rate and marginal distribution of buffer ages, but differ in
which gradients are combined. This observation alone does not identify
alignment as the cause. Equal-length follow-ups reverse the
refresh/random ordering across lengths
(Appendix~\ref{app:priority-controls}), precluding a universal ranking. These conclusions rest on three networks at one
learning rate and one level of persistence. The 96 continuation files
(768 paths) took 3,454.2 seconds (0.96 device-hours).

\paragraph{Post-hoc pairing sensitivity.}
The declared bootstrap resampled paths independently between arms,
although the persistent arms share batch streams. A subsequent check
resampled their path indices jointly, with independent path draws for
each occurrence of a resampled seed (20,000 replicates). For the ball
at $r_1$, the refresh/no-reset 95\% intervals were
$[8.78,9.97]$, $[10.75,12.27]$ and $[9.51,10.75]$ for the MLP, CNN and
ViT. The refresh/random interval remained $[1,1]$ in the two fully
censored configurations and was $[0.864,0.952]$ for the ViT. This
sensitivity check preserves the directions of the reported comparisons;
it neither replaces the declared analysis nor resolves the censored tails.
The supplement includes the integer exit-time inputs, their source
hashes and a standalone script for this sensitivity check; reproducing
it does not require rerunning network training.

\subsection{Numerical precision}
\label{app:nn-precision}
Continuations run in single precision, and the CNN's convolutions use
TF32 kernels by default. After review we reran, under a protocol recorded
before the runs, the first eight paths (four for the ViT) of the iid,
$p=0.25$ and $p=0.02$ arms of every Study 2 seed of the MLP, CNN and ViT,
with the archived sampler streams, once in float64 (network, data, start
state and momentum buffer cast exactly; no TF32) and once in float32 with
the original settings. Table~\ref{tab:nn-precision} compares the ball
exits at $r_1$ with the archived paths.

For the MLP and the ViT the float32 reruns reproduce every archived exit
update. In float64 the ViT's exits are again all identical, and 83\% of
the MLP's are, with a median difference of zero updates. For the CNN,
whose convolutions use nondeterministic cuDNN kernels, even the float32
rerun changes the exits by a median of three updates, because the
momentum dynamics amplify small differences; float64 behaves alike. The
two declared ratios change by at most 0.003 at $r_1$ and 0.016 at $r_2$
in every network. The CNN reruns therefore differ by a few updates in the median,
while these precision checks preserve the two ratio point estimates
to the reported tolerance; they do not revalidate every declared interval. A protocol note records that the float64 ViT runs
first ran out of GPU memory on the shared server and were repeated with
only the pool, rather than the whole dataset, on the GPU, which leaves
the computation unchanged. The reruns took 2.46 device-hours.

\begin{table}[ht]
\centering
\caption{Precision check against the archived float32 paths (ball,
$r_1$): fraction of paths with the same exit update, median and maximum
absolute difference in updates, and the two declared Study 2 ratios on
these paths, archived and rerun (point estimates).}
\label{tab:nn-precision}
\small
\setlength{\tabcolsep}{4pt}
\begin{tabular}{@{}llrrrrll@{}}
\toprule
 & & & \multicolumn{3}{c}{Exit update at $r_1$} & \multicolumn{2}{c}{Archived $\to$ rerun} \\
\cmidrule(lr){4-6}\cmidrule(l){7-8}
Network & Arm & Paths & identical & median $|\Delta|$ & max $|\Delta|$ & $p{=}0.25$/iid & $p{=}0.25$/$p{=}0.02$ \\
\midrule
MLP / MNIST & float32 rerun & 96 & 100\% & 0.0 & 0 & 0.472 $\to$ 0.472 & 0.166 $\to$ 0.166 \\
 & float64 & 96 & 83\% & 0.0 & 26 & 0.472 $\to$ 0.471 & 0.166 $\to$ 0.166 \\
CNN / F-MNIST & float32 rerun & 96 & 14\% & 3.0 & 33 & 0.412 $\to$ 0.409 & 0.228 $\to$ 0.227 \\
 & float64 & 96 & 8\% & 3.0 & 87 & 0.412 $\to$ 0.414 & 0.228 $\to$ 0.230 \\
ViT / CIFAR-10 & float32 rerun & 48 & 100\% & 0.0 & 0 & 0.396 $\to$ 0.396 & 0.286 $\to$ 0.286 \\
 & float64 & 48 & 100\% & 0.0 & 0 & 0.396 $\to$ 0.396 & 0.286 $\to$ 0.286 \\
\bottomrule
\end{tabular}

\end{table}

\subsection{Censoring and seed-level checks}
\label{app:nn-seedlevel}
These checks were added after all declared analyses and change no
declared result.

\paragraph{Censoring.} Table~\ref{tab:nn-censoring} gives the percentage
of paths censored at the window for the primary event. Censoring is rare
except for one ResNet-18 seed in Study 1 (a third of the paths of every
sampler), the MLP's iid, RR and $p=0.02$ arms in Study 2, the CNN without
heavy-ball momentum, the small pool and large batch of Study 6, and the
CNN's iid and RR arms after the longer burn-in of Study 7; Study 8
reports its censoring with its results (Table~\ref{tab:nn-study8}). Where
an arm is fully censored, its RMST equals the window, so a ratio to it
compares with the window rather than with an observed exit time.

\begin{table}[ht]
\centering
\caption{Percentage of paths censored at the window (ball, $r_1$); Study
3 reports exit fractions in Table~\ref{tab:nn-study3}. Dashes: sampler
not in the study.}
\label{tab:nn-censoring}
\small
\begin{tabular}{@{}llrrrrrrr@{}}
\toprule
Study & Network & iid & $p{=}0.5$ & $0.25$ & $0.1$ & $0.05$ & $0.02$ & RR \\
\midrule
Study 1 & MLP & 2 & -- & 0 & -- & 0 & -- & 2 \\
 & CNN & 0 & -- & 0 & -- & 0 & -- & 0 \\
 & ResNet-18 & 33 & -- & 33 & -- & 33 & -- & 33 \\
 & ViT & 0 & -- & 0 & -- & 0 & -- & 0 \\
\addlinespace[2pt]
Study 2 & MLP & 12 & 0 & 0 & 0 & 0 & 12 & 11 \\
 & CNN & 0 & 0 & 0 & 0 & 0 & 0 & 0 \\
 & ResNet-18 & 0 & 0 & 0 & 0 & 0 & 14 & 0 \\
 & ViT & 0 & 0 & 0 & 0 & 0 & 0 & 0 \\
\addlinespace[2pt]
Study 4 & CNN / SVHN & 0 & 0 & 0 & 0 & 0 & 1 & 0 \\
 & ViT / CIFAR-100 & 0 & 0 & 0 & 0 & 0 & 0 & 0 \\
\addlinespace[2pt]
Study 5 & CNN, $\beta=0.5$ & 100 & 100 & 100 & 100 & 100 & 100 & 100 \\
 & ViT, $\beta=0.5$ & 0 & 0 & 0 & 0 & 0 & 91 & 0 \\
 & CNN, $\beta=0$ & 100 & 100 & 100 & 100 & 100 & 100 & 100 \\
 & ViT, $\beta=0$ & 0 & 0 & 0 & 0 & 0 & 0 & 0 \\
\addlinespace[2pt]
Study 6 & CNN, $N=1024$ & 100 & 100 & 39 & 2 & 11 & 70 & 100 \\
 & ViT, $N=1024$ & 100 & 72 & 0 & 0 & 0 & 12 & 100 \\
 & CNN, $N=16384$ & 0 & 0 & 0 & 0 & 0 & 0 & 0 \\
 & ViT, $N=16384$ & 0 & 0 & 0 & 0 & 0 & 0 & 0 \\
 & CNN, $b=32$ & 0 & 0 & 0 & 0 & 0 & 0 & 0 \\
 & ViT, $b=32$ & 0 & 0 & 0 & 0 & 0 & 0 & 0 \\
 & CNN, $b=128$ & 100 & 14 & 0 & 0 & 0 & 25 & 100 \\
 & ViT, $b=128$ & 0 & 0 & 0 & 0 & 0 & 0 & 0 \\
\addlinespace[2pt]
Study 7 & CNN, burn-in 1000 & 100 & 2 & 0 & 0 & 2 & 62 & 100 \\
 & ViT, burn-in 1000 & 0 & 0 & 0 & 0 & 0 & 2 & 2 \\
\bottomrule
\end{tabular}

\end{table}

\paragraph{Seed-level checks and inferential status.}
The original hierarchical bootstrap resamples only three or four training
seeds; three seeds admit only ten distinct seed multisets. Its nominal
95\% ranges are retained as descriptive resampling summaries, not
validated population confidence intervals. The original unadjusted
network-wise hypothesis labels are historical classifications, not
confirmatory conclusions of this revision.

Table~\ref{tab:nn-seedlevel} preserves every seed-direction count,
geometric-mean ratio and nominal or Bonferroni-adjusted $t$ range.
These are sensitivity calculations: approximate normality of seed-level
log ratios cannot be adequately checked with 3--4 seeds, and multiplicity
adjustment does not repair that limitation. We remove significance
highlighting and do not interpret exclusion of one as calibrated evidence.
Counts over several architectures, contrasts or reused starts are not a
single independent sample. In particular, the former pooled Study~2
sign-test claim is not used for inference.

The new studies address different estimands. Appendix~\ref{app:replication-followup}
tests the probability that an assigned training run is finite and a
specified 16-path empirical RMST ordering holds; the mechanism studies
test their specified joint validity and direction events. Their corrected
binomial tests do not estimate a population mean RMST ratio, repair the
old ResNet/ViT sample sizes, or convert a capped mean into complete MFPT.
The original failure records and numerical estimates remain unchanged.

\begin{table}[ht]
\centering
\caption{Post-hoc seed-level checks of the declared primary contrasts
(ball, $r_1$). Seeds: seeds whose ratio points in the predicted
direction. Ratio: geometric mean over seeds. Intervals: $t$-intervals
over seeds at 95\% and Bonferroni-corrected over the 16, 8, 4, 8, 16, 4
and 6 primary contrasts of Studies 1, 2, 4, 5, 6, 7 and 8. All ranges
are descriptive sensitivity calculations, without significance highlighting.}
\label{tab:nn-seedlevel}
\footnotesize
\setlength{\tabcolsep}{4pt}
\begin{tabular}{@{}lllrlll@{}}
\toprule
Study & Network & Contrast & Seeds & Ratio & 95\% $t$-interval & Bonferroni $t$-interval \\
\midrule
Study 1 & MLP & RR/iid ($>1$) & 1/4 & 0.80 & [0.57,\,1.12] & [0.32,\,2.01] \\
 & MLP & $p{=}0.05$/iid ($<1$) & 0/4 & 1.61 & [1.46,\,1.79] & [1.22,\,2.13] \\
 & MLP & $p{=}0.25$/iid ($<1$) & 4/4 & 0.61 & [0.53,\,0.71] & [0.41,\,0.92] \\
 & MLP & $p{=}0.05$/$p{=}0.25$ ($<1$) & 0/4 & 2.63 & [2.28,\,3.03] & [1.77,\,3.89] \\
 & CNN & RR/iid ($>1$) & 1/4 & 0.97 & [0.93,\,1.02] & [0.85,\,1.11] \\
 & CNN & $p{=}0.05$/iid ($<1$) & 3/4 & 0.86 & [0.72,\,1.03] & [0.52,\,1.42] \\
 & CNN & $p{=}0.25$/iid ($<1$) & 4/4 & 0.43 & [0.40,\,0.46] & [0.36,\,0.51] \\
 & CNN & $p{=}0.05$/$p{=}0.25$ ($<1$) & 0/4 & 2.00 & [1.76,\,2.27] & [1.41,\,2.83] \\
 & ResNet-18 & RR/iid ($>1$) & 0/3 & 0.95 & [0.83,\,1.08] & [0.54,\,1.65] \\
 & ResNet-18 & $p{=}0.05$/iid ($<1$) & 0/3 & 1.43 & [0.66,\,3.11] & [0.06,\,35.88] \\
 & ResNet-18 & $p{=}0.25$/iid ($<1$) & 2/3 & 0.79 & [0.47,\,1.33] & [0.09,\,6.77] \\
 & ResNet-18 & $p{=}0.05$/$p{=}0.25$ ($<1$) & 0/3 & 1.80 & [0.51,\,6.38] & [0.01,\,342.51] \\
 & ViT & RR/iid ($>1$) & 4/4 & 1.07 & [1.06,\,1.08] & [1.05,\,1.10] \\
 & ViT & $p{=}0.05$/iid ($<1$) & 4/4 & 0.72 & [0.66,\,0.78] & [0.57,\,0.90] \\
 & ViT & $p{=}0.25$/iid ($<1$) & 4/4 & 0.41 & [0.40,\,0.42] & [0.38,\,0.44] \\
 & ViT & $p{=}0.05$/$p{=}0.25$ ($<1$) & 0/4 & 1.75 & [1.63,\,1.89] & [1.42,\,2.16] \\
\addlinespace[2pt]
Study 2 & MLP & $p{=}0.25$/iid ($<1$) & 4/4 & 0.52 & [0.17,\,1.58] & [0.05,\,5.83] \\
 & MLP & $p{=}0.25$/$p{=}0.02$ ($<1$) & 4/4 & 0.16 & [0.15,\,0.17] & [0.13,\,0.19] \\
 & CNN & $p{=}0.25$/iid ($<1$) & 4/4 & 0.42 & [0.38,\,0.46] & [0.35,\,0.50] \\
 & CNN & $p{=}0.25$/$p{=}0.02$ ($<1$) & 4/4 & 0.23 & [0.21,\,0.25] & [0.19,\,0.28] \\
 & ResNet-18 & $p{=}0.25$/iid ($<1$) & 3/3 & 0.69 & [0.60,\,0.78] & [0.47,\,0.996] \\
 & ResNet-18 & $p{=}0.25$/$p{=}0.02$ ($<1$) & 3/3 & 0.20 & [0.19,\,0.21] & [0.16,\,0.24] \\
 & ViT & $p{=}0.25$/iid ($<1$) & 4/4 & 0.40 & [0.39,\,0.40] & [0.38,\,0.41] \\
 & ViT & $p{=}0.25$/$p{=}0.02$ ($<1$) & 4/4 & 0.28 & [0.25,\,0.31] & [0.22,\,0.35] \\
\addlinespace[2pt]
Study 4 & CNN / SVHN & $p{=}0.25$/iid ($<1$) & 4/4 & 0.59 & [0.55,\,0.62] & [0.53,\,0.64] \\
 & CNN / SVHN & $p{=}0.25$/$p{=}0.02$ ($<1$) & 4/4 & 0.24 & [0.20,\,0.30] & [0.17,\,0.35] \\
 & ViT / CIFAR-100 & $p{=}0.25$/iid ($<1$) & 4/4 & 0.40 & [0.38,\,0.41] & [0.37,\,0.42] \\
 & ViT / CIFAR-100 & $p{=}0.25$/$p{=}0.02$ ($<1$) & 4/4 & 0.28 & [0.23,\,0.35] & [0.20,\,0.40] \\
\bottomrule
\end{tabular}

\end{table}

\begin{table}[p]
\ContinuedFloat
\centering
\caption{Post-hoc seed-level checks, continued: Studies 5--8. All ranges
remain descriptive sensitivity calculations.}
\footnotesize
\setlength{\tabcolsep}{4pt}
\begin{tabular}{@{}lllrlll@{}}
\toprule
Study & Network & Contrast & Seeds & Ratio & 95\% $t$-interval & Bonferroni $t$-interval \\
\midrule
Study 5 & CNN, $\beta=0.5$ & $p{=}0.25$/iid ($<1$) & -- & \multicolumn{3}{l}{no exits within the window} \\
 & CNN, $\beta=0.5$ & $p{=}0.25$/$p{=}0.02$ ($<1$) & -- & \multicolumn{3}{l}{no exits within the window} \\
 & ViT, $\beta=0.5$ & $p{=}0.25$/iid ($<1$) & 0/4 & 1.35 & [1.30,\,1.39] & [1.25,\,1.45] \\
 & ViT, $\beta=0.5$ & $p{=}0.25$/$p{=}0.02$ ($<1$) & 4/4 & 0.40 & [0.35,\,0.46] & [0.30,\,0.53] \\
 & CNN, $\beta=0$ & $p{=}0.25$/iid ($<1$) & -- & \multicolumn{3}{l}{no exits within the window} \\
 & CNN, $\beta=0$ & $p{=}0.25$/$p{=}0.02$ ($<1$) & -- & \multicolumn{3}{l}{no exits within the window} \\
 & ViT, $\beta=0$ & $p{=}0.25$/iid ($<1$) & 0/4 & 1.33 & [1.31,\,1.36] & [1.29,\,1.38] \\
 & ViT, $\beta=0$ & $p{=}0.25$/$p{=}0.02$ ($<1$) & 4/4 & 0.42 & [0.39,\,0.46] & [0.36,\,0.50] \\
\addlinespace[2pt]
Study 6 & CNN, $N=1024$ & $p{=}0.25$/iid ($<1$) & 4/4 & 0.56 & [0.26,\,1.19] & [0.07,\,4.46] \\
 & CNN, $N=1024$ & $p{=}0.25$/$p{=}0.02$ ($<1$) & 4/4 & 0.59 & [0.29,\,1.20] & [0.08,\,4.24] \\
 & ViT, $N=1024$ & $p{=}0.25$/iid ($<1$) & 4/4 & 0.23 & [0.18,\,0.29] & [0.12,\,0.43] \\
 & ViT, $N=1024$ & $p{=}0.25$/$p{=}0.02$ ($<1$) & 4/4 & 0.29 & [0.26,\,0.33] & [0.22,\,0.40] \\
 & CNN, $N=16384$ & $p{=}0.25$/iid ($<1$) & 4/4 & 0.44 & [0.40,\,0.47] & [0.35,\,0.55] \\
 & CNN, $N=16384$ & $p{=}0.25$/$p{=}0.02$ ($<1$) & 4/4 & 0.25 & [0.20,\,0.32] & [0.13,\,0.49] \\
 & ViT, $N=16384$ & $p{=}0.25$/iid ($<1$) & 4/4 & 0.26 & [0.25,\,0.27] & [0.23,\,0.29] \\
 & ViT, $N=16384$ & $p{=}0.25$/$p{=}0.02$ ($<1$) & 4/4 & 0.32 & [0.28,\,0.35] & [0.23,\,0.43] \\
 & CNN, $b=32$ & $p{=}0.25$/iid ($<1$) & 4/4 & 0.65 & [0.55,\,0.78] & [0.41,\,1.05] \\
 & CNN, $b=32$ & $p{=}0.25$/$p{=}0.02$ ($<1$) & 4/4 & 0.29 & [0.22,\,0.37] & [0.14,\,0.58] \\
 & ViT, $b=32$ & $p{=}0.25$/iid ($<1$) & 4/4 & 0.41 & [0.40,\,0.42] & [0.37,\,0.45] \\
 & ViT, $b=32$ & $p{=}0.25$/$p{=}0.02$ ($<1$) & 4/4 & 0.32 & [0.28,\,0.36] & [0.22,\,0.45] \\
 & CNN, $b=128$ & $p{=}0.25$/iid ($<1$) & 4/4 & 0.25 & [0.21,\,0.29] & [0.16,\,0.38] \\
 & CNN, $b=128$ & $p{=}0.25$/$p{=}0.02$ ($<1$) & 4/4 & 0.29 & [0.26,\,0.32] & [0.22,\,0.38] \\
 & ViT, $b=128$ & $p{=}0.25$/iid ($<1$) & 4/4 & 0.53 & [0.48,\,0.57] & [0.42,\,0.67] \\
 & ViT, $b=128$ & $p{=}0.25$/$p{=}0.02$ ($<1$) & 4/4 & 0.35 & [0.28,\,0.44] & [0.18,\,0.66] \\
\addlinespace[2pt]
Study 7 & CNN, burn-in 1000 & $p{=}0.25$/iid ($<1$) & 4/4 & 0.30 & [0.19,\,0.46] & [0.14,\,0.63] \\
 & CNN, burn-in 1000 & $p{=}0.25$/$p{=}0.02$ ($<1$) & 4/4 & 0.32 & [0.22,\,0.46] & [0.17,\,0.60] \\
 & ViT, burn-in 1000 & $p{=}0.25$/iid ($<1$) & 4/4 & 0.34 & [0.27,\,0.43] & [0.23,\,0.50] \\
 & ViT, burn-in 1000 & $p{=}0.25$/$p{=}0.02$ ($<1$) & 4/4 & 0.20 & [0.17,\,0.22] & [0.16,\,0.25] \\
\addlinespace[2pt]
Study 8 & MLP & refresh/no reset ($>1$) & 4/4 & 9.39 & [8.68,\,10.16] & [8.05,\,10.96] \\
 & MLP & refresh/random ($>1$) & -- & \multicolumn{3}{l}{no exits in either arm} \\
 & CNN & refresh/no reset ($>1$) & 4/4 & 11.54 & [10.29,\,12.96] & [9.21,\,14.47] \\
 & CNN & refresh/random ($>1$) & -- & \multicolumn{3}{l}{no exits in either arm} \\
 & ViT & refresh/no reset ($>1$) & 4/4 & 10.08 & [9.20,\,11.04] & [8.43,\,12.05] \\
 & ViT & refresh/random ($>1$) & 0/4 & 0.91 & [0.85,\,0.97] & [0.81,\,1.03] \\
\bottomrule
\end{tabular}

\end{table}

\clearpage
\section{A fresh 32-seed replication at each of two networks}
\label{app:replication-followup}
This follow-up froze its protocol before new training or continuation outputs.
It uses MNIST--MLP and FashionMNIST--CNN, each with seeds 100--131, freshly
trained for 10 and 15 epochs respectively using the earlier training recipe.
The pool size is 4096, batch size 64, momentum 0.9, weight decay 0.0005,
burn-in 100 updates, and continuation step 0.05. The two radii remain the
earlier pilot values: $(4.5658,4.9892)$ for MLP and $(5.1212,7.1269)$ for CNN.
Continuation uses float32 with TF32 disabled and deterministic cuDNN settings.
Each trained start contributes 16 paths for each of iid, $p=0.25$ and $p=0.02$
sampling, with horizon $J=1000$. All 64 assigned training/start configurations
were finite, and all 3,072 planned paths completed without nonfinite states.

\paragraph{Prespecified estimand.}
For each assigned training seed and comparator, success means that training
and burn-in remain finite and the 16-path empirical ball-$r_1$ RMST for
$p=0.25$ is strictly smaller than that for the comparator. Ties and nonfinite
training/start outcomes count as failures, without replacement. Infrastructure
or budget interruptions would remain incomplete. This is a joint stability
and direction probability for a finite-path protocol, not a test of the
population mean RMST or its ratio. The training seed is the independent unit;
paths sharing that start are not separate training replicates.

Each of four comparisons uses a one-sided exact binomial test against success
probability $1/2$, with family level $0.025$ and Bonferroni per-comparison
level $0.00625$. Formal decisions were withheld until all 32 assigned seeds
at the corresponding network had terminal outcomes. All four comparisons
have 32/32 successes: the raw $p$-value is $2^{-32}=2.3283\times10^{-10}$,
the Bonferroni-adjusted value is $9.3132\times10^{-10}$, and each one-sided
Clopper--Pearson lower bound is $0.00625^{1/32}=0.85334$. Simultaneous coverage
of these four lower bounds is at least 97.5\% under the seed-level sampling
assumptions. This inference does not use the pooled number of paths as $n$.

\begin{table}[htbp]
\centering
\caption{Descriptive mean RMST at the primary radius across all 32 assigned
seeds per network. Units are updates; the ratios are not confidence bounds.}
\label{tab:replication-followup}
\begin{tabular}{lrrrr}
\toprule
Network & iid & $p=0.25$ & $p=0.02$ & $p=0.25$/iid\\
\midrule
MNIST--MLP & 254.480 & 109.375 & 696.078 & 0.430\\
FashionMNIST--CNN & 212.025 & 88.076 & 387.486 & 0.415\\
\bottomrule
\end{tabular}
\end{table}

At $r_1$, censored fractions for iid/$p=0.25$/$p=0.02$ are
$0.08984/0/0.14844$ for MLP and $0/0/0$ for CNN. At $r_2$, mean RMSTs are
$552.287/126.211/788.943$ for MLP and $607.615/160.768/723.045$ for CNN;
their censored fractions are $0.41016/0/0.29492$ and
$0.01367/0/0.10352$, respectively. These secondary summaries are descriptive.
The audit reconstructed every path's first exit from its saved distance
sequence and every seed's RMST, censoring and success indicators; all matched.
The frozen source hashes matched, and 3,205 archived evidence-file hashes
were verified after transfer. Full checkpoints are retained locally and
are not included in the submission package.

This replication strengthens evidence for the direction of the intermediate
persistence effect at these two networks and previously selected radii.
It does not extend the switching theorem to neural training, establish a
complete MFPT ordering, or identify the mechanism of momentum resets.

\section{Equal-length reset controls with a retained negative contrast}
\label{app:mechanism-followup}
The formal control protocol was frozen before the separate calibration and
pilot outputs. It reuses the 32 starts per network from
Appendix~\ref{app:replication-followup}; this is not another independent
sample of trained networks. Four separate training seeds, 9001--9004,
per network supplied the median of all 800 raw no-reset update lengths
in fixed 200-step windows, without exit-based truncation. The resulting
lengths were 0.1468144953 for MNIST--MLP and 0.1810008138 for
FashionMNIST--CNN. A paired pilot checked implementation, not effect size.

For persistent sampling with refresh probability $p=1/4$, each start has
16 paths in each of six arms: no reset, reset at every fresh batch, or
independent random reset with probability $1/4$, under either raw
momentum or normalized updates of the frozen length. All arms share
batch streams and reset uniforms where relevant, inherit the burn-in
buffer, and skip reset on the first update. The primary radius and
$J=1000$ are unchanged. All 64 starts and all 6,144 continuations were
finite, with no zero directions.

\paragraph{Prespecified inference.}
For each seed and comparator, success requires finite training/burn-in
and all six finite continuation arms, together with strictly larger
16-path primary-radius RMST for normalized refresh-reset updates than
for the normalized comparator. Ties and nonfinite outcomes are failures;
budget or infrastructure interruption is incomplete. The training seed
is the independent unit. Four one-sided exact binomial tests against
success probability $1/2$ use Bonferroni threshold $0.025/4=0.00625$;
no decision was made before all 32 assigned seeds finished at each network.

\begin{table}[htbp]
\centering
\caption{Declared normalized refresh-reset direction: larger finite-path
RMST than the comparator. The unsupported CNN contrast is retained.}
\label{tab:followup-mechanism-primary}
\begin{tabular}{llrrr}
\toprule
Network & Comparator & Successes & Adjusted $p$ & CP lower bound\\
\midrule
MNIST--MLP & no reset & 32/32 & $9.31\times10^{-10}$ & 0.85334\\
MNIST--MLP & random reset & 32/32 & $9.31\times10^{-10}$ & 0.85334\\
FashionMNIST--CNN & no reset & 32/32 & $9.31\times10^{-10}$ & 0.85334\\
FashionMNIST--CNN & random reset & 0/32 & $1$ & $0$\\
\bottomrule
\end{tabular}
\end{table}
The one-sided Clopper--Pearson bounds use per-comparison error 0.00625,
so their simultaneous coverage is at least 97.5\% under the seed-level
sampling assumptions. They concern the joint finite-path protocol
endpoint, not the population mean RMST, its ratio, or complete MFPT.

\begin{table}[htbp]
\centering
\caption{Descriptive mean primary-radius RMST over all 32 assigned starts,
in updates. These means are not population-mean confidence statements.}
\label{tab:followup-mechanism-rmst}
\begin{tabular}{llrrr}
\toprule
Network & Updates & No reset & Refresh reset & Random reset\\
\midrule
MNIST--MLP & raw & 108.186 & 1000.000 & 1000.000\\
MNIST--MLP & normalized & 96.002 & 319.756 & 204.670\\
FashionMNIST--CNN & raw & 87.982 & 1000.000 & 1000.000\\
FashionMNIST--CNN & normalized & 78.711 & 268.422 & 329.563\\
\bottomrule
\end{tabular}
\end{table}
All raw refresh-reset and random-reset paths were censored at 1000;
raw no-reset and all normalized paths had zero censoring. Thus the two
raw reset arms cannot be ordered beyond the observation window.
The normalized controls retain reset-dependent exit differences when
nominal update lengths are fixed, but do not consistently support the
proposed ordering of refresh-linked and independent random resets.
In particular, all 32 CNN seed-level RMSTs have the strict reverse
ordering; none is a tie. This descriptive reversal does not redefine
the original one-sided hypothesis.
Figure~\ref{fig:followup-mechanism-pairs} displays all paired seed-level values.
Normalization changes the optimizer; it does not yield a causal
decomposition of unmodified momentum SGD.

\begin{figure}[htbp]
\centering
\includegraphics[width=.94\linewidth]{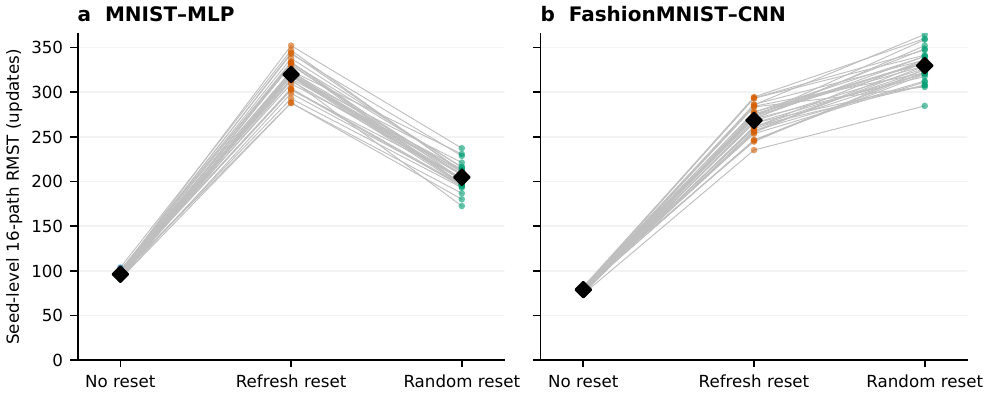}
\caption{All 32 trained starts per network in the formal normalized-update
control. Gray lines pair the same start across reset arms; colored points
are its 16-path primary-radius RMST, and black diamonds are descriptive
means over starts. All plotted paths exited by the 1000-step cap.
The CNN refresh-versus-random ordering is opposite to the declared
hypothesis for every start. These paired observations are not
population-mean confidence intervals.}
\label{fig:followup-mechanism-pairs}
\end{figure}

\paragraph{Audit and numerical scope.}
All saved first exits, capped means, censoring, reset/freshness flags,
seed success records, source hashes and calibration hashes were checked.
The independent audit also checked all 1,600 calibration steps and
96 paired pilot paths. The largest relative error of a recorded nonzero
normalized update-vector norm from its target was $2.03\times10^{-7}$.
These records contain the commanded update vector's norm, not a separate
measurement of the parameter displacement after fp32 subtraction.
They therefore do not certify exact physical step equality or rigorous
rounding control. The separately frozen, independent-seed sensitivity
study below measures that displacement and varies the nominal length.

\subsection{Independent step-length, radius and window sensitivity}
\label{app:mechanism-robustness}
Before reading the formal control outcomes, we froze one bounded follow-up
with 16 new training seeds, 3000--3015, per network. It retains the original
training, burn-in, pool, radii and calibrated lengths, multiplying the latter
by $0.5,1,2$. Each start has eight paired paths for each factor and each of
the three normalized reset arms: 72 continuations per start and 2,304 in total.
All factors and arms share the specified minibatch and reset random streams.
The stopping window is 1000, and both original radii are recorded. We also
describe the same paths capped at 500; these are not additional independent runs.

\paragraph{Joint endpoint and complete audit.}
For each comparator, success requires finite training, burn-in and all 72
continuations, no zero directions, actual displacement lengths within relative
error $2\times10^{-5}$, and strictly larger primary-radius refresh-reset
eight-path RMST at \emph{all three} factors. A tie or failed constituent
condition is a failure. All 32 assigned configurations completed before
analysis, without seed replacement or significance stopping. Four one-sided
binomial tests and CP bounds use error $0.025/4$ per comparison. This is a
separate supplementary family, not pooled 0.025 error control across studies.

The audit reconstructed all first exits, censoring, reset draws and composite
outcomes from 955,421 saved update records, and checked frozen source and
protocol hashes. No nonfinite path, zero direction or numerical-gate failure
occurred. We measure displacement by converting both fp32 parameter vectors
to fp64 \emph{before} subtraction. The largest observed relative length error
was $3.182\times10^{-7}$, and the largest relative norm of the difference
between realized displacement and commanded update was $5.522\times10^{-6}$.
These are measured consistency checks, not rigorous rounding certification
or independent replay of every gradient.

\begin{table}[htbp]
\centering
\caption{Frozen all-three-length endpoint on independent starts. The
refresh-versus-random hypothesis fails at both networks.}
\label{tab:followup-robustness-primary}
\begin{tabular}{llrrr}
\toprule
Network & Comparator & Successes & Adjusted $p$ & CP lower bound\\
\midrule
MNIST--MLP & no reset & 16/16 & $6.10\times10^{-5}$ & 0.72819\\
MNIST--MLP & random reset & 0/16 & $1$ & $0$\\
FashionMNIST--CNN & no reset & 16/16 & $6.10\times10^{-5}$ & 0.72819\\
FashionMNIST--CNN & random reset & 0/16 & $1$ & $0$\\
\bottomrule
\end{tabular}
\end{table}

\paragraph{Reversals and censoring.}
At the primary radius and cap 1000, every MLP seed has larger refresh-reset
RMST than random-reset RMST at factors $0.5$ and $1$, but smaller RMST at
factor $2$. Every CNN seed has larger RMST at $0.5$ and smaller RMST at
$1$ and $2$. These descriptive signs explain the failed all-factor endpoint;
they are not new one-sided tests with a reversed hypothesis.
Figure~\ref{fig:followup-length} shows the means and censoring together.
At half length, primary-radius refresh-reset censoring is 30.469\% for MLP
and 96.094\% for CNN even at 1000 steps. At the larger CNN radius and cap
500, all three half-length arms are fully censored. Thus a short-window tie
does not establish equal exit laws, and the primary-radius direction does
not assert strict separation at every radius and window. The complete
descriptive cells are retained in Table~\ref{tab:followup-length-cells}.
Actual step control therefore leaves directional effects, but their
refresh-versus-random ordering is sensitive to the chosen length. This
limits a general reset-mechanism interpretation of the original results.

\begin{figure}[htbp]
\centering
\includegraphics[width=.92\linewidth]{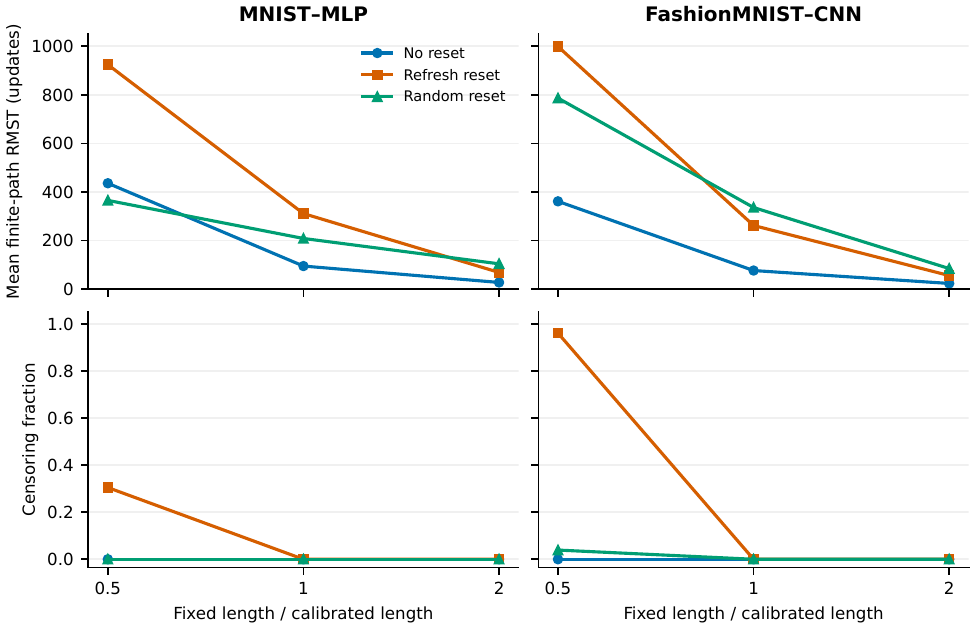}
\caption{Independent length sensitivity at the primary radius and cap 1000.
Top: descriptive mean of 16 seed-level eight-path RMSTs. Bottom: corresponding
path censoring fractions. Lines connect the three fixed factors and do not
interpolate an inferred dose-response law. No population-mean confidence
interval is asserted. The all-factor inference is in
Table~\ref{tab:followup-robustness-primary}.}
\label{fig:followup-length}
\end{figure}

\begin{table}[htbp]
\centering\small
\caption{All declared descriptive radius/window cells for the independent length study. Each entry is mean seed-level RMST in updates, followed by censoring percentage in parentheses. All 16 assigned starts per network are included; each arm has eight paths per start.}
\label{tab:followup-length-cells}
\begin{tabular}{ccc rrr}
\toprule
Length factor & Radius & Cap & No reset & Refresh reset & Random reset\\
\midrule
\multicolumn{6}{c}{MNIST--MLP}\\
0.5 & $r_1$ & 500 & 436.070 (0.0) & 500.000 (100.0) & 365.539 (0.8)\\
0.5 & $r_1$ & 1000 & 436.070 (0.0) & 923.828 (30.5) & 365.570 (0.0)\\
0.5 & $r_2$ & 500 & 499.109 (93.0) & 500.000 (100.0) & 423.531 (9.4)\\
0.5 & $r_2$ & 1000 & 537.211 (0.0) & 988.891 (85.2) & 426.633 (0.0)\\
1 & $r_1$ & 500 & 95.578 (0.0) & 311.445 (0.0) & 209.055 (0.0)\\
1 & $r_1$ & 1000 & 95.578 (0.0) & 311.445 (0.0) & 209.055 (0.0)\\
1 & $r_2$ & 500 & 114.859 (0.0) & 382.289 (3.1) & 241.938 (0.0)\\
1 & $r_2$ & 1000 & 114.859 (0.0) & 383.109 (0.0) & 241.938 (0.0)\\
2 & $r_1$ & 500 & 28.195 (0.0) & 69.383 (0.0) & 104.758 (0.0)\\
2 & $r_1$ & 1000 & 28.195 (0.0) & 69.383 (0.0) & 104.758 (0.0)\\
2 & $r_2$ & 500 & 32.289 (0.0) & 82.781 (0.0) & 126.719 (0.0)\\
2 & $r_2$ & 1000 & 32.289 (0.0) & 82.781 (0.0) & 126.719 (0.0)\\
\midrule
\multicolumn{6}{c}{FashionMNIST--CNN}\\
0.5 & $r_1$ & 500 & 361.672 (0.0) & 500.000 (100.0) & 499.766 (99.2)\\
0.5 & $r_1$ & 1000 & 361.672 (0.0) & 998.336 (96.1) & 786.219 (3.9)\\
0.5 & $r_2$ & 500 & 500.000 (100.0) & 500.000 (100.0) & 500.000 (100.0)\\
0.5 & $r_2$ & 1000 & 771.617 (0.0) & 1000.000 (100.0) & 999.383 (99.2)\\
1 & $r_1$ & 500 & 77.141 (0.0) & 262.414 (0.0) & 336.281 (0.0)\\
1 & $r_1$ & 1000 & 77.141 (0.0) & 262.414 (0.0) & 336.281 (0.0)\\
1 & $r_2$ & 500 & 152.266 (0.0) & 490.383 (80.5) & 497.609 (94.5)\\
1 & $r_2$ & 1000 & 152.266 (0.0) & 588.961 (0.0) & 678.445 (0.0)\\
2 & $r_1$ & 500 & 24.055 (0.0) & 56.516 (0.0) & 85.359 (0.0)\\
2 & $r_1$ & 1000 & 24.055 (0.0) & 56.516 (0.0) & 85.359 (0.0)\\
2 & $r_2$ & 500 & 39.094 (0.0) & 113.438 (0.0) & 186.594 (0.0)\\
2 & $r_2$ & 1000 & 39.094 (0.0) & 113.438 (0.0) & 186.594 (0.0)\\
\bottomrule
\end{tabular}
\end{table}

\section{Full-parameter smooth-network comparisons and sufficient tail conditions}\label{app:fullparam-followup}
These constructions and post-hoc interval checks complement the finite-window comparisons. They do not assert literature priority or certify floating-point optimizer trajectories.
Layerwise parameter-smoothness estimates also underlie neural gradient-descent
analyses \citep[Lemmas 4.3--4.5]{mu2026generalized}. Here a third-derivative
majorant supplies a regional remainder for an exit comparison; the norm
recurrence is not claimed as a literature first.

\subsection{An explicit bound over the entire parameter ball}
Consider a finite feedforward network with coordinatewise $\tanh$ hidden
activations and a scalar linear output $f_\theta(x)$. The vector $\theta$
contains every weight and bias, with the Euclidean norm on their concatenation.
For a fixed example $(x_i,y_i)$ let
\[
 L_i(\theta)=\tfrac12(f_\theta(x_i)-y_i)^2+
             \tfrac\lambda2\norm{\theta}^2.
\]
All derivative norms below are multilinear operator norms induced by
Euclidean norms. Fix a center $\theta_*$ and radius $R\ge0$.

\begin{proposition}[Constructive third-derivative majorant]
For each layer $l$, set
$w_l=\norm{W_{l,*}}_F+R$ and $b_l=\norm{b_{l,*}}_2+R$.
Initialize $A_0=\norm{x_i}$ and $J_0=H_0=T_0=0$. Recursively define
\begin{align*}
 Z_{0,l}&=w_l A_{l-1}+b_l,&
 Z_{1,l}&=\sqrt{A_{l-1}^2+1}+w_lJ_{l-1},\\
 Z_{2,l}&=2J_{l-1}+w_lH_{l-1},&
 Z_{3,l}&=3H_{l-1}+w_lT_{l-1}.
\end{align*}
For a hidden layer of width $m_l$, put $c_2=4/(3\sqrt3)$ and
\begin{align*}
 A_l&=\min\{\sqrt{m_l},Z_{0,l}\},&J_l&=Z_{1,l},\\
 H_l&=Z_{2,l}+c_2Z_{1,l}^2,&
 T_l&=Z_{3,l}+3c_2Z_{1,l}Z_{2,l}+2Z_{1,l}^3.
\end{align*}
For the final scalar linear layer, set $(F,J,H,T)=(Z_0,Z_1,Z_2,Z_3)$.
Then, throughout $\norm{\theta-\theta_*}\le R$,
\[
 \norm{D^3L_i(\theta)}\le M_i(R):=3JH+(F+|y_i|)T.
\]
Consequently, for $\norm{z}\le r\le R$,
\[
 \norm{\nabla L_i(\theta_*+z)-\nabla L_i(\theta_*)-
              \nabla^2L_i(\theta_*)z}
 \le \tfrac12 M_i(R)r^2.
\]
\end{proposition}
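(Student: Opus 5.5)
The plan is forward-mode induction over layers, followed by a chain rule at the loss and Taylor's integral remainder. Fix $\theta$ with $\norm{\theta-\theta_*}\le R$. Since each block of $\theta-\theta_*$ has norm at most $R$, we get $\norm{W_l}_2\le\norm{W_l}_F\le w_l$ and $\norm{b_l}\le b_l$ there. Write $a_l(\theta)$ for the layer-$l$ activations, with $a_0=x_i$, and $z_l=W_la_{l-1}+b_l$ for the pre-activations. The inductive claim is that, throughout the ball, $\norm{a_l}\le A_l$ and the multilinear operator norms of $Da_l,D^2a_l,D^3a_l$ (derivatives in $\theta$) are at most $J_l,H_l,T_l$. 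Likewise $z_l$ and its first three derivatives are bounded by $Z_{0,l},\dots,Z_{3,l}$. The base case is immediate: $a_0$ is constant, so $J_0=H_0=T_0=0$.

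\emph{Affine step.} The map $(W_l,b_l,a_{l-1})\mapsto W_la_{l-1}+b_l$ is bilinear in $(W_l,a_{l-1})$ plus linear in $b_l$. For a direction $\delta$ with layer-$l$ part $(\delta W,\delta b)$,
\[
Dz_l[\delta]=\delta W\,a_{l-1}+\delta b+W_l\,Da_{l-1}[\delta].
\]
Cauchy--Schwarz gives $\norm{\delta W a_{l-1}+\delta b}\le\sqrt{\norm{a_{l-1}}^2+1}\,\norm{(\delta W,\delta b)}_F\le\sqrt{A_{l-1}^2+1}\,\norm{\delta}$. The second term is at most $w_lJ_{l-1}\norm{\delta}$, which yields $Z_{1,l}$. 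The bound is loose but valid, because both block norms are at most $\norm\delta$. Differentiating again by Leibniz, the terms with two factors of $\delta W$ vanish, since $z_l$ is linear in $W_l$. This leaves $\delta W\,Da_{l-1}[\delta']+\delta' W\,Da_{l-1}[\delta]+W_lD^2a_{l-1}[\delta,\delta']$, which gives $2J_{l-1}+w_lH_{l-1}$. The third derivative is $\sum_{3}\delta W\,D^2a_{l-1}+W_lD^3a_{l-1}$, which gives $3H_{l-1}+w_lT_{l-1}$. The zeroth-order bound $Z_{0,l}$ is the triangle inequality.

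\emph{Activation step.} For coordinatewise $\tanh$ we have $\norm{a_l}\le\sqrt{m_l}$ because $|\tanh|\le1$, together with $\norm{\tanh(z)}\le\norm z$, which gives $A_l$. Faà di Bruno applied coordinatewise gives
\[
D^2a_l=\sigma'(z)\odot D^2z+\sigma''(z)\odot(Dz)^{\otimes2},\qquad
D^3a_l=\sigma'\odot D^3z+3\sigma''\odot(Dz\otimes D^2z)_{\rm sym}+\sigma'''\odot(Dz)^{\otimes3}.
\]
We use the scalar bounds $|\sigma'|\le1$, $|\sigma''|\le c_2=4/(3\sqrt3)$ and $|\sigma'''|\le2$; the last holds because $\sigma'''=-2(1-\tanh^2)(1-3\tanh^2)$ is bounded by $2$ in absolute value. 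The Hadamard products are controlled by the inequality $\bigl(\sum_k u_k^2v_k^2\bigr)^{1/2}\le\norm u\,\norm v$, applied coordinatewise to the vectors $Dz[\delta]$ and $D^2z[\delta',\delta'']$. This yields exactly $H_l$ and $T_l$.

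\emph{Output and loss.} The output layer has no activation, so $(F,J,H,T)=(Z_0,Z_1,Z_2,Z_3)$ bound $|f_\theta(x_i)|$ and the first three derivatives of $f_\theta(x_i)$. The ridge term has zero third derivative. Hence
\[
D^3L_i=3\,\mathrm{sym}(Df\otimes D^2f)+(f-y_i)\,D^3f,
\]
whose norm is at most $3JH+(F+|y_i|)T=M_i(R)$. For the remainder, the segment from $\theta_*$ to $\theta_*+z$ stays in the ball when $\norm z\le r\le R$, and
\[
\nabla L_i(\theta_*+z)-\nabla L_i(\theta_*)-\nabla^2L_i(\theta_*)z=\int_0^1(1-t)\,D^3L_i(\theta_*+tz)[z,z,\cdot\,]\,dt.
\]
Its norm is therefore at most $\tfrac12M_i(R)r^2$.

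I expect the main obstacle to be the activation step. The care needed there is in passing from the scalar derivative bounds to operator-norm bounds on the Hadamard products via the displayed $\ell^2$ inequality, and in checking that the symmetrization constants $2$, $3$, $3c_2$ match the recursion. A secondary point is justifying that splitting a direction into layer blocks never costs more than $\norm\delta$ in each term.
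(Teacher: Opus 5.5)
Your proposal is correct and takes essentially the same route as the paper: induction over layers that bounds the preactivation derivatives by Cauchy--Schwarz and triangle inequalities, then the $\tanh$ derivative bounds $1$, $c_2$, $2$ combined with $\|a\odot b\|_2\le\|a\|_2\|b\|_2$ in the chain rule, then the three-products-plus-$(f-y_i)D^3f$ formula for $D^3L_i$, and finally Taylor's integral remainder. The only cosmetic difference is that you write the remainder as a single $(1-t)$-weighted integral of $D^3L_i$. The paper instead integrates twice, first obtaining the Hessian-difference bound $M_i(R)\|z\|$ and then the gradient remainder.
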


\begin{proof}
Write a layer preactivation as $z=Wa+b$ and take unit directions $u,v,w$
in the full parameter space. Its first derivative is
$u_Wa+WDa[u]+u_b$. Cauchy--Schwarz on the layer's weight and bias blocks
gives $\norm{u_Wa+u_b}\le\sqrt{\norm a^2+1}$, proving the first-derivative
majorant. The second derivative is
\[
 W D^2a[u,v]+u_W Da[v]+v_W Da[u].
\]
The third is $W D^3a[u,v,w]$ plus the three terms with one derivative
falling on $W$ and two on $a$. Triangle inequalities yield $Z_2,Z_3$;
these estimates need not be simultaneously sharp in different parameter blocks.

For $\sigma=\tanh$, $|\sigma'|\le1$, $|\sigma''|\le c_2$ and
$|\sigma'''|\le2$. Coordinatewise products satisfy
$\norm{a\odot b}_2\le\norm a_2\norm b_2$, and likewise for three factors.
The multivariate chain rule gives the displayed hidden-layer recursions.
The value bound follows from $|\tanh t|\le\min\{1,|t|\}$.

The third derivative of squared loss consists of three products of $Df$
and $D^2f$, and the term $(f-y_i)D^3f$. The quadratic penalty has zero
third derivative. This proves $M_i(R)$ by induction. Integrating $D^3L_i$
along a segment inside the convex ball gives the Hessian difference bound
$M_i(R)\norm z$. A second integration gives the gradient remainder.
\end{proof}

For any minibatch $B$, average the constants to obtain
$M_B(R)=|B|^{-1}\sum_{i\in B}M_i(R)$. These constants cover the full ball,
not only a projection or observed trajectories. They may be conservative.

\subsection{The envelope evaluated in the experiment}
Use signed velocity $u_n=\beta u_{n-1}-g_{B_n}(\theta_*+x_{n-1})$ and
$x_n=x_{n-1}+hu_n$, with $h>0$ and $0\le\beta<1$. The reference shares
the initial position and velocity, but replaces the gradient by
$c_{B_n}+H_n z_{n-1}$, where $c_B=g_B(\theta_*)$.

\begin{corollary}[A scalar majorant of the full-state comparison]
Let $E_0=Q_0=0$ and $\norm{x_0}<R$. For sampled Hessians
$H_n=H_{B_n}:=\nabla g_{B_n}(\theta_*)$, define
\begin{align*}
 \rho_n&=\min\{R,\norm{z_{n-1}}+E_{n-1}\},\\
 \eta_n&=\tfrac12 M_{B_n}(R)\rho_n^2,\\
 Q_n&=\beta Q_{n-1}+\norm{H_n}_F E_{n-1}+\eta_n,\\
 E_n&=E_{n-1}+hQ_n.
\end{align*}
Then velocity error is at most $Q_n$ and position error at most $E_n$
through the actual exit-causing update. For a common Hessian $\bar H$,
use $H_n=\bar H$ and add $\norm{H_{B_n}-\bar H}_F\rho_n$ to $\eta_n$.
Writing $\tau=\inf\{n\ge1:\norm{x_n}\ge R\}$, define
\[
 T_- =\inf\{n\ge1:\norm{z_n}+E_n\ge R\},\qquad
 T_+ =\inf\{n\ge1:\norm{z_n}-E_n\ge R\}.
\]
With $\inf\varnothing=\infty$, one has $T_-\le\tau\le T_+$,
and the same ordering after capping at any finite $J$.
\end{corollary}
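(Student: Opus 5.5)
The plan is to run a direct induction on $n$ for the pair of error quantities, stopped at the actual exit time, and then reuse the triangle-inequality sandwich from Appendix~\ref{sec:heterogeneous}. Write $e_n=x_n-z_n$ and $d_n=u_n-v_n$, where $v_n$ is the reference velocity. Subtracting the two recursions gives
\[
d_n=\beta d_{n-1}-\bigl[g_{B_n}(\theta_*+x_{n-1})-c_{B_n}-H_nz_{n-1}\bigr],\qquad e_n=e_{n-1}+hd_n .
\]
I split the bracket by adding and subtracting $H_nx_{n-1}$:
\[
\bigl[g_{B_n}(\theta_*+x_{n-1})-c_{B_n}-H_{B_n}x_{n-1}\bigr]+H_n e_{n-1}.
\]
For the common-Hessian version I add and subtract $H_{B_n}x_{n-1}$ instead, which produces the extra term $(H_{B_n}-\bar H)x_{n-1}$.

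The induction hypothesis is $\|d_k\|\le Q_k$ and $\|e_k\|\le E_k$ for $k<n$, with $n\le\tau$. Since $n\le\tau$, the input $x_{n-1}$ is a surviving position, so $\|x_{n-1}\|<R$. It also satisfies $\|x_{n-1}\|\le\|z_{n-1}\|+E_{n-1}$, and together these give $\|x_{n-1}\|\le\rho_n$. As in the earlier comparisons, it is the input of the exit-causing update, not its output, that must lie in the ball.

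The proposition then controls the nonlinear remainder. Averaging the per-example bounds over $B_n$ (the ridge term contributes nothing) gives a remainder of at most $\tfrac12M_{B_n}(R)\rho_n^2=\eta_n$, with $r=\rho_n\le R$. The linear term is bounded by $\|H_n\|_2\|e_{n-1}\|\le\|H_n\|_FE_{n-1}$, since the Frobenius norm dominates the operator norm. In the common case the mismatch term is at most $\|H_{B_n}-\bar H\|_F\rho_n$. The triangle inequality now gives $\|d_n\|\le\beta Q_{n-1}+\|H_n\|_FE_{n-1}+\eta_n=Q_n$, and then $\|e_n\|\le E_{n-1}+hQ_n=E_n$. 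The base case holds because $E_0=Q_0=0$ and the initial states agree. This closes the induction through $n=\tau$, including the exit-causing update.

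The exit ordering follows by contradiction. If $\tau<T_-$, then at step $\tau$ we have $\|z_\tau\|+E_\tau<R$, so $\|x_\tau\|<R$, which contradicts exit. If $T_+<\tau$, the tube holds at $T_+$, so $\|x_{T_+}\|\ge\|z_{T_+}\|-E_{T_+}\ge R$, which contradicts survival. Capping at any finite $J$ is monotone and preserves the ordering.

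The main point needing care is the admissibility of $\rho_n$ as the radius argument. Because the truncation $\min\{R,\cdot\}$ uses the actual path's survival, the remainder bound is available only up to and including the exit-causing update. The induction must therefore be explicitly stopped at $\tau$ rather than run unconditionally. Everything else is routine norm bookkeeping.
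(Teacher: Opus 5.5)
Your proposal is correct and follows the paper's proof essentially step for step. Both use an induction stopped at $\tau$ with $\|x_{n-1}\|\le\rho_n$, the whole-ball remainder estimate together with $\|H\|_{2}\le\|H\|_F$, adding and subtracting $H_{B_n}x_{n-1}$ for the common-curvature mismatch, and triangle-inequality contradictions at $\tau$ and at $T_+$, with capping preserving the order by monotonicity.
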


\begin{proof}
For $n\le\tau$, the previous actual position remains inside the ball.
Induction and the position error bound imply $\norm{x_{n-1}}\le\rho_n$.
Subtract the two velocity updates and apply the remainder estimate and
$\norm H_{\mathrm{op}}\le\norm H_F$. Subtraction of position updates
then proves the induction step. For common curvature, add and subtract
$H_{B_n}x_{n-1}$ to obtain the extra mismatch term.
If $\tau<T_-$, the upper triangle inequality at the exit step gives a
contradiction. If $T_+<\tau$, the lower triangle inequality at $T_+$ gives
a contradiction. This argument requires no independence of the batches.
\end{proof}

The construction is an exact-arithmetic statement. Our implementation uses
ordinary float64 norms, Hessians and gradients, with absolute tolerance
$2\times10^{-11}$ in the observed error checks. It is not a directed-rounding
certificate. The scalar majorant may be looser than the manuscript's
matrix-propagator envelope. No unobserved-tail or complete-mean claim follows.

\subsection{Completed independent numerical validation}
Real MNIST 0/1 images are average-pooled into four inputs. A $4$--$4$--$4$--$1$
network has 45 trainable parameters, including all biases. Each seed samples
64 training images, fits all parameters for 400 Adam updates with learning
rate $0.01$ and ridge coefficient $0.001$, and starts continuation with zero
velocity. The finite-pool training accuracy ranges from 93.75\% to 100\%;
test-set generalization is not evaluated.

The exploratory pilot used seeds 9001--9002. Based on that pilot, the
independent validation froze $h=10^{-4}$, $\beta=0.9$, batch size 8,
$J=100$, radii $10^{-4},10^{-3},10^{-2}$, seeds 200--215 and 16 paths per
sampler per seed. This produces 1,536 actual paths, each compared with two
references. The 3,072 reference records are not independent trajectories.
Table~\ref{tab:followup-smooth} reports every radius and sampler for the sampled-Hessian
reference. Width means are conditional on observed exit by $J$.

\begin{table}[htbp]
\centering
\caption{Independent finite-window validation. Width is measured in updates.
Persistent sampling refreshes a uniform batch with probability $0.25$.}
\label{tab:followup-smooth}
\begin{tabular}{llrrr}
\toprule
Radius & Sampler & Exits / 256 & Mean exit width & Maximum exit width\\
\midrule
$10^{-4}$ & iid & 256 & 0.000 & 0\\
$10^{-4}$ & persistent & 256 & 0.000 & 0\\
$10^{-3}$ & iid & 248 & 1.403 & 60\\
$10^{-3}$ & persistent & 256 & 0.191 & 15\\
$10^{-2}$ & iid & 0 & --- & ---\\
$10^{-2}$ & persistent & 34 & 33.206 & 65\\
\bottomrule
\end{tabular}
\end{table}

All recorded error and capped-inclusion checks passed with the stated
floating-point tolerance. However, the prespecified overall feasibility gate
\emph{failed}: at the primary radius $10^{-3}$, 504/512 paths exited,
or 98.4375\%, exceeding the declared upper limit of 95\%.
The other criteria held: all 16 seeds had exits, every observed exit was
multi-step, and 495/504 exited paths had width at most 10 updates (98.214\%).
These facts do not permit removing the failed criterion after seeing data.
The worst primary width was 60 updates. At $10^{-2}$, high censoring and wide
intervals further restrict the useful range. These are descriptive numerical
observations, not independent-path confidence statements, a successful
overall validation, or evidence of large-network scalability.

\subsection{Equal update lengths do not eliminate directional reset effects}
Consider normalized momentum
$v_n=\beta v_{n-1}+g_n$ and
$\theta_n=\theta_{n-1}-\ell v_n/\norm{v_n}$ with $\ell>0$ and
$\theta_0=v_0=0$. Resetting before step 2 replaces the inherited buffer by zero.

\begin{proposition}[A two-step first-exit construction]
Let the first two realized gradients be unit vectors $a,b$ with
$c=\langle a,b\rangle\in(-1,1)$ and $\beta\in(0,1)$.
After two updates, the distances of the retained and reset trajectories obey
\begin{align*}
 D_{\rm keep}^2&=2\ell^2\left(1+
        \frac{\beta+c}{\sqrt{1+\beta^2+2\beta c}}\right),\\
 D_{\rm reset}^2&=2\ell^2(1+c),\qquad D_{\rm keep}>D_{\rm reset}.
\end{align*}
If $\max\{\ell,D_{\rm reset}\}<R\le D_{\rm keep}$, then
$\tau_{\rm keep}=2<\tau_{\rm reset}$ despite identical update lengths.
Such a radius exists for orthogonal $a,b$; it is not asserted for all $c$.
\end{proposition}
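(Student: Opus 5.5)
The plan is a direct two-step computation, followed by a comparison of the two squared distances and a check of the exit indicators at steps one and two.

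First, compute the common first step. With $v_0=0$ we get $v_1=a$ and $\theta_1=-\ell a$, so $\norm{\theta_1}=\ell<R$. Neither trajectory exits at step one.

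Second, compute each second step.
\begin{itemize}
\item Retained buffer: $v_2=\beta a+b$ with $\norm{v_2}^2=1+\beta^2+2\beta c>0$, since $c>-1$ and $\beta<1$ give $(1-\beta)^2+2\beta(1+c)>0$.
\item Reset buffer: $v_2=b$, a unit vector.
\end{itemize}
In both cases $\theta_2=-\ell(a+w)$, where $w$ is the unit second direction. Hence
\[
D^2=\ell^2\norm{a+w}^2=2\ell^2(1+\langle a,w\rangle).
\]
For the retained trajectory, $\langle a,w\rangle=(\beta+c)/\sqrt{1+\beta^2+2\beta c}$. For the reset trajectory, $\langle a,w\rangle=c$. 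This gives both displayed formulas.

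Third, prove the strict inequality $D_{\rm keep}>D_{\rm reset}$, which I expect to be the only step needing care. It is equivalent to
\[
\beta+c>c\sqrt{1+\beta^2+2\beta c}.
\]
\begin{itemize}
\item If $c\le0$: the right side is at most $0$. The left side is positive when $c>-\beta$. If instead $c\le-\beta$, square both sides after writing the inequality as $|c|\sqrt{\cdot}>|\beta+c|$, which reduces to the case below.
\item Cleanest route for all $c$: note that $(\beta+c)^2-c^2(1+\beta^2+2\beta c)=\beta^2(1-c^2)+2\beta c(1-c^2)$.
\end{itemize}
A more robust route is geometric. The unit vector $w$ is the normalization of $\beta a+b$, which lies strictly between $a$ and $b$ in their common plane. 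Therefore its angle to $a$ is strictly smaller than the angle of $b$ to $a$, whenever $\beta>0$ and $a\ne\pm b$. So $\langle a,w\rangle>c$. I will use this angle argument to avoid the case analysis.

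Finally, read off the exits. If $\max\{\ell,D_{\rm reset}\}<R\le D_{\rm keep}$, then the retained path has $\norm{\theta_1}<R\le\norm{\theta_2}$, so $\tau_{\rm keep}=2$. The reset path stays inside through step two, so $\tau_{\rm reset}>2$. Both use the same update length $\ell$.

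For existence with orthogonal $a,b$ ($c=0$): we have $D_{\rm reset}^2=2\ell^2>\ell^2$ and $D_{\rm keep}^2=2\ell^2\bigl(1+\beta/\sqrt{1+\beta^2}\bigr)>2\ell^2$. So any $R\in(\sqrt2\ell,D_{\rm keep}]$ works.
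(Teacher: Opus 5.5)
Your proposal is correct. Its structure matches the paper's proof: the common first displacement $-\ell a$, the second displacement $-\ell w$ with $w$ the normalized buffer, the expansion $D^2=2\ell^2(1+\langle a,w\rangle)$, and the same reading of the exit indicators.

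The one real difference is how you prove $\langle a,w\rangle>c$.
\begin{itemize}
\item The paper sets $F(t)=(t+c)/\sqrt{1+t^2+2tc}$, computes $F'(t)=(1-c^2)/(1+t^2+2tc)^{3/2}>0$, and compares $F(\beta)$ with $F(0)=c$.
\item You observe that $\beta a+b$ is a strictly positive combination of the non-parallel vectors $a,b$. It therefore lies in the open sector of angle $\arccos c<\pi$ between them, and cosine is strictly decreasing on $[0,\pi]$.
\end{itemize}
These are the same fact, since $F(t)$ is the cosine of the angle between $a$ and $ta+b$. The derivative route is a one-line verification that also shows $D_{\rm keep}$ increases with $\beta$. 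Your sector argument needs no calculus and makes clear why no restriction on $c\in(-1,1)$ is needed.

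In a final write-up, delete the abandoned case analysis and the squared-difference remark. That difference factors as $\beta(1-c^2)(\beta+2c)$, whose sign changes at $c=-\beta/2$, so on its own it does not prove the inequality.
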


\begin{proof}
The first displacement is $-\ell a$ in both arms. The second is $-\ell b$
after reset, or $-\ell(\beta a+b)/\sqrt{1+\beta^2+2\beta c}$ when retaining
momentum. Expansion gives the formulas. The function
$F(t)=(t+c)/\sqrt{1+t^2+2tc}$ has
$F'(t)=(1-c^2)/(1+t^2+2tc)^{3/2}>0$ and $F(0)=c$, proving strict inequality.
The radius condition excludes a first-step exit in either arm and a second-step
exit after reset, while forcing the retained trajectory to exit at step 2.
\end{proof}

For $\beta=0.9$, $\ell=1$, orthogonal $a,b$ and $R=1.6$, the two distances
are approximately $1.827$ and $1.414$. Affine component losses realize the
gradients exactly; a pool with gradients $\{a,-a,b,-b\}$ has zero mean and
admits the prescribed sequence with positive iid probability. Conditioning
on that sequence does not order unconditional RMST or MFPT.

The proposition explains why an equal-length intervention can retain a
directional first-exit effect. It does not predict the average effect in
trained networks or identify a causal decomposition of unmodified SGD.
The elementary proof supplies a concrete interpretation, not an assertion
of literature priority.

\subsection{A computable sufficient forcing condition for uniform finite means}
The preceding bounds also suggest an explicit way to supply a uniform
remaining-mean input. Write $g_i=\nabla L_i(\theta_*)$,
$H_i=\nabla^2L_i(\theta_*)$, and retain the whole-ball constants $M_i(R)$.
Define
\[
 G=\max_i\left\{\norm{g_i}+\norm{H_i}_F R+\tfrac12 M_i(R)R^2\right\},
 \qquad U=\max\{\norm{u_0},G/(1-\beta)\}.
\]
Every surviving velocity then has norm at most $U$.

\begin{proposition}[A fixed-batch forcing certificate in exact arithmetic]
Suppose an admissible batch $B_*$ satisfies
\[
 a:=\norm{g_*}-\norm{H_*}_F R-\tfrac12 M_*(R)R^2>0,
\]
where stars denote averages over $B_*$. Choose an integer
\[
 L\ge \frac{2R(1-\beta)}{ha}+\frac{\beta U}{a}
                 +\frac{\beta}{1-\beta}.
\]
From every surviving history, imposing this batch for $L$ updates forces
exit. For uniform iid batches of size $B$ from $N$ examples, put
$C=\binom NB$ and $q=C^{-L}$. For persistent sampling with refresh probability
$p\in(0,1)$, one can instead take $q=(p/C)(1-p)^{L-1}$.
In either case,
\[
 \Pr(\tau>kL)\le(1-q)^k,\qquad \mathbb E\tau\le L/q,
\]
and $L/q$ also bounds the expected remaining exit time after any surviving
history of the process with the stated initial-velocity bound.
\end{proposition}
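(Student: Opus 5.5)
The plan is to argue directly on the position along the fixed direction $e=g_*/\norm{g_*}$, with no averaged iterate. There are four steps: a uniform velocity bound, a forcing inequality along $e$, a drift estimate over $L$ updates, and a geometric-tail argument.

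\emph{Velocity bound.} For every $i$ and every $\norm{z}\le R$, Taylor expansion at $\theta_*$ and the whole-ball remainder of the third-derivative majorant give $\norm{\nabla L_i(\theta_*+z)}\le G$. The same bound holds for every batch average. Along surviving histories each update is evaluated at a surviving position. Induction on $u_n=\beta u_{n-1}-g_{B_n}(\cdot)$ then gives $\norm{u_n}\le\max\{\norm{u_0},G/(1-\beta)\}=U$, including the exit-causing update. So at the start of any block the state satisfies $\norm{x}<R$ and $\norm{u}\le U$, uniformly over histories.

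\emph{Forcing along $e$.} For $\norm{z}\le R$ we have
\[
e^\top g_{B_*}(\theta_*+z)\ge \norm{g_*}-\norm{H_*}_F R-\tfrac12 M_*(R)R^2=a>0,
\]
using $\norm{H_*}_{\rm op}\le\norm{H_*}_F$ and the remainder bound averaged over $B_*$. Suppose $B_*$ is imposed for $L$ updates from a surviving state $(x_0,u_0)$, relabelled as time $0$. As long as the path survives, $e^\top u_n\le\beta e^\top u_{n-1}-a$, which unrolls to
\[
e^\top u_n\le \beta^nU-\frac{a(1-\beta^n)}{1-\beta}.
\]
Summing $x_L-x_0=h\sum_{n=1}^L u_n$, using $\sum_{n\ge1}\beta^n\le\beta/(1-\beta)$ and $\sum_{n=1}^L(1-\beta^n)\ge L-\beta/(1-\beta)$, yields
\[
-e^\top(x_L-x_0)\ge \frac{ha}{1-\beta}\Bigl(L-\frac{\beta}{1-\beta}\Bigr)-\frac{h\beta U}{1-\beta}.
\]
The stated lower bound on $L$ makes the right-hand side at least $2R$. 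Since $e^\top x_0>-R$, survival through the whole block would give $\norm{x_L}\ge -e^\top x_L> R$, a contradiction. Hence exit occurs within the block. This is a stopped argument: every inequality is used only at inputs $x_{n-1}$ with $\norm{x_{n-1}}<R$, exactly as in the earlier finite-window comparisons.

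\emph{Block probabilities.} Under iid sampling, the event that the next $L$ batches all equal $B_*$ has conditional probability $C^{-L}$ given any history. Under persistent sampling, a sufficient event is a refresh that draws $B_*$ on the first update of the block, followed by $L-1$ non-refreshes. This has conditional probability at least $(p/C)(1-p)^{L-1}$, whatever the currently active batch is. In both cases the sampler's conditional law given the past does not depend on the iterates.

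\emph{Geometric tail.} Aligning blocks of length $L$ and conditioning at each block start gives $\Pr(\tau>(k+1)L\mid\tau>kL)\le 1-q$. Hence $\Pr(\tau>kL)\le(1-q)^k$ and $\E\tau\le\sum_k L\,\Pr(\tau>kL)\le L/q$. The same argument started from any surviving history bounds the remaining mean, because both the velocity bound and the forcing bound are history-uniform.

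The step I expect to be the main obstacle is the drift estimate. The constants must come out exactly as $\beta U/a+\beta/(1-\beta)$: this requires bounding the inherited momentum by the geometric sum rather than by the averaged-iterate lag, which would give the cruder $2\beta U/a$. It also requires checking carefully that the forcing bound is applied only at surviving inputs, so that exit cannot occur "between" the inequalities.
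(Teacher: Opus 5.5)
Your proposal is correct and follows essentially the same route as the paper. Both arguments use the induction $\norm{u_n}\le\beta\norm{u_{n-1}}+G$, the projected bound $e^\top g_{B_*}\ge a$ on the ball, an unrolled projected-velocity bound summed to a displacement of at most $-2R$, and the refresh-then-retain block event with a geometric tail. One small slip: the block moves in the $-e$ direction, so the fact you need is $e^\top x_0<R$ (which gives $-e^\top x_L>R$), not $e^\top x_0>-R$, although both follow from $\norm{x_0}<R$.
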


\begin{proof}
The gradient norm is at most $G$ inside the ball, so induction on
$\norm{u_n}\le\beta\norm{u_{n-1}}+G$ proves the uniform velocity bound.
With $d=g_*/\norm{g_*}$, the whole-ball remainder estimate implies
$d^Tg_{B_*}(\theta_*+x)\ge a$ for $\norm{x}\le R$.
If the forced trajectory survives for $j$ updates, its projected velocity is
at most $\beta^jU-a(1-\beta^j)/(1-\beta)$. Summing shows that the projected
displacement over $L$ steps is at most
\[
 h\left[\left(U+\frac{a}{1-\beta}\right)
       \frac{\beta(1-\beta^L)}{1-\beta}
       -\frac{aL}{1-\beta}\right]\le-2R.
\]
This contradicts survival, since the initial projection is less than $R$.
An earlier exit already satisfies the conclusion; gradients outside the
ball are never needed.

Under iid sampling the specified word has probability $C^{-L}$. Under
persistent sampling, refresh to $B_*$ on the first step and retain it for
$L-1$ steps. This event has probability $(p/C)(1-p)^{L-1}$ regardless of the
stored batch. Condition successively on surviving histories to obtain the
geometric tail; summing survival probabilities gives $L/q$. The same argument
starts at any surviving history. It does not apply to the constrained words
of random reshuffling.
\end{proof}

\paragraph{Post-hoc candidate check, not numerical certification.}
For each of the 16 saved validation centers and each radius, we formed candidate
8-example batches by selecting the largest projections onto each example
gradient, then evaluated the displayed margin for the resulting batch.
All 48 center--radius pairs had a positive float64 candidate margin.
For persistent sampling with $p=0.25$, the ranges of $L$ and $\log_{10}(L/q)$
were respectively 29--58 and 15.21--19.13 at $R=10^{-4}$;
30--62 and 15.35--19.66 at $R=10^{-3}$; and 44--163 and 17.26--32.70
at $R=10^{-2}$. Thus even successful rigorous verification would give
extremely loose upper bounds, not useful numerical MFPT estimates.
The search is a sufficient-condition diagnostic; failure to find a batch
would not establish impossibility. These ordinary floating-point margins are
not certificates. The separate verification below avoids trusting their
computed gradients and Hessians.

\subsection{Integer interval verification of the forcing condition}
The forcing proof only needs a fixed direction $d$ with $\norm d_2\le1$
and $d^Tg_{B_*}\ge a>0$ throughout the ball. The direction need not be
aligned with the exact center gradient. For each previously selected batch,
we interpret the stored approximate gradient coordinates as exact rational
constants and divide by their exact $\ell_1$ norm. This gives
$\norm d_1=1$ and hence $\norm d_2\le1$, without trusting gradient accuracy.

We enclose all parameters in the coordinate box
$[\theta_{*,j}-R,\theta_{*,j}+R]$, which contains the Euclidean ball.
For each layer, propagate the value and directional derivative using
\[
 Dz[d]=(dW)a+WDa[d]+db,\qquad
 D\tanh(z)[d]=(1-\tanh^2 z)Dz[d].
\]
The loss derivative is $(f-y)Df[d]+\lambda\theta^Td$; average over the
fixed batch. Its lower interval endpoint supplies $a$. The forcing proof
remains valid with any verified upper bound on all gradient norms.
Independently, the value and first-derivative recurrences supply
\[
 G_{\mathrm{new}}:=\max_i\{(F_i+|y_i|)J_i+
                  \lambda(\norm{\theta_*}+R)\},
\]
whose outward-rounded upper endpoint replaces $G$ in the forcing proof.
No stored autodifferentiated Hessian is used.

\paragraph{Arithmetic.}
Every endpoint is an integer divided by $S=2^{96}$. Addition is exact;
products, quotients and conversions use integer floor/ceiling. Square roots
use integer square roots. For $e^x$, range-reduce nonnegative $x$ to
$t\le1/8$, evaluate the degree-32 Taylor sum, and bound the tail by
$2t^{33}/33!$; repeated squaring recovers $e^x$. For negative $x$ use
reciprocals. The increasing map $(e^{2x}-1)/(e^{2x}+1)$ then bounds $\tanh$.
These constructions establish outward enclosure without floating-point
transcendental routines. Code checks include 1,000 exact-rational point cases,
1,000 nondegenerate interval cases, 257 high-precision transcendental probes,
and 192 independent autodifferentiation comparisons; the tests supplement
the arithmetic argument rather than prove it by sampling.

\begin{table}[htbp]
\centering
\caption{Post-hoc interval verification on all 16 saved centers per radius.
Logarithms are approximate display values; probability bounds are stored as
exact symbolic expressions.}
\label{tab:followup-forcing-interval}
\begin{tabular}{lrrr}
\toprule
Radius & Positive lower margins & Block length $L$ & $\log_{10}(L/q)$ range\\
\midrule
$10^{-4}$ & 16/16 & 313--1863 & 51.72--246.15\\
$10^{-3}$ & 16/16 & 323--1990 & 52.99--262.05\\
$10^{-2}$ & 16/16 & 448--4517 & 68.75--578.13\\
\bottomrule
\end{tabular}
\end{table}

All 48 specified cases have strictly positive lower endpoints. Rational
arithmetic verifies the block-length inequality. For $C=\binom{64}{8}$,
the exact mean bounds are $LC^L$ for iid sampling and
$4LC(4/3)^{L-1}$ for persistent sampling with $p=1/4$.
This computation concerns the ideal exact-real $\tanh$ network, interpreting
saved binary64 centers and features as exact constants and using
$h=1/10000$, $\beta=9/10$, $\lambda=1/1000$, decimal radii and zero
initial velocity, with the ideal stated sampling laws. It does not certify
the floating-point optimizer recurrence or earlier exit brackets. The much
larger bounds reflect a conservative global gradient bound and remain
quantitatively uninformative. This post-hoc verification does not change the
failed statistical validation gate, cover ReLU networks or reshuffling, or
establish literature priority. The proof and interval implementation remain
open to independent review.

\subsection{Independent longer-window sensitivity study}
A subsequent descriptive study froze seeds 300--315, 16 paths per sampler
per seed, radii $10^{-3},10^{-2}$, $h=10^{-4}$ and $J=500$ before its outputs.
All 1,024 actual paths completed the floating-point inclusion checks. At
$R=10^{-3}$ all 256 paths per sampler exited: sampled-Hessian mean widths
among exits were 2.754 updates for iid and 0.496 for persistent sampling,
with maxima 122 and 25. At $R=10^{-2}$, only 14/256 iid and 178/256 persistent
paths exited; their mean exit widths were 309.929 and 349.792 updates, with
maxima 428 and 466. Thus merely extending the window does not resolve the
large-radius precision problem. This study uses new centers, so it is not a
paired-window comparison, and does not replace the failed original gate.
\subsection{Sharper tail bounds from survival geometry and batch families}
\label{app:survival-family-tail}
The preceding fixed-batch bound pays both for an unrestricted velocity
bound and for repeatedly selecting one batch. We can reduce both costs.
The argument below is a sufficient-condition refinement, not a claim of
literature priority for geometric tails or subset-counting methods.

\begin{proposition}[Survival-constrained batch-family forcing]
\label{prop:survival-family-tail}
Consider $u_{n+1}=\beta u_n-g_{B_{n+1}}(x_n)$ and
$x_{n+1}=x_n+hu_{n+1}$, with $h>0$, $0\le\beta<1$, $\|x_0\|<R$,
and exit $\tau=\inf\{n\ge1:\|x_n\|\ge R\}$.
If all batch gradients have a known norm bound $G$ inside the ball, set
\[
 U_* = \min\left\{
 \max\{\|u_0\|,G/(1-\beta)\},\,
 \max\{\|u_0\|,2R/h\}\right\}.
\]
Without such a gradient bound, use $U_*=\max\{\|u_0\|,2R/h\}$ instead.
Let a fixed direction $d$, $\|d\|_2\le1$, and a nonempty batch family
$\mathcal A$ satisfy $d^\top g_B(x)\ge a>0$ for every $B\in\mathcal A$
and $\|x\|<R$. Write $\pi$ for this family's probability under a fresh
uniform batch. If an integer $L\ge1$ satisfies
\begin{equation}
 h\left[\left(U_*+\frac{a}{1-\beta}\right)
 \frac{\beta(1-\beta^L)}{1-\beta}-\frac{aL}{1-\beta}\right]
 \le -2R,\label{eq:survival-forcing-L}
\end{equation}
then from every surviving history, the probability of exit within the
next $L$ updates is at least
\[
 q_{\rm iid}=\pi^L,\qquad
 q_{\rm pers}=p\pi(1-p+p\pi)^{L-1},
\]
respectively for iid batches and persistent uniform refresh with
probability $p\in(0,1)$. In either case,
\[
 \Pr(\tau>kL)\le(1-q)^k,\qquad
 \mathbb E[\tau-n\mid\mathcal F_n]\le L/q
 \quad\hbox{on }\{\tau>n\}.
\]
The second statement includes the full stored sampler and velocity state
in the history. It does not apply to random reshuffling.
\end{proposition}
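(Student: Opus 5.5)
The plan is to follow the proof of the fixed-batch forcing certificate, changing two ingredients: the velocity bound $U_*$ and the counting of admissible batch words.

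\emph{Step 1 (velocity bound along surviving histories).} If a gradient bound $G$ is available, induction on $\|u_{n}\|\le\beta\|u_{n-1}\|+G$ gives $\|u_n\|\le\max\{\|u_0\|,G/(1-\beta)\}$, exactly as before. The survival bound uses geometry alone. For $n\ge1$, $u_n=(x_n-x_{n-1})/h$. If $x_{n-1}$ and $x_n$ both lie in the open ball, then $\|u_n\|<2R/h$. At $n=0$ we only have $\|u_0\|$. Hence at every surviving history the current velocity obeys $\|u_n\|\le U_*$; under a gradient bound we may take the minimum of the two estimates. This bound is needed only at the start of a block, and it holds there because the block begins at a surviving time.

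\emph{Step 2 (deterministic forcing within a block).} Fix a surviving history at time $n$ and any batch word $B_{n+1},\dots,B_{n+L}$ drawn from $\mathcal A$. Suppose the trajectory survives $j<L$ further updates. Projecting the velocity recursion onto $d$ and using $d^\top g_B\ge a$ inside the ball gives $d^\top u_{n+j}\le\beta^jU_*-a(1-\beta^j)/(1-\beta)$. Summing $h\,d^\top u$ over $L$ steps reproduces the left side of \eqref{eq:survival-forcing-L}, which is at most $-2R$. But $|d^\top x|<R$ both at the start and at the end, which contradicts survival. Only gradients at surviving (interior) positions are ever evaluated, and an earlier exit already gives the conclusion. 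This is the step to check most carefully. The velocity bound $U_*$ is applied only at time $n$; subsequent velocities are controlled by the projected recursion rather than by a norm bound. The survival-geometry estimate must also be valid for the stored velocity, including the $n=0$ case.

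\emph{Step 3 (probability of an admissible word).} Under iid sampling, each of the $L$ batches lies in $\mathcal A$ independently with probability $\pi$, which gives $\pi^L$. Under persistent sampling, take the event that the first update refreshes into $\mathcal A$, with probability $p\pi$ whatever batch is stored. Each later update either keeps the current batch, already in $\mathcal A$ (probability $1-p$), or refreshes into $\mathcal A$ (probability $p\pi$). This gives $p\pi(1-p+p\pi)^{L-1}$. Both bounds hold conditionally on the full stored state and history, because the sampler is independent of the iterates given its past.

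\emph{Step 4 (tail and remaining mean).} Condition successively at the times $n+kL$. On survival, Steps 1--3 give conditional exit probability at least $q$ within the next block, so $\Pr(\tau>n+kL\mid\mathcal F_n)\le(1-q)^k$. Summing survival probabilities over blocks of length $L$ gives $\mathbb E[\tau-n\mid\mathcal F_n]\le L/q$. Reshuffling is excluded because its conditional word probabilities depend on the remaining pool and may vanish.
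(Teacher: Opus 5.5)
Your proposal is correct and follows essentially the same route as the paper. The block-start velocity is bounded by the survival geometry $h\|u_n\|<2R$, combined with the gradient bound when one is available. The projected recursion with $d^\top g_B\ge a$ then forces a displacement of at most $-2R$ on any $L$-word from $\mathcal A$. Finally, the iid and persistent word probabilities, together with successive conditioning, give the geometric tail and the $L/q$ remaining-mean bound.
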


\begin{proof}
When available, the gradient bound and the momentum recursion give
$\|u_n\|\le\max\{\|u_0\|,G/(1-\beta)\}$ at every surviving state.
For $1\le n<\tau$, both consecutive positions lie inside the ball;
therefore $h\|u_n\|=\|x_n-x_{n-1}\|<2R$. Taking the minimum of these
available bounds, and including $n=0$, gives $\|u_n\|\le U_*$.
This use of survival does not bound velocities on unrestricted paths.

Start at any surviving history and impose $L$ batches from $\mathcal A$.
If the process has not exited earlier, the projected velocity after $j$
updates is at most $\beta^jU_*-a(1-\beta^j)/(1-\beta)$.
Summing gives the left side of \eqref{eq:survival-forcing-L} as an upper
bound on the projected displacement. A displacement of at most $-2R$
is incompatible with both endpoints being in the open ball. Thus the
batch event forces exit; gradients after an earlier exit are irrelevant.

Under iid sampling the event has probability $\pi^L$. For a persistent
sampler, require a refresh into $\mathcal A$ on the first update, of
probability $p\pi$ regardless of the stored batch. At each later update,
retaining a good batch or refreshing into the family has probability
$1-p+p\pi$. The resulting bound is uniform in the earlier history.
Successive conditioning yields geometric block survival. Summing the
survival probabilities bounds the expected remaining time by $L/q$,
without assuming integrability beforehand.
\end{proof}

\paragraph{Counting a verified family without enumerating all batches.}
For a fixed rational direction, let $l_i$ be a rigorous lower bound on
$d^\top g_i(x)$ over the full ball. Choose $b=\min_i l_i$ and a rational
$\Delta\ge0$, and integers $w_i\ge0$ such that
$b+\Delta w_i\le l_i$. For batches of size $B$, define
\[
 \mathcal A_t=\{S:|S|=B,\ \sum_{i\in S}w_i\ge t\},\quad
 a_t=b+\Delta t/B.
\]
Whenever $a_t>0$, this family satisfies the proposition. Its probability
is exactly $\binom NB^{-1}\sum_{s\ge t}C_{B,s}$, where $C_{B,s}$ is the
coefficient of $z^Bx^s$ in $\prod_{i=1}^N(1+zx^{w_i})$.
Integer dynamic programming computes these counts, including repeated
weights, while always selecting distinct example indices. Quantization
only discards potentially good batches; it cannot invalidate inclusion.

\paragraph{Fixed deterministic verification on the same 48 cases.}
Before this computation, we froze all 16 saved centers, three radii,
previous rational directions, and the earlier exact gradient upper
bounds. Individual directional derivatives were recomputed using the
same outward-rounded integer interval kernel. We used
$\Delta=(\max_i l_i-\min_i l_i)/32$ and downward-quantized weights
(all zero if $\Delta=0$). Every occupied positive-margin sum threshold
was considered, with $L\le4096$, using the least integer satisfying
\eqref{eq:survival-forcing-L}. We retained the kinematically improved
singleton certificate as a fallback, and compared candidate mean bounds
with exact rational arithmetic separately for each sampler. This is
post-hoc optimization of valid deterministic upper bounds, not a new
statistical validation, new training sample, or multiplicity-adjusted test.

\begin{table}[htbp]
\centering
\caption{Ranges of approximate $\log_{10}$ uniform remaining-mean upper
bounds over all 16 centers per radius, in updates. Original: fixed batch
and global velocity bound. Survival: kinematic bound and exact finite
geometric sum, retaining the old singleton probability. Family: best
verified family or singleton fallback. Smaller bounds need not be tight.}
\label{tab:survival-family-tail}
\small
\begin{tabular}{llrrr}
\toprule
Radius & Sampler & Original & Survival & Family\\
\midrule
$10^{-4}$ & iid & 3021.70--17973.82 & 68.37--242.55 & 24.45--108.23\\
 & persistent & 51.72--246.15 & 11.84--14.64 & 5.20--10.11\\
$10^{-3}$ & iid & 3118.18--19198.89 & 455.04--1892.91 & 192.69--854.46\\
 & persistent & 52.99--262.05 & 17.67--36.90 & 14.13--36.90\\
$10^{-2}$ & iid & 4324.07--43574.76 & 4324.07--37169.73 & 2378.23--37169.73\\
 & persistent & 68.75--578.13 & 68.75--495.10 & 68.75--495.10\\
\bottomrule
\end{tabular}
\end{table}

All 96 sampler-specific bounds are no larger than their predecessors;
48/48 iid and 46/48 persistent bounds strictly improve.
The small-radius persistent bounds decrease substantially, but remain
far above the observed finite exit times. At the largest radius the
batch-family refinement is never selected for persistent sampling:
the weaker margins offset the larger family probability. Iid bounds
remain extremely large. Thus the calculation improves a sufficient
tail certificate without establishing a useful complete-MFPT estimator.
The exact-real interpretation, exclusion of floating-point optimizer
certification, and the failed original smooth-validation gate remain
unchanged. A separate checker recomputes interval inputs, integer subset
counts, and every displacement, probability and bound comparison.
\subsection{Two-sided moment certificates and a negative occupation control}
\label{app:moment-tail}
A block need not maintain one gradient sign to force exit. Its weighted
signed forcing can instead be large in either direction. The following
certificate uses the second and fourth raw moments of a finite batch
reference law, together with a uniform bound on its state-dependent error.
The moment inequality is the classical Cauchy--Schwarz (equivalently,
Paley--Zygmund) argument; we claim no priority for that inequality.

\begin{proposition}[Two-sided moment forcing]
\label{prop:moment-tail}
Use the dynamics, ball, and uniform surviving-state velocity bound $U_*$
of Proposition~\ref{prop:survival-family-tail}. Suppose a fixed direction
$\|d\|_2\le1$, constants $A_B$ indexed by batches, and $\delta\ge0$ satisfy
\[
 |d^\top g_B(x)-A_B|\le\delta
 \quad\text{for every batch $B$ and $\|x\|<R$}.
\]
For $L\ge1$, define
\[
 w_j=\frac{1-\beta^{L-j+1}}{1-\beta},\qquad
 c_L=\sum_{j=1}^L\beta^j,\qquad
 T_L=\frac{2R}{h}+U_*c_L+\delta\sum_{j=1}^Lw_j.
\]
Let $\widetilde B_1$ be a fresh uniform batch and let subsequent batches
be retained with probability $1-p$ or independently refreshed uniformly
with probability $p$, where $0<p\le1$. For this reference batch law set
\[
 Y_L=\sum_{j=1}^L w_j A_{\widetilde B_j},\qquad
 m_2=\mathbb E Y_L^2,\qquad m_4=\mathbb E Y_L^4.
\]
If $m_2>T_L^2$, then, from any surviving optimizer history,
\begin{equation}
 \Pr(\tau\le n+L\mid\mathcal F_n)\ge q_L,
 \qquad q_L=p\frac{(m_2-T_L^2)^2}{m_4}>0
 \quad\text{on }\{\tau>n\}.
 \label{eq:moment-tail-q}
\end{equation}
Consequently $\mathbb E[\tau-n\mid\mathcal F_n]\le L/q_L$ on survival
and $\Pr(\tau>kL)\le(1-q_L)^k$. The iid case is $p=1$.
Certified bounds $\underline m_2\le m_2$ and
$\overline m_4\ge m_4$ give the same conclusions with
$q_L=p(\underline m_2-T_L^2)^2/\overline m_4$ whenever
$\underline m_2>T_L^2$.
\end{proposition}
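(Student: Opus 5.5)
The plan is to reduce the claim to a deterministic implication on a single block, as in the proof of Proposition~\ref{prop:survival-family-tail}. The only new ingredient is that a sign condition on every batch is replaced by a second-moment lower bound on the absolute weighted forcing.

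\emph{Step 1: unrolling one block.} Fix $n$ and a surviving history, and write $g_j=g_{B_{n+j}}(x_{n+j-1})$. Iterating the momentum recursion gives $u_{n+k}=\beta^ku_n-\sum_{j\le k}\beta^{k-j}g_j$. Summing the position updates and exchanging the order of summation yields
\[
 x_{n+L}-x_n=h\Bigl[c_Lu_n-\sum_{j=1}^L w_jg_j\Bigr],
\]
with exactly the weights $w_j=\sum_{k=j}^L\beta^{k-j}$ and $c_L=\sum_{k=1}^L\beta^k$ of the statement. Suppose the process survives through $n+L$. Then every gradient is evaluated inside the ball, and $\|u_n\|\le U_*$ by the surviving-state bound of Proposition~\ref{prop:survival-family-tail}. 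Projecting on $d$ and using $\|x_{n+L}-x_n\|<2R$, $\|d\|\le1$ and $|d^\top g_j-A_{B_{n+j}}|\le\delta$ gives
\[
 \Bigl|\sum_{j=1}^Lw_jA_{B_{n+j}}\Bigr|<\frac{2R}{h}+U_*c_L+\delta\sum_jw_j=T_L .
\]
Contrapositively, the batch event $\{|\sum_jw_jA_{B_{n+j}}|\ge T_L\}$ forces exit within the block. If exit happens earlier, the conclusion already holds, so gradients after exit never matter; this is the same stopping remark as before.

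\emph{Step 2: matching the sampler to the reference law.} Given $\mathcal F_n$, the sampler is independent of the iterates. With probability $p$, regardless of the stored batch, the next update refreshes to a fresh uniform batch. On that event, the conditional law of $(B_{n+1},\dots,B_{n+L})$ is exactly the law of $(\widetilde B_1,\dots,\widetilde B_L)$ in the statement. Hence
\[
 \Pr(\tau\le n+L\mid\mathcal F_n)\ge p\,\Pr(|Y_L|\ge T_L).
\]
For $p=1$ the first batch is already fresh.

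\emph{Step 3: Paley--Zygmund.} The inequality $\mathbb E[Y_L^2\mathbf 1\{Y_L^2> T_L^2\}]\ge m_2-T_L^2$, combined with Cauchy--Schwarz, gives $(m_2-T_L^2)^2\le m_4\Pr(Y_L^2>T_L^2)$, which yields \eqref{eq:moment-tail-q}. The inequality is monotone in $m_2$ and $m_4$, so the certified bounds $\underline m_2$ and $\overline m_4$ can be substituted directly. Because the lower bound is uniform over surviving histories, successive conditioning on block boundaries gives $\Pr(\tau>kL)\le(1-q_L)^k$. Summing survival probabilities then bounds $\mathbb E[\tau-n\mid\mathcal F_n]$ by $L/q_L$ without assuming integrability in advance.

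The step needing the most care is Step 2. The bound must hold conditionally on the full stored sampler state, including the retained batch, which is why the factor $p$ is paid for a forced refresh rather than assuming the block starts in stationarity. The surviving-state velocity bound also has to be applied at the block start $n$, not along unrestricted paths.
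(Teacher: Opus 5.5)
Your proposal is correct and follows essentially the same route as the paper's proof. You unroll the block with the weights $w_j$ and $c_L$ to show that survival forces $|Y_L|<T_L$, and you pay the factor $p$ for a forced first refresh so that the block has the reference law. You then apply the Cauchy--Schwarz (Paley--Zygmund) bound and finish with geometric block conditioning.
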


\begin{proof}
Fix any surviving history at time $n$. Require a fresh uniform batch at
the next update, an event of probability $p$ independent of the stored
batch and optimizer state. Conditional on this event, the next $L$
batch indices have precisely the reference law above. If all $L$
updates survive, unrolling the momentum recursion gives
\[
 \frac{d^\top(x_{n+L}-x_n)}{h}
 =c_Ld^\top u_n-Y_L-\sum_{j=1}^Lw_j e_j,
 \qquad |e_j|\le\delta.
\]
Both endpoints lie in the open ball, so the absolute displacement is
less than $2R$. The velocity bound and the nonnegative weights therefore
imply $|Y_L|<T_L$. Thus $|Y_L|>T_L$ forces exit, without evaluating
gradients after an earlier exit. The batch chain can be defined beyond
exit solely to specify this forcing event.

For $X=Y_L^2$, Cauchy--Schwarz yields
\[
 m_2=\mathbb E X
 \le T_L^2+\mathbb E[X\mathbf 1_{\{X>T_L^2\}}]
 \le T_L^2+\sqrt{m_4\Pr(X>T_L^2)}.
\]
Rearranging and multiplying by the first-refresh probability proves
\eqref{eq:moment-tail-q}. The bound is uniform in the full history;
successive conditioning and summation of block survival probabilities
give the tail and remaining-mean statements. Replacing moments by the
stated one-sided bounds only decreases the certified probability.
Stationarity of the actual optimizer or its current batch is not assumed.
\end{proof}

\paragraph{Whole-ball enclosures and exact finite-population moments.}
For each example, obtain a rigorous interval
$[l_i,u_i]\ni d^\top g_i(x)$ over the entire ball. Put
$a_i=(l_i+u_i)/2$ and $\delta_i=(u_i-l_i)/2$. For a uniform subset $S$
of size $b$, take $A_S=b^{-1}\sum_{i\in S}a_i$ and let $\delta$ be the
average of the $b$ largest $\delta_i$. These deterministic constants
meet the proposition's condition for every batch, including batches
not selected in the previous family construction. Because the interval
endpoints are dyadic, all raw moments
$\mu_s=\mathbb E A_S^s$, $0\le s\le4$, can be computed exactly by an
integer subset-sum moment recursion. No independence of examples within
a batch is assumed.

To propagate the correlated batch law, let
$M_{r,s}=\mathbb E[Y_t^r A_{\widetilde B_t}^s]$ for $r+s\le4$.
Initialize $M_{0,s}=\mu_s$ and $M_{r,s}=0$ for $r>0$, using an auxiliary
stationary batch at $t=0$. For each successive weight $w=w_{t+1}$, use
\begin{align*}
 P_{r,s}&=(1-p)M_{r,s}+pM_{r,0}\mu_s,\\
 M'_{r,s}&=\sum_{k=0}^r\binom rk w^k P_{r-k,s+k}.
\end{align*}
The auxiliary initialization gives a fresh uniform first batch after
one transition. After $L$ steps, $M_{2,0}$ and $M_{4,0}$ are the required
raw, uncentered moments, retaining any nonzero mean forcing. Outward
integer interval arithmetic encloses this recursion.

\paragraph{Frozen verification and all-case results.}
We reused the same 16 saved centers, three radii, and rational directions
as Appendix~\ref{app:survival-family-tail}, with $N=64$, $b=8$,
$h=10^{-4}$, $\beta=0.9$, and $p=1/4$ for persistent sampling. Before
computation, we froze the eight lengths $L=8,16,32,64,128,256,512,1024$,
96-bit interval arithmetic, and a 1200-second CPU wall-time limit.
All 768 case--sampler--length candidates were retained; no lengths,
directions, radii, or precision were added after inspecting outcomes.
Each sampler-specific result is the smaller of its valid moment bound
and the prior batch-family certificate.

\begin{table}[htbp]
\centering
\caption{Moment refinement on all 16 centers per radius. Positive margin
counts centers with at least one certified $\underline m_2>T_L^2$ in the
frozen grid. Ranges are approximate $\log_{10}$ remaining-mean upper
bounds in updates, including every fallback; they are not MFPT estimates.}
\label{tab:moment-tail}
\small
\begin{tabular}{llcrr}
\toprule
Radius & Sampler & Positive margin & Prior family & With moments\\
\midrule
$10^{-4}$ & iid & 0/16 & 24.45--108.23 & 24.45--108.23\\
 & persistent & 15/16 & 5.20--10.11 & 3.85--10.11\\
$10^{-3}$ & iid & 0/16 & 192.69--854.46 & 192.69--854.46\\
 & persistent & 0/16 & 14.13--36.90 & 14.13--36.90\\
$10^{-2}$ & iid & 0/16 & 2378.23--37169.73 & 2378.23--37169.73\\
 & persistent & 0/16 & 68.75--495.10 & 68.75--495.10\\
\bottomrule
\end{tabular}
\end{table}

Exactly 15/96 sampler-specific bounds improve, all at $R=10^{-4}$ under
persistent sampling. Their upper bounds range from approximately
$7.15\times10^3$ to $1.59\times10^5$ updates, with selected lengths
$256$, $512$, or $1024$. Seed 208 at this radius has no positive moment
margin and retains its approximately $1.28\times10^{10}$ bound.
Every iid and larger-radius case also retains the prior certificate.
Failure of this sufficient moment condition does not imply infinite
mean exit time or invalidate the earlier finite-mean certificate.

A separate checker recomputed all 48 fresh-batch moment sets using
finite-population power-sum identities, checked all 768 probability and
threshold formulas, and independently propagated exact rational moments
using the reversed stationary refresh kernel for $L=8$ in all 96
case--sampler pairs and for all 15 selected improvements. Those 111
checks were enclosed by the production intervals. The whole-ball
interval kernel is inherited from the separately audited calculation;
this is not a second independent implementation of that kernel.

\paragraph{Negative control: allowing unfavorable batches.}
Before the moment computation, a separately frozen occupation-count
search allowed up to $k$ unfavorable updates in a block. If favorable
batches have projected gradient at least $a$ and all batches at least
$b_0\le a$, the largest possible displacement with $k$ unfavorable
updates is bounded by
\[
 D_k=h\left[U_*c_L-a\sum_{j=1}^Lw_j
                  +(a-b_0)\sum_{j=1}^k w_j\right].
\]
Indeed, the weights decrease with $j$, so placing unfavorable updates
first maximizes this bound. The condition $D_k\le-2R$ forces exit.
We computed the probability of at most $k$ unfavorable batches by a
binomial recursion for iid sampling and a two-state occupation recursion
for persistent sampling, requiring the first batch to refresh into the
favorable family. With at most two previously selected families per case,
six fixed length multipliers, and $L\le512$, the search checked 688
candidates. None of the 556 candidates with $k>0$ improved the prior
bound, even when an independent checker replaced rounded probabilities
by exact rationals. Three $k=0$ candidates changed the log bound by less
than $1.6\times10^{-8}$, with no substantive gain. We retain this negative
result and did not expand its grid. The 171 probabilities rounded down
to zero were numerical unavailability, not scientific failures.

Both studies are deterministic certificate refinements on reused centers,
not new independent training experiments. They certify the stated
exact-real tanh dynamics with saved binary inputs treated as exact
constants, not floating-point optimizer trajectories. The failed original
smooth-validation gate, large-radius limitations, and lack of a general
large-network or random-reshuffling tail certificate remain unchanged.
In particular, these bounds do not establish accurate complete-MFPT
prediction or convergence of complete means under a diffusion or
switching approximation.

\subsection{Local complete-mean certificates for full ResNet-18 and ViT dynamics}
\label{app:large-head-tail}
The finite-pool optimizer does not acquire continuous noise merely
because it is stochastic. From a fixed full optimizer and sampler state,
an $r$-step transition has at most $J^r$ atoms, where $J$ is the finite
number of legal batches. The union of reachable states over finite
times is countable. Consequently neither a diffuse-noise minorization
nor Lebesgue irreducibility can be assumed for this exact chain;
a Gaussian replacement changes this issue. Related work on finite-sum
SGD uses iterated-function-system structure and establishes absorbing
sets under separability and monotonicity assumptions
\citep{shirokoff2025separable}. Those assumptions have not been
verified for the coupled momentum networks here. This is why the
following mean bounds require explicit history-uniform exit events,
rather than an appeal to generic noise-driven exploration.

The full-network tail question has two distinct targets: a finite,
history-uniform remaining-mean bound, and a bound tight enough for useful
complete-MFPT prediction. This initial restricted construction establishes
the first on smaller balls, but neither target at the original radii;
Appendix~\ref{app:complete-head-cones} subsequently extends finite-mean
certification to the original ResNet radii. The construction below uses
the classification head and coupled weight decay; the backbone remains
fully trainable. No bound on a full-network Hessian or third derivative
is required. This is a structural specialization of the forcing argument,
not a claim of priority for the softmax gradient identity or geometric tails.

\begin{proposition}[Classification-head cone certificate]
\label{prop:large-head-tail}
Let $\theta=(\psi,W,b)$ parametrize logits
$z_\theta(x)=W\phi_\psi(x)+b\in\mathbb R^C$ and use minibatch
cross-entropy with coupled penalty $\lambda\|\theta\|_2^2/2$, $\lambda>0$.
All parameters follow $u_{n+1}=\beta u_n-g_{B_{n+1}}(\theta_n)$,
$\theta_{n+1}=\theta_n+hu_{n+1}$, with $0\le\beta<1$, $h>0$.
Exit is from $\|\theta-\theta_*\|_2<R$, starting at $\theta_*$ with
stored velocity $u_0$. The head is trainable and its entries use the
stated decay, learning rate, and momentum, without Nesterov or adaptive
preconditioning. Assume finite real logits and well-defined updates
before exit; a selected ReLU subgradient in the backbone is allowed.

Choose a nonempty proper class subset $S$, with $s=|S|$, and define
$a_j=C^{-1}-s^{-1}\mathbf 1_{\{j\in S\}}$. Suppose fixed $(v,t)$ obey
\[
 v^\top\phi_\psi(x)+t\ge0
 \quad\text{for every pool input and every $\theta$ in the ball}.
\]
Let $D=(0,av^\top,at)\ne0$, and choose $\rho\ge\|D\|_2$. Set
\[
 m=\langle D,\theta_*\rangle,\qquad
 A=\lambda(m-R\rho),\qquad
 U=\max\{|\langle D,u_0\rangle|,2R\rho/h\}.
\]
If $m>R\rho$, and the finite pool contains $N_S\ge B$ examples with
labels in $S$, choose
\begin{equation}
 L=\left\lceil\frac{2R\rho(1-\beta)}{hA}
              +\frac{\beta U}{A}+\frac{\beta}{1-\beta}\right\rceil,
 \qquad \pi=\frac{\binom{N_S}{B}}{\binom NB}.
 \label{eq:head-tail-length}
\end{equation}
For iid uniform batches, let $q=\pi^L$; for persistent uniform refresh
with probability $p\in(0,1]$, let $q=p\pi(1-p+p\pi)^{L-1}$. Then
\[
 \Pr(\tau>kL)\le(1-q)^k,\qquad
 \mathbb E[\tau-n\mid\mathcal F_n]\le L/q
 \quad\text{on }\{\tau>n\}.
\]
These are full-parameter ball exits, even though $D$ is supported only
on the head. The statement does not cover random reshuffling.
\end{proposition}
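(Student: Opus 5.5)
The plan is to reduce the statement to the survival-constrained batch-family forcing argument of Proposition~\ref{prop:survival-family-tail}. We work with the unnormalized direction $D$ throughout and rescale lengths by $\rho$. There are three steps: a lower bound on the projected gradient for \emph{favorable} batches, a survival-based velocity bound, and the geometric-tail bookkeeping.

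\emph{Step 1: projected gradient of a favorable batch.} Call a batch favorable if all of its labels lie in $S$; there are $\binom{N_S}{B}$ such batches, which gives $\pi$. Since $D$ vanishes on the backbone block $\psi$, only head derivatives enter $\langle D,g_B(\theta)\rangle$. For the selected ReLU subgradient this is immediate, because the head gradient of cross-entropy does not involve backbone derivatives. For one example $(x,y)$ with softmax $p=p_\theta(x)$, the head gradient is $(p-e_y)\phi_\psi(x)^\top$ for $W$ and $p-e_y$ for $b$. Hence
\[
 \langle D,\nabla_\theta\ell_{x,y}\rangle=a^\top(p-e_y)\,\bigl(v^\top\phi_\psi(x)+t\bigr).
\]
Using $\sum_jp_j=1$, we get $a^\top p=C^{-1}-s^{-1}p(S)$, and for $y\in S$ we get $a^\top e_y=C^{-1}-s^{-1}$. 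Therefore $a^\top(p-e_y)=s^{-1}(1-p(S))\ge0$. By hypothesis the cone factor $v^\top\phi_\psi(x)+t$ is nonnegative on the whole ball, so the cross-entropy contribution is nonnegative. The penalty contributes $\lambda\langle D,\theta\rangle\ge\lambda(m-\|D\|R)\ge\lambda(m-R\rho)=A>0$. Averaging over the batch gives $\langle D,g_B(\theta)\rangle\ge A$ for every favorable $B$ and every $\theta$ in the ball.

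\emph{Step 2: velocity bound from survival.} For $1\le n<\tau$, both $\theta_{n-1}$ and $\theta_n$ lie in the open ball. Hence $h|\langle D,u_n\rangle|\le\|D\|\,\|\theta_n-\theta_{n-1}\|<2R\rho$. At $n=0$ we use $|\langle D,u_0\rangle|$ directly. Together these give $|\langle D,u_n\rangle|\le U$ at every surviving state, so no gradient norm bound on the full network is needed.

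\emph{Step 3: forcing and the geometric tail.} Start from any surviving history and impose $L$ favorable batches. Unrolling the momentum recursion, as in the fixed-batch forcing certificate, bounds the projected velocity after $j$ updates by $\beta^jU-A(1-\beta^j)/(1-\beta)$. Summing, the projected displacement $\langle D,\theta_{n+L}-\theta_n\rangle$ is at most
\[
 h\Bigl[\bigl(U+\tfrac{A}{1-\beta}\bigr)\tfrac{\beta(1-\beta^L)}{1-\beta}-\tfrac{AL}{1-\beta}\Bigr],
\]
and the choice \eqref{eq:head-tail-length} of $L$ makes this at most $-2R\rho$. Survival of both endpoints would instead force $|\langle D,\theta_{n+L}-\theta_n\rangle|<2R\rho$, a contradiction, so exit occurs within the block. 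An earlier exit is harmless, since gradients are only needed at surviving positions.

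The probability of this event is $\pi^L$ under iid sampling. Under persistent sampling it is at least $p\pi(1-p+p\pi)^{L-1}$: the first update must refresh into the favorable family, and each later update either retains a favorable batch or refreshes into the family. Both bounds are uniform in the stored history. Successive conditioning then gives $\Pr(\tau>kL)\le(1-q)^k$, and summing block survival probabilities gives $L/q$ as the remaining-mean bound.

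I expect the main obstacle to be Step 1. One must check the sign conventions carefully: that $a^\top(p-e_y)\ge0$ holds for every $y\in S$, that a positive projected gradient pushes $\langle D,\theta\rangle$ downward, and that this is what forces exit. One must also confirm that the coupled penalty, rather than the cross-entropy term, supplies a margin uniform over the entire full-parameter ball, including when the backbone moves.
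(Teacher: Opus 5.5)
Your proposal is correct and follows the paper's proof essentially step for step. You use the head-only identity $\langle D,\nabla_\theta\ell\rangle=(v^\top\phi_\psi(x)+t)(1-P_S)/s\ge0$ together with the coupled-decay margin $A$, the survival bound $|\langle D,u_n\rangle|<2R\rho/h$, the unrolled-momentum displacement bound (where $L$ takes effect after bounding $1-\beta^L\le1$), and the same favorable-family probabilities with geometric block conditioning.
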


\begin{proof}
For an example with label $y\in S$, write $P_S$ for its total softmax
probability on $S$. Differentiating only the affine head gives
\[
 \langle D,\nabla_\theta\ell(\theta;x,y)\rangle
 =\big(v^\top\phi_\psi(x)+t\big)\frac{1-P_S}{s}\ge0.
\]
This identity holds for the current backbone at every update; it does
not freeze its features. Every batch whose labels all lie in $S$ thus
satisfies
\[
 \langle D,g_B(\theta)\rangle
 \ge\lambda\langle D,\theta\rangle
 \ge\lambda(m-R\rho)=A>0.
\]
At any surviving time $n\ge1$, consecutive parameter vectors lie in
the ball, so $|\langle D,u_n\rangle|<2R\rho/h$; $U$ also covers $n=0$.
If the next $L$ updates use only such batches and remain inside, their
total projected displacement is at most
\[
 h\left[\left(U+\frac{A}{1-\beta}\right)
 \frac{\beta(1-\beta^L)}{1-\beta}-\frac{AL}{1-\beta}\right]
 \le h\left[\frac{\beta U}{1-\beta}
       +\frac{\beta A}{(1-\beta)^2}-\frac{AL}{1-\beta}\right]
 \le-2R\rho.
\]
The last inequality follows from \eqref{eq:head-tail-length}; both
endpoints inside the ball would instead give absolute projected
displacement less than $2R\|D\|_2\le2R\rho$. This contradiction forces
exit. The label-restricted batch family has probability $\pi$. For
persistent sampling, require the first update to refresh into this
family, and each later update to retain or refresh into it. This yields
the stated probability independently of the stored batch and full
optimizer history. Geometric block conditioning completes the proof.
\end{proof}

\paragraph{Directions that avoid full-network derivative bounds.}
The bias choice $v=0,t=1$ works for any backbone with a trainable affine
head. Its condition is
$\overline b-\overline b_S>R\sqrt{1/s-1/C}$, using exact norms;
we round the square root upward in the computation. This is the only
direction type used for ViT, whose final LayerNorm features can be negative.

For ResNet-18, the final residual ReLU and average pooling give
$\phi_\psi(x)\ge0$ for every finite parameter state, including with
trainable BatchNorm affine parameters and the saved running statistics.
Write $\Delta=\overline W-\overline W_S$ for the difference of head-row
means and $\delta_b=\overline b-\overline b_S$. The choice
$v=(\Delta)_+$, $t=(\delta_b)_+$ therefore satisfies the feature condition
and gives
\[
 m=\|v\|_2^2+t^2,\qquad
 \|D\|_2^2=(1/s-1/C)(\|v\|_2^2+t^2).
\]
Thus the verified radius depends on stored head coefficients and a
structural feature sign, rather than a product of backbone derivative
bounds. If $D=0$, it supplies no certificate. Positive decay is essential
to this particular margin; excluding the head bias from decay changes
the applicability of the bias construction.

\paragraph{Frozen computation on existing large-network checkpoints.}
We selected all available Study 2 starts: ResNet-18 seeds 5, 6, 8
(11,173,962 trainable parameters) and ViT seeds 5--8 (1,806,538).
The previously excluded ResNet seed 7 remains unavailable owing to its
nonfinite burn-in and is not replaced. We retained $N=4096$, $B=64$,
$h=0.01$, $\beta=0.9$, $\lambda=0.0005$, and persistent $p=1/4$.
The rational values define the exact-real dynamics being certified;
saved binary32 coefficients and initial buffers are treated as exact.

Before head extraction, we froze radii $10^{-4},10^{-3},10^{-2},10^{-1},1$
plus both original radii for each architecture. Class families comprise
all singletons, all singleton complements, and prefixes of sizes 2--8
after sorting the center's head biases increasingly with index tie
breaking. Both direction types are used for ResNet and only bias
directions for ViT. Every candidate and failed margin is retained.
Exact rational calculations give $A$, $U$, $L$, and batch probabilities;
head-direction norms are rounded upward to a dyadic number with 64
fractional bits. To avoid materializing huge powers, a 256-step rational
power calculation gives an exact lower bound $q\ge2^{-K}$. The saved
certificate is the integer pair $(L,K)$, meaning
$\mathbb E[\tau-n\mid\mathcal F_n]\le L2^K$ on survival. Selection among
frozen candidates uses exact integer comparisons, not display logarithms.

\begin{table}[htbp]
\centering
\caption{Large-network exact-real tail certificates. Success counts
checkpoint--radius conditions, not independent training replications or
coverage probabilities. Entries are ranges of approximate $\log_{10}$
mean upper bounds in updates; a dash denotes failure of this sufficient
condition, not evidence for infinite MFPT. No original-radius case passes.}
\label{tab:large-head-tail}
\small
\begin{tabular}{llcrr}
\toprule
Network & Radius & Certified & iid & Persistent\\
\midrule
ResNet-18 & $10^{-4}$ & 3/3 & 82.45--110.85 & 8.70--10.01\\
 & $10^{-3}$ & 3/3 & 539.35--574.58 & 30.91--31.82\\
 & $10^{-2}$ & 3/3 & 5178.60--5524.80 & 231.43--239.54\\
 & $10^{-1}$ & 3/3 & 53880.25--57637.73 & 2253.85--2349.89\\
 & $1$ & 3/3 & 993529.79--1108466.99 & 37045.70--40180.37\\
 & $4.0620,\ 5.5001$ & 0/6 & --- & ---\\
\midrule
ViT & $10^{-4}$ & 4/4 & 431.78--792.59 & 24.06--25.47\\
 & $10^{-3}$ & 4/4 & 4093.58--4985.88 & 125.77--134.84\\
 & $10^{-2}$ & 4/4 & 42120.40--51564.06 & 1147.87--1245.14\\
 & $10^{-1}$ & 4/4 & 652564.67--856611.31 & 14207.65--15773.35\\
 & $1$ & 0/4 & --- & ---\\
 & $2.5968,\ 4.6262$ & 0/8 & --- & ---\\
\bottomrule
\end{tabular}
\end{table}

In total, 31/49 conditions certify finite remaining means for both
samplers, yielding 62 selected certificates. An independent exact
checker reconstructed all 270 directions from exported head tensors,
checked all 1890 candidate margins, 846 forcing inequalities, 1692
probability bounds, and all 62 selections. Extraction checked that each
full saved parameter vector equals the model state and that saved
parameters and buffers are finite; hashes identify the complete remote
checkpoints, metadata, CIFAR label files, and installed ResNet source.
The independent checker audits the extracted tensors and certificates;
it does not independently reload the full remote checkpoints.

\paragraph{What this resolves and what remains open.}
The result supplies a constructive, history-uniform finite-mean condition
for specified full-parameter ReLU ResNet-18 and ViT neighborhoods. It
does not promote finite-window observations to complete means. The
smallest reported persistent bounds are already of order $10^{8.70}$
updates for ResNet and $10^{24.06}$ for ViT, so the result is an existence
certificate with poor quantitative precision. All 14 original-radius
cases fail this restricted-direction condition, as do the four ViT
cases at radius 1; the following subsection improves the ResNet result. No
general claim that neural SGD exits every parameter ball is made.
The certificate does not cover floating-point trajectories, random
reshuffling, other decay conventions, or the sign-definite switching
limit. These deterministic calculations reuse trained starts and do not
add independent training samples or resolve the earlier failed
smooth-validation gate. Tight complete-MFPT prediction at the original
network radii remains open.

\paragraph{Why an unconditional all-radius guarantee is impossible.}
This limitation is not only computational. Consider a zero-feature
initialization with zero head weights, zero velocity, and a backbone
whose features vanish at zero parameters. All backbone and head-weight
gradients then remain zero; only the output bias $b_n$ moves. Such a
degenerate initialization is available in the stated architectures,
with valid fixed normalization buffers. Write
$\xi_n=\operatorname{softmax}(b_n)-\widehat y_{B_{n+1}}$, so
$\|\xi_n\|_2\le\sqrt2$. Coupled decay and momentum give
\[
 b_{n+1}=(1+\beta-h\lambda)b_n-\beta b_{n-1}-h\xi_n,
 \qquad b_{-1}=b_0=0.
\]
When $0<h\lambda<(1-\sqrt\beta)^2$, the homogeneous roots lie in
$[0,1)$ and the impulse coefficients are nonnegative, with sum
$1/(h\lambda)$. Consequently $\|b_n\|_2\le\sqrt2/\lambda$ for every
batch sequence. A ball of radius $R>\sqrt2/\lambda$ is never exited,
so its MFPT is infinite. This includes the stated nominal $h$, $\beta$,
and $\lambda$. The example uses different, degenerate initial states;
it does not establish infinite MFPT for any trained checkpoint.
It rules out removing all radius and state conditions from a general
large-network complete-mean assertion.

\subsection{Complete head cones certify the original ResNet radii}
\label{app:complete-head-cones}
The preceding construction restricts the head direction to a particular
rank-one form. Its original-radius failures do not rule out other head
directions. We now remove that restriction within an explicitly defined
feature-sign relaxation. This certifies finite means at both original
ResNet-18 radii and gives a precise obstruction for the corresponding
feature-unrestricted ViT relaxation. Quantitative mean prediction remains
unresolved.

Write the augmented head as $H=[W\ b]$ and a direction as $D=[A\ c]$,
with zero backbone coordinates. For nonnegative features define the
closed convex cone
\[
 \mathcal K_S^+=\{D:D_{yj}=\min_{1\le k\le C}D_{kj}
       \text{ for all }y\in S\text{ and all columns }j\}.
\]
All rows in $S$ attain the same minimum in each column; other rows may
differ freely above it. For unrestricted real features define instead
\[
 \mathcal K_S^{\mathbb R}
 =\{[A\ c]:A_{1j}=\cdots=A_{Cj}\text{ for every weight column }j,
                 \ c_y=\min_k c_k\text{ for every }y\in S\}.
\]
These cones enforce a nonnegative cross-entropy directional derivative
uniformly over the stated feature domain and every simplex probability
vector. Actual network features and probabilities are coupled; the
relaxation discards that additional information.

\begin{proposition}[Optimal geometric radius within a head cone]
\label{prop:complete-head-cone}
Retain the dynamics and decay assumptions of
Proposition~\ref{prop:large-head-tail}. Let $S$ be any nonempty class
subset with $N_S\ge B$, and use $\mathcal K=\mathcal K_S^+$ when the
features are always nonnegative, or $\mathcal K=\mathcal K_S^{\mathbb R}$
without that restriction. Let $P=\operatorname{proj}_{\mathcal K}H_*$
be the Euclidean projection of the center's augmented head. A direction
$D\in\mathcal K$ with
\[
 \langle D,H_*\rangle-R\|D\|_F>0
\]
exists if and only if $R<\|P\|_F$. When this holds, $D=P$ supplies
$m=\|P\|_F^2$; with $\rho\ge\|P\|_F$ satisfying $m>R\rho$, the
same $A,U,L,\pi,q$ formulas and remaining-mean bound in
Proposition~\ref{prop:large-head-tail} apply. The geometric optimality
does not assert that this direction minimizes $L/q$ among all directions.
\end{proposition}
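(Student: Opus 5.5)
The plan has two independent parts: a directional-derivative check that lets the proof of Proposition~\ref{prop:large-head-tail} run unchanged for every $D\in\mathcal K$, and a convex-geometry argument (Moreau decomposition) for the radius characterization.

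\emph{Cone structure.} I will first show that each cone is a closed convex (indeed polyhedral) cone. The condition $D_{yj}=\min_k D_{kj}$ for all $y\in S$ is equivalent to two families of linear constraints: the inequalities $D_{yj}\le D_{kj}$ for $y\in S$ and all $k$, and the equalities $D_{yj}=D_{y'j}$ for $y,y'\in S$. The same holds for $\mathcal K_S^{\mathbb R}$, whose weight-column constraints are pure equalities. A finite intersection of half-spaces and hyperplanes through the origin is a closed convex cone, so the Euclidean projection $P$ is well defined.

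\emph{Reduction to the earlier certificate.} I will reread the proof of Proposition~\ref{prop:large-head-tail}. It used the rank-one form of $D$ only to show that $\langle D,\nabla_\theta\ell(\theta;x,y)\rangle\ge0$ for every example with label $y\in S$. Everything else uses only the fact that $D$ has zero backbone coordinates, namely the decay margin, the survival bound on the projected velocity, the block length $L$, and the batch-family probabilities. So it suffices to verify nonnegativity for an arbitrary $D\in\mathcal K$.

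For the head, the cross-entropy gradient is $(p-e_y)\tilde\phi^\top$ with $\tilde\phi=(\phi_\psi(x),1)$. Writing $D_k$ for row $k$ of $D$, this gives
\[
\langle D,\nabla\ell\rangle=\sum_k p_k\,D_k\tilde\phi-D_y\tilde\phi .
\]
In $\mathcal K_S^+$ we have $\tilde\phi\ge0$ and $D_{yj}\le D_{kj}$ entrywise, so $D_y\tilde\phi\le D_k\tilde\phi$ for every $k$. Since $p$ lies in the simplex, the expression is then nonnegative. In $\mathcal K_S^{\mathbb R}$ the weight part contributes the same amount to every row and cancels against $\sum_k p_k=1$. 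Only the bias remains, $\sum_k p_k c_k-c_y\ge0$ because $c_y=\min_k c_k$. The penalty term then gives $\langle D,g_B\rangle\ge\lambda(\langle D,H_*\rangle-R\|D\|_F)$ for batches with all labels in $S$, as before. Hence the certificate of Proposition~\ref{prop:large-head-tail} applies whenever $m=\langle D,H_*\rangle>R\rho$.

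\emph{Geometric optimality.} By Moreau's decomposition, $H_*=P+Q$ with $Q$ in the polar cone $\mathcal K^\circ$ and $\langle P,Q\rangle=0$. For any $D\in\mathcal K$ we have $\langle D,Q\rangle\le0$, and Cauchy--Schwarz then gives
\[
\langle D,H_*\rangle\le\langle D,P\rangle\le\|D\|_F\|P\|_F .
\]
A positive margin $\langle D,H_*\rangle-R\|D\|_F>0$ therefore forces $D\ne0$ and $R<\|P\|_F$. Conversely, if $R<\|P\|_F$, the choice $D=P$ gives $\langle P,H_*\rangle=\|P\|_F^2$ because $P\perp Q$. Its margin is then $\|P\|_F(\|P\|_F-R)>0$. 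Taking $\rho\ge\|P\|_F$ close enough to $\|P\|_F$, for instance an upward-rounded norm, keeps $m>R\rho$, and the previous step supplies $A,U,L,\pi,q$.

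The step needing most care is the reduction. I must confirm that no step of the earlier proof, for example the bound $\rho\ge\|D\|_2$ or the velocity projection, silently used the rank-one structure. I must also confirm that the cone constraints give nonnegativity for every simplex vector $p$ at once, and not only for the realized probabilities.
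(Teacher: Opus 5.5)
Your proposal is correct and follows the paper's proof. Your head directional-derivative computation is the paper's $\sum_k p_k[(A_k-A_y)^\top\phi+(c_k-c_y)]\ge0$ reduction to the earlier forcing argument, and your Moreau decomposition is exactly what the paper extracts from the projection variational inequality at $D=0$ and $D=2P$, yielding the same bound $\langle D,H_*\rangle\le\|P\|_F\|D\|_F$ and the same choice $D=P$ for the converse.
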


\begin{proof}
For label $y\in S$ and softmax vector $p$, the head derivative is
\[
 \langle D,\nabla_H\ell\rangle
 =\sum_{k=1}^C p_k\big[(A_k-A_y)^\top\phi+(c_k-c_y)\big]\ge0.
\]
For $\mathcal K_S^+$ every bracket is nonnegative when $\phi\ge0$.
For $\mathcal K_S^{\mathbb R}$ the weight differences vanish and the
bias differences are nonnegative. Consequently coupled decay gives
the same whole-ball lower bound $\lambda(m-R\rho)$ as before, and
the forcing and block-probability proofs apply without a rank-one
restriction on $D$.

For completeness, the standard cone-projection argument is short.
The projection variational inequality gives
$\langle H_*-P,D-P\rangle\le0$ for every $D\in\mathcal K$.
Taking $D=0$ and $D=2P$ gives $\langle H_*-P,P\rangle=0$, hence
\[
 \langle H_*,D\rangle\le\langle P,D\rangle
 \le\|P\|_F\|D\|_F.
\]
If $R\ge\|P\|_F$, no positive margin is possible in the cone.
If $R<\|P\|_F$, choose $D=P$ and use
$\langle H_*,P\rangle=\|P\|_F^2$. This proves the equivalence.
The projection identity and convex optimality conditions are classical;
their specialization here identifies the scope of the head certificate.
\end{proof}

\paragraph{Exact projection and an independent optimality certificate.}
For a head column $x\in\mathbb R^C$, start by pooling the indices in
$S$. In increasing order, add any outside entry below the current pooled
mean, stopping when all remaining entries are at least that mean.
Set the pooled entries to their mean and leave the others unchanged.
The resulting $z$ is checked independently by cone feasibility and the
residual $r=x-z$:
\[
 \sum_i r_i=0,\qquad r_j\le0\ (j\notin S),\qquad
 r_j(z_j-z_S)=0\ (j\notin S).
\]
These imply $r$ lies in the negative polar cone and $\langle r,z\rangle=0$,
which certify the unique Euclidean projection. ViT weight columns are
projected onto the constant-row subspace by averaging; only its bias
column uses the minimum-row cone. All pooling, residual and inner-product
checks use exact integer-scaled binary input coefficients and rational
arithmetic. No numerical optimizer or sampled feature test is used.

\paragraph{Frozen exhaustive class-family calculation.}
We retained the same seven checkpoints, optimizer, batch size and pool
as Appendix~\ref{app:large-head-tail}. Before the new calculation we
froze all $2^{10}-1=1023$ nonempty class subsets, including the full
class set, and the two original radii plus radius 1 as a reference.
This gives 21 checkpoint--radius cases. The previous study and exported
heads were already available: this is deterministic post-hoc certificate
optimization, not blinded validation or additional training replication.
We used the same upward 64-bit dyadic norm enclosure and exact 256-step
probability-power blocks. For each case and sampler, selection minimizes
the saved $L2^K$ bound over these projected directions, using exact
integer comparisons.

\begin{table}[htbp]
\centering
\caption{Full head-cone refinement. Bounds are approximate $\log_{10}$
mean upper bounds in updates, rounded to integers for display.
Thus 69365 represents an upper bound of order $10^{69365}$ updates,
not a mean of 69365 updates. Dashes denote no certificate in this cone
family. Every available checkpoint is retained.}
\label{tab:complete-head-cones}
\small
\begin{tabular}{llcrr}
\toprule
Network & Radius & Certified & iid & Persistent\\
\midrule
ResNet-18 & $1$ & 3/3 & 993315--1108216 & 8441--8742\\
 & $4.0620$ & 3/3 & 28090583--31699279 & 69365--74589\\
 & $5.5001$ & 3/3 & 92752830--111129204 & 183126--212121\\
ViT & $1$ & 0/4 & --- & ---\\
 & $2.5968$ & 0/4 & --- & ---\\
 & $4.6262$ & 0/4 & --- & ---\\
\bottomrule
\end{tabular}
\end{table}

The maximum feasible geometric radii across all class cones are
approximately 7.0123, 6.8055 and 6.8539 for the three ResNet checkpoints.
Both original radii are therefore certified in all six cases, for both
iid and persistent sampling. These are full-parameter exact-real exit
certificates for the original trained states, including their stored
initial momentum. The previously failed restricted-direction results
remain valid as results for that smaller family, but no longer imply
that no original-radius ResNet certificate is available.

For ViT the maximal radii in the unrestricted-feature cones are only
approximately 0.4853, 0.4869, 0.4616 and 0.4633. Exact squared-norm
comparisons rule out every direction in these cones at all three tested
radii. In fact this ceiling has the closed form
\[
 \max_{S\ne\varnothing}\|\operatorname{proj}_{\mathcal K_S^{\mathbb R}}
               [W_*\ b_*]\|_F^2
 = C\|\overline W_*\|_2^2+\|b_*\|_2^2.
\]
The weight projection always keeps only the common row; no bias projection
can increase its norm. Choosing a singleton attaining the smallest
center bias leaves the entire bias vector unchanged, attaining equality.
All four exact identities were checked against the exhaustive calculation.
This excludes further direction or class-subset search within
this specific relaxation. It does not exclude certificates using actual
LayerNorm feature bounds, state-dependent directions, multi-stage
forcing, or other Lyapunov arguments, and does not imply infinite MFPT.

The independent checker verified 2,364,153 projected columns via the
optimality conditions above, all 7161 cone directions, 21483 margins,
6016 forcing inequalities, 12032 probability bounds and 18 selected
mean bounds. Production and checking took approximately 24.45 and
43.19 CPU wall seconds, respectively, with no GPU or new trajectories.
The finite-mean existence gap is closed for these six original-radius
ResNet cases under the stated ideal-real assumptions. Their huge upper
bounds still do not provide accurate complete-MFPT estimates or a useful
RMST tail correction. Original-radius ViT, floating-point trajectory
certification and random reshuffling remain unresolved.
Appendix~\ref{app:two-stage-head} subsequently reduces the forcing
lengths using a two-stage argument while retaining these geometric
conditions and the ViT obstruction.

\subsection{Two-stage forcing with the exact decay recurrence}
\label{app:two-stage-head}
The preceding bound pays a large linear penalty for a worst-case
surviving velocity. A short initial block can instead remove positive
projected momentum. The subsequent comparison retains the exact
weight-decay recurrence, improving the numerical bound without changing
the geometric cone or assuming a frozen backbone. Characteristic-root
analysis of heavy-ball recurrences is classical
\citep{flammarion2015}; the result below applies that analysis to a
history-uniform exit certificate with one-sided forcing.

\begin{proposition}[Two-stage head forcing]
\label{prop:two-stage-head}
Retain a valid direction and all assumptions of
Proposition~\ref{prop:complete-head-cone}. Write
\[
 a=m-R\rho>0,\quad b=m+R\rho,\quad A=\lambda a,
 \quad U=\max\{|\langle D,u_0\rangle|,2R\rho/h\}.
\]
Suppose $0<\beta<1$ and
$0<h\lambda<(1-\sqrt\beta)^2$. Define
\[
 r_\pm=\frac{1+\beta-h\lambda
             \pm\sqrt{(1+\beta-h\lambda)^2-4\beta}}{2},
 \qquad \kappa=\frac{r_+-\beta}{r_+-r_-}.
\]
Choose integers $k\ge0$ and $j\ge1$ satisfying
\[
 \beta^k\le\frac{A}{A+(1-\beta)U},
 \qquad \kappa r_+^j b\le a.
\]
Then $L=k+j$ consecutive batches with labels in $S$ force exit.
The iid and persistent block probabilities and the remaining-mean
bound $L/q$ in Proposition~\ref{prop:large-head-tail} apply with this
new $L$, uniformly over surviving histories.
\end{proposition}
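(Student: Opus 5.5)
The plan is to reduce everything to the scalar projected coordinates
\[
w_n=\langle D,\theta_n\rangle,\qquad \nu_n=\langle D,u_n\rangle,
\]
and to reuse the forcing identity from the proof of Proposition~\ref{prop:large-head-tail}. For any batch whose labels lie in $S$, the cross-entropy part of $\langle D,g_B(\theta)\rangle$ is nonnegative by the cone argument of Proposition~\ref{prop:complete-head-cone}. Writing it as $e_n\ge0$, the projected dynamics are
\[
\nu_{n+1}=\beta\nu_n-\lambda w_n-e_n,\qquad w_{n+1}=w_n+h\nu_{n+1}.
\]
While the parameters stay in the open ball, $|w_n-m|<R\|D\|_F\le R\rho$, so $w_n\in(a,b)$. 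Thus exit is forced as soon as the recursion would push $w$ to a value at most $a$. The probability and remaining-mean statements then follow verbatim from the block-conditioning argument of Proposition~\ref{prop:large-head-tail}, because that argument only uses that some fixed word of $L$ label-restricted batches forces exit from every surviving history. I would therefore spend the proof on the deterministic forcing claim.

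\emph{Stage 1 (momentum removal).} During the first $k$ restricted updates I use only the crude bound $\lambda w_n+e_n\ge A$. I also use the surviving-velocity bound $|\nu_0|\le U$ already established before Proposition~\ref{prop:large-head-tail}. Unrolling gives
\[
\nu_k\le\beta^kU-\frac{A(1-\beta^k)}{1-\beta}.
\]
The stated condition $\beta^k\le A/(A+(1-\beta)U)$ is exactly equivalent to the right side being $\le0$. Hence at time $k$, if no exit has occurred, $\nu_k\le0$ and $w_k<b$.

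\emph{Stage 2 (exact decay comparison).} Eliminating $\nu$ gives the second-order recurrence
\[
w_{n+1}=(1+\beta-h\lambda)w_n-\beta w_{n-1}-he_n .
\]
Its characteristic roots are $r_\pm$, which are real and lie in $[0,1)$ under $0<h\lambda<(1-\sqrt\beta)^2$. I would write the solution from time $k$ by variation of constants as three pieces:
\begin{itemize}
\item a homogeneous part driven by $(w_k,\nu_k)$;
\item a convolution of the impulses $-he_n$ with the kernel $(r_+^{i+1}-r_-^{i+1})/(r_+-r_-)\ge0$;
\item no other terms.
\end{itemize}
The nonnegativity of that kernel (the same fact used in the degenerate-bias example at the end of Appendix~\ref{app:large-head-tail}) lets me drop the $e_n$ terms as only decreasing $w$.

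For the homogeneous part I will check two things.
\begin{enumerate}
\item Its coefficient on $\nu_k$ is a nonnegative multiple of $h$ times the same kernel, so $\nu_k\le0$ can be replaced by $0$.
\item With zero initial velocity the solution is $w_k\,[\kappa r_+^{\,j}+(1-\kappa)r_-^{\,j}]$, where $\kappa=(r_+-\beta)/(r_+-r_-)$. This follows by matching $w_{k+1}=(1-h\lambda)w_k$ and using $r_++r_-=1+\beta-h\lambda$ and $r_+r_-=\beta$.
\end{enumerate}
Then I need $1-\kappa=(\beta-r_-)/(r_+-r_-)$ and $r_-^{\,j}$ to have signs making the second term nonpositive, or at least dominated, so that $w_{k+j}\le\kappa r_+^{\,j}w_k\le\kappa r_+^{\,j}b\le a$. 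Since $w_k>a>0$, the bound $w_k\le b$ is the right direction to use. The conclusion $w_{k+j}\le a$ contradicts survival through $L=k+j$ steps.

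The main obstacle I expect is this sign bookkeeping in the homogeneous solution. Specifically, I must show that $\beta-r_-\le0$, that is $r_-\le\beta$, so that the dropped $r_-$ term is nonpositive. I will verify it from $r_-\le\sqrt{r_+r_-}=\sqrt\beta$ together with a direct evaluation of the characteristic polynomial at $\beta$, which equals $-h\lambda\beta<0$; this places $\beta$ between the roots. If that sign check failed, I would fall back to the cruder bound $w_{k+j}\le(\kappa r_+^{\,j}+|1-\kappa|r_-^{\,j})b$.
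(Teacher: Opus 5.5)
Your plan is the paper's own proof. It has the same Stage~1 bound $\nu_k\le\beta^kU-A(1-\beta^k)/(1-\beta)\le0$ and the same second-order recurrence. It also uses the same variation-of-constants split into $d_jw_k$, a term $h\beta c_{j-1}\nu_k$, and a convolution of the impulses $-he_t$ against the nonnegative kernel $c_n=(r_+^{n+1}-r_-^{n+1})/(r_+-r_-)$, where $d_n=\kappa r_+^n+(1-\kappa)r_-^n$.

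However, you carry out the sign check you identify as the crux incorrectly.
\begin{itemize}
\item For $p(r)=r^2-(1+\beta-h\lambda)r+\beta$ one has $p(\beta)=\beta^2-(1+\beta-h\lambda)\beta+\beta=h\lambda\beta>0$, not $-h\lambda\beta$. So $\beta$ does \emph{not} lie between the roots.
\item The phrase ``$\beta-r_-\le0$, that is $r_-\le\beta$'' equates two opposite inequalities.
\item If $r_-<\beta<r_+$ held, as your evaluation would conclude, then $1-\kappa=(\beta-r_-)/(r_+-r_-)>0$. The discarded term $(1-\kappa)r_-^jw_k$ would then be positive, and $w_{k+j}\le\kappa r_+^jw_k$ would not follow.
\item Your fallback does not rescue this: $(\kappa r_+^j+|1-\kappa|r_-^j)b\le a$ is a strictly stronger hypothesis than the proposition's $\kappa r_+^jb\le a$, so it would not prove the stated result.
\end{itemize}

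The correct configuration, which the paper states, is $\beta<r_-<r_+<1$. Since $p(\beta)>0$, $\beta$ lies outside $[r_-,r_+]$. The companion inequality you need is $r_+\ge\sqrt{r_+r_-}=\sqrt\beta>\beta$, not $r_-\le\sqrt\beta$; it excludes $\beta>r_+$. Hence $\kappa>1$ and $1-\kappa<0$. With $w_k>a>0$ and $r_-^j>0$ you get $d_jw_k\le\kappa r_+^jw_k\le\kappa r_+^jb\le a$, contradicting survival through update $k+j$.

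With this correction the rest of your argument matches the paper's proof. That includes dropping $h\beta c_{j-1}\nu_k\le0$, discarding the nonnegative forcing convolution, and reusing the block-conditioning argument for $q$ and $L/q$.
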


\begin{proof}
Let $s_n=\langle D,H_n\rangle$ and $v_n=\langle D,u_n\rangle$.
During survival, $a<s_n<b$. On a favorable batch the projected
cross-entropy gradient $e_{n+1}$ is nonnegative, so
\[
 v_{n+1}=\beta v_n-\lambda s_n-e_{n+1},
 \qquad s_{n+1}=s_n+hv_{n+1}.
\]
At any surviving starting history, $v_0\le U$: use the stored initial
velocity at time zero, and the displacement between two surviving
points thereafter. Conditional on survival throughout the first $k$
favorable updates,
\[
 v_k\le\beta^k U-A\frac{1-\beta^k}{1-\beta}\le0.
\]
If exit occurred earlier there is nothing left to prove.

Reindex the surviving state at $k$ as time zero. Eliminating velocity
gives a forced second-order recurrence. Set
\[
 c_n=\frac{r_+^{n+1}-r_-^{n+1}}{r_+-r_-},\qquad
 d_n=\kappa r_+^n-(\kappa-1)r_-^n.
\]
The root condition implies $\beta<r_-<r_+<1$ and $\kappa>1$;
hence $c_n>0$ and $0<d_n\le\kappa r_+^n$.
Solving the recurrence, or checking its initial values and induction,
gives for $j\ge1$
\[
 s_j=d_j s_0+h\beta c_{j-1}v_0
             -h\sum_{t=1}^j c_{j-t}e_t
       \le \kappa r_+^j b\le a.
\]
This contradicts survival through that update. This argument uses only
nonnegative projected cross-entropy forcing before exit, not linearity
of the network. Requiring the first batch in each block to refresh into
the favorable family yields the same probability lower bound for every
stored sampler state. Iterated conditioning proves the geometric block
tail and complete remaining-mean bound.
\end{proof}

\paragraph{Frozen refinement and exact arithmetic.}
We reused all seven checkpoints, all 1023 subsets and all 21 cases from
Appendix~\ref{app:complete-head-cones}. All geometric failures were
retained. This is a deterministic refinement after observing the earlier
bounds, not independent training replication. The nominal optimizer
has $h\lambda=1/200000$ and $\beta=9/10$, satisfying the root condition.
We enclosed the square root with rational dyadics, used an upper bound
on $r_+$ and $\kappa$, and rounded every power multiplication upward
on a 96-bit dyadic grid. The first-stage inequality was checked with
exact rational powers. Probability bounds used the same exact 256-step
blocks as before. The independent checker used a different power
evaluation order with 192-bit upward rounding, checked its upper bound
against each stored enclosure, and independently selected each minimum
$L2^K$. No floating-point logarithm participates in certification.

\begin{table}[htbp]
\centering
\caption{Two-stage ResNet refinement over three retained checkpoints.
Entries are approximate $\log_{10}$ upper bounds in updates, rounded
to integers. Every case is certified under both samplers. These are
logarithms of very large bounds, not predictions of the mean.}
\label{tab:two-stage-head}
\small
\begin{tabular}{lrrr}
\toprule
Radius & iid & Persistent & Persistent forcing length $L$\\
\midrule
$1$ & 56187--60520 & 802--827 & 5834--6011\\
$4.0620$ & 910282--976464 & 3410--3548 & 26555--27632\\
$5.5001$ & 2111189--2449079 & 5399--5730 & 42366--44974\\
\bottomrule
\end{tabular}
\end{table}

All 18 selected sampler bounds strictly improve over the preceding
full-cone calculation. Independent checks cover all 6016 positive-margin
forcing conditions, 12032 probability bounds and 18 selected certificates;
production and checking took 22.07 and 107.49 CPU wall seconds, with no
GPU use or new trajectories. At the original radii, persistent bounds now
have decimal logarithms approximately 3410--5730, compared with
69365--212121 previously. The favorable block lengths are much shorter,
but their probabilities remain extremely small. Thus this refinement
does not yet supply a useful complete-MFPT estimate or RMST tail
correction. It adds no new ViT certificate: the same positive geometric
margin is still required, and all twelve tested ViT cases retain the
cone obstruction. Actual floating-point trajectories, random reshuffling,
and broader network classes remain outside this result.

\paragraph{Counting overlapping favorable runs.}
The disjoint-block mean bound also loses information about favorable
runs that cross block boundaries. Classical pattern waiting-time
calculations \citep{zhao2017waiting} remove this loss without changing
any geometric forcing premise. The following refinement applies to a
\emph{fixed} favorable batch family; it does not automatically apply to
state-dependent batch-selection rules.

\begin{proposition}[An overlapping-run remaining-mean bound]
\label{prop:overlap-head-tail}
Suppose every $L\ge1$ consecutive batches from a fixed family force
exit, uniformly over surviving optimizer histories. Let the family
have uniform-refresh probability $0<\pi<1$. Under persistent sampling
with $0<p\le1$, set $r=1-p+p\pi$ and $q=p\pi r^{L-1}$. Then
\[
 \mathbb E[\tau-n\mid\mathcal F_n]
 \le \frac{1-\pi r^{L-1}}{(1-\pi)q}
 \le \frac{1}{(1-\pi)q}
 \quad\text{on }\{\tau>n\}.
\]
For iid sampling use $p=1$, giving
$(1-\pi^L)/[(1-\pi)\pi^L]$ for the first upper bound.
\end{proposition}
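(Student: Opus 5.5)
The plan is to dominate the remaining exit time by the waiting time for the first run of $L$ consecutive favorable batches, and then compute that waiting time exactly by a renewal argument on a two-state chain. Fix $n$ and work on $\{\tau>n\}$, conditioning on $\mathcal F_n$, which includes the stored batch and the full optimizer state. Let $\xi_k=\mathbf 1_{\{B_{n+k}\in\mathcal A\}}$ for $k\ge1$, where $\mathcal A$ is the fixed favorable family. Let $W=\inf\{k\ge L:\xi_{k-L+1}=\cdots=\xi_k=1\}$.

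\textbf{Step 1: $\tau-n\le W$.} By the forcing hypothesis, which is uniform over surviving histories, $L$ consecutive favorable batches starting from any surviving state force exit by the end of the run. Hence either exit occurs before time $n+W$, or the process survives to time $n+W-L$ and the run then forces exit. In both cases $\tau-n\le W$. This step uses only the premise, applied at the surviving history just before the run. It never needs gradients after exit, because the batch chain can be defined beyond exit.

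\textbf{Step 2: the indicator chain.} The sampler is independent of the iterates given its own past, and refresh is uniform. So the law of $(\xi_k)$ given $\mathcal F_n$ is that of a two-state Markov chain, determined only by whether $B_n\in\mathcal A$:
\begin{itemize}
\item from state $1$, the next indicator is $1$ with probability $r=1-p+p\pi$ (retain a favorable batch, or refresh into $\mathcal A$);
\item from state $0$, it is $1$ with probability exactly $p\pi$ (a retained unfavorable batch stays unfavorable).
\end{itemize}
Thus $\mathbb E[\tau-n\mid\mathcal F_n]\le m_{\xi_0}$, where $m_s$ is the expected run-completion time started from state $s$.

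\textbf{Step 3: compute $m_0$ by renewal.} Start from state $0$ and decompose into attempts.
\begin{itemize}
\item Wait for the first $1$. This takes a geometric number of steps with mean $1/(p\pi)$, counting the good step.
\item Then continue while successive indicators equal $1$. Each further step continues with probability $r$. The attempt succeeds, completing a run of length $L$, with probability $r^{L-1}$. The expected number of further steps used is $\sum_{i=0}^{L-2}r^i=(1-r^{L-1})/(1-r)$.
\item On failure the chain is back in state $0$ and a fresh attempt begins.
\end{itemize}
Wald's identity, or the one-line renewal equation, gives
\[
m_0=\frac{1/(p\pi)+(1-r^{L-1})/(1-r)}{r^{L-1}}.
\]
Substituting $1-r=p(1-\pi)$ and putting everything over the common denominator $p\pi(1-\pi)r^{L-1}$ gives the numerator $(1-\pi)+\pi(1-r^{L-1})=1-\pi r^{L-1}$. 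Hence $m_0=(1-\pi r^{L-1})/((1-\pi)q)$ with $q=p\pi r^{L-1}$. This is the claimed bound, and it is trivially at most $1/((1-\pi)q)$.

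\textbf{Step 4: the starting state $\xi_0=1$.} Here we need $m_1\le m_0$. The same renewal gives
\[
m_1=\sum_{i=0}^{L-1}r^i\,[\text{continuation}]+(1-r^{L})m_0 ,
\]
with an already-started run. Simpler is a coupling: the probability of moving to $1$ is larger from state $1$ ($r\ge p\pi$), so the run-completion time is stochastically smaller. Either route shows $m_1\le m_0$, so the bound holds for every stored sampler state. The iid case is $p=1$, $r=\pi$, and the formula reduces to $(1-\pi^L)/((1-\pi)\pi^L)$.

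I expect the main obstacle to be bookkeeping rather than depth. One point is ensuring the forcing premise really applies at the start of an arbitrary overlapping run, not just at block boundaries. This is where uniformity over surviving histories is essential, and Step 1 should state it carefully. The other point is the off-by-one in the expected length of a failed attempt.
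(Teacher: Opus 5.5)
Your proposal is correct and follows the paper's proof. You dominate $\tau-n$ by the first completion of a future run of $L$ favorable indicators, use the same two-state chain with transition probabilities $p\pi$ and $r$, and use the same common-uniform coupling (via $r\ge p\pi$) to show that an unfavorable stored batch is the worst case. The only difference is that you evaluate $m_0$ by Wald's identity over i.i.d.\ attempts, whereas the paper uses the first-step system for $E_0,\dots,E_{L-1}$ with $E_L=0$; your route has the small advantage of not needing the a priori finiteness that the paper imports from the disjoint-block argument.
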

\begin{proof}
Mark a batch favorable by 1 and unfavorable by 0. This binary chain
has transitions $0\to1$ with probability $p\pi$ and $1\to1$ with
probability $r$. Count only future consecutive favorable draws,
discarding any progress already made before time $n$. Starting from
an unfavorable stored batch is worst: since $r\ge p\pi$, a common
uniform-random-number coupling preserves the ordering of the binary
chains started from 0 and 1, and hence orders their run completion
times. Exit occurs no later than completion of the first future run
of $L$ favorable batches, by the uniform forcing premise.

Let $E_j$ be the mean waiting time with a current run of length $j$,
where $j=0$ denotes an unfavorable stored batch and $E_L=0$.
These means are finite by the preceding disjoint-block argument.
First-step conditioning gives
\[
 E_0=\frac1{p\pi}+E_1,\qquad
 E_j=1+rE_{j+1}+(1-r)E_0\quad(1\le j<L).
\]
For $L=1$ the first equation uses $E_1=0$ directly. Otherwise,
substitution from $j=L-1$ down to 1 yields
\[
 E_0=\frac1{p\pi r^{L-1}}
       +\frac{1-r^{L-1}}{(1-r)r^{L-1}}
     =\frac{1-\pi r^{L-1}}{(1-\pi)q}.
\]
The coupling and pathwise exit comparison hold after every surviving
history, proving the stated remaining-mean bound.
\end{proof}

The corresponding exact run-survival upper envelope is
$e_0^\top Q_L^t\boldsymbol1$, where the transient run-length matrix
has $Q_{00}=1-p\pi$, $Q_{01}=p\pi$ if $L>1$, and, for $1\le j<L$,
$Q_{j0}=1-r$ and $Q_{j,j+1}=r$ when $j+1<L$. The last favorable
transition is absorbing. This is an upper envelope for network
survival, not an identification of the network exit law.

\paragraph{Probability arithmetic and the size of the improvement.}
We keep all 6016 previously certified directions and forcing lengths.
Instead of rounding the logarithmic cost of every 256-step probability
block, we use a normalized 192-bit binary significand, rounding down
after every multiplication and squaring. Its separate exponent prevents
underflow. From $q\ge m2^e\ge2^{-K}$ we obtain the exact certificate
\[
 \mathbb E[\tau-n\mid\mathcal F_n]\le\frac{2^K}{1-\pi}.
\]
All candidate $\pi$ values are below $1/2$, so coefficients
$1/(1-\pi)$ lie between 1 and 2; selecting minima first by $K$ and
then by that rational coefficient is exact. An independent 90-digit
directed-decimal calculation, with a different power evaluation order,
checks all 12032 probability bounds. Exact rational first-step linear
systems check the waiting-time formula on 54 small chains, including
$L=1$ and iid sampling.

All 18 selected bounds improve, with approximate decimal-log reductions
between $8.88$ and $61.00$, arising jointly from overlapping runs and
tighter arithmetic. At the original ResNet radii, the persistent
decimal-log upper bounds become $3382$--$3519$ for $R=4.0620$ and
$5357$--$5686$ for $R=5.5001$; the corresponding iid ranges are
$910259$--$976431$ and $2111135$--$2449027$. These numbers remain
astronomically large and do not provide useful mean predictions or
tail corrections. The 12 unresolved ViT checkpoint--radius cases are
unchanged. This is a deterministic refinement of existing certificates,
with no new training, trajectories, or geometric validity claims.

\paragraph{A limitation of the scalar relaxation, not a network lower bound.}
The remaining conservatism cannot be removed merely by optimizing the
same one-sided scalar recurrence. Retain only $a<s_n<b$, the original
momentum/decay equations and $e_n\ge0$ from the proof above. With
$e_n=0$, $v_0=0$ and $s_0=b-\varepsilon$, its solution is
$s_n=d_n(b-\varepsilon)$. Here $d_0=1$, $d_n\to0$, and
$d_n-d_{n+1}=h\lambda c_n>0$. If $d_n b>a$, a sufficiently small
$\varepsilon>0$ therefore keeps every scalar iterate through time $n$
strictly inside $(a,b)$. No uniform forcing proof using only these
scalar inequalities can require fewer than $n+1$ updates. This scalar
witness is not asserted to be reachable by the actual neural optimizer.

For every one of the 6016 valid candidates, exact outward enclosures
verify $d_{j-1}b>a$, where $j$ is its preceding contraction length.
Thus the necessary scalar horizon is at least $j$, while the existing
sufficient horizon is $k+j$. At either original radius, the smallest
ratio $j/(k+j)$ over the entire valid dictionary is
$26451/26555>0.99608$. Refining only that scalar relaxation can
therefore shorten these sufficient horizons by less than $0.4\%$.
This conclusion leaves room for additional architecture, reachability,
or cross-direction information.

Even allowing all the fixed favorable families to compete does not
make their pattern waiting time small. To quantify this statement,
initialize the sampler with a legal batch containing all ten labels,
so it belongs to no proper-label favorable family. Such a batch exists
in every retained pool. Let $T_*$ be the first future completion of
any family's \emph{shorter}, optimistic run length $j_S$, and put
\[
 \Lambda=\sum_S\pi_S(1-p+p\pi_S)^{j_S-1}.
\]
For any window start, the probability of membership in family $S$ is
at most $\pi_S$ under this sampler initialization. A union bound over
families and window starts, with no independence assumption between
families, gives $\Pr(T_*\le t)\le t\Lambda$. Consequently,
\[
 \Lambda\le2^{-K}\quad\Longrightarrow\quad
 \mathbb E[T_*]\ge2^{K-2}\qquad(K\ge2),
\]
by summing survival probabilities for the first $2^{K-1}$ times.
The original certificate patterns are longer, so their first completion
time is at least $T_*$. The shorter patterns need not themselves force
network exit.

Independent directed-arithmetic checks validate all 18 summed-rate
bounds. For persistence at the original radii, the decimal logarithms
of these \emph{pattern-waiting} lower bounds range from about
$3368$--$3504$ and $5342$--$5670$, respectively. Network exit may occur
far earlier than any such pattern: these are \emph{not} lower bounds
on its MFPT, nor claims that a particular sampler--parameter history
is reachable. They identify a limitation of the frozen independent-cone,
consecutive-run proof strategy. Interleaved controls, stronger geometric
information and actual reachable-set constraints remain outside this
obstruction.

\subsection{Original-radius ViT: evidence and remaining scope}
\label{app:vit-adaptive-energy}
For the original ViT checkpoints, the tested radii are $2.5968$ and
$4.6262$. The local classification-head certificate and the cone
obstruction in Appendices~\ref{app:large-head-tail}
and~\ref{app:complete-head-cones} concern specified sufficient
conditions. Failure of those conditions neither proves an infinite
mean nor rules out a different certificate.

The remaining issue is uniformity over complete surviving states,
including their incoming momentum and stored minibatch. Finite
trajectory checks, sampled curvature, and successful exits from the
original checkpoints do not establish this premise. Below we report
audited finite-window continuation studies, then give a compact-cover
sufficient condition and a counterexample explaining why pointwise
exit possibilities alone are insufficient. No original-radius ViT
complete-MFPT guarantee is claimed.

\subsection{Fresh continuations beyond the original observation window}
\label{app:vit-fresh-tail}
An independent reconstruction of the archived Study~2 ViT traces
finds 31 of 128 paths still inside the larger ball at update 1,000
for $p=0.02$ (9, 10, 9 and 3 for training seeds 5--8).
No iid path is censored at either radius. These are original data,
not additional replications. To measure the omitted finite-window
tail, we froze 16 fresh continuation streams for each of the same
four checkpoints and each of iid and $p=0.02$, giving 128 new paths.
All 1,806,538 parameters are updated with the original checkpoint,
momentum buffer, 4,096-example pool, batch size 64, $h=0.01$,
$\beta=0.9$, and weight decay $0.0005$. Each path stops at its first
exit from the larger ball or the fixed horizon $H=4000$; this also
determines the first exit from the nested smaller ball. No checkpoint
is retrained, so there are four training units, not 128.

\paragraph{Execution repair and audit.}
Parameters and optimizer arithmetic remain binary32; distances are
computed after conversion of both parameter and center to binary64.
The first execution failed its prescribed exact replay check despite
identical batch indices and exit steps. A fixed 32-step diagnostic
localized the initial discrepancy to the patch-convolution gradient.
A separately frozen amendment enabled deterministic PyTorch and
cuDNN algorithms, retaining all 128 RNG seeds and all scientific
settings. The failed execution and diagnostic remain archived and
are excluded from the statistical sample. This kernel change can
change floating-point trajectories; the old and repaired paths are
not pooled.

The repaired execution completes all 128 paths and eight prescribed
full replays, with no numerical failures or budget interruptions.
Whole traces, batch sequences, and terminal parameter and momentum
hashes match in all eight replays. Independent CPU reconstruction
checks all 91,663 sampled batches and all 16 saved terminal states;
the largest CPU/GPU terminal-distance discrepancy is below
$9.31\times10^{-13}$. All 301 transferred files match their recorded
hashes. A separate integer-sum audit reconstructs both first exits,
the survival counts, and the finite-window aggregates below.
These checks are not directed-rounding certification.

\begin{table}[htbp]
\centering\small
\caption{Fresh ViT continuations at the original radii. Each row uses
64 paths from four reused checkpoints. $N_{>1000}$ counts paths
surviving update 1,000. All observed paths exit by 1,388, so the
empirical RMST at 4,000 equals the uncensored observed sample mean;
this does not certify the population MFPT.}
\label{tab:vit-fresh-tail}
\begin{tabular}{lrrrrr}
\toprule
Sampler & $R$ & RMST$_{1000}$ & RMST$_{4000}$ & $N_{>1000}$ & Max.\\
\midrule
iid & 2.5968 & 204.859 & 204.859 & 0 & 223\\
$p=0.02$ & 2.5968 & 261.313 & 261.313 & 0 & 495\\
iid & 4.6262 & 593.219 & 593.219 & 0 & 632\\
$p=0.02$ & 4.6262 & 801.109 & 839.016 & 12 & 1388\\
\bottomrule
\end{tabular}
\end{table}

\paragraph{Prespecified tail-area inference.}
For any integer-valued exit time, including $\tau=\infty$,
\[
 (\tau\wedge H)-(\tau\wedge L)
   =\sum_{n=L}^{H-1}\mathbf1_{\{\tau>n\}}.
\]
Thus the primary endpoint at $R=4.6262$, $p=0.02$, $L=1000$ and
$H=4000$ measures only the additional finite-window survival area.
Its observed mean is $2426/64=37.90625$ updates, with checkpoint
means $4.9375$, $60.4375$, $59.25$ and $27$.
Conditionally on the four fixed checkpoints, the 64 independent
bounded path variables need not be identically distributed.
Hoeffding's inequality gives the prespecified 95\% interval with
half-width $3000\sqrt{\log(40)/128}$, clipped to $[0,3000]$:
\[
 [0,\ 547.194319].
\]
This interval includes zero; the experiment does not establish a
strictly positive conditional mean tail area at that confidence level.
Other arms, radii and windows are descriptive, and none of these
inferences concern a population of newly trained checkpoints.

All sample survival counts at 2,000 and 4,000 are zero, but this does
not bound $\E[(\tau-4000)_+]$. In particular, observing every sampled
exit does not verify the all-history remaining-mean or drift premise
needed for a complete-MFPT theorem. The original-radius ViT guarantee
therefore remains unresolved. The repaired GPU wave used 97,488
full-network updates including replays and 0.409897 GPU hours;
the failed execution and diagnostic are separately accounted for.

\subsection{A prespecified longer-tail diagnostic}
\label{app:vit-defensive-tail}
To seek longer surviving histories, a separate frozen experiment uses
eight new paths per checkpoint from each of the target sampler
$p=0.02$ and a proposal sampler $q=0.002$: 64 paths from the same
four checkpoints, with no new training units. All optimizer settings
and the deterministic execution amendment above are retained. The
fixed horizon is 8,192, the primary radius is $4.6262$, and paths
stop at their first exit. This is a standard defensive-mixture
diagnostic \citep{hesterberg1995defensive}, not a new importance-sampling
method or a complete-MFPT certificate.

Let $P_s,Q_s$ denote the two augmented path laws at checkpoint $s$.
We record each Bernoulli refresh flag, including a refresh that happens
to redraw the held batch. Both laws have the same uniform first batch
and fresh-batch distribution. If $r_n$ refreshes occur in the $n-1$
trials after the first batch, their prefix density ratio and weight are
\[
 L_n=\frac{dQ_{s,n}}{dP_{s,n}}
 =\left(\frac qp\right)^{r_n}
  \left(\frac{1-q}{1-p}\right)^{n-1-r_n},
 \qquad w_n=\frac{2}{1+L_n}\in[0,2].
\]
For each path define
\[
 Z=\sum_{n=4000}^{8191}w_n\mathbf1_{\{\tau>n\}},
 \qquad \widehat A=\frac1{64}\sum_{i=1}^{64}Z_i.
\]
Equal fixed allocation to the two arms gives
$\tfrac12\E_{P_s}[w_nY_n]+\tfrac12\E_{Q_s}[w_nY_n]
=\E_{P_s}[Y_n]$ for every bounded prefix-measurable $Y_n$.
Taking $Y_n=\mathbf1_{\{\tau>n\}}$ and summing shows that
$\widehat A$ is unbiased for the four-checkpoint average of
$\E_{P_s}[(\tau\wedge8192)-(\tau\wedge4000)]$.
After exit the indicators vanish, so no further updates are needed.
The 64 independent, not necessarily identically distributed variables
lie in $[0,8384]$. The frozen Hoeffding 95\% confidence set is therefore
\[
 \left[\widehat A-8384\sqrt{\frac{\log40}{128}},\,
       \widehat A+8384\sqrt{\frac{\log40}{128}}\right]
       \cap[0,4192].
\]
This inference is conditional on the four fixed checkpoints, rather
than a population of independently trained networks.

\paragraph{Results and audit.}
All 64 paths exit before 4,000: the largest exit step is 1,145 among
the 32 target paths and 2,988 among the 32 proposal paths. At 1,000,
7 target paths and all 32 proposal paths survive. Thus the proposal
produces longer observed histories, but none reaches the primary
tail window. The primary estimate is exactly zero and its interval is
$[0,1423.290389]$ updates (upper endpoint rounded upward for display).
This broad interval does not establish a zero population tail area.
The weighted RMST at 8,192 is 827.981776 updates; it is descriptive
and is not a complete-MFPT estimate. The stopped-weight mean is
0.999999945 and is used only as a diagnostic, without self-normalization.

There are no numerical failures, horizon-censored paths or budget
interruptions. All eight prescribed full replays match. The run uses
125,646 updates including replays and 0.708835 GPU allocation hours.
All 279 transferred files match their recorded hashes. Independent
sampler reconstruction checks 125,646 rows, 72 terminal states and
39 intermediate states. A 384-bit integer-interval recurrence audits
all 111,974 original-path weights and scalar aggregates; the maximum
relative discrepancy of the frozen floating-point calculation is below
$8.19\times10^{-15}$. This scalar check does not certify real-arithmetic
network trajectories. No sample is pooled with the preceding study,
and the frozen sample size, proposal and stopping rule are unchanged.
Neither this finite-window result nor the proposal's observed exits
bound $\E[(\tau-8192)_+]$ or cover all surviving histories. The
original-radius ViT complete-MFPT problem remains unresolved.

\subsection{A compact-cover alternative to one-step drift}
\label{app:vit-compact-cover}
Failure of a particular one-step potential does not rule out
state-dependent finite sequences of batches that force exit.
The following standard compactness argument states the uniform
condition such a route must actually verify. It uses the finite-map
viewpoint of finite-pool SGD, without importing separability or
monotonicity assumptions from \citet{shirokoff2025separable}.

\begin{proposition}[Robust finite-word cover]
\label{prop:vit-compact-cover}
Consider continuous exact-real full-state update maps $F_a$ for a
finite set of legal batches. The full state includes momentum and,
for persistence, the held batch. Let $K$ be a compact set containing
every reachable surviving state. Suppose that for each $s\in K$
there is a finite batch word $w_s$ whose execution from $s$ has a
prefix with $\|\theta-\theta_*\|>R$. Assume every fixed finite word
has a history-uniform positive probability under the original sampler.
Then there are finite $L$ and $q>0$ such that, at every surviving history,
\[
 \Pp(\tau>n+kL\mid\mathcal H_n)\le(1-q)^k,
 \qquad \E[\tau-n\mid\mathcal H_n]\le L/q.
\]
The assertions are restricted to $\{\tau>n\}$.
\end{proposition}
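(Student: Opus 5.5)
The plan is a compactness argument in three steps: make each pointwise forcing word robust on an open neighbourhood, extract a finite subcover of $K$, and then run the geometric-block argument already used for the earlier forcing certificates (Propositions~\ref{prop:survival-family-tail} and~\ref{prop:large-head-tail}).

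\emph{Robustness.} Fix $s\in K$ and its word $w_s=(a_1,\dots,a_{\ell_s})$. By hypothesis some prefix of length $j_s\le\ell_s$ reaches the open set $\{\|\theta-\theta_*\|>R\}$. The composite map $F_{a_{j_s}}\circ\cdots\circ F_{a_1}$ is continuous, and the exterior set is open. Hence there is an open neighbourhood $O_s$ of $s$ on which the same prefix still lands outside the closed ball.

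Two points need care here. First, exit may already occur earlier along the word from a nearby state. That is harmless: exit by step $j_s$ is all we need, so on $O_s$ executing $w_s$ forces $\tau\le n+j_s$. Second, the maps need only be defined and continuous along the word, including at states beyond the first exit. Since the exact-real maps are globally defined, this is automatic; otherwise I would restrict to the reachable state space.

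\emph{Finite cover and block length.} Compactness of $K$ yields $s_1,\dots,s_M$ with $K\subseteq\bigcup_i O_{s_i}$. Put $L=\max_i\ell_{s_i}$. For each $i$, the hypothesis gives a history-uniform lower bound $q_i>0$ on the probability that the next $\ell_{s_i}$ batches are $w_{s_i}$. Set $q=\min_i q_i>0$; this minimum is positive because there are only finitely many words.

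At any surviving history at time $n$, the current state lies in $K$ because it is reachable and surviving, so it lies in some $O_{s_i}$. The conditional probability of exit within $\ell_{s_i}\le L$ steps is therefore at least $q$. The word choice depends on the state, but it is $\mathcal H_n$-measurable (choose the smallest admissible index), so conditioning is legitimate.

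\emph{Geometric tail.} Apply the bound at times $n,n+L,\dots$. On $\{\tau>n+(k-1)L\}$ the state at $n+(k-1)L$ is again surviving and in $K$, so
\[
\Pp(\tau>n+kL\mid\mathcal H_{n+(k-1)L})\le1-q .
\]
Iterated conditioning gives $(1-q)^k$. Summing survival probabilities over blocks, as in the earlier forcing proofs, gives $\E[\tau-n\mid\mathcal H_n]\le\sum_{k\ge0}L(1-q)^k=L/q$, with no integrability assumed in advance.

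The main obstacle is the robustness step. The sets $O_s$ must be built so that they still cover all of $K$, and "forcing exit" must be read correctly near states where the word passes close to the boundary. Strictness of $\|\theta-\theta_*\|>R$ and continuity handle this cleanly.

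The real difficulty in applications, which the proposition deliberately isolates as a hypothesis, is exhibiting a compact $K$ that contains every reachable surviving state and verifying a pointwise word for each state. For velocities, the survival bound $|u_n|<2R/h$ from Proposition~\ref{prop:survival-family-tail} already provides boundedness.
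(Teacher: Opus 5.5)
Your proposal is correct and follows the paper's proof essentially step for step. Continuity of the finite composition makes each strictly exterior prefix robust on a neighbourhood; a finite subcover of $K$ gives $L=\max_i|w_i|$ and $q=\min_i q_i$; and iterated conditioning with block summation of survival probabilities yields the geometric tail and the $L/q$ bound, while your remarks on earlier exit along a word and on $\mathcal H_n$-measurable word selection spell out details the paper leaves implicit.
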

\begin{proof}
A strictly exterior prefix stays exterior on a neighborhood of its
starting state by continuity of the finite composition. These
neighborhoods cover $K$; choose a finite subcover with words
$w_1,\ldots,w_m$. Set $L=\max_i|w_i|$ and let $q$ be the minimum
of their positive uniform probability bounds. Each surviving state
belongs to a covered neighborhood, so at least one of these events
forces exit within $L$ updates. Iterated conditioning gives the
geometric bound, and summing the integer survival probabilities in
blocks gives $L/q$. No disjointness or addition of word probabilities
is assumed.
\end{proof}

For the original momentum update, survival at $n\ge1$ implies
$h\|u_n\|=\|\theta_n-\theta_{n-1}\|<2R$.
Consequently the closure of all reachable surviving full states is
compact: the parameter lies in the closed $R$-ball, momentum is
bounded by $\max\{\|u_0\|,2R/h\}$, and sampler memory is finite.
This observation does not identify that closure or certify a cover
of it. It can contain limiting states never reached in finite time.
The strict exterior inequality in the proposition is stronger than
the original boundary-touching exit rule and is what makes each
witness robust to perturbing its starting state. Continuity here
concerns the ideal-real smooth ViT, not a rounding certificate for
binary32 execution.

\paragraph{Exact word probabilities under persistence.}
For $J\ge2$ legal batches and $0<p\le1$, the batch transition is
$Q(a,c)=(1-p)\mathbf1_{\{a=c\}}+p/J$.
A specified word of length $\ell$ with $r$ consecutive constant-batch
runs therefore has minimum probability over the unknown incoming
batch
\[
 q_w=(p/J)^r(1-p+p/J)^{\ell-r}.
\]
The first batch and each change contribute $p/J$; repeating the
previous batch contributes $1-p+p/J$. For a uniformly initialized
first batch the probability is $q_w/p$, and $p=1$ recovers
$J^{-\ell}$. The case $J=1$ has probability one. These events belong
to the original sampler and do not impose a different control policy.

\paragraph{Why pointwise escapability is insufficient.}
Let $h=1$, $\beta=0$, $z_0=0$, and the exit domain be
$(-1/2,1/2)$. Two equally likely quadratic components,
\[
 f_+(z)=-z^2/2-z/2,\qquad f_-(z)=z^2/4+z/4,
\]
give SGD maps $z\mapsto2z+1/2$ and $z\mapsto z/2-1/4$.
Writing $x=z+1/2$ gives $x_0=1/2$ and maps $x\mapsto2x$ and
$x\mapsto x/2$. Every reachable surviving state $x=2^{-k}$ has
a finite exit word, but their closure contains the common boundary
fixed point $x=0$, which has no strictly exterior word.
Until exit, $k=-\log_2x$ is a symmetric random walk starting at one,
with exit at zero. Its first hitting time $\sigma_m$ of $\{0,m\}$
satisfies $\Pp_1(k_{\sigma_m}=m)=1/m$ and
$\E_1\sigma_m=m-1$, as follows from the finite difference equations
for $k/m$ and $k(m-k)$. Finite-interval absorption occurs almost
surely. Thus $\Pp(\tau=\infty)\le1/m$ for all $m$, while
$\sigma_m\le\tau$ implies $\E\tau\ge m-1$ for all $m$.
Hence exit is almost sure but its mean is infinite. This scalar,
zero-momentum example isolates the missing uniformity; it is not
a counterexample for the original ViT or the positive tail theorems.

\paragraph{Audit and remaining certificate.}
Exact rational checks cover 3,654 finite words and 90 probability
normalizations; independent enumeration of hold/refresh decisions
checks 450 word probabilities. Independent absorbing-chain solves
and Catalan first-passage counts check the random-walk calculations.
For the 128 archived ViT paths, we also audit the corresponding
\emph{unordered subset} words with $J=\binom{4096}{64}$.
Their unknown-memory $\log_{10}q_w$ ranges are approximately
$[-89663.58,-76469.41]$ for iid and $[-3166.90,-2016.02]$
for $p=0.02$; the latter words have 14--22 constant-batch runs.
These are probabilities of specified subset sequences, not exit
probabilities, and the logarithms are display approximations.
Canonical-subset reconstruction checks all 91,663 recorded batches;
it does not certify an ordered floating-point network execution.
No new optimizer update or training unit is generated by this audit.

The original-radius ViT still lacks a finite cover of the reachable
closure and rigorous full-network strict-exit witnesses for its cells.
Neither the observed words nor their probabilities supply these
premises. Even a valid cover based on single specified words could
give an impractically loose bound; larger certified event families
would require separate justification. This sufficient condition
therefore does not close the full-MFPT claim or assert a new
compactness or geometric-tail principle.

\section{Statistical scope, reset reversals and a spatial--local factorial control}
\label{app:priority-controls}
These checks address three distinct concerns. The reset analysis uses
already observed trajectories; the factorial uses previously trained
small networks with newly assigned continuation streams. Both are
exploratory. Their analysis plan was recorded before executing these
analyses, after earlier outcomes were known. Neither provides new
independent training replicates or additional confirmatory tests.

\subsection{Which estimand each result supports}
The original Studies 1--8 have only 3--4 training seeds per architecture.
Their empirical RMST ratios, nominal bootstrap ranges and post-hoc
$t$ ranges remain descriptive; historical declaration rules are preserved
without retaining population significance interpretations. Changing the
resampling method or correcting multiplicity cannot establish small-sample
coverage. The newer replication tests a different quantity: the probability
of a finite assigned training/burn-in run jointly with the specified
finite-path empirical direction. The newer mechanism tests similarly
include validity and direction, and the robustness endpoint requires the
direction at all three lengths. None tests a population mean RMST ratio.
The training start, not a continuation path or an architecture/contrast
cell, is the independent unit. Family-wise corrections are specific to
each frozen family and are not pooled into a paper-wide guarantee.

\subsection{Reconstructing the reset reversal}
We re-read all 2,304 robustness paths and reconstruct the first crossings
from 955,421 saved displacement records at both original radii and at
caps 500 and 1000. All crossings match the archived times; the numerical
validity findings and original negative primary results are unchanged.
Table~\ref{tab:reset-cap-diagnostic} reports paired seed differences for
the primary radius. Positive values mean that refresh resets have larger
empirical capped means than random resets. Each network/length uses all
16 assigned starts and eight paired continuations per start.

\begin{table}[ht]
\centering
\caption{Exploratory refresh-minus-random RMST differences in updates.
Counts are positive/tied/negative seed differences, not independent path
tests. Censoring columns refer to refresh/random arms at cap 1000.}
\label{tab:reset-cap-diagnostic}
\small
\begin{tabular}{llrrrrr}
\toprule
Network & Length & $\Delta_{500}$ & Signs$_{500}$ & $\Delta_{1000}$ & Signs$_{1000}$ & Censoring (\%) \\
\midrule
MLP & 0.5 & 134.461 & 16/0/0 & 558.258 & 16/0/0 & 30.47/0.00 \\
MLP & 1 & 102.391 & 16/0/0 & 102.391 & 16/0/0 & 0.00/0.00 \\
MLP & 2 & -35.375 & 0/0/16 & -35.375 & 0/0/16 & 0.00/0.00 \\
CNN & 0.5 & 0.234 & 1/15/0 & 212.117 & 16/0/0 & 96.09/3.91 \\
CNN & 1 & -73.867 & 0/0/16 & -73.867 & 0/0/16 & 0.00/0.00 \\
CNN & 2 & -28.844 & 0/0/16 & -28.844 & 0/0/16 & 0.00/0.00 \\
\bottomrule
\end{tabular}

\end{table}

For MLP, the difference changes from $+102.391$ at calibrated length to
$-35.375$ at twice that length, with zero censoring in both arms and at
both caps. Thus this reversal cannot be explained solely by censoring.
For CNN at half length, 15 of 16 seed differences are tied at cap 500;
at cap 1000 all 16 are positive but refresh-reset censoring is 96.094\%.
The latter is a capped-mean result, not a conclusion about the unobserved
tail. All 24 network/length/radius/cap contrasts, including the secondary
radius, are retained in the accompanying analysis JSON; no selected
window replaces the frozen primary endpoint.

\paragraph{What scalar traces can diagnose.}
Full update vectors were not saved, so adjacent direction correlations,
gradient alignments and exit-direction distributions cannot be recovered.
Writing $r_n=\|\theta_n-\theta_0\|$ and
$\ell_n=\|\theta_n-\theta_{n-1}\|$, the norm identity gives
\[
 c_n=\frac{r_n^2-r_{n-1}^2-\ell_n^2}{2r_{n-1}\ell_n},\qquad
 q_K=\frac{r_K}{\sum_{n=1}^K\ell_n}.
\]
Here $c_n$ is radial alignment when the denominator is nonzero, and
$q_K$ is net displacement divided by traveled distance; neither is
adjacent-step persistence. We evaluate the prespecified exploratory
$K=16$ prefix for every path (the shortest saved path has 18 updates),
avoiding a comparison conditional on surviving to different horizons.
These reconstructions use floating-point norms, not strict certificates.
For MLP, mean $q_{16}$ for refresh/random resets is
$0.544/0.522$, $0.507/0.464$ and $0.475/0.411$ across the three lengths.
Refresh has greater early net progress at all lengths, despite its slower
exit at the first two. This rules out the simple descriptive explanation
that its larger capped mean is always accompanied by less early progress.
For CNN the corresponding pairs are $0.499/0.520$, $0.478/0.455$ and
$0.464/0.414$. Their qualitative agreement with CNN's ranking is not a
mediation analysis. The data establish length- and window-sensitive
ordering, without identifying a unique dynamical cause.

\subsection{Separating spatial restriction from frozen local information}
\label{app:factorial-control}
We reuse all 16 trained 45-parameter tanh inputs of
Appendix~\ref{app:fullparam-followup} (training seeds 200--215).
There is no new training or choice of successful starts. Eight new
continuation streams (identifiers 41000--41007), iid and persistent
sampling with $p=1/4$, yield 1,024 trajectories across four arms.
All arms share each realized batch sequence, center, zero initial
momentum, $\beta=0.9$, $h=10^{-4}$, batch size 8 and cap $J=500$.
Both radii $R=0.001$ and $0.01$ are reported. The event in every arm is
the same 45-dimensional ball around the same center.

Let $e$ be a top eigenvector of the per-example gradient covariance at
the center, $P=ee^\top$, and $z=\theta-\theta_0$. The two forces are
\[
 G_B^{\rm nl}(z)=\frac1{|B|}\sum_{i\in B}\nabla\ell_i(\theta_0+z),
 \qquad
 G_B^{\rm aff}(z)=\frac1{|B|}\sum_{i\in B}
 \{\nabla\ell_i(\theta_0)+\nabla^2\ell_i(\theta_0)z\}.
\]
The full arms use $v_{n+1}=\beta v_n-G_B(z_n)$,
$z_{n+1}=z_n+hv_{n+1}$. The line arms replace the force by $PG_B(z_n)$;
zero initial states then keep them on the line. Thus local freezing
retains each example's Hessian, rather than replacing all gradients by
constants. The original neural S1--S4 also differ in forcing, curvature
and event construction: this controlled intervention is not a numerical
decomposition of their errors. Restricting a trajectory to the line is
also different from observing the projection of a full trajectory.

\begin{table}[ht]
\centering
\caption{Factorial control: descriptive mean seed-level RMST in updates
and censoring percentage in parentheses. FN/FA: full nonlinear/affine;
LN/LA: line nonlinear/affine. All 16 reused training starts are included;
there are no confidence or significance statements.}
\label{tab:factorial-control}
\small
\begin{tabular}{llrrrr}
\toprule
Sampler & Radius & FN & FA & LN & LA \\
\midrule
iid & 0.001 & 41.102 (0.00) & 41.102 (0.00) & 54.477 (0.00) & 54.477 (0.00) \\
iid & 0.01 & 493.406 (92.19) & 493.398 (92.19) & 498.594 (98.44) & 497.602 (97.66) \\
$p=1/4$ & 0.001 & 23.047 (0.00) & 23.047 (0.00) & 28.164 (0.00) & 28.164 (0.00) \\
$p=1/4$ & 0.01 & 284.469 (26.56) & 284.469 (26.56) & 308.102 (35.94) & 308.125 (35.94) \\
\bottomrule
\end{tabular}

\end{table}

At $R=0.001$, no path is censored; full affine and nonlinear arms have
identical integer first exits on all sampled paths, as do the two line
arms. Restricting the nonlinear dynamics increases the average capped
mean by 13.375 updates for iid and 5.117 for persistent sampling.
Equality of grid exit times does not imply identical trajectories or
zero approximation error between updates. At $R=0.01$, mean absolute
seed-level FA--FN discrepancies are 0.0078 and 0.0313 updates, whereas
LN--FN discrepancies are 5.1875 and 23.6328. The iid large-radius cells
are 92.2--98.4\% censored, so small capped-mean errors do not establish
accuracy for complete mean exit time. The four-arm signed interaction
$(\overline T_{\rm LA}-\overline T_{\rm LN})-
(\overline T_{\rm FA}-\overline T_{\rm FN})$ is zero at the small
radius and $-0.9844$ (iid) or $0.0234$ (persistent) at the larger one;
we therefore do not force an additive error attribution.

This example shows that spatial restriction alone can materially change
observed exits while a full-dimensional local approximation remains close
on the same finite paths. It does not establish that projection dominates
for the original deep networks or that freezing is always accurate. The
original smooth-validation failure is unchanged. All paths are finite;
saved first exits and batch schedules were checked, and an independent
autograd implementation replays 24 trajectories over 12,000 updates.
The supplement records numerical tolerances and hashes. Computation used
local CPU only and did not expand the GPU experiment budget.

\section{Supplementary benchmarks and an earlier neural diagnostic}
\label{app:extended-validation}
This appendix collects measurements that the main results do not rely
on: timing and precision studies of the short-memory variant of
Appendix~\ref{sec:shortmemory}, and a momentum-calibrated telegraph
diagnostic whose prospective test failed.

\paragraph{Timing of the short-memory variant.}
With pilot-selected $L=32$, a frozen experiment uses 64 new drivers on
the same eight pools. After separate-stream warm-up, each method is
timed three times per driver in randomized order (384 calls). Timing
includes batch matrices, reference evolution, envelopes and endpoints
at all three radii, plus the short-memory Lyapunov solve. It excludes
pool calibration, driver generation, actual-process validation and I/O.
The median paired online speedup is 2.50, with a driver-stratified bootstrap interval [2.41, 2.59]. Mean full- and short-memory widths are 5.740 and 6.401 updates. Their ratio is 1.115, with interval [1.019, 1.290]. The predefined point targets (speedup at least two and width ratio
at most 1.2) are met, but the latter interval crosses 1.2. Thus this
budget does not establish mean inflation below twenty percent. The
bootstrap resamples drivers within fixed pool--sampler strata; thermal
and system-load dependence remain, precluding a cross-machine guarantee.
Of 192 radius observations,
162 have zero full-memory width and three become positive under short
memory. No stopped-error or envelope-order check fails at the stated
floating-point tolerance. These fresh streams validate a pilot-selected
candidate conditionally on reused pools, not on new training runs.

\paragraph{An independent precision study.}
A separate protocol uses 1,024 new drivers (64 per pool--sampler stratum)
and the selected $L=32$ method; it neither extends the speed benchmark
nor retests width inflation. The target equally weights eight retained
pools, two samplers and three radii. Averaging radii within drivers,
we allocate risk 0.05 across 98 one-sided endpoints (48 cells and one
mixture, each with lower and upper bounds), using the bounded
KL-Chernoff construction in Appendix~\ref{sec:bounded-precision}.
The mixture's conditional RMST enclosure is [101.70, 133.70] updates, of width 32.00, compared with the nominal mean endpoints [113.78, 122.13]. The predefined mixture-width target of 40 updates is met. Only 7 of the 48 individual cell enclosures have width at most 40; their widths range from 22.23 to 113.43. A precise mixture does not imply precise individual settings.
The coverage argument is conditional on fixed calibrated pools and
valid real-arithmetic pathwise bounds. Floating-point norms and root
solves remain unverified by outward rounding. This study's 95\%
family is separate from other confidence statements in the paper, and
it provides no inference over newly trained models.

\paragraph{Cost relative to direct simulation.}
A post-hoc check on 16 saved drivers, with warm-up and randomized
repeated method order, gives median times of 0.00454 seconds for a
200-update actual trajectory and 0.18129 seconds for short-memory
reference and bound computation; the median paired cost ratio is 39.8.
On the 1,024 precision-study drivers, a matched 100-statement allocation
gives the direct-mixture interval [105.84, 129.63], versus
[101.68, 133.71] from model brackets. The narrower direct interval
follows from same-path endpoint ordering and confidence-bound
monotonicity, not an empirical superiority test. The additional cost
buys a reference--process discrepancy bound, not cheaper exit
observations. Thus these low-dimensional results support controlled
comparison, not cheaper mean estimation; the full-memory speedup has
that restricted scope.

\subsection{A momentum-calibrated telegraph diagnostic}
We also tested prospectively a diagnostic that calibrates telegraph
persistence from the momentum filter. Twenty-four MLPs (784--128--10,
ReLU) were trained on MNIST \citep{lecun1998} and Fashion-MNIST
\citep{xiao2017}, six per dataset and momentum $\beta\in\{0.5,0.9\}$,
with learning rate 0.01 and batch size 128. Epochs 1 and 4 give 48
checkpoints, each continued 32 times for 200 updates on a disjoint
example pool (1,536 paths). That pool is disjoint from the training
data, so its mean gradient at a checkpoint need not vanish, and the
exits mix this drift with sampling noise. The event is the exit of the projection on
the frozen descending-momentum direction from the interval on which the
probe loss rises by less than 5\%. The diagnostic is
$\chi=s/[\lambda\min(L_+,L_-)]$, with $\lambda=-\log\beta/2$,
$s=\sqrt{2\lambda D}$, the projected gradient diffusivity $D$ and the
interval half-widths $L_\pm$, all frozen before the continuations. The
primary endpoint is the association of $\chi$ with the absolute error of
a diffusion RMST prediction, $|R^{\rm diff}_{200}-\widehat R^{\rm
real}_{200}|/200$; runs are the inference units and a whole-run
bootstrap keeps both checkpoints of a run together. A first execution
shared optimizer state between branches; it was excluded in full and
rerun after a recorded correctness amendment.

Across 48 valid primary checkpoints from 24 runs, the Spearman correlation between $\chi$ and the absolute diffusion RMST error is $-0.656$, with whole-run bootstrap interval $[-0.780,-0.500]$, whereas the declared diagnostic required a positive association: a larger $\chi$ was to predict a larger diffusion error.
\begin{table}[h]
\centering
\caption{Prospective mean absolute RMST error of each reference model,
divided by 200 and weighted equally over runs.}
\small
\begin{tabular}{@{}lrrrr@{}}
\toprule
Reference model & Diffusion & Telegraph & AR(1) & Block bootstrap\\
\midrule
Error & 0.066 & 0.068 & 0.059 & 0.063\\
\bottomrule
\end{tabular}
\end{table}

The association is negative, so the practical diagnostic claim is
withdrawn. The prespecified absolute normalization differs from the
relative error of the telegraph reference identity: at fixed $\lambda$
and $D$, the absolute correction $1/(\lambda\chi)$ decreases with $\chi$
while the relative correction $\chi/(1+\chi)$ increases. The negative
association therefore does not by itself refute that identity, but the
diagnostic as specified failed. The candidate references also differ in
their velocity state: AR(1) and a block bootstrap inherit the checkpoint
velocity, telegraph its sign, and diffusion none. Exploratory
calibration sensitivity with block lengths 8, 32 and 128 gives intervals
crossing zero for the remaining improvement.

\section{Separating Monte Carlo precision from pathwise width}
\label{sec:bounded-precision}
Fix the calibrated pools and all numerical comparison methods. Let
$Z_1,\ldots,Z_n\in[0,1]$ be independent, not necessarily identically
distributed, and $\mu=n^{-1}\sum_i\mathbb E Z_i$. Write
$d(x,p)=x\log(x/p)+(1-x)\log((1-x)/(1-p))$ with its continuous
boundary conventions. For observed mean $\bar Z$, define $l,u$ by
\[
 d(\bar Z,l)=d(\bar Z,u)=\frac{\log(1/\alpha)}n,
 \qquad 0\leq l\leq\bar Z\leq u\leq1,
\]
using $l=0$ at $\bar Z=0$ and $u=1$ at $\bar Z=1$.
Each endpoint has one-sided error probability at most $\alpha$.
This is the standard bounded-variable Chernoff construction, not a
new concentration result.

\paragraph{Justification.}
Convexity gives $e^{tz}\leq1-z+ze^t$ for $0\leq z\leq1$.
Independence and concavity of $p\mapsto\log(1-p+pe^t)$ imply
$\mathbb E\exp(t\sum_iZ_i)\leq(1-\mu+\mu e^t)^n$.
Optimizing the exponential Markov bound gives
$\Pr(\bar Z\geq x)\leq e^{-nd(x,\mu)}$ for $x>\mu$, and
the corresponding lower tail for $x<\mu$. Inverting these bounds
proves the endpoint statement. A union bound allocates risk across
a fixed collection of endpoints. This is a frequentist coverage
statement over repeated independent drivers, not a posterior probability.

\paragraph{RMST and width are different targets.}
For $J>1$ and a deterministic pathwise comparison $L_i\leq\tau_i\wedge J\leq U_i$
with $L_i,U_i\in[1,J]$, applying the bound to
$(L_i-1)/(J-1)$ and $(U_i-1)/(J-1)$ gives a confidence enclosure for
the actual conditional RMST. Applying it to $(U_i-L_i)/(J-1)$ instead
bounds the expected width of the comparison; that width alone is not
an enclosure for the exit mean. CDF endpoints can be treated as
Bernoulli observations, with separate risk allocation.

For the predeclared equally weighted mixture of retained pools, samplers
and radii, first average all radius observations within a driver.
The resulting driver-level variables remain bounded. Independent
drivers drawn in equally sized fixed strata estimate the equally
weighted mixture even though their distributions differ. Radius
observations and repeated timings must not be counted as extra draws.
This target is conditional on retained pools, not a population of new
training runs. Post-hoc audits of fixed archived methods are labeled
as audits, not new confirmatory experiments. The real-arithmetic
coverage argument does not turn ordinary floating-point pathwise bounds
or root solves into outward-rounded numerical certificates.

\end{document}